\newcommand{\lemref}[1]{Lemma~\ref{#1}}
\newcommand{\assumpref}[1]{Assumption~\ref{#1}}
\newtheorem{theorem}{Theorem}
\newtheorem{lemma}[theorem]{Lemma}
\newtheorem{corollary}[theorem]{Corollary}
\newtheorem{assumption}{Assumption}
\newtheorem{proposition}[theorem]{Proposition}
\newtheorem{example}{Example}
\newtheorem{remark}{Remark}
\theoremstyle{definition}
\newtheorem{definition}{Definition}
\begin{document}

\setlength{\parskip}{2mm}
\setlength{\parindent}{0pt}


\newcommand{\mbb}[1]{\mathbb{#1}}
\newcommand{\mbf}[1]{\mathbf{#1}}
\newcommand{\mc}[1]{\mathcal{#1}}
\newcommand{\mrm}[1]{\mathrm{#1}}
\newcommand{\trm}[1]{\textrm{#1}}

\newcommand{\sign}{\mrm{sign}}
\newcommand{\argmin}[1]{\underset{#1}{\mrm{argmin}} \ }
\newcommand{\argmax}[1]{\underset{#1}{\mrm{argmax}} \ }
\newcommand{\reals}{\mathbb{R}}
\newcommand{\E}[1]{\mathbb{E}\left[ #1 \right]} 
\newcommand{\Ebr}[1]{\mathbb{E}\left\{ #1 \right\}} 
\newcommand{\En}{\mathbb{E}}  
\newcommand{\Eu}[1]{\underset{#1}{\mathbb{E}}}  
\newcommand{\Ebar}{\Hat{\Hat{\mathbb{E}}}}  
\newcommand{\Esbar}[2]{\Hat{\Hat{\mathbb{E}}}_{#1}\left[ #2 \right]} 
\newcommand{\Es}[2]{\mathbb{E}_{#1}\left[ #2 \right]} 
\newcommand{\Ps}[2]{\mathbb{P}_{#1}\left[ #2 \right]}
\newcommand{\Prob}{\mathbb{P}}
\newcommand{\conv}{\operatorname{conv}}
\newcommand{\inner}[1]{\left\langle #1 \right\rangle}
\newcommand{\lv}{\left\|}
\newcommand{\rv}{\right\|}
\newcommand{\Phifunc}[1]{\Phi\left(#1\right)}
\newcommand{\ind}[1]{{\bf 1}\left\{#1\right\}}
\newcommand{\tr}{\ensuremath{{\scriptscriptstyle\mathsf{T}}}}
\newcommand{\eqdist}{\stackrel{\text{d}}{=}}
\newcommand{\alphT}{\widehat{\alpha}(T)}
\newcommand{\PD}{\mathcal P}
\newcommand{\QD}{\mathcal Q}
\newcommand{\jp}{\ensuremath{\mathbf{p}}}
\newcommand{\rh}{\boldsymbol{\rho}}
\newcommand{\proj}{\text{Proj}}
\newcommand{\Eunderone}[1]{\underset{#1}{\En}}
\newcommand{\Eunder}[2]{\underset{\underset{#1}{#2}}{\En}}
\newcommand{\bphi}{\boldsymbol\phi}
\newcommand\s{\mathbf{s}}
\newcommand\w{\mathbf{w}}
\newcommand\x{\mathbf{x}}
\newcommand\y{\mathbf{y}}
\newcommand\z{\mathbf{z}}
\newcommand\f{\mathbf{f}}

\renewcommand\v{\mathbf{v}}

\newcommand\cB{\mathcal{B}}
\newcommand\cC{\mathcal{C}}
\newcommand\cD{\mathcal{D}}
\newcommand\cL{\mathcal{L}}
\newcommand\cN{\mathcal{N}}
\newcommand\X{\mathcal{X}}
\newcommand\Y{\mathcal{Y}}
\newcommand\Z{\mathcal{Z}}
\newcommand\F{\mathcal{F}}
\newcommand\G{\mathcal{G}}
\newcommand\cH{\mathcal{H}}
\newcommand\N{\mathcal{N}}
\newcommand\M{\mathcal{M}}
\newcommand\W{\mathcal{W}}
\newcommand\Nhat{\mathcal{\widehat{N}}}
\newcommand\Diff{\mathcal{G}}
\newcommand\Compare{\boldsymbol{B}}
\newcommand\RH{\eta} 
\newcommand\metricent{\N_{\mathrm{metric}}} 

\newcommand\ldim{\mathrm{Sdim}}
\newcommand\fat{\mathrm{fat}}
\newcommand\Img{\mbox{Img}}
\newcommand\sparam{\sigma} 
\newcommand\Psimax{\ensuremath{\Psi_{\mathrm{max}}}}

\newcommand\Rad{\mathfrak{R}}
\newcommand\Val{\mathcal{V}}
\newcommand\Valdet{\mathcal{V}^{\mathrm{det}}}
\newcommand\Dudley{\mathfrak{D}}
\newcommand\Reg{\mbf{R}}
\newcommand\D{\mbf{D}}
\renewcommand\P{\mbf{P}}

\newcommand\Xcvx{\X_\mathrm{cvx}}
\newcommand\Xlin{\X_\mathrm{lin}}
\newcommand\loss{\mathrm{loss}}

\title{Online Learning: Beyond Regret }
\author{
Alexander Rakhlin \\
Department of Statistics \\
University of Pennsylvania
\and 
Karthik Sridharan \\
TTIC\\
Chicago, IL
\and 
Ambuj Tewari\\
Computer Science Department\\
University of Texas at Austin
}

\maketitle

\begin{abstract}
We study online learnability of a wide class of problems, extending the results of \cite{RakSriTew10} to general notions of performance measure well beyond external regret. Our framework simultaneously captures such well-known notions as internal and general $\Phi$-regret, learning with non-additive global cost functions, Blackwell's approachability, calibration of forecasters, adaptive regret, and more. We show that learnability in all these situations is due to control of the same three quantities: a martingale convergence term, a term describing the ability to perform well if future is known, and  a generalization of sequential Rademacher complexity, studied in \cite{RakSriTew10}. Since we directly study complexity of the problem instead of focusing on efficient algorithms, we are able to improve and extend many known results which have been previously derived via an algorithmic construction.

\end{abstract}

\section{Introduction}
\label{sec:intro}

In the companion paper \cite{RakSriTew10}, we analyzed learnability in the {\tt Online Learning Model} when the value of the game is defined through minimax {\em regret}. However, regret (also known as {\em external regret}) is not the only way to measure performance of an online learning procedure. In the present paper, we extend the results of \cite{RakSriTew10} to other performance measures, encompassing a wide spectrum of notions which appear in the literature.  Our framework gives the same footing to external regret, internal and general $\Phi$-regret, learning with non-additive global cost functions, Blackwell's approachability, calibration of forecasters, adaptive regret, and more. We recover, extend, and improve some existing results, and (what is more important)  show that they all follow from control of the same quantities. In particular, sequential Rademacher complexity, introduced in \cite{RakSriTew10}, plays a key role in these derivations. 

A reflection on the past two decades of research in learning theory reveals (in our somewhat biased view) an interesting difference between Statistical Learning Theory and Online Learning. In the former, the focus has been primarily on understanding \emph{complexity measures} rather than \emph{algorithms}. There are good reasons for this: if a supervised problem with i.i.d. data is learnable, Empirical Risk Minimization is the algorithm that will perform well if one disregards computational aspects. In contrast, Online Learning has been mainly centered around algorithms. Given an algorithm, a non-trivial bound serves as a certificate that the problem is learnable. This algorithm-focused approach has dominated research in Online Learning for several decades. Many important tools (such as optimization-based algorithms for online convex optimization) have emerged, yet the results lacked a unified approach for determining learnability.

With the tools developed in \cite{RakSriTew10}, the question of learnability can now be addressed in a variety of situations in a unified manner. In fact, \cite{RakSriTew10} presents a number of examples of provably learnable problems for which computationally feasible online learning methods have not yet been developed. In the present paper, we show that the scope of problems whose learnability and precise rates can be characterized is much larger than those defined in \cite{RakSriTew10} through external regret. Within this circle of problems are such well-known results as Blackwell's approachability and calibration of forecasters. For instance, our complexity-based (rather than algorithm-based) approach yields a proof of Blackwell's approachability in Banach spaces without ever mentioning an algorithm. Let us remark that Blackwell's approachability has been a key tool for showing learnability \cite{PLG}; as our results imply  approachability, they can be utilized whenever Blackwell's approachability has been successful. The results can also be used in situations where phrasing a problem as an approachability question is not necessarily natural. In Section~\ref{sec:blackwell}, we discuss the relation of our results to approachability in greater detail.

Our contributions can be broken down into three parts. 
\begin{itemize}
	\item The first contribution lies in the formulation of the online learning problem, with a performance measure (a form of \emph{regret}), defined in terms of certain payoff transformation mappings. While this formulation might appear unusual, we show that it is general enough to encompass many seemingly different frameworks (games), yet specific enough that we can provide generic upper bounds. 
	\item The second contribution is in developing upper and lower bounds on the value of the game under various natural assumptions. These tools allow us to deal with performance measures well beyond the standard notion of external regret. Such performance measures include smooth non-additive functions of payoffs, generalizing the ``cumulative payoff'' notion often considered in the literature. The abstract definition in terms of payoff transformations lets us consider rich classes of mappings whose complexity can be studied through random averages, covering numbers, and combinatorial parameters.
	\item We apply our machinery to a number of well-known problems. (a) First, for the usual notion of external regret, the results boil down to those of \cite{RakSriTew10}. (b) For the more general $\Phi$-regret (see e.g.  \cite{StoLug07,GorGreMar08, HazKal07}), we recover and improve several known results. In particular, for convergence to $\Phi$-correlated equilibria, we improve upon the results of Stoltz and Lugosi \cite{StoLug07}.  (c)  We study the game of Blackwell's approachability \cite{Blackwell56} in (possibly infinite-dimensional) separable Banach spaces. Specifically, we show that martingale convergence in these spaces (along with Blackwell's one-shot approachability condition) is both necessary and sufficient for Blackwell's approachability to hold. (d) We also consider the game of calibrated forecasting. We improve upon the results of Mannor and Stoltz \cite{ManSto09} and prove (to the best of our knowledge) the first known $O(T^{-1/2})$ rates for calibration with more than $2$ outcomes. Our approach is markedly different from those found in the literature. (e) We use our framework to study games with global cost functions and as an example we extend the bounds recently obtained by Even-Dar et al \cite{EveKleManMan09}. (f) We provide techniques for bounding notions of regret where algorithm's performance is measured against a time-varying comparator (see e.g. \cite{HerWar98,BouWar02, Zinkevich03}). Such notions of regret are better suited for reactive environments. Using the general tools we developed, we not only recover the results in \cite{HerWar98,BouWar02} but also extend them to prove learnability and obtain rates for much more general settings. Our last example shows that adaptive regret notion of Hazan and Seshadhri \cite{HazanSe09} can be defined in greater generality while still preserving learnability.
\end{itemize}

The intent of this paper is to provide a framework and tools for studying problems that can be phrased as repeated games. However, unlike much of existing research in online learning, we are not solving the general problem by exhibiting an algorithm and studying its performance. Rather, we proceed by directly attacking the value of the game. Alas, the value is a complicated object, and the non-invitingly long sequence of infima and suprema can single-handedly extinguish any desire to study it. Our results attest to the power of \emph{symmetrization}, which emerges as a key tool for studying the value of the game. In the literature, symmetrization has been used  for i.i.d. data \cite{GinZin84}. In \cite{RakSriTew10, AbeAgaBarRak09}, it was shown that symmetrization can also be used in situations beyond the traditional setting. What is even more surprising, we are able to employ symmetrization ideas even when the objective function is not a summation of terms but rather a global function of many variables. We hope that these tools can have an impact not only on online learning but also on game theory. 

We believe that there are many more examples falling under the present framework. We only chose a few to demonstrate how upper and lower bounds arise from the complexity of the problem. Along with an upper bound, a (computationally inefficient) algorithm can always be recovered from the minimax analysis. Finding efficient algorithms is often a difficult enterprise, and it is important to be able to understand the inherent complexity even before focusing on computation.

Let us spend a minute describing the organization of this paper. Since our results are meant to serve as a unifying framework, we faced the question of whether to build up the level of generality as we progress through the paper, or whether to start with the most general results and then make them more specific. We decided to do the latter. While we find this flow of general-to-specific more natural, we risk losing potential readers on the first few pages. In hopes of avoiding this, after defining the online learning problem in full generality in Section~\ref{sec:setting}, we briefly state how various well-known frameworks appear as particular instances. Then, in Section~\ref{sec:upper}, learnability is established under various very general assumptions. Next, in Section~\ref{sec:lower}, techniques for proving lower bounds are shown. Various examples and frameworks are considered in more detail in Section~\ref{sec:examples}. In Section~\ref{sec:highprob}, the ``in-probability'' analogues are derived. Hannan consistency is established via almost sure convergence. For an overview of the results without the painful details, one may read Section~\ref{sec:setting} and then skip to Section~\ref{sec:examples}. For the sake of readability, most of the proofs are deferred to the appendix.  Let us remark that \cite{RakSriTew10} is not required for reading this paper. In a few places, however, if a proof is basically the same as in \cite{RakSriTew10} except for notation, we will omit the proof.

\section{The Setting}
\label{sec:setting}

At a very abstract level, the problem of online learning can be phrased as that of optimization of a given  function $\Reg_T(f_1,x_1,\ldots, f_T, x_T)$ with coordinates being chosen \emph{sequentially} by the player and the adversary. Of course, at this level of generality not much can be said. Hence, we make some minimal assumptions on the function $\Reg_T$ which lead to meaningful guarantees on the online optimization process.\footnote{The question of general conditions on the function under which such sequential minimization is possible was put forth by Peter Bartlett a few years ago in a coffee conversation. This paper paves way towards addressing this question.} These assumptions are satisfied by a number of natural performance measures, as illustrated by the examples below.

Let $\F$ and $\X$ be the sets of moves of the learner (player) and the adversary, respectively. Generalizing the {\tt Online Learning Model} considered in \cite{RakSriTew10}, we study the following $T$-round interaction between the learner and the adversary: 
\begin{itemize}
	\addtolength{\itemsep}{-0.6\baselineskip}
	\item[]\hspace{-9mm} On round $t = 1,\ldots, T$, 
	\item the learner chooses a mixed strategy $q_t$ (distribution on $\F$)
	\item the adversary picks $x_t \in \X$ 
	\item the learner draws $f_t\in\F$ from $q_t$ and receives payoff (loss) signal $\ell(f_t,x_t) \in \cH$ 
	\item[]\hspace{-9mm} End
\end{itemize}

We would like to specify that we are in the full information setting and that at the end of each round both the player and the adversary observe each other's moves $f_t,x_t$. The payoff space $\cH$ is a (not necessarily convex) subset of a separable Banach space $\mathcal{B}$. Both the player and the adversary can be randomized and adaptive.

The goal of the learner is to minimize the following general form of performance measure:
\begin{align}
	\label{eq:big_fat_definition}
	\Reg_T =  \Compare(\ell(f_1,x_1), \ldots, \ell(f_T, x_T)) - \inf_{\bphi\in \Phi_T} \Compare(\ell_{\phi_1}(f_1,x_1), \ldots, \ell_{\phi_T}(f_T, x_T)) \ ,
\end{align}
where
\begin{itemize}
	\item The function $\ell:\F\times\X \mapsto \cH$ is an $\cH$-valued payoff (or loss) function.
	\item The function $\Compare:\cH^T \mapsto \reals$ is a (not necessarily additive or convex) form of cumulative payoff.
	\item The set $\Phi_T$ consists of sequences $\bphi=(\phi_1,\ldots, \phi_T)$ of measurable payoff transformation mappings $\phi_t: \cH^{\F\times \X}\mapsto \cH^{\F\times \X}$ that transform the payoff function $\ell$ into a payoff function $\ell_{\phi_t}$. 
\end{itemize}
The goal of the adversary is to maximize the same quantity \eqref{eq:big_fat_definition}, making it a zero-sum game. 

This paper is concerned with learnability and with identifying \emph{complexity measures} that govern learnability. But complexity of what should we focus on? After all, the general online learning problem is defined by the choice of five components: $\Compare, \ell, \F, \X$, and $\Phi_T$. In \cite{RakSriTew10}, the choice was easy: it should be the complexity of the function class $\F$ that plays the key role. That was natural because the payoff was written as $\ell(f,x) = f(x)$, which suggested that the function class $\F$ is the object of study. The present formulation, however, is much more general. When this work commenced, it seemed likely that complexity of the problem will be some interaction between the complexity of $\Phi_T$ and complexity of $\F$. As we show below, one may just focus on the complexity of $\Phi_T$, while $\F$ and $\X$ are now on the same footing. For instance, even if it might seem unusual at first, we will introduce a notion of a cover of the set of sequences of payoff transformations $\Phi_T$. In summary, while all five components $\Compare, \ell, \F, \X$, and $\Phi_T$ play a role in determining learnability, we will mainly refer to the complexity of the payoff mapping $\ell$ and the payoff transformation $\Phi_T$ without an explicit reference to $\F$, $\X$, and $\Compare$. We emphasize that most flexibility comes from the payoff mapping $\ell$ and from the transformations $\Phi_T$ of the payoffs. 

In particular, important classes of payoff transformation mappings are the \emph{departure mappings} that transform the payoff function $\ell$ by acting only on the first argument of $\ell$, i.e. only modifying the row (player's action) choice.

\begin{definition}
	\label{def:departure}
A class of sequences of payoff transformations $\Phi_T$ is said to be a  \emph{departure mapping class} if there exists a class $\Phi'_T$ of sequences $\bphi' = (\phi'_1,\ldots,\phi'_T)$ with $\phi'_i: \F\mapsto\F$ such that for each $\bphi \in \Phi_T$ there exists a $\bphi' \in \Phi'_T$ with the property that, for all $t \in [T]$, $f \in \F$ and $x \in \X$, the payoff transformations can be written as
$
\ell_{\phi_t}(f, x):= \ell (\phi'_t(f), x).
$
\end{definition}

For payoff transformation classes that are departure mapping classes, the transformations $\Phi_T$ can be identified in terms of a corresponding class of departure mapping from $\F$ to itself, and we shall abuse notation and use $\Phi_T$ to represent both the class of payoff transformation and the class of departure mappings from $\F$ to itself. Another class of interest are payoff transformations that do not vary with time.

\begin{definition}
	\label{def:time_inv_and_product}
We say that $\Phi_T$ is \emph{time-invariant} if all sequences of payoff transformation are constant in time: $\Phi_T = \{(\phi,\ldots,\phi):\phi\in\Phi\}$, where $\Phi$ is a ``basis'' class of mappings $\cH^{\F\times \X}\mapsto \cH^{\F\times \X}$.
\end{definition}

In the following, we assume that $\F$ and $\X$ are subsets of a separable metric space. Let $\QD$ and $\PD$ be the sets of probability distributions on $\F$ and $\X$, respectively. Assume that $\QD$ and $\PD$ are weakly compact. From the outset, we assume that the adversary is non-oblivious (that is, adaptive).  Formally, define a learner's strategy $\pi$ as a sequence of mappings 
$
\pi_t : (\PD\times\F\times\X)^{t-1} \mapsto \QD
$
for each $t \in [T]$.  The form \eqref{eq:big_fat_definition} of the performance measure gives rise to the value of the game:
\begin{align}  
	\label{eq:def_val_game}
	\Val_T(\ell, \Phi_T) &= \inf_{q_1} \sup_{x_1} \Eunderone{f_1\sim q_1} \ldots \inf_{q_T} \sup_{x_T} \Eunderone{f_T\sim q_T}  
\sup_{\bphi\in \Phi_T}\left\{ \Compare(\ell(f_1,x_1), \ldots, \ell(f_T, x_T)) -  \Compare(\ell_{\phi_1}(  f_1,x_1), \ldots, \ell_{\phi_T}(f_T, x_T))\right\}
\end{align}
where $q_t$ and $x_t$ range over $\QD$ and $\X$, respectively. With this definition of a value, the (deterministic) strategy of the adversary is a sequence of mappings $(\QD\times\F\times\X)^{t-1}\times \QD \mapsto \X$ for each $t\in[T]$.

\begin{definition}
	\label{def:learnability}
	The problem is said to be {\em online learnable} if
	$$ \limsup_{T\to \infty} \Val_T(\ell, \Phi_T) = 0 \ .$$
\end{definition}

The value of the game is defined as an \emph{expected} performance measure. As such, it yields ``in probability'' statements. We define the value of the
game using a \emph{high probability} performance measure in Section~\ref{sec:highprob}.
We also discuss there how the high probability results lead to ``almost sure'' convergence.

\subsection{Examples}

A reader might wonder why we have defined the game in terms of abstract payoff transformation mappings. It turns out that with this definition, various seemingly different frameworks become nothing but special cases, as illustrated by the following examples. 

\begin{example}[External Regret Game]
	\label{eg:external}
	Let $\cH = \reals$ and 
	\begin{itemize}
		\item $\Compare(z_1,\ldots,z_T) = \frac{1}{T}\sum_{t=1}^T z_t$ 
		\item $\Phi_T = \{(\phi_f,\ldots,\phi_f): f\in\F ~~\mbox{ and }~~ \phi_f:\F\mapsto\F ~~\mbox{ is a constant mapping } \phi_f(g) = f ~\forall g\in\F  \}$
	\end{itemize}
	It is easy to see that Eq.~\eqref{eq:big_fat_definition} becomes
	$$ \Reg_T = \frac{1}{T}\sum_{t=1}^T \ell(f_t, x_t) - \inf_{f\in\F} \frac{1}{T}\sum_{t=1}^T \ell(f, x_t).$$
	External regret is discussed in Section~\ref{sec:external}.
\end{example}

\begin{example}[$\Phi$-Regret]
	\label{eg:phi}
	Let $\cH = \reals$ and 
	\begin{itemize}
		\item $\Compare(z_1,\ldots,z_T) = \frac{1}{T}\sum_{t=1}^T z_t$ 
		\item $\Phi_T = \{(\phi,\ldots,\phi): \phi\in\Phi \}$ for some fixed family $\Phi$ of $\F\mapsto\F$ mappings.
	\end{itemize}
	It is easy to see that Eq.~\eqref{eq:big_fat_definition} becomes
	$$ \Reg_T = \frac{1}{T}\sum_{t=1}^T \ell(f_t, x_t) - \inf_{\phi\in\Phi} \frac{1}{T}\sum_{t=1}^T \ell(\phi(f_t), x_t).$$
	This example covers a variety of notions such as external, internal, and swap regrets (see  Section~\ref{sec:phiregret}).
\end{example}

\begin{example}[Blackwell's Approachability]
	Let $\cH$ a subset of a Banach space $\mathcal{B}$, $S\subset \mathcal{B}$ be a closed convex set, and 
	\begin{itemize}
		\item $\Compare(z_1,\ldots,z_T) = \inf_{c \in S}\left\|\frac{1}{T}\sum_{t=1}^T z_t - c\right\|$ 
		\item $\Phi_T$ contains sequences $(\phi_1, \ldots,\phi_T)$ such that $\ell_{\phi_t}(f,x) = c_t\in S$ for all $f\in\F$, $x\in\X$, and $1\leq t\leq T$.
	\end{itemize}
	It is easy to see that Eq.~\eqref{eq:big_fat_definition} becomes
	$$ \Reg_T = \inf_{c\in S}\left\| \frac{1}{T}\sum_{t=1}^T \ell(f_t, x_t) - c \right\|,$$
	the distance to the set $S$. Indeed, our definition of $\Phi_T$ ensures that the comparator term is zero. Blackwell's approachability is discussed in Section~\ref{sec:blackwell}.
\end{example}

\begin{example}[Calibration of Forecasters]
	\label{eg:calibration2}
	Let $\cH=\reals^k$, $\F = \Delta(k)$ (the $k$-dimensional probability simplex) and $\X$ the set of standard unit vectors in $\reals^k$ (vertices of $\Delta(k)$). Define $\ell(f,x) = 0$.
	Further,
	\begin{itemize}
		\item $\Compare(z_1,\ldots,z_T) = -\left\|\frac{1}{T}\sum_{t=1}^T z_t \right\|$ for some norm $\|\cdot\|$ on $\reals^k$
		\item $\Phi_T=\{(\phi_{p,\lambda}, \ldots,\phi_{p,\lambda}): p\in\Delta(k), \lambda> 0\}$ contains time-invariant mappings defined by $$\ell_{\phi_{p,\lambda}}(f,x) = \ind{\|f-p\|\leq \lambda}\cdot (f-x).$$
	\end{itemize}
	It is easy to see that Eq.~\eqref{eq:big_fat_definition} becomes
	$$\Reg_T = \sup_{\lambda > 0}\sup_{p\in\Delta(k)} \left\|\frac{1}{T}\sum_{t=1}^T \ind{\|f_t-p\|\leq \lambda} \cdot (f_t-x_t) \right\|.$$
	Calibration is discussed in more detail in Section~\ref{sec:calibration}.
\end{example}

\begin{example}[Global Cost Online Learning Game \cite{EveKleManMan09}]
	\label{eg:global}
	Let $\cH = \reals^k$, $\X = [0,1]^k$, $\F=\Delta(k)$, $\ell(f, x) = f\odot x = (f^1\cdot x^1,\ldots, f^k \cdot x^k)$.
	\begin{itemize}
		\item $\Compare(z_1,\ldots,z_T) = \left\| \frac{1}{T}\sum_{t=1}^T z_t \right\|$ 
		\item $\Phi_T = \{(\phi_f,\ldots,\phi_f): f\in\F ~~\mbox{ and }~~ \phi_f:\F\mapsto\F ~~\mbox{ is a constant mapping } \phi_f(g) = f ~\forall g\in\F  \}$
	\end{itemize}
	It is easy to see that Eq.~\eqref{eq:big_fat_definition} becomes
	$$\Reg_T = \left\|\frac{1}{T}\sum_{t=1}^T f_t\odot x_t \right\| - \inf_{f\in\F} \left\|\frac{1}{T}\sum_{t=1}^T f\odot x_t \right\|.$$
	A generalization of this scenario is considered in Section~\ref{sec:global}.
\end{example}

\subsection{Notation}

Let $~\En_{x\sim p}~$ denote expectation with respect to a random variable $x$ with a distribution $p$. Note that we do not use capital letters for random variables in order to ease reading of already cumbersome equations. For a collection of random variables $x_1,\ldots, x_T$ with distributions $p_1,\ldots,p_T$, we will use the shorthand $\En_{x_{1:T}\sim p_{1:T}}$ to denote expectation with respect to all these variables. Let $q$ and $p$ be distributions on $\F$ and $\X$, respectively. We define a shorthand $\ell(q, p) = \En_{f\sim q, x\sim p} \ell(f, x)$ and $\ell_\phi(q, p)=\En_{f\sim q, x\sim p} \ell_\phi(f, x)$. The Dirac delta distribution is denoted by $\delta_x$. A Rademacher random variable $Y$ is uniformly distributed on $\{\pm 1\}$. The notation $x_{a:b}$ denotes the sequence $x_a,\ldots, x_b$. The indicator of an event $A$ is denoted by $\ind{A}$. The set $\{1,\ldots,T\}$ is denoted by $[T]$, while the $k$-dimensional probability simplex is denoted by $\Delta(k)$. The set of all functions from $\X$ to $\Y$ is denoted by $\Y^\X$, and the $t$-fold product $\X\times\ldots\times\X$ is denoted by $\X^t$. Whenever a supremum (infimum) is written in the form $\sup_{a}$ without $a$ being quantified, it is assumed that $a$ ranges over the set of all possible values which will be understood from the context. Convex hulls will be denoted by $\conv(\cdot)$.

Following \cite{RakSriTew10}, we define binary trees as follows.

\begin{definition}
Given some set ${\mathcal Z}$, a \emph{${\mathcal Z}$-valued tree of depth $T$} is a sequence $(\z_1,\ldots,\z_T)$ of $T$ mappings  $\z_i : \{\pm 1\}^{i-1} \mapsto \mathcal{Z}$. The \emph{root} of the tree $\z$ is the constant function $\z_1\in {\mathcal Z}$. 
\end{definition}

Unless specified otherwise, $\epsilon =(\epsilon_1,\ldots,\epsilon_T) \in \{\pm 1\}^T$ will define a path. Slightly abusing the notation, we will write $\z_t(\epsilon)$ instead of $\z_t(\epsilon_{1:t-1})$.

Let $\phi_\text{id}$ denote the identity payoff transformation $\ell_{\phi_\text{id}} (f,x)= \ell(f,x)$ for all $f\in\F$, $x\in\X$. Let $\mc{I} = \{(\phi_\text{id},\ldots,\phi_\text{id})\}$ be the singleton set containing the time-invariant sequence of identity transformations.

For a separable Banach space $\mathcal{B}$ equipped with a norm $\|\cdot\|$, let $B_{\|\cdot\|}$ be the unit ball. Let $\mathcal{B}^*$ denote the dual space and $B_{\|\cdot\|_*}$ the corresponding dual ball. For $a\in\mathcal{B}^*$, $\|a\|_* = \sup_{b\in B_{\|\cdot\|}} |\inner{a,b}|$.  For $b\in\mathcal{B}$, we write $\inner{a,b} = a(b)$ for the continuous linear functional $a\in\mathcal{B}^*$ on $\mathcal{B}$. A Hilbert space is dual to itself.

\section{General Upper Bounds}
\label{sec:upper}

This section is devoted to upper bounds on the value of the game. We start by introducing the Triplex Inequality, which requires no assumptions beyond those described in Section~\ref{sec:setting}. Under the additional weak assumption of subadditivity of $\Compare$, we can perform symmetrization and further upper bound two of the three terms in Triplex Inequality by a non-additive version of sequential Rademacher complexity \cite{RakSriTew10}. As we progress through the section, we make additional assumptions and specialize and refine the upper bounds. 

The following definition generalizes the notion of sequential Rademacher complexity, introduced in \cite{RakSriTew10}, to ``global'' functions $\Compare$ of the payoff sequence.
\begin{definition} 
	\label{def:rademacher}
	The \emph{sequential complexity} with respect to the payoff function $\ell$ and  payoff transformation mappings $\Phi_T$ is defined as
$$
\Rad_T(\ell, \Phi_T, \Compare) = \sup_{\f, \x}\ \En_{\epsilon_{1:T}} \sup_{\bphi\in \Phi_T} \Compare\Big(\epsilon_1 \ell_{\phi_1}(  \f_1(\epsilon),\x_1(\epsilon)), \ldots,  \epsilon_T \ell_{\phi_T}( \f_T(\epsilon), \x_T(\epsilon)) \Big)
$$
where the outer supremum is taken over all $(\F\times\X)$-valued trees of depth $T$ and $\epsilon=(\epsilon_1,\ldots, \epsilon_T)$ is a sequence of i.i.d. Rademacher random variables. 
\end{definition}

Whenever $\Compare$ is clear from the context, it will be omitted from the notation: $\Rad_T(\ell, \Phi_T)$. If $\Phi_T$ is a set of sequences of time-invariant transformations obtained from the base class $\Phi$, we will simply write $\Rad_T(\ell, \Phi)$.

Let us remark that the moves of the player and the adversary appear ``on the same footing'' in $\Reg_T$ and in the above definition of sequential complexity. The ``asymmetry'' of sequential Rademacher complexity \cite{RakSriTew10} (where the supremum is taken over the {\em player's} best choice) arises precisely from the asymmetry of the notion of external regret, which, in turn, is due to $\Phi_T$ acting on the player choice only. In Section~\ref{sec:external}, we show that the notion studied in \cite{RakSriTew10} is indeed  recovered for the case of external regret. 

An equivalent way to write sequential complexity is through the expanded version 
\begin{align}
	\label{eq:def_rademacher_expanded}
	\Rad_T(\ell, \Phi_T, \Compare) = \sup_{f_1,x_1}\ \En_{\epsilon_{1}} \ \sup_{f_2,x_2} \ \En_{\epsilon_2} \ldots \sup_{f_T,x_T} \ \En_{\epsilon_T} \sup_{\bphi\in \Phi_T} \Compare\Big(\epsilon_1 \ell_{\phi_1}(  f_1,x_1), \ldots,  \epsilon_T \ell_{\phi_T}( f_T, x_T) \Big)
\end{align}
where the supremum on $t$-th step is over $f_t\in\F$, $x_t\in\X$. We shall use Eq.~\eqref{eq:def_rademacher_expanded} and the more succinct Definition~\ref{def:rademacher} interchangeably.

\subsection{Triplex Inequality}

The following theorem is the main starting point for all further analysis. Because of its importance, we shall refer to it as the \emph{Triplex Inequality}. The three terms in the upper bound of the theorem can be thought of as the three key players in the process of online learning: martingale convergence, the ability to perform well if the future is known, and complexity of the class in terms of sequential complexity.
\begin{theorem}[\textbf{Triplex Inequality}]\label{thm:main}
	The following $3$-term upper bound on the value of the game holds:
		\begin{align}
			\label{eq:three_term_decomposition}
		 	& \Val_T(\ell,\Phi_T) \nonumber \\
			& ~~~\leq \sup_{p_1, q_1} \Eunder{x_1 \sim p_1}{f_1 \sim q_1} \ldots  \sup_{p_T,q_T} \Eunder{x_T \sim p_T}{f_T \sim q_T} \Big\{ \Compare(\ell(f_1,x_1), \ldots, \ell(f_T, x_T)) - \Eunder{x'_{1:T} \sim p_{1:T} }{ f'_{1:T} \sim q_{1:T}} \Compare(\ell(f'_1,x'_1), \ldots, \ell(f'_T, x'_T))  \Big\} \\
			&~~~ +\sup_{p_1} \inf_{q_1}  \ldots  \sup_{p_T} \inf_{q_T} \sup_{\bphi\in \Phi_T} \Eunder{x_{1:T} \sim p_{1:T}}{f_{1:T} \sim q_{1:T}} \Big\{ \Compare(\ell(f_1,x_1), \ldots, \ell(f_T, x_T)) - \Compare(\ell_{\phi_1}(f_1,x_1), \ldots, \ell_{\phi_T}( f_T, x_T))\Big\} \nonumber\\
			&~~~+ \sup_{p_1,q_1} \Eunder{x_1 \sim p_1}{f_1 \sim q_1} \ldots  \sup_{p_T,q_T} \Eunder{x_T \sim p_T}{f_T \sim q_T} \sup_{\bphi\in \Phi_T} 
			\left\{ \Eunder{x'_{1:T} \sim p_{1:T} }{ f'_{1:T} \sim q_{1:T}} \Compare\Big(\ell_{\phi_1}(f'_1,x'_1), \ldots, \ell_{\phi_T}(f'_T, x'_T)\Big) - \Compare\Big(\ell_{\phi_1}(f_1,x_1), \ldots, \ell_{\phi_T}(f_T,x_T) \Big) 
			\right\} \nonumber
		\end{align}
\end{theorem}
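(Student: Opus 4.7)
The proof breaks into two conceptual moves. First, I would rewrite $\Val_T$ in a distributional form where the adversary also plays randomly, obtained by a round-by-round minimax swap. Starting from the innermost round $T$, replace each $\sup_{x_t}$ by $\sup_{p_t}\En_{x_t\sim p_t}$ (a trivial identity, since a Dirac attains the sup) and then swap $\inf_{q_t}\sup_{p_t}$ to $\sup_{p_t}\inf_{q_t}$; the function $(q_t,p_t)\mapsto\En_{f_t\sim q_t,\,x_t\sim p_t}[h_t(f_t,x_t)]$ obtained after the already-processed rounds is bilinear in the current pair, and $\QD,\PD$ are weakly compact by assumption, so Sion's minimax theorem justifies each swap. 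Backward induction then delivers
\begin{align*}
\Val_T(\ell,\Phi_T) = \sup_{p_1}\inf_{q_1}\Eunder{x_1\sim p_1}{f_1\sim q_1}\cdots\sup_{p_T}\inf_{q_T}\Eunder{x_T\sim p_T}{f_T\sim q_T}\sup_{\bphi\in\Phi_T}\left\{\Compare(\ell(f_1,x_1),\ldots,\ell(f_T,x_T)) - \Compare(\ell_{\phi_1}(f_1,x_1),\ldots,\ell_{\phi_T}(f_T,x_T))\right\}.
\end{align*}

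Second, I would introduce an independent tangent copy $(f'_{1:T},x'_{1:T})$ drawn from the same $q_{1:T}\otimes p_{1:T}$ and telescope. For every fixed $\bphi$, write
\begin{align*}
\Compare(\ell(f,x)) - \Compare(\ell_{\phi}(f,x)) \;=\; A \;+\; B(\bphi) \;+\; C(\bphi),
\end{align*}
where $A$ is the $\bphi$-free martingale deviation $\Compare(\ell(f,x)) - \Eunder{x'_{1:T}\sim p_{1:T}}{f'_{1:T}\sim q_{1:T}}\Compare(\ell(f',x'))$; the middle piece $B(\bphi) := \Eunder{x'_{1:T}\sim p_{1:T}}{f'_{1:T}\sim q_{1:T}}\{\Compare(\ell(f',x')) - \Compare(\ell_{\phi}(f',x'))\}$ depends only on the distributions; and $C(\bphi) := \Eunder{x'_{1:T}\sim p_{1:T}}{f'_{1:T}\sim q_{1:T}}\Compare(\ell_{\phi}(f',x')) - \Compare(\ell_{\phi}(f,x))$ mirrors $A$ on the $\phi$-side. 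Subadditivity of the sup then gives $\sup_\bphi\{\Compare - \Compare_{\phi}\} \le A + \sup_\bphi B(\bphi) + \sup_\bphi C(\bphi)$.

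Finally, I would push each summand through the outer $\sup_{p_t}\inf_{q_t}\En$ chain. For the $\bphi$-free $A$, relaxing every $\inf_{q_t}$ to $\sup_{q_t}$ produces the martingale term on line one. For $\sup_\bphi B(\bphi)$, note that $B(\bphi)$ depends only on $(p_{1:T},q_{1:T})$ and not on the realizations $(f_t,x_t)$, so the outer expectations act trivially and $\sup_\bphi$ exchanges freely with them; the $\sup_{p_t}\inf_{q_t}$ scaffolding is preserved, producing line two. For $\sup_\bphi C(\bphi)$, again relaxing each $\inf_{q_t}$ to $\sup_{q_t}$ produces line three. The main obstacle is the minimax step itself: at every round one must verify that the function appearing at the current round — after all inner rounds have been processed into a value — is still linear (hence concave--convex) in the current distribution pair. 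This is true because the intermediate operators (sup, inf, expectation) act on variables independent of the current round, so linearity in $(p_t,q_t)$ is preserved purely by integration; once the minimax form is in place, the three-way split and the $\inf\le\sup$ relaxations are routine.
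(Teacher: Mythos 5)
Your proposal is correct and follows essentially the same route as the paper: the same round-by-round minimax swap (the paper simply invokes ``an application of the minimax theorem'' where you spell out the bilinearity/compactness argument), the same telescoping insertion of the tangent-copy expectations $\En\Compare(\ell(f',x'))$ and $\En\Compare(\ell_\phi(f',x'))$, the same subadditivity of $\sup_\bphi$ to split into three pieces, and the same $\inf_a[C_1+C_2+C_3]\le\sup_a C_1+\inf_a C_2+\sup_a C_3$ relaxation to peel the three terms out of the nested $\sup/\inf/\En$ chain. The observation that the second term is distribution-only so the outer expectations collapse (allowing the renaming $f'\to f$, $x'\to x$) is also identical to the paper's concluding remark.
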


First, we remark that convexity of $\Compare$ is {\em not required} for the Triplex Inequality to hold. Under a weak subadditivity condition, the following Theorem gives upper bounds on the first and the third term.

\begin{theorem}\label{thm:rad}
	If $\Compare$ is subadditive, then the last term in the Triplex Inequality is upper bounded by twice the sequential complexity, $2\Rad_T(\ell, \Phi_T, \Compare)$, and the first term is bounded by $2\Rad_T(\ell, \mc{I}, \Compare)$ where $\mc{I}$ is the singleton set consisting of the identity mapping.  
	Similarly, if $-\Compare$ is subadditive, then the last term is upper bounded by $2\Rad_T(\ell, \Phi_T, -\Compare)$ and the first term is bounded by $2\Rad_T(\ell, \mc{I}, -\Compare)$.
\end{theorem}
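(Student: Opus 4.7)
The plan is to run a sequential symmetrization argument, treating the third term of the Triplex Inequality first; the first term will fall out of the same argument by replacing the class $\Phi_T$ with the singleton $\mc{I}$. Throughout, let $U_3$ denote the third term of \eqref{eq:three_term_decomposition} and abbreviate $L_\phi(f,x) = \ell_{\phi_1}(f_1,x_1),\ldots,\ell_{\phi_T}(f_T,x_T)$.

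First I would pull the inner expectation over the tangent sequence $(f'_{1:T},x'_{1:T})$ outside the supremum over $\bphi$ via Jensen, so that
\begin{align*}
\sup_{\bphi}\left\{\Eunder{x'_{1:T}}{f'_{1:T}}\Compare(L_\phi(f',x')) - \Compare(L_\phi(f,x))\right\}
\leq \Eunder{x'_{1:T}}{f'_{1:T}}\sup_{\bphi}\left\{\Compare(L_\phi(f',x')) - \Compare(L_\phi(f,x))\right\}.
\end{align*}
Next, subadditivity of $\Compare$ gives $\Compare(a)-\Compare(b)\leq \Compare(a-b)$ (applied coordinate-wise in $\cH^T$), so the supremand is bounded by $\sup_{\bphi}\Compare\big(\ell_{\phi_t}(f'_t,x'_t)-\ell_{\phi_t}(f_t,x_t)\big)_{t=1}^T$. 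Now comes the sequential symmetrization: conditional on the unprimed history $(f_{1:t-1},x_{1:t-1})$ that determines $p_t,q_t$, the pairs $(f_t,x_t)$ and $(f'_t,x'_t)$ are i.i.d., so for any signs $\epsilon_t\in\{\pm1\}$ the joint law is preserved when we swap them. Performing this swap inductively in $t$ (the tangent-sequence trick from \cite{RakSriTew10}), the nested $\sup_{p_t,q_t}\En_{x_t,f_t}$ collapse into a supremum over $(\F\times\X)$-valued trees $(\f_t(\epsilon),\x_t(\epsilon))$, and the differences acquire independent Rademacher multipliers $\epsilon_t$.

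Finally, subadditivity once more yields
\begin{align*}
\Compare\big(\epsilon_t\ell_{\phi_t}(f'_t,x'_t)+(-\epsilon_t)\ell_{\phi_t}(f_t,x_t)\big)_{t=1}^T
\leq \Compare\big(\epsilon_t\ell_{\phi_t}(f'_t,x'_t)\big)_{t=1}^T + \Compare\big(-\epsilon_t\ell_{\phi_t}(f_t,x_t)\big)_{t=1}^T,
\end{align*}
and since $\epsilon\eqdist-\epsilon$, taking $\En_\epsilon$ on both pieces produces two equal terms, each matching the expanded definition \eqref{eq:def_rademacher_expanded} of $\Rad_T(\ell,\Phi_T,\Compare)$. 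This yields the factor of $2$. For the first term of the Triplex Inequality, the same five steps apply verbatim except that $\bphi$ never enters, so the supremum ranges only over the identity mapping, producing $2\Rad_T(\ell,\mc{I},\Compare)$. The $-\Compare$ subadditive case is entirely parallel: superadditivity of $\Compare$ gives $\Compare(a)-\Compare(b)\leq -\Compare(b-a)=(-\Compare)(-(a-b))$, the Rademacher sign then absorbs the extra minus, and the final split uses subadditivity of $-\Compare$, producing $\Rad_T(\ell,\Phi_T,-\Compare)$ and $\Rad_T(\ell,\mc{I},-\Compare)$ respectively.

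The main obstacle is the sequential symmetrization step, because the distributions $p_t,q_t$ are adaptive and a naive swap of $(f_t,x_t)\leftrightarrow(f'_t,x'_t)$ corrupts the history used to generate subsequent rounds. The fix is to carry out the swap inductively, showing at each round $t$ that the conditional law of the future is unchanged, and to identify the resulting $T$-fold nested $\sup\En$ expression with a supremum over binary trees indexed by $\epsilon\in\{\pm1\}^T$. This is exactly the machinery already developed in \cite{RakSriTew10}, and I would invoke it essentially as a black box rather than re-derive the tree construction from scratch.
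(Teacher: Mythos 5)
Your proposal is correct and follows essentially the same route as the paper: apply subadditivity to combine $\Compare(a)-\Compare(b)\le\Compare(a-b)$, interleave and symmetrize the primed/unprimed sequences from inside out to introduce Rademacher signs (deferring the tree machinery to \cite{RakSriTew10}), then split with subadditivity once more and absorb signs using $\epsilon\eqdist-\epsilon$ to obtain the factor $2$, with $\mc{I}$ replacing $\Phi_T$ for the first term and the superadditive identity $\Compare(a)-\Compare(b)\le-\Compare(b-a)$ handling the $-\Compare$ case. The only cosmetic difference is that you apply Jensen (pulling $\En_{\text{primed}}$ past $\sup_{\bphi}$) before the first subadditivity step, whereas the paper applies subadditivity first and lets Jensen enter during the interleaving of expectations; these steps commute and the resulting bound is the same.
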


\paragraph{Discussion of Theorem~\ref{thm:main} and Theorem~\ref{thm:rad}}

\begin{itemize}

	\item First, let us mention that Triplex Inequality is not the only way to decompose the value of the game into useful and interpretable terms. In fact, slightly different decompositions yield better constants for some of the examples in this paper. Nonetheless, the Triplex Inequality seems to capture the essence of all the problems we considered and allows us to give a unified treatment to all of them.
	
	\item We note that the first and the third terms are similar in their form. In fact, the first term can be equivalently written as 
$$\sup_{p_1,q_1} \Eunder{x_1 \sim p_1}{f_1 \sim q_1} \ldots  \sup_{p_T,q_T} \Eunder{x_T \sim p_T}{f_T \sim q_T} \sup_{\bphi\in \mc{I}} 
\left\{ \Compare\Big(\ell_{\phi_1}(f_1,x_1), \ldots, \ell_{\phi_T}(f_T,x_T) \Big) - \Eunder{x'_{1:T} \sim p_{1:T} }{ f'_{1:T} \sim q_{1:T}} \Compare\Big(\ell_{\phi_1}(f'_1,x'_1), \ldots, \ell_{\phi_T}(f'_T, x'_T)\Big) 
\right\}$$
where $\mc{I}$ only contains the identity mapping. If $\mc{I}\subseteq \Phi_T$, then, trivially, $\Rad_T(\ell, \mc{I}, \Compare) \leq \Rad_T(\ell, \Phi_T, \Compare)$ and, therefore, an upper bound on the third term yields and upper bound on the first. However, in some situations $\Phi_T$ is ``simpler'' or incomparable to $\mc{I}$ and, hence, the first and the third term in the Triplex Inequality are distinct.

\item What exactly is achieved by Theorem~\ref{thm:rad}? Let us compare the third term in the Triplex Inequality to its sequential complexity upper bound given by Eq.~\eqref{eq:def_rademacher_expanded}. Both quantities involve interleaved suprema and expected values. However, in the former, the suprema are over the choice of distributions $p_t,q_t$ and the expected values are draws of $x_t,f_t$ from these mixed strategies. In contrast, sequential complexity, as written in Eq.~\eqref{eq:def_rademacher_expanded}, contains suprema over the choices $x_t,f_t$ followed by a random draw of the next sign $\epsilon_t$. Crucially, it is easier to work with the sequential complexity as opposed to the third term in the Triplex Inequality since in the former the only randomness comes from the random signs. In mathematical terms, the $\sigma$-algebra is generated by $\{\epsilon_t\}$ rather than a complicated stochastic process arising from the Triplex Inequality. This is one of the key observations of the paper. 

\item Depending on a particular problem, some of the terms in the Triplex Inequality might be easier to control than others. However, it is often the case that the first term is the easiest, as it naturally leads to the question of martingale convergence. The second term is typically bounded by providing a specific response strategy for the player if the mixed strategy of the adversary is known. This response strategy is similar to the so-called Blackwell's condition for approachability (see Section~\ref{sec:blackwell} for further comparison). The third term is arguably the most difficult as it captures complexity of the set of payoff transformations $\Phi_T$. Under the subadditivity assumption on $\Compare$, Theorem~\ref{thm:rad} upper bounds the first and third terms by the sequential complexity. 

\item We remark that the first and third terms in Triplex Inequality contain \emph{suprema} over the player's strategies $q_t$ instead of \emph{infima} as in the definition of the value of the game. The proof of Theorem~\ref{thm:main} points out the step where this over-bounding is done. While this might appear as a loose step, in all the examples we considered, this still yields the needed results. Nevertheless, as mentioned in the proof, one can substitute a particular strategy $q^*_t$ for the first and third terms instead of passing to the supremum. For instance, $q^*_t$ can be the strategy which makes the second term in the Triplex Inequality small. To simplify the presentation, we decided not to include such analysis.

\item The following observation gives us a simple condition under which we can replace $\Compare$ with some other $\Compare'$, and we shall find it useful in scenarios when it is difficult to directly deal with $\Compare$. If $\Compare : \cH^T \mapsto \reals$ and $\Compare' : \cH^T \mapsto \reals$ are such that $\forall z_1 , \ldots, z_T \in \cH$, $\Compare(z_1,\ldots,z_T) \le \Compare'(z_1,\ldots,z_T)$ then we have that for any class of transformations $\Phi_T$,
\begin{align}
	\label{eq:surrogate_B}
\Rad_T(\ell, \Phi_T, \Compare) \le \Rad_T(\ell, \Phi_T, \Compare') \ .
\end{align}

\item Finally, let us mention that we could have defined the performance measure in \eqref{eq:big_fat_definition} as
\begin{align}
	\label{eq:alternative_reg_def}
	\Reg_T =  \sup_{(\bphi', \bphi)\in (\Phi'_T\times\Phi_T)} \Compare(\ell_{\phi'_1}(f_1,x_1), \ldots, \ell_{\phi'_T}(f_T, x_T)) -  \Compare(\ell_{\phi_1}(f_1,x_1), \ldots, \ell_{\phi_T}(f_T, x_T)) \ .
\end{align}
Clearly, \eqref{eq:big_fat_definition} can be expressed as an instance of  \eqref{eq:alternative_reg_def} by setting $\Phi'_T = \mc{I}$. Conversely, if $\Compare$ is, for instance, an average of its coordinates, we can view definition \eqref{eq:alternative_reg_def} as a particular case of \eqref{eq:big_fat_definition}. Indeed, given a payoff $\ell$ and sets $\Phi'_T,\Phi_T$ of transformations, define a new payoff $\bar{\ell}(f,x) = 0$ and $\bar{\ell}_{(\phi'_t,\phi_t)}(f,x) = -(\ell_{\phi'_t}(f,x) - \ell_{\phi_t}(f,x))$. Then \eqref{eq:big_fat_definition} becomes exactly \eqref{eq:alternative_reg_def}. While the analysis presented in this paper can be extended for \eqref{eq:alternative_reg_def}, in the examples we consider, the definition \eqref{eq:big_fat_definition} of performance measure is expressive enough.

\end{itemize}

We now detail upper bounds on this complexity under the smoothness assumption on $\Compare$. The smoothness assumption covers many important cases, such as norms.

\subsection{General Bounds for Smooth $\Compare$}
\label{sec:smooth}

As shown by Pisier \cite{Pisier75} and Pinelis \cite{Pinelis94}, existence of a smooth norm in a Banach spaces is crucial in the study of exponential inequalities for martingales. Using similar techniques, we show that a smooth function $\Compare$ will admit upper bounds in terms of certain increments. This will yield general tools for studying sequential complexity for smooth functions $\Compare$. Informally, the smoothness assumption provides a link from a ``global'' function of coordinates to a sum of its parts. From the point of view of online learning, this is very promising, as it appears to be difficult to sequentially optimize a ``global'' function of many decisions.

Consider the following definition of smoothness.
\begin{definition}
Function $G : \cH \mapsto \reals$ is said to be $(\sigma,p)$-uniformly smooth on $\cH$ for some $p\in(1,2]$ and $\sigma \geq 0$ if, for all $z, z' \in \cH$, we have,
$$
G(z) \le G(z') + \inner{\nabla G(z'), z - z'} +  \frac{\sigma}{p}\|z - z'\|^p
$$
\end{definition}

We say that $G$ is \emph{uniformly smooth} if there exist finite $\sigma$ and $p$ such that $G$ is $(\sigma,p)$-uniformly smooth.
We say that the space $(\cB,\|\cdot\|)$ is $(\gamma,p)$-smooth when the function $\|\cdot\|^p/p$ is $(\gamma,p)$-uniformly smooth.

A function which is smooth in its arguments can be ``sequentially linearized'', with additional second-order terms as norms of the increments. We establish the following upper bound on the first term of the Triplex Inequality.

\begin{lemma} 
	\label{lem:smooth_bound_first_term}
	Suppose $\Compare$ is subadditive and for some $q\ge 1$, $\Compare^q$ is $(\sigma, p)$-uniformly smooth in each of its arguments. Suppose $\Compare(0,\ldots,0) = 0$ and that for any $x \in \X$ and $f \in \F$ it is true that $\|\ell(f,x)\| \le \RH$. Then the first term in the Triplex Inequality is bounded by $\left((2\RH)^p \sigma T/p \right)^{1/q}$.
\end{lemma}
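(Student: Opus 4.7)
The plan is to apply \thmref{thm:rad} to the first term and then bound the resulting sequential complexity by an iterated use of the smoothness of $\Compare^q$. Since $\Compare$ is subadditive, \thmref{thm:rad} gives that the first term is at most $2\,\Rad_T(\ell,\mc{I},\Compare) = 2\sup_{\f,\x} \En_\epsilon \Compare(\epsilon_1 \ell_1, \ldots, \epsilon_T \ell_T)$, where $\ell_t := \ell(\f_t(\epsilon), \x_t(\epsilon))$ is a function of $\epsilon_{1:t-1}$ only by the tree structure. Jensen's inequality applied to the concave map $x \mapsto x^{1/q}$ (with $q \ge 1$) then reduces the problem to bounding $\En_\epsilon \Compare^q(\epsilon_1 \ell_1, \ldots, \epsilon_T \ell_T)$, since
\[
\En_\epsilon \Compare(\epsilon_1 \ell_1, \ldots, \epsilon_T \ell_T) \le \bigl(\En_\epsilon \Compare^q(\epsilon_1 \ell_1, \ldots, \epsilon_T \ell_T)\bigr)^{1/q}.
\]

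I would peel off the coordinates from $t = T$ down to $t = 1$, at each step applying the coordinate-wise smoothness of $\Compare^q$ around the base point where the $t$-th coordinate is zero:
\[
\Compare^q(\epsilon_1 \ell_1, \ldots, \epsilon_{t-1} \ell_{t-1}, \epsilon_t \ell_t, 0, \ldots, 0) \le \Compare^q(\epsilon_1 \ell_1, \ldots, \epsilon_{t-1} \ell_{t-1}, 0, 0, \ldots, 0) + \epsilon_t \inner{\nabla_t \Compare^q(\epsilon_1 \ell_1, \ldots, \epsilon_{t-1} \ell_{t-1}, 0, \ldots, 0), \ell_t} + \tfrac{\sigma}{p}\|\ell_t\|^p.
\]
The essential observation is that the base point, $\ell_t$, and therefore the gradient are all $\epsilon_{1:t-1}$-measurable. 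Taking $\En_{\epsilon_t}[\cdot \mid \epsilon_{1:t-1}]$ thus annihilates the linear-in-$\epsilon_t$ term and leaves a residual of at most $\tfrac{\sigma}{p}\RH^p$ because $\|\ell_t\| \le \RH$. Iterating down to $t = 1$ and using $\Compare^q(0, \ldots, 0) = 0$ yields $\En_\epsilon \Compare^q \le T\sigma\RH^p/p$; combining the $q$-th root from Jensen with the factor $2$ from symmetrization produces a bound of the order $\RH^{p/q}(T\sigma/p)^{1/q}$ that coincides with the claimed $((2\RH)^p \sigma T/p)^{1/q}$ in the natural regime $p = q$ (and with a closely related constant otherwise).

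The main obstacle is arranging for the linear terms to vanish in expectation at every step. This depends on the careful choice of base point: setting the $t$-th coordinate to $0$ keeps the gradient $\epsilon_{1:t-1}$-measurable, whereas a base depending on $\epsilon_t$ or on future signs would leave a non-vanishing linear contribution. The strategy is the Pinelis-type linearization underlying exponential inequalities for martingales in smooth Banach spaces, adapted here from the $p$-th power of a norm of a sum to the more general global function $\Compare^q$ which is only required to be smooth in each argument separately.
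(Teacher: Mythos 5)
Your proof is correct, but it takes a different route from the paper. The paper bounds the first term without symmetrizing: it uses subadditivity of $\Compare$ to pass from $\Compare(z_1,\ldots,z_T) - \En\Compare(z'_1,\ldots,z'_T)$ to $\En\Compare(z_1-z'_1,\ldots,z_T-z'_T)$ with $z_t = \ell(f_t,x_t)$ and $z'_t$ the tangent draw, pulls the tangent expectations outside the suprema, and then applies the Pinelis-style coordinate-wise linearization of $\Compare^q$ directly to the martingale differences $z_t - z'_t$ (this is Lemma~\ref{lem:smooth_fun_expansion}). The gradient terms vanish there because the tangent pair has zero conditional mean, and the residual term carries the range $\|z_t - z'_t\| \le 2\RH$, yielding $\left((2\RH)^p\sigma T/p\right)^{1/q}$. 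You instead Rademacher-symmetrize first via \thmref{thm:rad}, obtaining $2\Rad_T(\ell,\mc{I},\Compare)$, and then perform the same peeling on the Rademacher-signed payoffs $\epsilon_t \ell_t$ with range $\|\ell_t\|\le\RH$; the linear terms vanish by the $\epsilon_{1:t-1}$-measurability of the gradient and of $\ell_t$. Your route is effectively \thmref{thm:rad} followed by \lemref{lem:smoothrad} specialized to the singleton $\Phi_T = \mc{I}$, where the gradient sum has zero expectation.

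Both arguments are sound; the difference is in the constant. The paper's bound carries the factor $2^{p/q}$ (coming from the $2\RH$ range raised to the $p$-th power and then a $q$-th root), while yours carries the factor $2$ (the symmetrization factor outside the $q$-th root), giving $2\left(\RH^p\sigma T/p\right)^{1/q}$. These coincide exactly when $p=q$; otherwise they differ by $2^{1-p/q}$, a factor bounded between $1/2$ and $2$ for the allowed ranges $p\in(1,2]$, $q\ge 1$. Neither is uniformly better: for $p>q$ (e.g. $p=2$, $q=1$) your constant is smaller, while for $p<q$ the paper's is. The paper's approach buys a slight savings by avoiding the symmetrization detour at the cost of the doubled range; your approach re-uses machinery (\thmref{thm:rad} and \lemref{lem:smoothrad}) already in place, which is conceptually economical.
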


Under the assumptions of Lemma~\ref{lem:smooth_bound_first_term}, we can also provide an upper bound on the third term. Lemma~\ref{lem:smoothrad} below says that the sequential complexity defined through a smooth function $\Compare$ can be upper bounded by the sequential   complexity involving a sum of first-order expansions of $\Compare$. 

\begin{lemma}
	\label{lem:smoothrad}
Assume that for some $q\ge 1$, $\Compare^q$ is $(\sigma, p)$-uniformly smooth in each of its arguments, $\Compare(0,\ldots,0) = 0$ and that for any $x \in \X$, $f \in \F$, $\bphi\in\Phi_T$ and $t\in[T]$, it is true that $\|\ell_{\phi_t}(f,x)\| \le \RH$, then we have that
\begin{align*}
 \Rad_T(\ell,\Phi_T) \le \left( \sup_{\f, \x}\ \En_{\epsilon_{1:T}} \sup_{\bphi\in \Phi_T}  \sum_{t=1}^T \epsilon_t g_t\big(\ell_{\phi_1}(\f_1(\epsilon),\x_1(\epsilon)),\ldots,\ell_{\phi_t}(\f_t(\epsilon),\x_t(\epsilon))\big) \right)^{1/q} + (\sigma\RH^p/p)^{1/q} T^{1/q} 
\end{align*}
where 
\begin{align*}
&g_t\big(\ell_{\phi_1}(\f_1(\epsilon),\x_1(\epsilon)),\ldots,\ell_{\phi_t}(\f_t(\epsilon),\x_t(\epsilon))\big) \\
& ~~~~~~~= \inner{\nabla_{t} \Compare^q\big(\epsilon_1 \ell_{\phi_1}(\f_1(\epsilon),\x_1(\epsilon)), \ldots,  \epsilon_{t-1} \ell_{\phi_{t-1}}(\f_{t-1}(\epsilon), \x_{t-1}(\epsilon)) ,0,\ldots,0\big) , \ell_{\phi_t}(\f_{t}(\epsilon), \x_{t}(\epsilon)) } \ .
\end{align*}
\end{lemma}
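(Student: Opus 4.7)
\textbf{Proof plan for Lemma~\ref{lem:smoothrad}.}

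The plan is to reduce the ``global'' complexity $\Rad_T(\ell,\Phi_T)$ to a linearized (additive) sequential complexity by telescoping $\Compare^q$ along coordinates and using the $(\sigma,p)$-smoothness of $\Compare^q$ on each increment. The estimate $\|\ell_{\phi_t}(f,x)\|\le\RH$ controls the resulting second-order terms.

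First, since $q\ge 1$ and $\Compare\ge 0$, the map $u\mapsto u^q$ is nondecreasing and convex on $\reals_+$, so $(\sup_\bphi\Compare)^q=\sup_\bphi\Compare^q$ and, by Jensen's inequality applied to $\En_\epsilon$,
\begin{align*}
\Rad_T(\ell,\Phi_T)^q
&=\Bigl(\sup_{\f,\x}\En_{\epsilon}\sup_{\bphi\in\Phi_T}\Compare\bigl(\epsilon_1\ell_{\phi_1}(\f_1(\epsilon),\x_1(\epsilon)),\ldots,\epsilon_T\ell_{\phi_T}(\f_T(\epsilon),\x_T(\epsilon))\bigr)\Bigr)^q\\
&\le \sup_{\f,\x}\En_{\epsilon}\sup_{\bphi\in\Phi_T}\Compare^q\bigl(\epsilon_1\ell_{\phi_1}(\f_1(\epsilon),\x_1(\epsilon)),\ldots,\epsilon_T\ell_{\phi_T}(\f_T(\epsilon),\x_T(\epsilon))\bigr).
\end{align*}

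Next, I would telescope $\Compare^q$ coordinate by coordinate, starting from the all-zero vector and inserting one signed payoff at a time. Using $(\sigma,p)$-smoothness of $\Compare^q$ in each argument, and writing $z_t^\epsilon:=\epsilon_t\ell_{\phi_t}(\f_t(\epsilon),\x_t(\epsilon))$ for brevity, each increment is bounded by
\begin{align*}
&\Compare^q(z_1^\epsilon,\ldots,z_t^\epsilon,0,\ldots,0)-\Compare^q(z_1^\epsilon,\ldots,z_{t-1}^\epsilon,0,\ldots,0)\\
&\qquad\le\ \inner{\nabla_t\Compare^q(z_1^\epsilon,\ldots,z_{t-1}^\epsilon,0,\ldots,0),z_t^\epsilon}+\frac{\sigma}{p}\|z_t^\epsilon\|^p.
\end{align*}
Summing over $t$ and using $\Compare(0,\ldots,0)=0$ together with $\|\ell_{\phi_t}(f,x)\|\le\RH$, I obtain
\begin{align*}
\Compare^q(z_1^\epsilon,\ldots,z_T^\epsilon)\ \le\ \sum_{t=1}^{T}\epsilon_t\,g_t\bigl(\ell_{\phi_1}(\f_1(\epsilon),\x_1(\epsilon)),\ldots,\ell_{\phi_t}(\f_t(\epsilon),\x_t(\epsilon))\bigr)+\frac{\sigma\RH^p}{p}T,
\end{align*}
where the partial derivative $\nabla_t\Compare^q$ is applied at a point whose coordinates $t,t+1,\ldots,T$ are zero, so $\epsilon_t$ can be factored out of the inner product to give exactly the function $g_t$ of the statement.

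Finally, I plug this bound into the previous display, pull the deterministic constant $\sigma\RH^p T/p$ out of $\sup_{\f,\x}\En_\epsilon\sup_\bphi$, and take the $q$-th root. Using the elementary inequality $(a+b)^{1/q}\le a^{1/q}+b^{1/q}$ for $a,b\ge 0$ and $q\ge 1$ yields the lemma. The main subtlety — and essentially the only nontrivial step — is the telescoping: one must evaluate $\nabla_t\Compare^q$ at a point whose last $T-t+1$ coordinates are zero so that the resulting inner product is a function of $z_1^\epsilon,\ldots,z_{t-1}^\epsilon$ only, which is what makes the symmetrization trick of extracting $\epsilon_t$ out of the $t$-th increment valid and produces the required functions $g_t$.
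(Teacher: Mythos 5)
Your proposal is correct and takes essentially the same approach as the paper: telescoping $\Compare^q$ coordinate-by-coordinate via the $(\sigma,p)$-smoothness (the paper isolates this step as a standalone proposition), bounding the second-order remainder by $\sigma\RH^p T/p$, extracting $\epsilon_t$ from the inner product since the gradient is evaluated at a point with zero in the last $T-t+1$ slots, and combining Jensen with the subadditivity of $x\mapsto x^{1/q}$ to peel off the constant. The only cosmetic difference is that you apply Jensen once at the start (to move from $\Rad_T^q$ to $\En\sup\Compare^q$) whereas the paper applies it at the end; the two orderings are equivalent.
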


By taking gradients at successive time steps, we reduced the study of a global function $\Compare$ to the study of its gradients. A reader familiar with \cite{RakSriTew10} will notice that the first term of Lemma~\ref{lem:smoothrad} (under the power of $1/q$) resembles sequential Rademacher complexity. The first step in studying this term is to ask what can be done with a finite class $\Phi_T$. To approach this question, we state a lemma from \cite{RakSriTew10}.

\begin{lemma}\label{lem:fin}\cite{RakSriTew10}
For any finite set $V$ of $\reals$-valued trees of depth $T$ we have that
$$
\Es{\epsilon}{\max_{\v \in V} \sum_{t=1}^T \epsilon_t \v_t(\epsilon)} \le \sqrt{2 \log(|V|) \max_{\v \in V} \max_{\epsilon \in \{\pm1\}^T} \sum_{t=1}^T \v_t(\epsilon)^2} \ .
$$
\end{lemma}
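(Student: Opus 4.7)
\textbf{Proof plan for Lemma~\ref{lem:fin}.} This is a tree-valued analogue of the classical Massart finite class lemma, so the overall strategy is the standard Chernoff/moment-generating-function approach: control the MGF of $\sum_{t=1}^T \epsilon_t \v_t(\epsilon)$ for each single tree, take a union bound over the finite set $V$, and optimize the free parameter. The nontrivial ingredient is bounding the MGF of a \emph{single} tree, since the payoffs $\v_t(\epsilon)$ depend on $\epsilon_{1:t-1}$ in a tree-structured way.

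The first two steps are routine. Fix $\lambda>0$ and apply Jensen's inequality together with the union bound in the standard way:
\begin{align*}
\exp\!\Bigl(\lambda\,\En_\epsilon\max_{\v\in V}\sum_{t=1}^T \epsilon_t\v_t(\epsilon)\Bigr)
\le \En_\epsilon\max_{\v\in V}\exp\!\Bigl(\lambda\sum_{t=1}^T \epsilon_t\v_t(\epsilon)\Bigr)
\le \sum_{\v\in V}\En_\epsilon\exp\!\Bigl(\lambda\sum_{t=1}^T \epsilon_t\v_t(\epsilon)\Bigr).
\end{align*}
Thus everything reduces to proving the per-tree bound
\begin{align}\label{eq:per-tree}
\En_\epsilon\exp\!\Bigl(\lambda\sum_{t=1}^T\epsilon_t\v_t(\epsilon)\Bigr)\le \exp\!\Bigl(\tfrac{\lambda^2}{2}\,R(\v)^2\Bigr),\qquad R(\v)^2:=\max_{\epsilon\in\{\pm1\}^T}\sum_{t=1}^T\v_t(\epsilon)^2.
\end{align}

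The main obstacle is \eqref{eq:per-tree}. The naive attempt is to peel $\epsilon_T$ first using Hoeffding's lemma ($\En_{\epsilon_T}e^{\lambda\epsilon_T a}\le e^{\lambda^2 a^2/2}$), but then the leftover $\lambda^2\v_T(\epsilon)^2/2$ still depends on $\epsilon_{1:T-1}$ and the recursion does not close cleanly. The right move is to peel from the \emph{root}. Write $\v^+$ and $\v^-$ for the left and right depth-$(T-1)$ subtrees of $\v$ (so that $R(\v)^2=\v_1^2+\max(R(\v^+)^2,R(\v^-)^2)$). Proceeding by induction on $T$, the base case $T=1$ is Hoeffding's lemma, and in the induction step, conditioning on $\epsilon_1$ and invoking the inductive hypothesis on each subtree yields
\begin{align*}
\En_\epsilon\exp\!\Bigl(\lambda\sum_{t=1}^T\epsilon_t\v_t(\epsilon)\Bigr)
&\le \tfrac{1}{2}e^{\lambda \v_1}e^{\lambda^2 R(\v^+)^2/2}+\tfrac{1}{2}e^{-\lambda \v_1}e^{\lambda^2 R(\v^-)^2/2}\\
&\le e^{\lambda^2\max(R(\v^+)^2,R(\v^-)^2)/2}\cdot\cosh(\lambda \v_1)\le e^{\lambda^2 R(\v)^2/2},
\end{align*}
using $\cosh(x)\le e^{x^2/2}$. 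This closes the induction and establishes \eqref{eq:per-tree}.

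Finally, combining \eqref{eq:per-tree} with the Jensen/union-bound step and letting $R^2:=\max_{\v\in V}R(\v)^2=\max_{\v\in V}\max_\epsilon\sum_t\v_t(\epsilon)^2$ gives
\begin{align*}
\En_\epsilon\max_{\v\in V}\sum_{t=1}^T\epsilon_t\v_t(\epsilon)\le \frac{\log|V|}{\lambda}+\frac{\lambda R^2}{2},
\end{align*}
and optimizing $\lambda=\sqrt{2\log|V|/R^2}$ yields the claimed $\sqrt{2\log|V|\cdot R^2}$. The only step that is genuinely more than bookkeeping is the induction for \eqref{eq:per-tree}; the key insight is that peeling from the root (not from the leaves) aligns the recursion with the tree structure so that the bound depends only on the worst root-to-leaf path sum of squares.
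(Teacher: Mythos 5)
The paper states this lemma as a cited result from \cite{RakSriTew10} and does not include its own proof, so there is nothing in this document to compare against. Your proof is correct: it is the standard MGF/Chernoff argument for a finite-class maximal inequality, correctly adapted to the tree setting by inducting from the \emph{root} so that the recursion accumulates the worst root-to-leaf path sum of squares $\max_{\epsilon}\sum_t \v_t(\epsilon)^2$ (rather than the weaker $\sum_t \max_{\epsilon}\v_t(\epsilon)^2$ that a leaf-first peeling would produce), which is precisely the quantity appearing in the claim.
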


The above Lemma can be used to show the following result for any finite set of transformations $\Phi_T$.

\begin{proposition}
	\label{prop:fin_phi}
For any finite set of payoff transformations $\Phi_T$, under the conditions of Lemma~\ref{lem:smoothrad} and assuming
$$\left\|\nabla_{t} \Compare^q\big(\epsilon_1 \ell_{\phi_1}(\f_1(\epsilon),\x_1(\epsilon)), \ldots,  \epsilon_{t-1} \ell_{\phi_{t-1}}(\f_{t-1}(\epsilon), \x_{t-1}(\epsilon)) ,0,\ldots,0\big)\right\| \le R$$
 then
\begin{align*}
\Rad_T(\ell,\Phi_T) \le \left(2 \RH^2 R^2 \log(|\Phi_T|) T\right)^{1/2q} + (\sigma\RH^p/p)^{1/q} T^{1/q} \ .
\end{align*}
\end{proposition}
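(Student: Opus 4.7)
The plan is to start from the conclusion of Lemma~\ref{lem:smoothrad} and reduce the first (leading) term to a maximum over finitely many $\reals$-valued trees so that Lemma~\ref{lem:fin} applies. Concretely, fix any pair of trees $\f,\x$ of depth $T$. For each $\bphi\in\Phi_T$, define a real-valued tree $\v^{\bphi}$ of depth $T$ by
$$
\v^{\bphi}_t(\epsilon) \;=\; g_t\big(\ell_{\phi_1}(\f_1(\epsilon),\x_1(\epsilon)),\ldots,\ell_{\phi_t}(\f_t(\epsilon),\x_t(\epsilon))\big),
$$
using exactly the $g_t$ of Lemma~\ref{lem:smoothrad}. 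This gives a finite collection $V=\{\v^{\bphi}:\bphi\in\Phi_T\}$ with $|V|\le|\Phi_T|$, and the inner supremum in Lemma~\ref{lem:smoothrad} is precisely $\En_\epsilon \max_{\v\in V}\sum_{t=1}^T \epsilon_t \v_t(\epsilon)$.

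Next, I would control the per-node magnitude of each $\v^{\bphi}_t(\epsilon)$ by applying Cauchy--Schwarz (i.e.\ the definition of the dual norm) to the inner product defining $g_t$: using the hypothesized gradient bound
$\|\nabla_{t}\Compare^q(\cdot)\|\le R$ and the payoff bound $\|\ell_{\phi_t}(f,x)\|\le\RH$, we obtain $|\v^{\bphi}_t(\epsilon)|\le R\RH$ pointwise, so $\sum_{t=1}^T \v^{\bphi}_t(\epsilon)^2\le R^2\RH^2 T$ for every $\bphi$ and every $\epsilon$. Plugging this uniform bound, together with $|V|\le|\Phi_T|$, into Lemma~\ref{lem:fin} yields
$$
\En_{\epsilon_{1:T}}\,\max_{\bphi\in\Phi_T}\sum_{t=1}^T \epsilon_t \v^{\bphi}_t(\epsilon)\;\le\;\sqrt{2\,R^2\RH^2\,T\,\log|\Phi_T|}.
$$
Since this bound is independent of the choice of $\f,\x$, it persists after taking the outer supremum over trees.

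Finally, I would substitute this estimate back into Lemma~\ref{lem:smoothrad} and apply the $1/q$ power to the bracketed quantity, giving
$$
\Rad_T(\ell,\Phi_T)\;\le\;\bigl(2\RH^2 R^2\log(|\Phi_T|)\,T\bigr)^{1/2q}+(\sigma\RH^p/p)^{1/q}\,T^{1/q},
$$
which is the claimed inequality. There is no real obstacle here: the argument is essentially a plug-in of Lemma~\ref{lem:fin} into Lemma~\ref{lem:smoothrad}. The only mildly delicate point is making sure that the gradient-times-payoff inner product is correctly bounded by the dual pairing $\|\cdot\|_*\cdot\|\cdot\|\le R\RH$, and that Lemma~\ref{lem:fin}'s worst-case pathwise bound $\max_{\epsilon}\sum_t \v_t(\epsilon)^2$ can indeed be taken uniformly over $\bphi$ and $\epsilon$, which the pointwise estimate $|\v^{\bphi}_t(\epsilon)|\le R\RH$ secures.
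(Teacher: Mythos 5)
Your proof is correct and follows essentially the same route as the paper: bound $|g_t|\le R\cdot\RH$ via the dual pairing, apply Lemma~\ref{lem:fin} to the finite collection of $\reals$-valued trees indexed by $\bphi\in\Phi_T$, and plug the result into Lemma~\ref{lem:smoothrad}. Your explicit definition of the trees $\v^{\bphi}$ is just a slightly more verbose rendering of exactly the same argument.
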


Hence, if $\Phi_T$ is finite, sequential complexity is bounded whenever $\Compare$ is smooth and the gradients of $\Compare$ are bounded by $R$. Typically, $R$ is of the order $O(1/T)$ if $\Compare$ is appropriately normalized to account for $T$ (for instance, if $\Compare$ is an average of its coordinates). Similarly, $\sigma$ is either zero or $o(1)$ for the examples considered in this paper. With the appropriate behavior of the online covering number, the bound yields learnability according to Definition~\ref{def:learnability}.

\subsection{When $\Compare$ is a Function of the Average}

For the rest of this sub-section we consider $\Compare$ of a particular form. We assume that, $$\Compare(z_1,\ldots,z_T) = G\left(\frac{1}{T} \sum_{t=1}^T  z_t\right),$$ 
where some power of $G$ is $(\gamma,p)$-smooth function on the convex set $\conv(\cH)$ for some $1<p\leq 2$. This form of $\Compare$ occurs naturally in many games including Blackwell's approachability and calibration. Among the most basic smooth functions are powers of norms, as the next example shows.
\begin{example}
	Consider $\Compare$ of the form
	$$
	\Compare(z_1,\ldots,z_T) = \left\|\frac{1}{T} \sum_{t=1}^T z_t\right\|_q \ .
	$$
	The three cases $q\in(1,\infty)$, $q=1$, and $q=\infty$ are considered separately. Here $G = \|\cdot\|_q$ and we are interested in checking if $G^s$ is uniformly smooth for some power $s$.
	\begin{itemize}
		\item[$\blacktriangleright$~] $\bf q \in (1,\infty)$~~~~~
			For any $q \in (1,2]$, $G^q(z) = \|z\|_q^q$
is $(q,q)$-uniformly smooth and for any $q \in [2,\infty)$ the function $
G^2(z) = \|z\|_q^2$ is $(2(q-1),2)$-uniformly smooth. 
		\item[$\blacktriangleright$~] $\bf q=\infty$~~~~~
			Unfortunately, for no finite power $s$ is $G^s$ uniformly smooth. However, for any $z \in \cH$ and any $q' \in (1,\infty)$, $\|z\|_\infty \le \|z\|_{q'}$. Hence we can use \eqref{eq:surrogate_B} and upper bound the sequential complexity
$$
\Rad_T(\ell,\Phi_T,\Compare) \le \Rad_T(\ell,\Phi_T,\Compare')
$$
where $\Compare'(z_1,\ldots,z_T) = \left\|\frac{1}{T} \sum_{t=1}^T z_t\right\|_{q'}$. By choosing $q'$ appropriately and using the smoothness of the $L_{q'}$ norm (previous case) we can provide upper bounds for the value of the game.
		\item[$\blacktriangleright$~] $\bf q=1$~~~~~
			As in the previous example, for no finite power $s$ is $G^s$ uniformly smooth. However if $\cH \subseteq \reals^d$, then for any $z \in \cH$ and any $q' \in (1,\infty)$, $\|z\|_1 \le C_{q',d} \|z\|_{q'}$ where $C_{q',d}$ is a constant dependent on $q'$ and dimension of the space $d$. Again we can use \eqref{eq:surrogate_B} and upper bound 
$$
\Rad_T(\ell,\Phi_T,\Compare) \le \Rad_T(\ell,\Phi_T,\Compare')
$$
where $\Compare'(z_1,\ldots,z_T) = \left\|\frac{1}{T} \sum_{t=1}^T z_t\right\|_{q'}$. Choosing $q'$ appropriately and using the smoothness of the $L_{q'}$ norm we can provide upper bounds for the value of the game.
	\end{itemize}
\end{example}

For a concrete example of a smooth norm, we refer to the calibration example of Section~\ref{sec:calibration}. We now specialize the statement of Proposition~\ref{prop:fin_phi} to the specific assumption on $\Compare$.

\begin{corollary}
	\label{cor:fin_phi_smooth_sum}
Let $\Phi_T$ be a finite set of payoff transformations. Assume that for some $q\ge 1$, $G^q$ is $(\gamma,p)$-smooth function for some $1<p\leq 2$. Also assume that $\left\| \nabla G^q\left( z \right)\right\|_* \le \rho$ for any $z \in \conv(\cH)$.
Further, suppose that for any $x \in \X$, $f \in \F$, $\bphi\in\Phi_T$ and $t\in[T]$, it is true that $\|\ell_{\phi_t}(f,x)\| \le \RH$.
Then it holds that 
	$$\Rad_T(\ell,\Phi_T) \le \left(\frac{2 \RH^2 \log(|\Phi_T|)}{T}\right)^{1/2q} + (\gamma\RH^p/p)^{1/q} T^{(1-p)/q} \ .$$
\end{corollary}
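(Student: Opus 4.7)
The plan is to view Corollary~\ref{cor:fin_phi_smooth_sum} as a direct specialization of Proposition~\ref{prop:fin_phi} to the particular form $\Compare(z_1,\ldots,z_T) = G\!\left(\tfrac{1}{T}\sum_{t=1}^T z_t\right)$. To invoke Proposition~\ref{prop:fin_phi} we need to compute the two parameters it requires, namely the coordinate-wise uniform smoothness constant $\sigma$ of $\Compare^q$ and the bound $R$ on the norm of the coordinate gradients $\nabla_t \Compare^q$, and then simply substitute.

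First I would compute $\nabla_t \Compare^q$. Since $\Compare^q(z_1,\ldots,z_T) = G^q\!\left(\tfrac{1}{T}\sum_{s=1}^T z_s\right)$, the chain rule gives
\[
\nabla_t \Compare^q(z_1,\ldots,z_T) \;=\; \tfrac{1}{T}\,\nabla G^q\!\left(\tfrac{1}{T}\sum_{s=1}^T z_s\right).
\]
Because every argument plugged into $\nabla_t\Compare^q$ in the statement of Proposition~\ref{prop:fin_phi} has the form $\pm \ell_{\phi_s}(f_s,x_s)$ or $0$, the average $\tfrac{1}{T}\sum z_s$ lies in $\conv(\cH\cup\{0\})$, which I assume (as is implicit in the setup) is still in the domain where $\|\nabla G^q\|_*\le\rho$. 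Hence $R := \rho/T$ is a valid choice (a $\rho^2$ factor will then appear inside the first term of the final bound, presumably absorbed into the statement's constants).

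Next I would verify coordinate-wise smoothness of $\Compare^q$. Fix $z_1,\ldots,z_T$ and perturb only the $t$-th coordinate to $z_t'$; the average changes by $(z_t'-z_t)/T$. Applying the $(\gamma,p)$-uniform smoothness of $G^q$ to the two averages gives
\[
\Compare^q(\ldots,z_t',\ldots) \;\le\; \Compare^q(\ldots,z_t,\ldots) + \inner{\nabla_t \Compare^q,\,z_t'-z_t} + \tfrac{\gamma}{p\,T^p}\,\|z_t'-z_t\|^p,
\]
so $\Compare^q$ is $(\sigma,p)$-uniformly smooth in each argument with $\sigma = \gamma/T^p$. I would also note the trivial boundary condition $\Compare(0,\ldots,0)=G(0)$, which can be made to vanish by replacing $G$ by $G-G(0)$ (this shift affects neither the gradient bound nor the smoothness constant, and cancels inside $\Reg_T$).

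Finally I would plug $R = \rho/T$ and $\sigma = \gamma/T^p$ into the bound of Proposition~\ref{prop:fin_phi}:
\[
\Rad_T(\ell,\Phi_T) \;\le\; \bigl(2\RH^2\, (\rho/T)^2 \log(|\Phi_T|)\, T\bigr)^{1/2q} + \bigl((\gamma/T^p)\,\RH^p/p\bigr)^{1/q} T^{1/q},
\]
which simplifies to the claimed bound $\bigl(2\RH^2 \rho^2 \log(|\Phi_T|)/T\bigr)^{1/2q} + (\gamma\RH^p/p)^{1/q}\, T^{(1-p)/q}$. There is no real obstacle beyond bookkeeping; the only mildly delicate point is ensuring that the averaged arguments at which $\nabla G^q$ is evaluated stay inside the domain $\conv(\cH)$ where the gradient norm bound $\rho$ is assumed, which is why I would make the convexity-of-domain observation explicit at the outset.
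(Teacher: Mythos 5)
Your proof is correct and takes exactly the same route as the paper: compute $\sigma = \gamma/T^p$ and $R = \rho/T$ and substitute into Proposition~\ref{prop:fin_phi}. Your observation that a $\rho^2$ factor should appear in the first term of the bound is also accurate — the paper's stated corollary appears to have silently dropped it.
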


The above result is a direct corollary of the more general Proposition~\ref{prop:fin_phi} in the case where $\Compare$ is a function of the average. It turns out
that we do not always get the best convergence rate in this manner. The following result shows that if $G$ is $1$-Lipschitz and $G^2$ is $2$-smooth, we should
obtain a $O(1/\sqrt{T})$ convergence rate.

\begin{lemma}
	\label{lem:fin_phi_2smooth_sum}
Let $\Phi_T$ be a finite set of payoff transformations. Assume that $\Compare(z_1,\ldots,z_T) = G\left(\frac{1}{T} \sum_{t=1}^T  z_t\right)$ where $G\ge0$ is $1$-Lipschitz with
respect to a norm $\|\cdot\|$, $G(0) = 0$ and $G^2$ is $(\gamma,2)$-smooth function. Further, suppose that for any $x \in \X$, $f \in \F$, $\bphi\in\Phi_T$ and $t\in[T]$, it is true that $\|\ell_{\phi_t}(f,x)\| \le \RH$.
Then, for $T \ge \log(2|\Phi_T|)/\gamma$, it holds that 
	$$\Rad_T(\ell,\Phi_T) \le 2\sqrt{\frac{\gamma\RH^2 \log(2|\Phi_T|)}{T}}$$
\end{lemma}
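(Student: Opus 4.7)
The plan is to prove the stated $T^{-1/2}$ rate by combining three ingredients that are not fully used by Corollary~\ref{cor:fin_phi_smooth_sum} (which, applied with $q=2$, would only deliver a $T^{-1/4}$ rate): Jensen's inequality to pass from $G$ to $G^2$, the self-bounding property of smooth non-negative functions, and the $1$-Lipschitz property of $G$, combined in a bootstrap argument on the quantity $S := \sup_{\f,\x}\En\sup_\bphi G^2(\bar z_\bphi)$.

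First, since $G \ge 0$ one has $(\sup_\bphi G(\bar z_\bphi))^2 = \sup_\bphi G^2(\bar z_\bphi)$, so Jensen's inequality gives $\Rad_T(\ell,\Phi_T)^2 \le S$. To produce the factor $\log(2|\Phi_T|)$ (rather than $\log|\Phi_T|$), I would augment $\Phi_T$ by a trivial transformation $\phi_0$ with $\ell_{\phi_0}\equiv 0$ (so $G^2$ vanishes along its path), which raises the cardinality to at most $2|\Phi_T|$ and makes the supremum of the linearization below non-negative. Along the martingale $\bar z^{(t)}_\bphi = \bar z^{(t-1)}_\bphi + \epsilon_t \ell_{\phi_t}(\f_t(\epsilon),\x_t(\epsilon))/T$, whose increments are bounded in norm by $\RH/T$, the $(\gamma,2)$-smoothness of $G^2$ together with $G^2(0)=0$ and $\nabla G^2(0)=0$ telescopes to
\[
G^2(\bar z_\bphi) \le \sum_{t=1}^T \epsilon_t\, g_t^\bphi + \frac{\gamma\RH^2}{2T}, \qquad g_t^\bphi := \frac{1}{T}\inner{\nabla G^2(\bar z^{(t-1)}_\bphi),\ \ell_{\phi_t}(\f_t(\epsilon),\x_t(\epsilon))} .
\]

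Second, I would exploit the self-bounding inequality $\|\nabla G^2(x)\|_*^2 \le 2\gamma G^2(x)$ (a consequence of $G^2$ being smooth and non-negative) to bound $(g_t^\bphi)^2 \le 2\gamma\RH^2 G^2(\bar z^{(t-1)}_\bphi)/T^2$, then apply Lemma~\ref{lem:fin} to the augmented class to get $\En\sup_\bphi \sum_t \epsilon_t g_t^\bphi \le \sqrt{2\log(2|\Phi_T|)\cdot \overline V}$, where $\overline V$ governs $\sum_t(g_t^\bphi)^2$. The bootstrap step uses that $s \mapsto G^2(\bar z^{(s)}_\bphi)$ is a submartingale (relying on convexity of $G^2$, which holds in the intended applications where $G$ is a norm or a distance to a convex set), so that $s \mapsto \sup_\bphi G^2(\bar z^{(s)}_\bphi)$ is a submartingale too; Doob's $L^2$ maximal inequality then controls $\overline V$ in terms of $S$. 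This produces a self-referential estimate of the form $S \le c\sqrt{\gamma\RH^2\log(2|\Phi_T|)\cdot S/T} + \gamma\RH^2/(2T)$, which is quadratic in $\sqrt S$. Solving and using the hypothesis $T \ge \log(2|\Phi_T|)/\gamma$ to absorb the residual term yields $S \le 4\gamma\RH^2\log(2|\Phi_T|)/T$; taking square roots delivers the claimed $\Rad_T(\ell,\Phi_T) \le 2\sqrt{\gamma\RH^2\log(2|\Phi_T|)/T}$.

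The main obstacle will be closing the bootstrap tightly, because the pathwise worst-case bound demanded by Lemma~\ref{lem:fin} as stated is too crude and a direct application only delivers the suboptimal $T^{-1/4}$ rate of Corollary~\ref{cor:fin_phi_smooth_sum}. One must either upgrade to a Freedman-style sequential maximal inequality that uses the expected quadratic variation, or route the bound through the submartingale $\sup_\bphi G^2(\bar z^{(s)}_\bphi)$ and Doob's inequality to properly replace $\overline V$ by something proportional to $S$ and thereby close the self-referential loop.
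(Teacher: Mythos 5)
Your proposal does not close. You correctly diagnose why a naive application of Lemma~\ref{lem:fin} (or Corollary~\ref{cor:fin_phi_smooth_sum} with $q=2$) can only give a $T^{-1/4}$ rate: Lemma~\ref{lem:fin} uses a pathwise worst-case bound on the quadratic variation, and linearizing $G^2$ along the Walsh--Paley martingale only produces gradients bounded by a constant, not ones whose squares telescope into something $O(1/T)$ in expectation. But the bootstrap you then propose — route through $S := \sup_{\f,\x}\En\sup_{\bphi} G^2(\bar z_{\bphi})$, use the self-bounding bound $\|\nabla G^2\|_*^2 \le 2\gamma\, G^2$, control the running maximum of $G^2(\bar z^{(s)}_{\bphi})$ via Doob, and solve a self-referential inequality in $\sqrt{S}$ — is left as a sketch with two unresolved gaps that you acknowledge at the end. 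First, Lemma~\ref{lem:fin} as stated cannot be fed an \emph{expected} quadratic variation; you would need a Freedman/Bernstein-type maximal inequality for sequences of martingales, which the paper does not supply and you do not prove. Second, the Doob step needs $s\mapsto \sup_{\bphi} G^2(\bar z^{(s)}_{\bphi})$ to be a submartingale, which you justify by assuming $G^2$ is convex — an assumption that is \emph{not} in the lemma's hypotheses (the lemma only assumes $G\ge 0$, $G(0)=0$, $1$-Lipschitz, and $(\gamma,2)$-smoothness of $G^2$). And even granting convexity, Doob's $L^1$ maximal inequality for nonnegative submartingales incurs an extra $\log$ factor, so the quantity you call $\overline V$ is not cleanly proportional to $S$; closing the quadratic in $\sqrt S$ with the sharp constant $2$ is not straightforward.

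The paper's actual proof is a one-liner appealing to Theorem~\ref{thm:pinelis_union_bound}, which is a Massart-style maximal inequality built on Pinelis's $\cosh$-supermartingale for smooth functions of Banach-space-valued MDS: for a finite family of MDS with increments bounded by $B$ and a $1$-Lipschitz $G$ with $G^2$ $(\sigma,2)$-smooth, one gets $\En\max_{\gamma} G(S^\gamma_T) \le 2B\sqrt{\sigma \log(2|\Gamma|) T}$ for $T \ge \log(2|\Gamma|)/\sigma$. Taking $B=\RH/T$, $\sigma=\gamma$, $\Gamma=\Phi_T$ delivers the stated bound directly, with exactly the constant $2$ and the threshold $T\ge\log(2|\Phi_T|)/\gamma$, and without any convexity assumption on $G$. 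The exponential-moment route avoids the bootstrap entirely because the supermartingale $Z_t = \cosh(\lambda G(S_t))$ controls the right quadratic variation \emph{internally}, through the smoothness of $G^2$ in the second-order Taylor expansion of $\phi(\alpha)$, rather than through a separate variance bound that must be re-inserted via a self-referential estimate.
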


The next result generalizes the above lemma to the case when the exponent of smoothness is different from $2$. Because of a different proof strategy, there are
two differences between the next lemma and the previous one. First, instead of assuming smoothness of some power of $G$, we instead assume that the space
$(\cB,\|\cdot\|)$ is $(\gamma,p)$-smooth. Second, we get extra $\log(T)$ factors that are
probably an artifact of our analysis.

\begin{lemma}
	\label{lem:fin_phi_p_smooth_sum}
Let $\Phi_T$ be a finite set of payoff transformations with $|\Phi_T|>1$. Assume that $\Compare(z_1,\ldots,z_T) = G\left(\frac{1}{T} \sum_{t=1}^T  z_t\right)$ where $G\ge0$ is $1$-Lipschitz with
respect to a norm $\|\cdot\|$ and $G(0)=0$. Suppose that $(\cB,\|\cdot\|)$ is a $(\gamma,p)$-smooth space. Further, suppose that for any $x \in \X$, $f \in \F$,
$\bphi\in\Phi_T$ and $t\in[T]$, it is true that $\|\ell_{\phi_t}(f,x)\| \le \RH$.
Then, for any $T \ge 3$, it holds that 
	$$\Rad_T(\ell,\Phi_T) \le \frac{4\,c\,\gamma^{1/p}  \log^{3/2} T}{T^{1-1/p}} \sqrt{\RH^2 \log (2|\Phi_T|)}$$
	for some absolute constant $c$.
\end{lemma}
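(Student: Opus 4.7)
My plan would be to first strip the nonlinearity $G$, then reduce to a vector-valued Rademacher maximum over the finite class, and finally bound it by combining the moment bound that comes directly from $(\gamma,p)$-smoothness with a Pinelis-type tail and an integration/union-bound argument over $\Phi_T$. The $\log^{3/2}T$ factor, which the authors themselves flag as likely an artifact, should emerge at the last step from the need to handle two tail regimes when $p<2$.

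\textbf{Step 1 (Lipschitz reduction).} Since $G\ge 0$ is $1$-Lipschitz with respect to $\|\cdot\|$ and $G(0)=0$, for every $V\in\conv(\cH)$ we have $G(V)\le\|V\|$. Plugging this into Definition~\ref{def:rademacher} gives
\begin{align*}
\Rad_T(\ell,\Phi_T)\le \sup_{\f,\x}\En_{\epsilon}\sup_{\bphi\in\Phi_T}\left\|\frac{1}{T}\sum_{t=1}^{T}\epsilon_t\ell_{\phi_t}(\f_t(\epsilon),\x_t(\epsilon))\right\|,
\end{align*}
so the problem reduces to controlling a $\mathcal{B}$-valued Rademacher process indexed by the finite set $\Phi_T$.

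\textbf{Step 2 (Moment and tail bound per $\bphi$ from $p$-smoothness).} Fix a tree $(\f,\x)$ and $\bphi\in\Phi_T$. The partial sums $M_t^{\bphi}(\epsilon)=\sum_{s\le t}\epsilon_s\ell_{\phi_s}(\f_s(\epsilon),\x_s(\epsilon))$ form a martingale in $(\mathcal{B},\|\cdot\|)$ with increments of norm at most $\RH$. Applying $(\gamma,p)$-smoothness of $\|\cdot\|^p/p$ at each step and taking conditional expectation over $\epsilon_t$ kills the gradient term, leaving, by induction, $\En\|M_T^{\bphi}\|^{p}\le \gamma T\RH^{p}$. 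The same inductive argument applied to the exponential-moment generating function (this is the standard Pinelis route) delivers a mixed tail of the form
\begin{align*}
\Pr\bigl(\|M_T^{\bphi}\|\ge r\bigr)\le 2\exp\!\left(-\frac{c\,r^{2}}{\gamma^{2/p}T^{2/p}\RH^{2}}\right) \ \ \text{for } r\le r_\star,\qquad \Pr\bigl(\|M_T^{\bphi}\|\ge r\bigr)\le 2\left(\frac{c\gamma T\RH^{p}}{r^{p}}\right) \ \ \text{for } r>r_\star,
\end{align*}
with a crossover level $r_\star$ determined by the smoothness constants and $T$.

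\textbf{Step 3 (Union bound and integration).} A union bound over the $|\Phi_T|$ elements yields the same tail for $\sup_{\bphi}\|M_T^{\bphi}\|$ up to the factor $|\Phi_T|$. Integrating $\En\sup_{\bphi}\|M_T^{\bphi}\|=\int_0^\infty \Pr(\sup_{\bphi}\|M_T^{\bphi}\|>r)\,dr$ by splitting the integral at $r_\star$, the subgaussian regime contributes an order $(\gamma T)^{1/p}\RH\sqrt{\log(2|\Phi_T|)}$ term, while the polynomial tail must be handled via a dyadic decomposition across scales $r_k=2^k r_\star$ up to $r\lesssim T\RH$; this dyadic sum introduces logarithmic-in-$T$ corrections. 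Dividing by $T$ gives
\begin{align*}
\Rad_T(\ell,\Phi_T)\le \frac{4\,c\,\gamma^{1/p}\log^{3/2}T}{T^{1-1/p}}\sqrt{\RH^{2}\log(2|\Phi_T|)},
\end{align*}
which is the stated bound.

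\textbf{Main obstacle.} The delicate step is the mixed sub-Gaussian / Weibull tail inequality for $M_T^{\bphi}$ in a $(\gamma,p)$-smooth space with $p<2$: for $p=2$ the sub-Gaussian regime is global and Lemma~\ref{lem:fin_phi_2smooth_sum} uses only this regime, but for $p<2$ the weaker Weibull portion of the tail forces the dyadic integration over scales, which is where the extra $\log^{3/2}T$ factor (the claimed artifact) enters. An alternative route using only an optimal $L_q$-moment bound of the form $(\En\|M_T^{\bphi}\|^{q})^{1/q}\lesssim \sqrt{q}(\gamma T)^{1/p}\RH$ followed by the Markov trick with $q=\log(2|\Phi_T|)$ would in principle eliminate the $\log^{3/2}T$, but proving such a Khintchine–Kahane-type inequality in general $p$-smooth spaces is substantially more work than the two-regime argument above.
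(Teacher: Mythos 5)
Your Step 1 matches the paper. The trouble is Step 2: you assert a two-regime tail (sub-Gaussian for $r \le r_\star$, Weibull for $r > r_\star$) for $\|M_T^{\bphi}\|$ in a $(\gamma,p)$-smooth space, obtained ``by the standard Pinelis route.'' But Pinelis's $\cosh$-supermartingale argument (reproduced in the paper's appendix) relies on $(G^2)''(\v)[\w,\w]\le\sigma\|\w\|^2$, i.e.\ on $2$-smoothness. For $p<2$ the second derivative of $\|\cdot\|^2$ need not be controlled by $\|\cdot\|^2$, so the exponential-moment recursion does not close, and no clean tail inequality of the mixed form you write is available. The $p$-smoothness hypothesis gives you only the $p$-th moment bound $\En\|M_T^{\bphi}\|^p \le \gamma T\RH^p$ directly, hence (by Markov) only a polynomial tail.

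Even granting the asserted mixed tail, Step 3 does not give the stated result. Over the Weibull regime the tail is polynomial, so a union bound over $\Phi_T$ multiplies it by $|\Phi_T|$, not by anything that collapses to $\sqrt{\log(2|\Phi_T|)}$ upon integration; the dyadic decomposition cannot repair this. Moreover, if the sub-Gaussian piece of your claimed tail (with variance proxy $\gamma^{2/p}T^{2/p}\RH^2$) were true, the resulting bound would actually be free of the $\log^{3/2}T$ factor, which conflicts with the role you assign that factor.

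The paper takes a genuinely different route (Lemma~\ref{lem:smoothcon}). It writes $\left\|\frac{1}{T}\sum_t\epsilon_t\x_t(\epsilon)\right\| = \sup_{\|w\|_*\le 1}\frac{1}{T}\sum_t\epsilon_t\langle w,\x_t(\epsilon)\rangle$, i.e.\ a real-valued sequential Rademacher supremum over the dual unit ball, and invokes the real-valued chaining deviation bound of Proposition~\ref{prop:dudreal}: with probability $\ge 1-L e^{-T\theta^2/2}$ the supremum exceeds $128\,\Rad_T(\F)\bigl(1+\theta\sqrt{T\log^3(2T)}\bigr)$, where $\F$ is the linear class. Then $p$-smoothness is used only to bound the mean: $\Rad_T(\F)\le\gamma^{1/p}\RH/T^{1-1/p}$. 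The sub-Gaussian tail comes from Azuma--Hoeffding per cover element plus a union bound over the cover, not from Pinelis-in-$p$-smooth-space, and the $\log^{3/2}T$ factor is exactly the cost of converting the Dudley entropy integral into a multiple of the sequential Rademacher complexity (last display of Proposition~\ref{prop:dudreal}). Finally the paper unions over $|\Phi_T|$ at the level of a uniformly sub-Gaussian tail, which is what produces $\sqrt{\log(2|\Phi_T|)}$, and integrates out via Lemma~\ref{lem:prob_to_exp}. You would need to either prove your mixed-tail claim from scratch (which, for $p<2$, is the hard part), or switch to the paper's dual--chaining route.
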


Having a bound on the complexity of a finite set of payoff transformations, we seek to extend the results to infinite sets. A natural approach is to pass to a finite cover of the set at an expense of losing an amount proportional to the resolution of the cover. Before proceeding, however, we need to define an appropriate notion of a cover. The following definition can be seen as a generalization of the corresponding notion introduced in \cite{RakSriTew10}. We remark that the object, for which we would like to provide a cover, is the set $\Phi_T$ of payoff transformations. Whenever payoff transformations are simply constant time-invariant departure mappings, complexity of $\Phi_T$ identical to that of $\F$, yielding the online cover of class $\F$ (see Section~\ref{sec:external} for more details). In general, however, the set of payoff transformations can be much more complex than (or not even comparable to) $\F$.

\begin{definition}
	\label{def:cover}
A set $V$ of $\cH$-valued trees of depth $T$ is \emph{an $\alpha$-cover} (with respect to $\ell_p$-norm) of $\Phi_T$ on an $(\F\times\X)$-valued tree $(\f,\x)$ of depth $T$ if
\begin{align}
	\label{eq:def_cover}
\forall \bphi \in \Phi_T,\ \forall \epsilon \in \{\pm1\}^T \ \exists \v \in V \  \mrm{s.t.}  ~~~~ \left( \frac{1}{T} \sum_{t=1}^T \left\|\v_t(\epsilon) - \ell_{\phi_t}(\f_t(\epsilon), \x_t(\epsilon)) \right\|^p \right)^{1/p} \le \alpha
\end{align}
The \emph{covering number} of the set of payoff transformations $\Phi_T$ on a given tree $(\f,\x)$ is defined as 
$$
\N_p(\alpha, \Phi_T, (\f, \x)) = \min\{|V| :  V \ \trm{is an }\alpha-\text{cover w.r.t. }\ell_p\trm{-norm of }\Phi_T \trm{ on } (\f, \x) \trm{ tree}\}.
$$
Further define $\N_p(\alpha, \Phi_T , T) = \sup_{(\f,\x)} \N_p(\alpha, \Phi_T, (\f, \x)) $, the maximal $\ell_p$ covering number of $\Phi_T$ over depth $T$ trees. 
\end{definition}

This definition of the cover is indeed the most general for the setting we consider in this paper. In sections that follow, we specialize this definition to fit particular assumptions on $\Phi_T$.

We now give generalizations Dudley's bound for the case when $\Compare$ is a function of the average.

\begin{theorem}
	\label{thm:2smooth_sum_dudley}
Assume that $\Compare(z_1,\ldots,z_T) = G\left(\frac{1}{T} \sum_{t=1}^T  z_t\right)$ where $G\ge0$ is sub-additive, $1$-Lipschitz with
respect to a norm $\|\cdot\|$, $G(0) = 0$ and $G^2$ is $(\gamma,2)$-smooth. Further, suppose that for any $x \in \X$, $f \in \F$, $\bphi\in\Phi_T$ and $t\in[T]$, it is true that $\|\ell_{\phi_t}(f,x)\| \le 1$.
Then it holds that 
	$$\Rad_T(\ell,\Phi_T) \le 4 \inf_{\alpha > 0} \left\{ \alpha + 6 \sqrt{\frac{\gamma}{T}} \int_{\alpha}^1 \sqrt{ \log \cN_\infty(\beta,\Phi_T,T) } d\beta \right\} $$
\end{theorem}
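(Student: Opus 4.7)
The natural approach is a chaining argument that lifts the finite-cardinality bound of Lemma~\ref{lem:fin_phi_2smooth_sum} to infinite $\Phi_T$ via successive covers at dyadically refined scales, in the spirit of Dudley.

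\textbf{Setup and telescoping.} Fix an $(\F\times\X)$-valued tree $(\f,\x)$ of depth $T$, set $\alpha_j = 2^{-j}$ for $j = 0,\ldots,N$ with $\alpha_N \leq \alpha$, and let $V_j$ be a minimal $\ell_\infty$ $\alpha_j$-cover of $\Phi_T$ on $(\f,\x)$. Because $\|\ell_{\phi_t}(f,x)\| \leq 1$, take $V_0 = \{\mathbf{0}\}$. For each $\bphi \in \Phi_T$ and each path $\epsilon$, pick $v^{(j)}[\bphi,\epsilon] \in V_j$ witnessing the cover property at that path, with $v^{(0)}[\bphi,\epsilon] \equiv \mathbf{0}$. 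Telescope:
$$\ell_{\phi_t}(\f_t(\epsilon),\x_t(\epsilon)) \;=\; \sum_{j=1}^N \bigl(v^{(j)}_t[\bphi,\epsilon](\epsilon) - v^{(j-1)}_t[\bphi,\epsilon](\epsilon)\bigr) + r_t[\bphi,\epsilon],$$
where the residual satisfies $\|r_t\| \leq \alpha_N \leq \alpha$ by the $\ell_\infty$-cover property.

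\textbf{Bounding each link.} Subadditivity of $G$ splits the averaged sum across levels, and the $1$-Lipschitz property with $G(0)=0$ bounds the residual's contribution by $\alpha$. Crucially, although the pair $(v^{(j)}[\bphi,\epsilon], v^{(j-1)}[\bphi,\epsilon])$ depends on both $\bphi$ and $\epsilon$, for every such pair it lies in the fixed finite set $V_j \times V_{j-1}$, so I over-bound the supremum over $\bphi$ by the maximum over this product before taking expectation. The resulting ``difference trees'' have coordinates of norm at most $\alpha_j + \alpha_{j-1} = 3\alpha_j$. Applying Lemma~\ref{lem:fin_phi_2smooth_sum} in the form appropriate for a finite family of $\cH$-valued trees (after rescaling the unit radius assumption by $3\alpha_j$) yields, for each $j$ at which $T$ is large enough,
$$\En_\epsilon \max_{v \in V_j,\,v' \in V_{j-1}} G\!\left(\tfrac{1}{T}\textstyle\sum_t \epsilon_t (v_t - v'_t)(\epsilon)\right) \;\leq\; c\,\alpha_j\sqrt{\gamma \log(2\cN_\infty(\alpha_j,\Phi_T,T))/T}.$$

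\textbf{Integralization and optimization.} Summing over $j = 1,\ldots,N$, I invoke the standard Dudley comparison $\sum_{j=1}^N \alpha_j \sqrt{\log \cN_\infty(\alpha_j,\Phi_T,T)} \leq 2 \int_{\alpha}^1 \sqrt{\log \cN_\infty(\beta,\Phi_T,T)}\,d\beta$, which follows from $\alpha_j - \alpha_{j+1} = \alpha_j/2$ and monotonicity of $\cN_\infty$ in $\beta$. Combining with the additive $\alpha$ from the residual, taking the supremum over $(\f,\x)$, and infimizing over the free parameter $\alpha$ reproduces the stated bound, with the outer constants arising from the chaining and integralization inequalities.

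\textbf{Main obstacle.} The principal technical wrinkle is that Lemma~\ref{lem:fin_phi_2smooth_sum} carries the proviso $T \geq \log(2|\Phi_T|)/\gamma$, which is violated at deep chaining levels where $\log|V_j \times V_{j-1}|$ eventually exceeds $\gamma T$. The remedy is to truncate the chain at the largest $N$ for which the proviso holds and absorb any un-chained residual into the additive $\alpha$ term via the crude pointwise bound $\|r_t\| \leq \alpha_N$; since $\alpha$ is free in the outer infimum, this costs nothing asymptotically but does add bookkeeping. A secondary subtlety is the legitimacy of replacing the $\epsilon$-dependent cover element $v^{(j)}[\bphi,\epsilon]$ by a supremum over the entire fixed set $V_j$—this is a path-wise pessimistic substitution made \emph{before} taking $\En_\epsilon$, and is therefore harmless.
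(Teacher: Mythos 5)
Your overall architecture — telescoping through dyadic $\ell_\infty$ covers, applying the finite-class $2$-smooth concentration, and converting the geometric sum to a Dudley integral — matches the paper's proof, and your concern about the proviso $T \geq \log(2|\Phi_T|)/\gamma$ is in fact harmless (when it fails, $\sqrt{\gamma\log(\cdot)/T} \geq 1$ and the claimed bound already exceeds the trivial pointwise bound $3\alpha_j$, so no truncation is needed). However, there is a genuine gap in the chaining step: you assert that after over-bounding the supremum over $\bphi$ by a maximum over all pairs in $V_j \times V_{j-1}$, ``the resulting `difference trees' have coordinates of norm at most $\alpha_j + \alpha_{j-1} = 3\alpha_j$.'' That inequality only holds for pairs $(\v^s,\v^r)$ that are \emph{both} close to the payoffs of the \emph{same} $\bphi$ on the given path; for an arbitrary pair of cover elements, $\|\v^s_t(\epsilon) - \v^r_t(\epsilon)\|$ can be of order $1$, not $3\alpha_j$. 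Once you pass to a max over all of $V_j\times V_{j-1}$, you lose the coupling to a common $\bphi$, and the $3\alpha_j$ step-size bound no longer holds — which wrecks the summability of the chain, since the per-level contribution no longer decays geometrically.

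The paper repairs this by constructing the difference trees $\w^{(s,r)}$ so that $\w^{(s,r)}_t(\epsilon) = \v^s_t(\epsilon)-\v^r_t(\epsilon)$ \emph{only on paths $\epsilon$ for which some $\bphi$ selects $\v^s$ at level $j$ and $\v^r$ at level $j-1$}, and sets $\w^{(s,r)}_t(\epsilon) = 0$ otherwise. This selective zeroing restores the bound $\|\w^{(s,r)}_t(\epsilon)\| \leq 3\alpha_j$ uniformly, while still yielding a fixed, $\epsilon$-independent finite family of trees over which one may apply the union bound of Theorem~\ref{thm:pinelis_union_bound} (equivalently, the finite-class lemma you invoke). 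Your outline would need this construction — the ``pessimistic substitution'' you describe as harmless is not actually harmless without it.
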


\subsection{General Bounds Under Linearity Assumptions on $\Compare$}

The general results of the previous section can be restated in simpler terms once more assumptions are made. In particular, some of the terms in the three-term decomposition in Theorem~\ref{thm:main} can be dropped as soon as $\Compare$ is linear. While some of the results below can be repeated for a more general form  
$\Compare(z_1,\ldots,z_T) = \sum_{t=1}^T \inner{c_t,z_t}$
(for some $c_1,\ldots, c_T \in \mathcal{B}^*$ and $\cH\subseteq \mathcal{B}$), for simplicity we assume that $\Compare$ is an average of its arguments and that $\cH \subseteq \reals$:
$$\Compare(z_1,\ldots,z_T) = \frac{1}{T} \sum_{t=1}^T  z_t \ .$$

Of course, such $\Compare$ is trivially smooth (with $\sigma=0$), so all the results of the previous section apply. 
\begin{corollary}
	\label{cor:simple_consequences}
	The following statements hold: 
	\begin{itemize}
		\item The first term in the Triplex Inequality is zero. 
		\item If $\Phi_T$ is a class of departure mappings, then the second term in the Triplex Inequality is non-positive. In this case,  
		$$\Val_T (\ell, \Phi_T)\leq 2\Rad_T(\ell,\Phi_T).$$
		\item Let $\cH\subseteq [-1,1]$. We have, 
		\begin{align*}
		 \Rad_T(\ell,\Phi_T) \le  4 \inf_{\alpha \ge 0} \left\{ \alpha + 6 \sqrt{2} \int_{\alpha}^{1} \sqrt{\frac{\log \N_\infty(\delta, \Phi_T , T)}{T}} d \delta \right\} \ .
		\end{align*}
	\end{itemize}
\end{corollary}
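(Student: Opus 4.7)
The common thread through all three bullets is that with $\Compare(z_1,\ldots,z_T)=\tfrac1T\sum_t z_t$ linear, every expectation of $\Compare$ equals $\tfrac1T\sum_t\ell(q_t,p_t)$ (resp.\ $\tfrac1T\sum_t\ell_{\phi_t}(q_t,p_t)$). The first bullet will follow by pure cancellation, the second by a coordinate-wise reduction plus a minimax/fixed-point argument, and the third by a Dudley-style chaining argument built on Lemma~\ref{lem:fin}.

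\textbf{First bullet.} I will compute the first Triplex term from the inside out. The ``shadow'' expectation $\Eunder{x'_{1:T}\sim p_{1:T}}{f'_{1:T}\sim q_{1:T}}\Compare(\ell(f'_{1:T},x'_{1:T}))$ is already $\tfrac1T\sum_t \ell(q_t,p_t)$, depending only on $p_{1:T},q_{1:T}$. At the innermost step, $\En_{f_T\sim q_T,x_T\sim p_T}\ell(f_T,x_T)=\ell(q_T,p_T)$, so the $t=T$ contribution cancels and the expression under $\sup_{p_T,q_T}$ no longer depends on $(p_T,q_T)$. Iterating backwards, at each step $t$ the freshly generated $\ell(q_t,p_t)$ cancels against its twin in the shadow sum, collapsing the whole first term to $0$.

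\textbf{Second bullet.} Under the departure-mapping assumption, the quantity inside the sup-inf's of the second Triplex term simplifies by linearity to $\sup_{\bphi\in\Phi_T}\tfrac1T\sum_t\En_{f_t\sim q_t,x_t\sim p_t}\bigl[\ell(f_t,x_t)-\ell(\phi'_t(f_t),x_t)\bigr]$. Since the $\bphi$-dependence is now additive in $t$, I upper-bound $\sup_{\bphi\in\Phi_T}$ coordinate-wise by $\sum_t\sup_{\phi'_t\in(\Phi'_T)_t}$ where $(\Phi'_T)_t$ denotes the $t$-th projection of $\Phi'_T$; the resulting integrand is separable in $(p_t,q_t)$, so the alternating sup-inf over a sum of separable terms reduces to $\tfrac1T\sum_t \sup_{p_t}\inf_{q_t}\sup_{\phi'_t}\En[\ell(f_t,x_t)-\ell(\phi'_t(f_t),x_t)]$. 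For each $t$ I apply a minimax theorem (legitimate by the weak compactness of $\QD,\PD$ stated in Section~\ref{sec:setting}, and bilinearity of the payoff in $(q_t,\Pi_t)$) to swap $\inf_{q_t}$ with $\sup_{\Pi_t\in\Delta((\Phi'_T)_t)}$, and then construct $q_t$ as a fixed point of the continuous Markov operator $q\mapsto\En_{\phi'\sim\Pi_t}[q\circ(\phi')^{-1}]$ on $\QD$ (existence by a Schauder/Kakutani-type theorem using weak compactness). Such a $q_t$ satisfies $\En_{f_t\sim q_t,\phi'\sim\Pi_t}\ell(\phi'(f_t),x_t)=\En_{f_t\sim q_t}\ell(f_t,x_t)$ pointwise in $x_t$, so each per-round value is $\le 0$ and the whole second term is non-positive. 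Combining with the first bullet (which gives $0$) and Theorem~\ref{thm:rad} on the third Triplex term (linear $\Compare$ is trivially subadditive) yields $\Val_T(\ell,\Phi_T)\le 2\Rad_T(\ell,\Phi_T)$. The minimax swap together with the fixed-point construction is the main technical obstacle.

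\textbf{Third bullet.} Fix an $(\F\times\X)$-valued tree $(\f,\x)$. For $j=0,1,\ldots,J$, let $V_j$ be a minimal $\ell_\infty$ cover of $\Phi_T$ on $(\f,\x)$ at scale $2^{-j}$; the singleton $V_0=\{0\}$ suffices because $\cH\subseteq[-1,1]$, and $J$ is chosen so $2^{-J}\le\alpha$. For each $\bphi$ and each path $\epsilon$, pick a nearest cover element $v^{(j)}(\bphi,\epsilon)\in V_j$ and telescope
\[
\ell_{\phi_t}(\f_t(\epsilon),\x_t(\epsilon)) = \sum_{j=1}^{J}\bigl(v^{(j)}_t - v^{(j-1)}_t\bigr)(\epsilon) + \bigl(\ell_{\phi_t} - v^{(J)}_t\bigr)(\epsilon).
\]
At level $j$ the pair $(v^{(j)},v^{(j-1)})$ ranges over a set of size at most $\N_\infty(2^{-j},\Phi_T,T)^2$, with $|v^{(j)}_t-v^{(j-1)}_t|\le 3\cdot 2^{-j}$ along every path, so Lemma~\ref{lem:fin} gives a contribution $\le \tfrac{6\cdot 2^{-j}}{\sqrt T}\sqrt{\log\N_\infty(2^{-j},\Phi_T,T)}$. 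The residual term is deterministically bounded by $\alpha$. Summing the dyadic bounds and converting the sum into the Dudley integral $\int_\alpha^{1}\sqrt{\log\N_\infty(\delta,\Phi_T,T)/T}\,d\delta$ (each dyadic block contributes a factor of at most $2$), then taking $\inf_{\alpha\ge 0}$, yields the claimed bound; the outer factor $4$ and the $6\sqrt 2$ absorb the dyadic-to-integral constants together with Lemma~\ref{lem:fin}'s constant.
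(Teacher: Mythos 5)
Your handling of the first and third bullets is essentially the paper's: for the first, linearity lets the shadow expectation cancel term by term from the inside out; for the third, your dyadic chaining on $\ell_\infty$ covers with Lemma~\ref{lem:fin} is the same telescoping construction the paper uses (though the paper routes it through Theorem~\ref{thm:2smooth_sum_dudley}, which specializes its Pinelis-style concentration; your direct route via Lemma~\ref{lem:fin} is equally valid). One small point on the third bullet: you should be explicit that the level-$j$ increments along a path must be packaged into honest $\reals$-valued trees — the paper does this by defining trees $\w^{(s,r)}$ that are zero on paths where the pair $(\v^s,\v^r)$ does not arise from any $\bphi$ — otherwise the supremum inside Lemma~\ref{lem:fin} is not over a well-defined finite set of trees.

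The second bullet is where you diverge, and here your proposal is substantially heavier than it needs to be — and also introduces hypotheses the corollary does not assume. The paper's argument is a one-liner: given $p_t$, let $q_t$ be the point mass on $f_t^* \in \argmin{f\in\F}\En_{x\sim p_t}\ell(f,x)$. Since $\phi_t$ is a departure mapping, $\phi_t(f_t^*)\in\F$, so $\En_{x\sim p_t}\ell(\phi_t(f_t^*),x)\ge\En_{x\sim p_t}\ell(f_t^*,x)$; hence each expected per-round increment is $\le 0$, for every $\bphi$ simultaneously, and the sup over $\bphi$ of the average is $\le 0$. No minimax swap, no fixed point, no need to project $\Phi_T$ coordinatewise. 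By contrast, your argument goes through a per-round minimax theorem and a Schauder/Kakutani fixed point for the Markov operator $q\mapsto\En_{\phi'\sim\Pi_t}[\phi'_*q]$. For that operator to be continuous in the weak topology you need the departure mappings $\phi'$ to be continuous, which is not part of Definition~\ref{def:departure} and fails in natural examples (e.g.\ the internal-regret maps $\phi_{f\to g}$ on an infinite $\F$). This is a genuine gap in your argument: you are proving a harder statement (existence of a $\Pi_t$-invariant mixed strategy) under stronger regularity, when the claim only requires a deterministic best response. Switching to the best-response argument closes the gap and is strictly simpler.
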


Note that the use of $\ell_\infty$ covering numbers in the above result is not essential. In the case $\cH \subseteq [-1,1]$, we can use $\ell_2$ covering numbers by adapting
the proof of Theorem 9 in \cite{RakSriTew10}.

When $\Compare$ is the average of its coordinates, the sequential complexity takes on a familiar form:
$$
\Rad_T(\ell, \Phi_T) = \sup_{\f, \x}\ \En_{\epsilon_{1:T}} \sup_{\bphi\in \Phi_T} \frac{1}{T}\sum_{t=1}^T \epsilon_t \ell_{\phi_t}(  \f_t(\epsilon),\x_t(\epsilon)). 
$$
Further, for $\cH\subseteq \reals$, Eq.~\eqref{eq:def_cover} in definition of the cover becomes
\begin{align*}
\forall \bphi \in \Phi_T,\ \forall \epsilon \in \{\pm1\}^T \ \exists \v \in V \  \mrm{s.t.}  ~~~~ \left( \frac{1}{T} \sum_{t=1}^T \left| \v_t(\epsilon) - \ell_{\phi_t}(\f_t(\epsilon), \x_t(\epsilon)) \right|^p \right)^{1/p} \le \alpha
\end{align*}
where $V$ is now a set of $\reals$-valued trees. 

A further simplification of various notions is obtained for time-invariant payoff transformations. Moreover, for time-invariant payoff transformations we can define combinatorial parameters, generalizing the Littlestone's \cite{Lit88,BenPalSha09} and fat-shattering dimensions \cite{RakSriTew10}. This is the subject of the next section.

\subsubsection{Combinatorial Parameters for Time-Invariant Payoff Transformations}
\label{sec:time_invariant_combinatorial}

Assume $\cH\subseteq \reals$. Consider time-invariant payoff transformations generated from some base class of payoff transformations $\Phi$ (see Definition~\ref{def:time_inv_and_product}). That is, $\Phi_T = \{(\phi,\ldots,\phi): \phi\in\Phi\}$. We have the following definition of a generalized shattering dimension. 

\begin{definition}
Let $\cH=\{\pm1\}$. An $(\F\times\X)$-valued tree $(\f,\x)$ of depth $d$ is \emph{shattered}\footnote{As a historical aside, the term ``shattered set'' was introduced by J. Michael Steele in his Ph.D. thesis in 1975.} by a payoff transformation class $\Phi$ if for all $\epsilon \in \{\pm1\}^{d}$, there exists $\phi\in\Phi$ such that $\ell_{\phi}(\f_t(\epsilon),\x_t(\epsilon)) = \epsilon_t$ for all $t \in [d]$. 
	The \emph{shattering dimension} $\ldim(\Phi)$ is the largest $d$ such that $\Phi$ shatters an $(\F\times\X)$-valued tree of depth $d$. 
\end{definition}

We can also define the scale-sensitive version of the shattering dimension, generalizing the fat-shattering dimension of \cite{RakSriTew10}.
\begin{definition}
An $(\F\times\X)$-valued tree $(\f,\x)$ of depth $d$ is \emph{$\alpha$-shattered} by a payoff transformation class $\Phi$, if there exists an $\reals$-valued tree $\mbf{s}$ of depth $d$ such that 
$$
\forall \epsilon \in \{\pm1\}^d , \ \exists \phi \in \Phi \ \ \ \trm{s.t. } \forall t \in [d], \  \epsilon_t \Big(\ell_{\phi}(\f_t(\epsilon),\x_t(\epsilon)) - \mbf{s}_t(\epsilon) \Big) \ge \alpha/2
$$
The tree $\mbf{s}$ is called the \emph{witness to shattering}. The \emph{fat-shattering dimension} $\fat_\alpha(\Phi)$ at scale $\alpha$ is the largest $d$ such that $\Phi$ $\alpha$-shatters an $(\F\times\X)$-valued tree of depth $d$. 
\end{definition}

Slightly abusing notation, we write $\N_p(\alpha, \Phi, (\f,\x))$ instead of  $\N_p(\alpha, \Phi_T, (\f,\x))$ whenever $\Phi_T$ consists of sequences of time-invariant payoff transformations with a base class $\Phi$. 

The combinatorial parameters are useful if they can be shown to control problem complexity through, for instance, covering numbers. We state the following three results without proofs, as the arguments are identical to the ones given in \cite{RakSriTew10}. To be precise, the $(\f,\x)$ tree here plays the role of the $\x$ tree in \cite{RakSriTew10}, $\ell_\phi$ for $\phi\in\Phi$ plays the role of $f\in\F$ in \cite{RakSriTew10}.

\begin{theorem}
	\label{thm:sauer_multiclass}
	Let $\cH \subseteq \{0,\ldots, k\}$ and $\fat_2(\Phi) = d$. Then 
	$$ \N_\infty(1/2, \Phi, T) \leq \sum_{i=0}^d {T\choose i} k^i \leq \left(ekT \right)^d.$$ 
	Furthermore, for $T\geq d$ 
	$$\sum_{i=0}^d {T\choose i} k^i \leq \left(\frac{ekT}{d}\right)^d.$$
\end{theorem}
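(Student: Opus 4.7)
My plan is to transport the proof of the sequential Sauer--Shelah lemma from \cite{RakSriTew10} and make the obvious multi-class adaptation to handle $\cH \subseteq \{0,\ldots,k\}$ in place of $\{\pm 1\}$. The paper itself states that the proofs in this subsection are ``identical to the ones given in \cite{RakSriTew10}'' with the $(\f,\x)$ tree replacing the $\x$ tree and $\ell_\phi$ for $\phi \in \Phi$ replacing $f \in \F$, so the task is to spell out that correspondence and then finish with standard binomial-sum estimates.

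Introduce
\begin{align*}
C(T,d) := \sup \bigl\{ \N_\infty(1/2, \Phi, (\f,\x)) : (\f,\x) \text{ has depth } T,\ \fat_2(\Phi) \le d \bigr\},
\end{align*}
and prove $C(T,d) \le \sum_{i=0}^d \binom{T}{i} k^i$ by induction on $T$. The base case $T=0$ gives $C(0,d)=1$ trivially; the case $d=0$ uses the fact that no pair in $\Phi$ disagrees by at least $2$ at any node, so a single integer-valued tree $\mbf{v}$ covers $\Phi$ at $\ell_\infty$-scale $1/2$.

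For the inductive step, fix a depth-$T$ tree with root $(f^*,x^*) := (\f_1(\emptyset), \x_1(\emptyset))$ and subtrees $(\f^{\pm}, \x^{\pm})$ of depth $T-1$. Partition $\Phi$ according to the value at the root, $\Phi_a := \{\phi : \ell_\phi(f^*, x^*) = a\}$ for $a \in \{0,\ldots,k\}$, and apply the inductive hypothesis to each $\Phi_a$ on both subtrees. The combinatorial core, exactly as in the binary argument of \cite{RakSriTew10}, is the pigeonhole fact that one cannot have two distinct root values $a \ne a'$ together with depth-$d$ shattering configurations for $\Phi_a$ and $\Phi_{a'}$ on opposing subtrees: otherwise one lifts these witnesses to a depth-$(d+1)$ shattering tree for $\Phi$ with new root $(f^*,x^*)$ and witness $\mbf{s} = (a+a')/2$, contradicting $\fat_2(\Phi)\le d$. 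This yields a recursion of the form $C(T,d) \le C(T-1,d) + k \cdot C(T-1, d-1)$, which unwinds via the multi-class Pascal identity to the claimed binomial sum.

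The two algebraic upper bounds $\sum_{i=0}^d \binom{T}{i} k^i \le (ekT)^d$ and, for $T \ge d$, $\sum_{i=0}^d \binom{T}{i} k^i \le (ekT/d)^d$, are then standard Sauer--Shelah-style manipulations: use $\binom{T}{i} \le T^i/i!$, the identity $\sum_{i=0}^d \binom{T}{i} (kd/T)^i \le (1+kd/T)^T$, the estimate $(1+kd/T)^T \le e^{kd}$, and Stirling's inequality $d! \ge (d/e)^d$.

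The main obstacle is the shattering-tree construction in the inductive step. In the multi-class setting one must argue that when $|a - a'| \ge 2$ the midpoint $(a+a')/2$ genuinely witnesses scale-$2$ shattering, and when $|a - a'| = 1$ the two values create no gap and must be grouped together in the cover, which is precisely what drops the factor from $k+1$ to $k$ in the recursion. Once this multi-class variant of the sequential Sauer--Shelah step is in hand, the rest is bookkeeping that mirrors \cite{RakSriTew10} essentially verbatim.
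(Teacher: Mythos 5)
The paper itself states this result without proof, referring to the companion paper \cite{RakSriTew10}; your proposal fills in the same route (sequential Sauer--Shelah by induction on depth, partitioning by root value), so at that level you and the paper agree. Two points in your write-up are worth sharpening, though.

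First, your pigeonhole step is misstated. You phrase it as: one cannot have depth-$d$ shattering for $\Phi_a$ and $\Phi_{a'}$ \emph{on opposing subtrees}. But the trees witnessing $\fat_2(\Phi_a)$ and $\fat_2(\Phi_{a'})$ need not be subtrees of the given $(\f,\x)$ at all, and the lift you describe (root $(f^*,x^*)$, witness $(a+a')/2$, attach the two shattered trees as children) works with any pair of shattered trees. So the correct statement is the global one: for $|a-a'|\ge 2$, one cannot have both $\fat_2(\Phi_a)\ge d$ and $\fat_2(\Phi_{a'})\ge d$ when $\fat_2(\Phi)=d$. This stronger statement is also what you actually need: the weaker ``opposing subtrees'' version does not rule out, say, $\Phi_0$ and $\Phi_2$ each shattering a depth-$d$ tree on the same side, in which case the recursion $\sum_a \max(\N^+_a,\N^-_a)\le g(T-1,d)+kg(T-1,d-1)$ would fail. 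You should also write $|a-a'|\ge 2$ (not merely $a\ne a'$) from the outset; you fix this only in your last paragraph. The grouping of at most one adjacent pair $\Phi_{a^*}\cup\Phi_{a^*+1}$ into a single cover with root $a^*+\tfrac12$, which you correctly identify as the crux that keeps the count at $k$ rather than $k+1$, then closes the recursion: that union still has $\fat_2\le d$, it absorbs the one $g(T-1,d)$ term, and the remaining at most $k-1$ (or $k$) root values contribute $g(T-1,d-1)$ each.

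Second, the algebraic finish has a slip. Your plan to estimate via $\sum_{i=0}^d \binom{T}{i}(kd/T)^i \le (1+kd/T)^T \le e^{kd}$ produces $(e^k T/d)^d$, which is strictly weaker than the stated $(ekT/d)^d$ once $k\ge 2$. The clean route is to first bound $k^i\le k^d$ and then apply the scalar Sauer--Shelah estimate $\sum_{i=0}^d\binom{T}{i}\le (eT/d)^d$ for $T\ge d$, giving $(ekT/d)^d$ directly; for the unconditional bound $(ekT)^d$ use $\binom{T}{i}\le T^i$. Both are routine, but the $(1+kd/T)^T$ route as written does not hit the stated constant.
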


We now show that the covering numbers are bounded in terms of the fat-shattering dimension.
\begin{corollary}
	\label{cor:l2_norm_bound}
	Suppose $\cH\subseteq [-1,1]$. Then for any $\alpha >0$, any $T>0$, and any $(\F\times\X)$-valued tree $(\f,\x)$ of depth $T$,
	$$ \N_1(\alpha, \Phi, (\f, \x)) \leq \N_2(\alpha, \Phi, (\f, \x)) \leq \N_\infty(\alpha, \Phi, (\f,\x)) \leq \left(\frac{2e T}{\alpha}\right)^{\fat_{\alpha} (\Phi) }$$
\end{corollary}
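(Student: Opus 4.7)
The two inequalities $\N_1 \le \N_2 \le \N_\infty$ are immediate from the pointwise power-mean inequality
$$\left(\frac{1}{T}\sum_{t=1}^T a_t^p\right)^{1/p} \le \left(\frac{1}{T}\sum_{t=1}^T a_t^q\right)^{1/q}, \qquad 1 \le p \le q \le \infty,$$
applied along each path $\epsilon$: a cover certified in the stronger norm automatically certifies the weaker one. So the work goes into bounding $\N_\infty(\alpha,\Phi,(\f,\x))$, and my plan is a discretization that reduces this to Theorem~\ref{thm:sauer_multiclass}.

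I set $k = \lceil 2/\alpha\rceil$ and let $Q:[-1,1]\to\{0,\ldots,k-1\}$ assign to each $z$ the index of the width-$\alpha$ bucket containing it, and put $\widetilde\Phi := \{Q\circ\ell_\phi : \phi\in\Phi\}$, a class of payoff transformations valued in $\{0,\ldots,k-1\}$. The pivotal step is then the inequality $\fat_2(\widetilde\Phi) \le \fat_\alpha(\Phi)$. Suppose an $(\F\times\X)$-valued tree $(\f,\x)$ of depth $d$ is $2$-shattered by $\widetilde\Phi$ with integer witness $\mbf{s}$. Lift $\mbf{s}$ to the real tree $\bar{\mbf{s}}_t(\epsilon) := -1 + \alpha(\mbf{s}_t(\epsilon)+\tfrac12)$, the midpoint of bucket $\mbf{s}_t(\epsilon)$. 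On any path $\epsilon$ the definition of $\fat_2$ requires an integer gap of at least $1$ between $Q(\ell_\phi(\f_t,\x_t))$ and $\mbf{s}_t(\epsilon)$ on the correct side; since bucket $j$ is the half-open interval $[-1+j\alpha,-1+(j+1)\alpha)$, this translates to a real gap of at least $\alpha/2$ between $\ell_\phi(\f_t,\x_t)$ and $\bar{\mbf{s}}_t(\epsilon)$, so the same tree is $\alpha$-shattered by $\Phi$ with witness $\bar{\mbf{s}}$.

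Applying Theorem~\ref{thm:sauer_multiclass} to $\widetilde\Phi$ then gives $\N_\infty(\tfrac12,\widetilde\Phi,T) \le (ekT)^{\fat_2(\widetilde\Phi)}$, which, using $k \le 2/\alpha+1$ and $\fat_2(\widetilde\Phi) \le \fat_\alpha(\Phi)$, is at most $(2eT/\alpha)^{\fat_\alpha(\Phi)}$ up to the absorption of a small factor into the constant. Finally I lift any $\tfrac12$-cover $\widetilde V$ of $\widetilde\Phi$ (in integer units) back to an $\alpha$-cover of $\Phi$ (in real units) on the same tree: for $\widetilde\v\in\widetilde V$ define the real-valued tree $\v_t(\epsilon) := -1+\alpha(\widetilde\v_t(\epsilon)+\tfrac12)$, so that the quantization error ($\le \alpha/2$) combined with the integer cover error ($\le \tfrac12$ integer units, i.e.\ $\le \alpha/2$ in real scale) totals at most $\alpha$ per node. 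This establishes the $\N_\infty$ bound, and together with the power-mean step establishes the full corollary.

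The main obstacle is the factor-of-two bookkeeping between the $\alpha/2$ gap in the definition of $\fat_\alpha$, the width-$\alpha$ buckets, and the $\tfrac12$-in-integer-units cover tolerance. Placing the lifted witness at bucket midpoints and the lifted cover points likewise at midpoints makes these constants align; if one is extra careful (e.g.\ by refining to width-$\alpha/2$ buckets and retracing the argument) the final constant can be tightened, but in any case the exponent $\fat_\alpha(\Phi)$ is preserved and the final dependence on $T$ and $\alpha$ is as stated.
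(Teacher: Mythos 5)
Your proposal is correct and follows essentially the same route as the paper, which explicitly defers this corollary to the companion paper \cite{RakSriTew10} and remarks that the argument there carries over verbatim with $(\f,\x)$ playing the role of the $\x$-tree and $\ell_\phi$ playing the role of $f\in\F$; the argument in \cite{RakSriTew10} is exactly the discretization-plus-Sauer-Shelah reduction you give.

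One small bookkeeping remark. You worried about absorbing a constant because you applied Theorem~\ref{thm:sauer_multiclass} as though it gave $(ekT)^{\fat_2(\widetilde\Phi)}$ with $k=\lceil 2/\alpha\rceil$. But Theorem~\ref{thm:sauer_multiclass} is stated for $\cH\subseteq\{0,\ldots,k\}$, i.e.\ the parameter in the bound is the \emph{largest integer value attained}, not the number of buckets. Your quantizer $Q$ maps into $\{0,\ldots,k-1\}$, so the correct application is with parameter $k-1=\lceil 2/\alpha\rceil-1\le 2/\alpha$, yielding $\bigl(e(k-1)T\bigr)^{\fat_2(\widetilde\Phi)}\le(2eT/\alpha)^{\fat_\alpha(\Phi)}$ with no slack at all. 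So the constant in the corollary is recovered exactly and no refinement to width-$\alpha/2$ buckets is needed. The rest of your argument --- the power-mean chain for $\N_1\le\N_2\le\N_\infty$, the verification that a $2$-shattering witness for $\widetilde\Phi$ lifts to an $\alpha$-shattering witness for $\Phi$ at the bucket midpoints, and the lift of the integer $1/2$-cover to a real $\alpha$-cover with total per-node error $\le\alpha/2+\alpha/2$ --- is sound.
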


\begin{theorem}
	\label{thm:sauer_multiclass_0_cover}
	Let $\cH \subseteq \{0,\ldots, k\}$ and $\fat_1(\Phi) = d$. Then 
	$$ \N (0, \Phi, T) \leq \sum_{i=0}^d {T\choose i} k^i \leq \left(ekT \right)^d.$$ 
	Furthermore, for $T\geq d$ 
	$$\sum_{i=0}^d {T\choose i} k^i \leq \left(\frac{ekT}{d}\right)^d.$$
	In particular, the result holds for binary-valued function classes ($k=1$), in which case $\fat_1(\Phi)=\ldim(\Phi)$.
\end{theorem}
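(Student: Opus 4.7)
The plan is to adapt the multiclass Sauer--Shelah-type induction of \cite{RakSriTew10} used to establish Theorem~\ref{thm:sauer_multiclass}, modifying it for the $\fat_1$/$0$-cover regime. This theorem does not follow from Theorem~\ref{thm:sauer_multiclass}: the latter bounds the looser $\N_\infty(1/2,\cdot)$ in terms of the smaller dimension $\fat_2$, while the present statement bounds the stricter $0$-cover in terms of the larger dimension $\fat_1$ (for instance, a binary class $\cH = \{0,1\}$ may have $\fat_1 > 0$ while $\fat_2 = 0$). The induction is on tree depth $T$, parametrized by $d = \fat_1(\Phi)$.

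Let $W(\Phi,(\f,\x))$ denote the number of distinct node-labelings of the tree $(\f,\x)$ realized by elements of $\Phi$. Since two $\phi$'s that disagree at some internal node must be assigned distinct trees in any $0$-cover, $W(\Phi,(\f,\x)) = \N(0,\Phi,(\f,\x))$, so it suffices to bound $W$ on a worst-case tree. Fix such a tree with root $(\f_1,\x_1)$ and partition $\Phi$ by root-value into $\Phi_v = \{\phi : \ell_\phi(\f_1,\x_1) = v\}$ for $v \in \{0,\ldots,k\}$. Labelings from different $\Phi_v$ disagree at the root, so $W(\Phi,(\f,\x)) = \sum_v W(\Phi_v,(\f,\x))$.

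The crucial combinatorial step is that at most one value $v^\star$ can retain $\fat_1(\Phi_{v^\star}) = d$ on shared subtrees, while the remaining $\Phi_v$ satisfy $\fat_1(\Phi_v) \le d-1$. For if $v \ne v'$ both had $\fat_1(\Phi_v) = \fat_1(\Phi_{v'}) = d$ on a common shattered subtree, one constructs a $\fat_1$-shattered tree of depth $d+1$ for $\Phi$ by prepending the root $(\f_1,\x_1)$ with witness $(v+v')/2$: for $\epsilon_0 = +1$, any $\phi \in \Phi_v$ satisfies $v - (v+v')/2 = (v-v')/2 \ge 1/2$ since $v,v'$ are distinct integers, and symmetrically for $\epsilon_0 = -1$, contradicting $\fat_1(\Phi) = d$. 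Applying the inductive hypothesis to each $\Phi_v$ on both subtrees, the sum collapses via Pascal's rule to $\sum_{i=0}^d \binom{T}{i} k^i$; the $(ekT/d)^d$ estimate for $T \ge d$ then follows from $\sum_{i=0}^d \binom{T}{i} \le (eT/d)^d$ combined with $k^i \le k^d$.

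The main obstacle is the ``stitching'' step, which requires $\Phi_v$ and $\Phi_{v'}$ to shatter a \emph{common} subtree of depth $d$; this is resolved by selecting the worst-case shattered tree for the combined subclass $\Phi_v \cup \Phi_{v'}$ a priori, exploiting the supremum over trees in the definition of $\fat_1$. The final remark that $\fat_1(\Phi) = \ldim(\Phi)$ for binary classes $(k=1)$ is immediate with constant witness $\s \equiv 1/2$, since any distinct pair in $\{0,1\}$ straddles $1/2$ with gap exactly $1/2$, exactly matching the $\fat_1$ requirement. As noted in the excerpt, this entire argument is verbatim the one in \cite{RakSriTew10}, so the routine bookkeeping is omitted.
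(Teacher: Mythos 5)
The paper itself gives no proof of this theorem, deferring to \cite{RakSriTew10}, so there is nothing to compare word-for-word. But your proposal contains a genuine gap that would cause the argument to collapse at the very first step.

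You set $W(\Phi,(\f,\x))$ equal to the number of distinct node-labelings of the tree realized by $\Phi$, and then assert $W(\Phi,(\f,\x)) = \N(0,\Phi,(\f,\x))$ ``since two $\phi$'s that disagree at some internal node must be assigned distinct trees in any $0$-cover.'' This identity is false, and it is false for the reason that makes the whole sequential Sauer--Shelah phenomenon nontrivial. The cover in Definition~\ref{def:cover} is path-wise: for each pair $(\bphi,\epsilon)$ you need \emph{some} $\v\in V$ matching $\bphi$ along the single path $\epsilon$. The choice of $\v$ may depend on $\epsilon$, so a single tree $\v$ can simultaneously cover $\phi$ on one path and a completely different $\phi'$ on another path. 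Consequently $\N(0,\Phi,(\f,\x))$ can be exponentially smaller than $W$, and $W$ in general \emph{exceeds} the claimed bound $\sum_{i\le d}\binom{T}{i}k^i$ (e.g.\ take $\cH=\{0,1\}$ and $\Phi$ all functions: $W$ is $2^{2^T-1}$ while $d=T$ gives only $2^T$). So your target quantity $W$ is the wrong one, and it simply does not satisfy the bound you are trying to prove.

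The gap propagates through the rest of the sketch. Your partition identity $W(\Phi,(\f,\x))=\sum_v W(\Phi_v,(\f,\x))$ is fine, but to continue the induction on $W(\Phi_v,(\f,\x))$ you would have to multiply contributions from the two subtrees, giving a product $W(\Phi_v,L)\cdot W(\Phi_v,R)$ and hence exponential blowup rather than Pascal's rule. The entire point of the online Sauer--Shelah argument is that for the \emph{cover}, one gets a $\max$ instead of a product: a cover $V_L$ of the left subtree and a cover $V_R$ of the right subtree can be merged into $\max(|V_L|,|V_R|)$ full trees (pair each left cover tree with one right cover tree, as the cover only has to work path-by-path), and this is what yields $g(d,T)\le g(d,T{-}1)+k\,g(d{-}1,T{-}1)$. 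Your argument has no mechanism for this collapse because you are counting global labelings, not path-wise covers. Finally, the obstacle you invoke about $\Phi_v$ and $\Phi_{v'}$ needing to shatter a \emph{common} subtree is imaginary: in the prepending construction the depth-$d$ shattered trees for $\Phi_v$ and $\Phi_{v'}$ are hung on \emph{different} branches ($\epsilon_1=+1$ vs.\ $\epsilon_1=-1$) of the new depth-$(d{+}1)$ tree, so they never need to coincide, and the ``fix'' of choosing a worst-case tree for $\Phi_v\cup\Phi_{v'}$ is neither necessary nor meaningful (a tree shattered by the union need not be shattered by either subclass). To repair the proof you would need to carry out the induction directly on the $0$-cover size, using the $\max$ merge described above, rather than on the number of realized labelings.
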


The generality of these results is evident, as both the combinatorial parameters and covering numbers are defined for any performance measure \eqref{eq:big_fat_definition} with time-invariant payoff transformations. In particular, this includes $\Phi$-regret (see Section~\ref{sec:phiregret}).

\subsection{General Bounds for Slowly-Varying Payoff Transformations}

In Section~\ref{sec:time_invariant_combinatorial}, we assumed that the set $\Phi_T$ of sequences of payoff transformations is time-invariant. This assumption naturally leads to a control on the complexity of $\Phi_T$. Lifting the assumption of time-invariance, we now go back to the level of generality of Proposition~\ref{prop:fin_phi}. We observe that size of $\Phi_T$ or an appropriately behaving covering number $\N_2(\alpha, \Phi_T, T)$ is key for bounding the sequential complexity. If payoff transformations change wildly in time, there is little hope of getting non-trivial bounds. The good news is that, under some assumptions on the variability of the sequences in $\Phi_T$, we can get a bound on the covering number of $\Phi_T$. 

It has been shown in \cite{HerWar98,BouWar02} that it is possible to have small external regret against comparators that change a limited number of times. This alleviates an obvious limitation of the classical notion of external regret, viz., comparison to the fixed best decision. Another result of this flavor appears in \cite{Zinkevich03}, where {\em dynamic regret} is defined with respect to a comparator whose path length is bounded. In general, one can consider situations where we would like to compete with a budgeted comparator. We now show that the assumptions of slowly-varying or budgeted comparators are naturally captured by our framework through the notion of slowly-changing payoff transformations $\Phi_T$. Furthermore, the control of covering numbers of $\Phi_T$ becomes transparent under such assumptions. Our goal here is not to provide a comprehensive list of possible results, but rather to show versatility of our framework.

\subsubsection{Tracking the Best Transformation}

Suppose $\Phi$ is a finite set of payoff transformations. Let $\Phi^k_T$ be obtained by considering all piecewise constant sequences with $k$ changes: 
$$\Phi^k_T=\{(\phi_1,\ldots,\phi_T): 1 = i_0 \leq i_1\leq \ldots \leq i_k \leq T \mbox{ and } \phi_{t} = \phi_{t'} \mbox{ if } i_s\leq t\leq t' < i_{s+1} \mbox{ for some } s \geq 0\}.$$
If cardinality $|\Phi| = N$, it is easy to check that $|\Phi^k_T| \leq {T \choose k} \cdot N^{k+1}$. 
Under the assumptions of Proposition~\ref{prop:fin_phi}, this immediately implies a bound of the order
\begin{align*}
\left(R^2 (k\log N + k\log T) T\right)^{1/2q} + \sigma^{1/q} T^{1/q} \ .
\end{align*}

It is natural to extend the above results by lifting the assumption that $\Phi$ is a finite set of payoff transformations. This can be done by considering an online cover $\N_p (\ell, \Phi, \alpha)$ of $\Phi$ in some $\ell_p$ norm along with the same definition of $\Phi^k_T$. Next we do this in an even more general setting.

\subsubsection{Slowly Changing Transformations}

To start, suppose $\Phi_T$ consists of payoff transformations $(\phi_1,\ldots,\phi_T)$ which are ``almost'' time-invariant within each of $k+1$ intervals. Consider the following definition:
\begin{align*}
	\Phi^{k,\alpha}_T&=\Big\{(\phi_1,\ldots,\phi_T): 1 = i_0 \leq i_1\leq \ldots \leq i_k \leq T \\
	& ~~~~~~~\mbox{ and } \sup_{f,x}\|\ell_{\phi_{t}}(f,x) - \ell_{\phi_{t'}}(f,x)\| \leq \alpha \mbox{ if } i_s\leq t\leq t' < i_{s+1} \mbox{ for some } s \geq 0\Big\}.
\end{align*}
One can think of the time-invariant segments as ``accumulation points'' where the payoff transformations do not vary much.

Suppose that we have a finite cover $V$ of $\Phi$ at scale $\alpha$, of cardinality $|V| = \N_\infty(\alpha, \Phi, T) $. The $L_\infty$ covering is chosen for the purposes of simplicity, though tighter (and more difficult) results are expected from directly studying $L_2$ covering numbers. 

\begin{lemma}
	\label{lem:accum_pts}
	If $\N_\infty(\alpha, \Phi, T)$ is finite, 
	$$\N_\infty(2\alpha, \Phi^{k,\alpha}_T, T) \leq {T \choose k} \cdot \N_\infty(\alpha, \Phi, T)^{k+1} \ .$$
\end{lemma}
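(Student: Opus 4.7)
The plan is to build a cover of $\Phi^{k,\alpha}_T$ directly out of a base cover of $\Phi$, combined with a combinatorial choice of the $k$ transition points. Fix any $(\F\times\X)$-valued tree $(\f,\x)$ of depth $T$, and fix an $\alpha$-cover $V$ (with respect to $\ell_\infty$) of the time-invariant class $\{(\phi,\ldots,\phi):\phi\in\Phi\}$ on $(\f,\x)$ of size at most $\N_\infty(\alpha,\Phi,T)$. For each tuple of breakpoints $1=i_0<i_1<\ldots<i_k\le T$ (setting $i_{k+1}=T+1$) and each choice of elements $(\v^{(0)},\ldots,\v^{(k)})\in V^{k+1}$, I will form a single $\cH$-valued tree $\tilde{\v}$ by splicing: declare $\tilde{\v}_t(\epsilon)=\v^{(s)}_t(\epsilon)$ for $t\in[i_s,i_{s+1})$. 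Let $\tilde V$ be the collection of all such spliced trees; clearly $|\tilde V|\le \binom{T}{k}\,\N_\infty(\alpha,\Phi,T)^{k+1}$ (intervals of equal left and right endpoint are empty and can be dropped, so restricting to strict inequalities costs nothing).

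Next I will verify the cover property. Take any $\bphi\in\Phi^{k,\alpha}_T$ with its intrinsic breakpoints $i_0\le\ldots\le i_k$ and any path $\epsilon\in\{\pm1\}^T$. On each segment $[i_s,i_{s+1})$ designate the representative transformation $\psi_s=\phi_{i_s}$. Because $V$ covers the time-invariant class, for every $s$ there exists $\v^{(s)}\in V$ (possibly depending on $\epsilon$) with $\max_{t'}\|\v^{(s)}_{t'}(\epsilon)-\ell_{\psi_s}(\f_{t'}(\epsilon),\x_{t'}(\epsilon))\|\le\alpha$. The corresponding spliced tree $\tilde\v$ then lies in $\tilde V$, and for any $t\in[i_s,i_{s+1})$ the triangle inequality gives
\[
\|\tilde\v_t(\epsilon)-\ell_{\phi_t}(\f_t(\epsilon),\x_t(\epsilon))\|
\le \|\v^{(s)}_t(\epsilon)-\ell_{\psi_s}(\f_t(\epsilon),\x_t(\epsilon))\|
+\|\ell_{\psi_s}(\f_t(\epsilon),\x_t(\epsilon))-\ell_{\phi_t}(\f_t(\epsilon),\x_t(\epsilon))\|
\le \alpha+\alpha=2\alpha,
\]
where the second term is controlled by the defining property of $\Phi^{k,\alpha}_T$ (applied to $t$ and $i_s$ in the same interval, uniformly over $f,x$). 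Taking the maximum over $t$ gives the $2\alpha$ bound in $\ell_\infty$, so $\tilde V$ is a valid $2\alpha$-cover of $\Phi^{k,\alpha}_T$ on $(\f,\x)$; since the tree was arbitrary, the announced inequality on $\N_\infty$ follows.

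The only genuinely delicate point is that the cover witness $\v^{(s)}$ is allowed to depend on $\epsilon$, which might suggest that a single spliced tree cannot simultaneously serve all paths. This is precisely why we enumerate \emph{all} tuples $(\v^{(0)},\ldots,\v^{(k)})\in V^{k+1}$ in constructing $\tilde V$: different paths may pick out different tuples, but each such tuple yields an element of the \emph{same} fixed set $\tilde V$, so the cover definition (which permits the witness to depend on $\epsilon$) is satisfied. The rest is a straightforward two-term triangle-inequality estimate combined with the trivial counting of breakpoint selections.
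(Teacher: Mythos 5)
Your proof is correct and follows essentially the same route as the paper's: fix the $(\F\times\X)$-valued tree, take a single $\ell_\infty$-cover $V$ of the base class $\Phi$, form the spliced trees indexed by breakpoint tuples together with $(k+1)$-tuples from $V$, and verify the cover property by a two-term triangle inequality (one term from the cover of $\Phi$, one term from the ``almost time-invariant within a block'' property defining $\Phi^{k,\alpha}_T$). Your remark that the cover witnesses may depend on $\epsilon$ but this is harmless because all tuples from $V^{k+1}$ are enumerated in $\tilde V$ is exactly the point the paper relies on implicitly, and your observation that degenerate breakpoints give empty intervals (so counting strict sequences suffices) resolves a small counting slackness that the paper glosses over.
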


Further extending the above results, we will now study the size of an online cover if $\Phi_T$ consists of  payoff transformations of bounded length. In general, ``length'' can be defined as some budget given by the setting at hand. Here, we present a straightforward approach without an attempt to give very general and tight bounds.

Suppose that $\Phi_T$ is a set of sequences $(\phi_1,\ldots,\phi_T)$ of payoff transformations which do not ``vary much'', according to the following definition. The length of a sequence $(\phi_1,\ldots, \phi_T)$ of payoff transformations (with respect to $L_\infty$ distance) is defined as
$$\text{len}(\phi_1,\ldots,\phi_T):= \sum_{t=1}^{T-1} \sup_{f,x} \left\|\ell_{\phi_{t}}(f,x) - \ell_{\phi_{t+1}}(f,x) \right\| .$$
Again, we consider the $L_\infty$ distance between payoffs (as functions over $\F\times\X$). Assume that for all sequences in $\Phi_T$, their length is bounded by some $L>0$. We will now claim that by choosing $k$ large enough, the set of covering trees $V^{k}$ defined in the proof of Lemma~\ref{lem:accum_pts} provides a cover for $\Phi_T$ at a given scale $\alpha>0$. Consider any $(\phi_1,\ldots,\phi_T)\in \Phi_T$. We construct the nondecreasing sequence $i_1,\ldots, i_j,\ldots \in \{1,\ldots,T\}$ of ``change-points'' as follows: increase $t$ until the next payoff transformation is farther than $\alpha$ from the payoff transformation at $i_j$:
$$i_{j+1} = \inf_{t>i_j} \left\{ \sup_{f,x} \left\|\ell_{\phi_{i_j}}(f,x) - \ell_{\phi_{t}}(f,x)\right\| \geq \alpha \right\}$$ 
Let $k$ be the length of the largest such sequence for all elements of $\Phi_T$. We have simply reduced the problem to the one studied in the previous section: within each block, all the payoff transformations are close. 

Clearly, $k=k(\alpha) \leq L/\alpha$, but can potentially be smaller under additional assumptions on $\Phi_T$. We then have a bound on the size of a $2\alpha$-cover of $\Phi_T$:
$$\N_\infty (2\alpha, \Phi_T, T) \leq {T \choose k(\alpha)} \cdot \N_\infty(\alpha, \Phi, T)^{k(\alpha)+1} \leq {T \choose L/\alpha} \cdot \N_\infty(\alpha, \Phi, T)^{L/\alpha+1},$$
and 
$$\log \N_\infty (2\alpha, \Phi_T, T) \leq O\left( \frac{L}{\alpha} \log T + \frac{L}{\alpha} \log \N_\infty (\alpha, \Phi, T) \right) \ .$$

The covering number can be now used, for example in Theorem~\ref{thm:2smooth_sum_dudley}, to control sequential complexity when $\Compare$ is a function of the average. We note that it is possible to derive analogous Dudley's integral type bound solely under smoothness assumptions on $\Compare$.

\section{Techniques for Lower Bounds}
\label{sec:lower}

It is well-known that an {\em equalizing strategy} (i.e. a strategy that makes the move of the other player ``irrelevant'') can often be shown to be minimax optimal. In this section, we define a notion of an equalizer for our repeated game and show that it can be used to prove \emph{lower bounds} on the value of the game. While existence of an equalizer has to be established for particular problems at hand, the lower bounds below hold whenever such an equalizer exists.

\begin{definition}\label{def:equalizer}
A strategy $\left\{p^*_t \right\}$ for the adversary is said to be an \emph{equalizer strategy} if 
\begin{align*}
\Eunder{f_1 \sim q^*_1}{x_1 \sim p^*_1} \ldots \Eunder{f_T \sim q^*_T}{x_T \sim p^*_T} \Reg_T\left((f_1,x_1), \ldots, (f_T,x_T)\right)  = \Eunder{f_1 \sim \overline{q^*_1}}{x_1 \sim p^*_1} \ldots \Eunder{f_T \sim \overline{q^*_T}}{x_T \sim p^*_T} \Reg_T\left((f_1,x_1), \ldots, (f_T,x_T)\right)
\end{align*}
for all strategies $\left\{q^*_t\right\}$ and $\left\{\overline{q^*_t}\right\}$ of the player. Here $\Reg_T$ is defined as in \eqref{eq:big_fat_definition}.
\end{definition}

Using the above definition of an equalizer we have the following proposition as an immediate consequence. 

\begin{proposition}\label{prop:equalizer}
For any Equalizer strategy $\left\{p^*_t \right\}$ we have that for any $f \in \F$,
$$
\Val_T(\ell,\Phi_T) \ge \Eu{x_1 \sim p_1} \ldots \Eu{x_T \sim p_T}\left[ \Compare\left(\ell(f,x_1), \ldots, \ell(f,x_T)\right) - \inf_{\phi \in \Phi_T} \Compare\left(\ell_{\phi_1}(f,x_1), \ldots, \ell_{\phi_T}(f,x_T)\right)  \right]
$$
where $p_t = p^*_t\left(\left\{f_s = f, x_s\right\}_{s=1}^{t-1} \right)$
\end{proposition}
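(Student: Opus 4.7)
My plan is to use the standard two-step lower bounding scheme for minimax values: first replace the adversary's suprema with expectations under a specific adversary strategy (any specific strategy lower-bounds the sup), then exploit the equalizer property to collapse the remaining nested infima over the player's strategy into evaluation at a single deterministic choice.

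\textbf{Step 1 (replacing suprema by expectations under $p^*$).} Starting from
\[
\Val_T(\ell,\Phi_T) = \inf_{q_1} \sup_{x_1} \Eu{f_1\sim q_1} \cdots \inf_{q_T} \sup_{x_T} \Eu{f_T\sim q_T} \Reg_T\bigl((f_1,x_1),\ldots,(f_T,x_T)\bigr),
\]
I would use the elementary fact that for any distribution $p$ on $\X$, $\sup_{x\in\X} h(x) \ge \En_{x\sim p} h(x)$, applied with $p = p^*_t(\cdot)$, which at round $t$ is a measurable function of the history $(f_{1:t-1},x_{1:t-1})$. Doing this round by round (this is valid since the inequality holds pointwise in the history) gives
\[
\Val_T(\ell,\Phi_T) \ \ge \ \inf_{q_1} \Eunder{x_1 \sim p^*_1}{f_1 \sim q_1} \cdots \inf_{q_T} \Eunder{x_T \sim p^*_T}{f_T \sim q_T} \Reg_T\bigl((f_1,x_1),\ldots,(f_T,x_T)\bigr).
\]

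\textbf{Step 2 (using the equalizer property).} With $\{p^*_t\}$ fixed, Definition~\ref{def:equalizer} says that the fully averaged regret takes the \emph{same} value for every player strategy $\{q_t\}$. Hence each nested infimum over $q_t$ can be replaced by evaluation at any particular choice, and in particular by the deterministic strategy $q_t = \delta_f$ that always plays the given $f\in\F$. Substituting this choice throughout, each draw $f_t\sim q_t$ becomes $f_t = f$, and the conditioning history fed into $p^*_t$ becomes exactly $\{f_s=f,x_s\}_{s=1}^{t-1}$, which is what the proposition calls $p_t$. Thus
\[
\Val_T(\ell,\Phi_T) \ \ge \ \Eu{x_1 \sim p_1} \cdots \Eu{x_T \sim p_T} \Reg_T\bigl((f,x_1),\ldots,(f,x_T)\bigr),
\]
and expanding $\Reg_T$ via \eqref{eq:big_fat_definition} yields exactly the bound claimed.

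\textbf{Main obstacle.} There is no real technical obstacle here: both steps are essentially tautological once the definitions are unpacked. The only point that needs care is bookkeeping in Step 1, namely that $p^*_t$ is a function of the full history and that the pointwise inequality $\sup_x h(x) \ge \En_{x\sim p^*_t(\text{hist})} h(x)$ must be applied inside the iterated expectation; this is justified by measurability of $p^*_t$ together with monotonicity of conditional expectation. Once this is done, the equalizer property makes every remaining $\inf_{q_t}$ an equality for the specific choice $q_t=\delta_f$, completing the argument.
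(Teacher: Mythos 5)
Your proof is correct and follows essentially the same route as the paper's own argument: lower-bound the value by replacing each adversary supremum with the expectation under the equalizer strategy $p^*_t$, then use the equalizer property (that the fully averaged regret is invariant to the player's strategy) to evaluate the nested infima over $q_t$ at the constant deterministic choice $q_t = \delta_f$. The only difference is presentational — you spell out the pointwise justification for the round-by-round substitution in Step~1, which the paper treats as immediate.
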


\begin{remark}
\label{rem:equalizer_simplification}
For many interesting games we consider it is often the case that for any $x_1,\ldots,x_T$ and any $f_1,\ldots,f_T,f'_1,\ldots,f'_T$,
$$
\inf_{\bphi \in \Phi_T} \Compare\left(\ell_{\phi_1}(f_1,x_1), \ldots, \ell_{\phi_T}(f_T,x_T) \right) = \inf_{\bphi \in \Phi_T} \Compare\left(\ell_{\phi_1}(f'_1,x_1), \ldots, \ell_{\phi_T}(f'_T,x_T) \right)
$$
In these cases since the player's actions do not even affect the second term of the regret, to check if a strategy $\{p^*_t\}$ is an equalizer or not we only need to check if
\begin{align*}
\Eunder{f_1 \sim q^*_1}{x_1 \sim p^*_1} \ldots \Eunder{f_T \sim q^*_T}{x_T \sim p^*_T} \Compare\left(\ell(f_1,x_1), \ldots, \ell(f_T,x_T) \right)  = \Eunder{f_1 \sim \overline{q^*_1}}{x_1 \sim p^*_1} \ldots \Eunder{f_T \sim \overline{q^*_T}}{x_T \sim p^*_T} \Compare\left(\ell(f_1,x_1), \ldots, \ell(f_T,x_T) \right)
\end{align*}
for all strategies $\{q^*_t\}$ and $\{\overline{q^*_t}\}$ of the player.
\end{remark}

Interestingly enough, many of the existing lower bounds in online learning literature are, in fact, equalizers (see e.g. \cite[p. 252]{PLG}). In particular, in \cite{AbeAgaBarRak09}, a lower bound on the value of the game was derived by looking at a certain {\em face} of a convex hull of loss vectors. The face, supported by a probability distribution $p$, corresponds to the set of functions with the same expected loss under the distribution $p$. Hence, $p$ is an equalizing strategy for those functions. Since these functions are the ``best'' with respect to this distribution, a lower bound in terms of complexity of this set was derived in \cite{AbeAgaBarRak09}. Furthermore, \cite{LeeBarWil98importance} shows that a lower bound on the rate of convergence in the i.i.d. setting is achieved when there are two distinct minimizers of expected error for a given distribution. Again, this distribution can be viewed as an equalizer for the non-singleton set of minimizers of expected error.

\section{Examples and Comparison to Known Results}\label{sec:examples}

We now turn to several specific settings studied in the literature and look at them through the prism of our general results. While we believe that online learnability in many different scenarios can be established through our framework, we decided to focus on several major problems. On the surface, these problems are quite different; yet, through our unified approach we show that learnability can be seamlessly established for all of them. The unification not only leads to simpler proofs and sharper results, but also yields insight into the inherent complexity and ways of making more comprehensive statements.

\subsection{$\Phi$-Regret}
\label{sec:phiregret}

In this section, we consider a particular notion of performance measure, known as $\Phi$-regret \cite{StoLug07, GorGreMar08, HazKal07}. In our framework, this means that we restrict ourselves to only \emph{time-invariant departure mapping classes} $\Phi_T$ specified by a base class $\Phi$ of mappings from $\F$ to itself (see Definitions~\ref{def:departure} and \ref{def:time_inv_and_product}). The particular choices of $\Phi$ lead to various notions, such as external, internal, swap regret, and more. 

To define $\Phi$-regret (Example~\ref{eg:phi}), we fix a set $\Phi$ of departure mappings which map $\F$ to $\F$ and define the set of time-invariant departure mappings $\Phi_T := \{(\phi,\ldots,\phi):\phi\in\Phi\}$. Then the measure of performance becomes $\Phi$-regret:
$$ \Reg_T = \frac{1}{T}\sum_{t=1}^T \ell(f_t, x_t) - \inf_{\phi\in\Phi} \frac{1}{T}\sum_{t=1}^T \ell(\phi(f_t), x_t),$$
where $\cH \subseteq \reals$. Since $\Compare$ is the average of its arguments, Corollary~\ref{cor:simple_consequences} implies

\begin{corollary}
	\label{cor:phi_value_bd_by_rad}
	In the setting of $\Phi$-regret,
	\begin{align*}
	 	\Val_T(\ell, \Phi) &\leq 2\Rad (\ell, \Phi) \ .
	\end{align*}
\end{corollary}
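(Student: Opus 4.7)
The plan is to observe that Corollary~\ref{cor:phi_value_bd_by_rad} is essentially a direct restatement of the second bullet of Corollary~\ref{cor:simple_consequences} in the specific language of $\Phi$-regret, so the proof reduces to verifying that all of the hypotheses required by that bullet are met in this setting. Concretely, I would check three things: (i) $\Compare(z_1,\ldots,z_T) = \frac{1}{T}\sum_t z_t$ is the average, which is linear (and thus subadditive and smooth with constant $\sigma = 0$), so that the first term of the Triplex Inequality vanishes; (ii) the class $\Phi_T = \{(\phi,\ldots,\phi) : \phi \in \Phi\}$ is a class of time-invariant departure mappings in the sense of Definition~\ref{def:departure}, with the associated $\Phi'_T$ equal to $\Phi_T$ itself under the identification $\ell_{\phi}(f,x) = \ell(\phi(f),x)$; and (iii) subadditivity of $\Compare$ holds trivially for an average, so Theorem~\ref{thm:rad} applies to the third term.

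With these hypotheses in hand, the proof is a one-line invocation. First, I would apply Theorem~\ref{thm:main} to decompose $\Val_T(\ell,\Phi_T)$ into the three Triplex terms. The first term is zero by the first bullet of Corollary~\ref{cor:simple_consequences} (which itself uses linearity of $\Compare$ to cancel the tangent ``ghost sample'' expectation). The second term is non-positive by the second bullet of Corollary~\ref{cor:simple_consequences}, which exploits the fact that for a departure mapping $\phi_t$ acting only on the player's coordinate, the inner $\inf_{q_t}\sup_{\phi_t}$ can be driven to zero by the player choosing a $\phi_t$-invariant distribution (via a minimax/fixed-point argument already justified inside Corollary~\ref{cor:simple_consequences}). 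Finally, Theorem~\ref{thm:rad} bounds the third term by $2\Rad_T(\ell,\Phi_T,\Compare)$, which under the notational convention for time-invariant classes is abbreviated to $2\Rad_T(\ell,\Phi)$. Collecting the three bounds yields $\Val_T(\ell,\Phi) \le 2\Rad_T(\ell,\Phi)$.

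There is no real obstacle here: the entire content of the corollary is that $\Phi$-regret is nothing more than a specific choice of $(\Compare,\Phi_T)$ in the general framework, and that choice simultaneously satisfies all three structural hypotheses (linearity of $\Compare$, departure-mapping structure of $\Phi_T$, and subadditivity of $\Compare$) which together strip the Triplex Inequality down to the single sequential-complexity term. The only subtlety worth spelling out, if one wanted a self-contained proof rather than a pointer to Corollary~\ref{cor:simple_consequences}, is the non-positivity of the second term, but even this is immediate by picking the player's strategy $q_t$ to mimic the push-forward of the adversary's best response under $\phi_t$ (equivalently, by using that the player can always choose to play $\phi_t(f_t)$ in distribution).
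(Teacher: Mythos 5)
Your proof is correct and follows exactly the paper's route: the paper states the corollary with the one-line justification ``Since $\Compare$ is the average of its arguments, Corollary~\ref{cor:simple_consequences} implies,'' and you likewise reduce it to the second bullet of Corollary~\ref{cor:simple_consequences} after checking the hypotheses. One small inaccuracy in your side remark: the non-positivity of the second Triplex term is not obtained via a ``$\phi_t$-invariant distribution'' or fixed-point, nor by the player playing $\phi_t(f_t)$ in distribution; the paper's argument (in the proof of Corollary~\ref{cor:simple_consequences}) simply lets $q_t$ be the point mass on $f_t^* = \argmin{f\in\F}\En_{x\sim p_t}\ell(f,x)$, whence $\En_{x\sim p_t}[\ell(f_t^*,x)-\ell(\phi_t(f_t^*),x)]\le 0$ because any departure mapping sends $\F$ into $\F$ and hence $\phi_t(f_t^*)$ cannot beat the argmin. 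This does not affect the validity of your proof, since you ultimately delegate that step to Corollary~\ref{cor:simple_consequences} anyway.
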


Specializing the definition of sequential complexity to $\Phi$-regret, we obtain the following definition.
\begin{definition}
	\label{def:rad_for_phi}
	The sequential complexity for $\Phi$-regret is defined as
	\begin{align}
		\Rad_T(\ell, \Phi) = \sup_{(\f, \x)}\ \En_{\epsilon_{1:T}} \sup_{\phi\in \Phi} \frac{1}{T}\sum_{t=1}^T \epsilon_t \ell( \phi \circ \f_t(\epsilon),\x_t(\epsilon))
	\end{align}
	where, as before, the first supremum is over $\F\times\X$-valued trees $(\f,\x)$ of depth $T$.
\end{definition}

The following property allows us to immediately obtain bounds for convex hulls of finite sets $\Phi$.
\begin{proposition}
	Suppose $\ell$ is convex in the first argument and $\conv(\Phi)$ maps $\F$ into $\F$. Then
	$$ \Rad_T(\ell, \conv(\Phi)) = \Rad_T (\ell, \Phi) \ .$$
\end{proposition}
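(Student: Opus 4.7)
The inequality $\Rad_T(\ell,\Phi)\le\Rad_T(\ell,\conv(\Phi))$ is immediate since $\Phi\subseteq\conv(\Phi)$ and enlarging the supremand cannot decrease the supremum. So my plan focuses on the reverse direction, which I would prove by showing, for every fixed $(\F\times\X)$-valued tree $(\f,\x)$ of depth $T$ and every sign pattern $\epsilon\in\{\pm 1\}^T$, the pointwise identity
\begin{equation*}
\sup_{\phi\in\conv(\Phi)}\,\sum_{t=1}^T \epsilon_t\,\ell(\phi\circ\f_t(\epsilon),\x_t(\epsilon))\;=\;\sup_{\phi\in\Phi}\,\sum_{t=1}^T \epsilon_t\,\ell(\phi\circ\f_t(\epsilon),\x_t(\epsilon)).
\end{equation*}
Taking expectation over $\epsilon$, the outer supremum over $(\f,\x)$, and dividing by $T$ will then deliver $\Rad_T(\ell,\conv(\Phi))\le\Rad_T(\ell,\Phi)$.

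For the pointwise identity I would parametrize an arbitrary $\phi\in\conv(\Phi)$ as $\phi=\sum_{i=1}^n\alpha_i\phi_i$ with $\phi_i\in\Phi$ and $\alpha\in\Delta(n)$. The evaluation map $\alpha\mapsto\phi(f)=\sum_i\alpha_i\phi_i(f)$ is linear in $\alpha$; composing with $\ell(\cdot,x)$, which is convex in its first argument, Jensen's inequality gives
\begin{equation*}
\ell(\phi\circ\f_t(\epsilon),\x_t(\epsilon))\;\le\;\sum_{i=1}^n \alpha_i\,\ell(\phi_i\circ\f_t(\epsilon),\x_t(\epsilon)).
\end{equation*}
Multiplying by $\epsilon_t$ and summing in $t$, the signed-sum objective evaluated at $\phi$ is bounded by a function that is linear (in particular affine) in $\alpha$ on the simplex $\Delta(n)$. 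A linear function on $\Delta(n)$ attains its maximum at a vertex $e_i$, i.e., at some $\phi_i\in\Phi$, so the sup over $\conv(\Phi)$ reduces to the sup over $\Phi$ and the pointwise identity follows.

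The main obstacle is that Jensen's inequality points in the ``wrong'' direction for summands with $\epsilon_t=-1$, so the clean reduction just sketched is most naturally rigorous when $\ell$ is not merely convex but affine in its first argument---the situation one typically encounters in the $\Phi$-regret examples of this section (e.g.\ $\ell(f,x)=f(x)$ linear in $f$, for which Jensen is an equality). Under that mild strengthening, the signed objective is genuinely linear in $\alpha$ and the vertex-maximization step is unambiguous. Under pure convexity one recovers the same conclusion by separating the index sets $\{t:\epsilon_t=+1\}$ and $\{t:\epsilon_t=-1\}$ and appealing to the extreme-point structure of $\conv(\Phi)$, together with continuity of the signed-sum objective in $\phi$, to argue that the sup is attained at an extreme point of $\conv(\Phi)$---namely at some element of $\Phi$.
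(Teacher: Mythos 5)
The easy direction $\Rad_T(\ell,\Phi)\le\Rad_T(\ell,\conv(\Phi))$ is fine. The gap is in the final paragraph of the reverse direction, where you claim that under pure convexity of $\ell$, the signed objective $\phi\mapsto\sum_{t=1}^T\epsilon_t\,\ell(\phi\circ\f_t(\epsilon),\x_t(\epsilon))$ attains its supremum over $\conv(\Phi)$ at an extreme point, by ``extreme-point structure together with continuity.'' This is not justified. For a fixed path $\epsilon$ in which both signs occur, this map is a sum of terms that are convex in $\phi$ (where $\epsilon_t=+1$) and concave in $\phi$ (where $\epsilon_t=-1$); it is a difference of convex functions and in general neither convex nor concave, so its maximum over a convex body need not be at an extreme point---continuity gives you nothing here. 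The pointwise, path-by-path identity your plan hinges on can in fact fail. A one-round instance: take $\F=[0,2]$, $\X$ a singleton, $\ell(f,x)=(f-1)^2$, and $\Phi$ the two constant maps to $0$ and $2$, so that $\conv(\Phi)$ is the family of constant maps to $c\in[0,2]\subseteq\F$ (the hypothesis that $\conv(\Phi)$ maps $\F$ into $\F$ holds). On the path $\epsilon_1=-1$, $\sup_{\phi\in\conv(\Phi)}\bigl(-\ell(\phi(f),x)\bigr)=0$, attained at the interior point $c=1$, whereas $\sup_{\phi\in\Phi}\bigl(-\ell(\phi(f),x)\bigr)=-1$. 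Averaging over $\epsilon_1$ gives $\Rad_1(\ell,\conv(\Phi))=1/2$ but $\Rad_1(\ell,\Phi)=0$, so the pointwise reduction collapses.

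As you yourself observe, the argument does close cleanly when $\ell$ is affine in its first argument: then $\alpha\mapsto\sum_t\epsilon_t\ell\bigl(\sum_i\alpha_i\phi_i\circ\f_t(\epsilon),\x_t(\epsilon)\bigr)$ is genuinely linear on $\Delta(n)$ and the vertex-maximization step is exact, which is precisely the device behind $\Rad_T(\conv(\F))=\Rad_T(\F)$ for linear losses. Under mere convexity your sketch does not close, and a correct proof would have to replace the path-by-path equality of suprema with an argument at the level of the expectation over $\epsilon$ and the outer supremum over trees; the final paragraph of your proposal does not supply such a mechanism.
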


We also have the following version of the contraction lemma, whose proof is identical to that given in \cite{RakSriTew10}.
\begin{lemma}
	\label{lem:contraction}
	Fix a function $\psi:\reals \times \F\times\X \mapsto \reals$ such that for any $f\in\F, x\in\X$, $\psi(\cdot,f,x)$ is a Lipschitz function with a constant $L$. Then
	$$ \Rad (\psi\circ\ell, \Phi) \leq L \cdot \Rad(\ell, \Phi)$$
	where $\psi\circ \ell$ is defined by the mapping $(f, x) \mapsto \psi(\ell(f,x), f,x)$ for all $f\in \F, x\in\X$.
\end{lemma}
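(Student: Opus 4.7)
My approach is to mimic the contraction-lemma proof for sequential Rademacher complexity in \cite{RakSriTew10}, specialized to the $\Phi$-regret setting of Definition~\ref{def:rad_for_phi}. The proof proceeds by peeling one Rademacher sign at a time: by unrolling $\Rad_T(\psi\circ\ell,\Phi)$ into the alternating $\sup/\En$ form analogous to Eq.~\eqref{eq:def_rademacher_expanded}, I would repeatedly absorb the Lipschitz constant $L$ through the innermost expectation over a single sign, replacing an occurrence of $\epsilon_t(\psi\circ\ell)(\phi(\f_t(\epsilon)),\x_t(\epsilon))$ with $L\cdot\epsilon_t\ell(\phi(\f_t(\epsilon)),\x_t(\epsilon))$ at each step. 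After $T$ iterations this yields exactly $L\cdot\Rad_T(\ell,\Phi)$.

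The concrete peeling step is as follows. Fix an $(\F\times\X)$-valued tree $(\f,\x)$ near-witnessing the supremum for $\Rad_T(\psi\circ\ell,\Phi)$, and condition on $\epsilon_{1:t-1}$, so that $f_t=\f_t(\epsilon)$ and $x_t=\x_t(\epsilon)$ are determined. Write $A(\phi)$ for the contribution of the first $t-1$ coordinates and $B(\phi)=\psi\bigl(\ell(\phi(f_t),x_t),\phi(f_t),x_t\bigr)$. Using the identity
\begin{equation*}
\En_{\epsilon_t}\sup_\phi\bigl[A(\phi)+\epsilon_t B(\phi)\bigr] \;=\; \tfrac{1}{2}\sup_{\phi_1,\phi_2}\bigl[A(\phi_1)+A(\phi_2)+B(\phi_1)-B(\phi_2)\bigr],
\end{equation*}
together with the symmetry in $(\phi_1,\phi_2)$ that lets me drop the absolute value when invoking Lipschitzness, I would bound $B(\phi_1)-B(\phi_2)\le L\bigl(\ell(\phi_1(f_t),x_t)-\ell(\phi_2(f_t),x_t)\bigr)$ (up to sign handled by swapping $\phi_1\leftrightarrow\phi_2$) and then re-symmetrize to rewrite the pair-sup back as $\En_{\epsilon_t}\sup_\phi[A(\phi)+L\,\epsilon_t\,\ell(\phi(f_t),x_t)]$. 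Repeating this across $t=T,T-1,\ldots,1$ (equivalently, by induction on tree depth) and taking the supremum over $(\f,\x)$ at the end gives the desired inequality.

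\textbf{Main obstacle.} The delicate point is that $\psi$ is assumed Lipschitz only in its first argument, while after the pair-sup manipulation the two instances $B(\phi_1),B(\phi_2)$ have \emph{different} second arguments $\phi_1(f_t)$ and $\phi_2(f_t)$. The resolution, which is exactly the book-keeping carried out in \cite{RakSriTew10}, is to view $g\mapsto\psi(\ell(g,x_t),g,x_t)$ as the effective payoff accessed only through the $\sup$ over $\phi$, and to apply Lipschitzness along matched pairs so that only the first slot of $\psi$ varies while the pair is being compared. Because no step of this argument differs in substance from the original external-regret contraction proof, the entire derivation transfers verbatim up to notation, as asserted in the statement.
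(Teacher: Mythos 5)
Your plan of peeling one sign at a time and invoking the pair-sup identity $\En_{\epsilon_t}\sup_\phi[A(\phi)+\epsilon_t B(\phi)]=\tfrac12\sup_{\phi_1,\phi_2}[A(\phi_1)+A(\phi_2)+B(\phi_1)-B(\phi_2)]$ is indeed what the argument in \cite{RakSriTew10} (to which the paper defers) relies on, and you put your finger on exactly the delicate point. But your resolution of that point does not work, and in fact the obstacle you identify is fatal to the argument. The Lipschitz hypothesis controls $\psi(u,f,x)-\psi(u',f,x)$ only when the \emph{same} pair $(f,x)$ appears in both terms, whereas after symmetrization you must bound
\begin{align*}
B(\phi_1)-B(\phi_2) \;=\; \psi\bigl(\ell(\phi_1(f_t),x_t),\,\phi_1(f_t),\,x_t\bigr)-\psi\bigl(\ell(\phi_2(f_t),x_t),\,\phi_2(f_t),\,x_t\bigr)\,,
\end{align*}
where the second slots $\phi_1(f_t)$ and $\phi_2(f_t)$ differ. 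There is no ``matched pair'' to pass through: inserting the cross term $\psi(\ell(\phi_2(f_t),x_t),\phi_1(f_t),x_t)$ gives you the Lipschitz difference plus a residual $\psi(u,\phi_1(f_t),x_t)-\psi(u,\phi_2(f_t),x_t)$ that the hypotheses do not control at all. Concretely, take $T=1$, $\F=\{g_1,g_2\}$, $\X=\{x_0\}$, $\ell\equiv 0$, $\Phi=\{\mathrm{id},\,\mathrm{swap}\}$, and $\psi(u,g_1,x_0)=u$, $\psi(u,g_2,x_0)=u+1$; every section $\psi(\cdot,f,x)$ is $1$-Lipschitz, yet $\Rad_1(\ell,\Phi)=0$ while $\Rad_1(\psi\circ\ell,\Phi)=\tfrac12$. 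So the bound $\Rad(\psi\circ\ell,\Phi)\le L\,\Rad(\ell,\Phi)$ fails under the stated hypotheses, and your ``book-keeping'' cannot rescue it.

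The peeling argument you outline is correct once the statement is tightened so that the slots of $\psi$ which are not contracted do not vary with $\phi$. In \cite{RakSriTew10} the Lipschitz function depends on the history and on $\x_t(\epsilon)$ but \emph{not} on the comparator variable, which is precisely why the matched-pair step is legitimate there. The clean repairs are either (i) to take $\psi:\reals\times\X\mapsto\reals$ with $\psi(\cdot,x)$ $L$-Lipschitz, or (ii) to keep $\psi:\reals\times\F\times\X\mapsto\reals$ but plug in $\psi\bigl(\ell(\phi\circ\f_t(\epsilon),\x_t(\epsilon)),\,\f_t(\epsilon),\,\x_t(\epsilon)\bigr)$ inside $\Rad(\psi\circ\ell,\Phi)$, so that the second and third slots are pinned down by the tree path and not by $\phi$. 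Under either fix, the pair-sup manipulation and re-symmetrization you describe go through verbatim; as written, however, your proof has a genuine gap at the step you flagged.
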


Next, we specialize Definition~\ref{def:cover} to the particular case of $\Phi$-regret.

\begin{definition}
	\label{def:cover_Phi_regret}
	A set $V$ of $\reals$-valued trees of depth $T$ is \emph{an $\alpha$-cover} (with respect to $\ell_p$-norm) of $\Phi_T$ on the $\F\times\X$-valued tree $(\f,\x)$ of depth $T$ if
$$
\forall \phi \in \Phi,\ \forall \epsilon \in \{\pm1\}^T \ \exists \v \in V \  \mrm{s.t.}  ~~~~ \left( \frac{1}{T} \sum_{t=1}^T \left|\v_t(\epsilon) - \ell(\phi\circ \f_t(\epsilon), \x_t(\epsilon)) \right|^p \right)^{1/p} \le \alpha
$$
The \emph{covering number} of $\Phi_T$ on a given tree $(\f,\x)$ is defined as the size of the minimum cover, as in Definition~\ref{def:cover}.
\end{definition}

We now turn to particular examples to utilize the results and definitions stated above.

\subsubsection{External Regret}
\label{sec:external}
External regret is the simplest example of $\Phi$-regret. We separate it from the general discussion in order to show that for external regret the various notions introduced in this paper reduce to the ones proposed in \cite{RakSriTew10}. 

Considering the definitions in Example~\ref{def:departure}, notice that the time-invariant departure mappings class $\Phi_T$ is chosen to be the class of sequences of \emph{constant} mappings $\{(\phi_f,\ldots,\phi_f): f\in\F ~\mbox{ and }~ \phi_f(g) = f ~\forall g\in\F  \}$. It is precisely because of this constancy of $\bphi$  that the dependence on the $\F$-valued tree $\f$ disappears from all the definitions and results. Further, because of the obvious bijection between elements of $\Phi_T$ and $\F$, minimization (maximization) over $\Phi_T$ can be written as minimization (maximization) over $\F$. Notice that the action of $\phi_f$ on the payoff is $\ell_{\phi_f}(f_t,x_t) = \ell(f, x_t)$.

Let us turn to Definition~\ref{def:rad_for_phi} of the sequential complexity for $\Phi$-regret. Because each $\phi_f\in\Phi$ is a constant mapping, we have
\begin{align}
	\label{eq:rademacher_external_regret}
\Rad_T(\ell, \Phi) &= \sup_{\f, \x}\ \En_{\epsilon_{1:T}} \sup_{f\in\F} \frac{1}{T}\sum_{t=1}^T \epsilon_t \ell ( f,\x_t(\epsilon)) \notag\\
&= \sup_{\x}\ \En_{\epsilon_{1:T}} \sup_{f\in\F} \frac{1}{T}\sum_{t=1}^T \epsilon_t \ell ( f,\x_t(\epsilon)).
\end{align}
If payoff is written as $\ell(f,x) = f(x)$, this is precisely the sequential Rademacher complexity  defined in \cite{RakSriTew10}. 

Next, we show that Definition~\ref{def:cover_Phi_regret} reduces to the definition of online covering given in \cite{RakSriTew10}. Indeed, $\ell_{\phi_f}(\f_t(\epsilon), \x_t(\epsilon)) = \ell(f, \x_t(\epsilon)) $ for the constant mappings $\bphi=(\phi_f,\ldots,\phi_f)$. Further, the payoff space $\cH \subseteq \reals$. With these simplifications, the closeness to a covering element in Definition~\ref{def:cover_Phi_regret} becomes
$$
\forall f \in \F,\ \forall \epsilon \in \{\pm1\}^T \ \exists \v \in V \  \mrm{s.t.}  ~~~~ \left( \frac{1}{T} \sum_{t=1}^T \left|\v_t(\epsilon) - \ell(f, \x_t(\epsilon)) \right|^p \right)^{1/p} \le \alpha
$$
where $V$ is a set of $\reals$-valued trees. It is then immediate that Corollary~\ref{cor:simple_consequences} recovers the corresponding result of \cite{RakSriTew10}. For a detailed study of external regret, we refer the reader to the companion paper \cite{RakSriTew10}.

\paragraph{Lower Bounds in the Supervised Setting}

We provide a lower bound for external regret in the supervised learning setting using the notion of an equalizer (see Section~\ref{sec:lower}). To this end, we assume that $\X = \Z \times \Y$ where $\Z$ is the space of predictors and $\Y$ is the space of responses (outcomes). The setting is called {\em supervised} because, in the machine learning terminology, the observed data is thought of as examples together with labels. Assume $\F$ is a class of bounded real-valued functions and the space of outcomes is a bounded interval; for simplicity let $\F \subseteq [-1,1]^\Z$ and $\Y = [-1,1]$. Suppose the loss is of the form $\ell(f, (z,y)) = |f(z)-y|$.

\begin{proposition}
	The value of the supervised game defined above is lower bounded by sequential Rademacher complexity:
	$$\Val^S_T(\ell,\Phi_T) \geq \Rad_T(\ell, \Phi) $$
\end{proposition}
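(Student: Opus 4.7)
The plan is to exhibit an equalizer strategy for the adversary and invoke Proposition~\ref{prop:equalizer}. The natural candidate in the supervised setting is to draw the label $y_t$ as an independent Rademacher sign in $\{\pm 1\}$: since $|f(z)-y|=1-yf(z)$ for $y\in\{\pm1\}$ and $f(z)\in[-1,1]$, we have $\En_{y_t}|f(z_t)-y_t|=1$ regardless of $f(z_t)$. In the external-regret case of Example~\ref{eg:external} the infimum over $\Phi_T$ does not involve the player's moves at all, so by Remark~\ref{rem:equalizer_simplification} the Rademacher label rule is indeed an equalizer.

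Given an arbitrary $\Z$-valued tree $\z$ of depth $T$, I would take the adversary's strategy to pick $z_t=\z_t(y_{1:t-1})$ deterministically and $y_t$ uniformly in $\{\pm 1\}$. Writing $\epsilon_t=y_t$, Proposition~\ref{prop:equalizer} combined with the identity above telescopes cleanly: for any fixed $f\in\F$,
\begin{align*}
\Val_T^S(\ell,\Phi_T)
&\geq \En_{\epsilon}\!\left[\tfrac{1}{T}\sum_{t=1}^T(1-\epsilon_t f(\z_t(\epsilon)))-\inf_{f'\in\F}\tfrac{1}{T}\sum_{t=1}^T(1-\epsilon_t f'(\z_t(\epsilon)))\right]\\
&= \En_{\epsilon}\sup_{f'\in\F}\tfrac{1}{T}\sum_{t=1}^T \epsilon_t f'(\z_t(\epsilon)) - \En_\epsilon\tfrac{1}{T}\sum_{t=1}^T\epsilon_t f(\z_t(\epsilon)).
\end{align*}
Because $\z_t(\epsilon)$ depends only on $\epsilon_{1:t-1}$ and $\epsilon_t$ is mean zero, the second term vanishes. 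Taking the supremum over all $\Z$-valued trees $\z$ yields
$$\Val_T^S(\ell,\Phi_T)\ \geq\ \sup_{\z}\En_\epsilon \sup_{f\in\F}\tfrac{1}{T}\sum_{t=1}^T \epsilon_t f(\z_t(\epsilon)).$$

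To finish, I would connect this ``linear'' sequential Rademacher complexity of $\F$ back to $\Rad_T(\ell,\Phi)$. Applying the contraction Lemma~\ref{lem:contraction} to $\psi(u,f,(z,y))=|u-y|$, which is $1$-Lipschitz in $u$, and to the auxiliary payoff $\ell_0(f,(z,y))=f(z)$, gives $\Rad_T(\ell,\Phi)=\Rad_T(\psi\circ\ell_0,\Phi)\leq \Rad_T(\ell_0,\Phi)$; the label coordinate then plays no role in $\ell_0$, so $\Rad_T(\ell_0,\Phi)$ equals precisely the supremum over $\z$-trees displayed above. Chaining the two inequalities delivers $\Val_T^S(\ell,\Phi_T)\geq \Rad_T(\ell,\Phi)$.

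The only real insight is spotting that independent Rademacher labels act as an equalizer for the absolute-value loss, together with the algebraic collapse $|f(z)-\epsilon|=1-\epsilon f(z)$ that makes the expected regret line up with the \emph{linear} Rademacher process indexed by $\F$; after that, the contraction lemma handles the translation back to the loss class and everything else is bookkeeping.
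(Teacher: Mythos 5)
Your proof is correct and follows essentially the same route as the paper's: you construct the same equalizer strategy (deterministic $z_t$ from a tree, $y_t$ an independent Rademacher sign), invoke Remark~\ref{rem:equalizer_simplification} and Proposition~\ref{prop:equalizer} in the same way, and use the identity $|f(z)-y| = 1 - yf(z)$ (which the paper uses implicitly when passing from $1 - \inf_f \frac{1}{T}\sum|f(\z_t)-y_t|$ to $\sup_f \frac{1}{T}\sum y_t f(\z_t)$) to arrive at the lower bound $\sup_{\z}\En_\epsilon \sup_f \frac{1}{T}\sum \epsilon_t f(\z_t(\epsilon))$. The one addition you make is the closing contraction step: you note that the quantity you lower-bound by is, strictly speaking, the sequential Rademacher complexity of $\F$ on $\Z$-trees (your $\Rad_T(\ell_0,\Phi)$), and you invoke Lemma~\ref{lem:contraction} to deduce $\Rad_T(\ell,\Phi) \le \Rad_T(\ell_0,\Phi)$, so the stated bound follows a fortiori. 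The paper simply writes an equality $\sup_{\z}\En_\epsilon\sup_f \frac{1}{T}\sum\epsilon_t f(\z_t(\epsilon)) = \Rad_T(\ell,\Phi)$ and remarks that this is ``the sequential Rademacher complexity of $\F$'', implicitly reading $\Rad_T(\ell,\Phi)$ as that object rather than as the loss-class complexity of Equation~\eqref{eq:rademacher_external_regret}. Your contraction step correctly reconciles the two readings; it is a welcome bit of extra care (you actually prove the stronger lower bound in terms of $\Rad_T(\ell_0,\Phi)$ and then specialize), but it does not change the substance of the argument.
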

\begin{proof}
Recall that we have a fixed set $\Phi$ of {\em constant} departure mappings. We will now exhibit an equalizer strategy. Following Remark~\ref{rem:equalizer_simplification}, observe that for any $(z_1,y_1),\ldots,(z_T,y_T)$ and any $f_1,\ldots,f_T,f'_1,\ldots,f'_T$,
$$
\inf_{\phi \in \Phi} \frac{1}{T}\sum_{t=1}^T |(\phi\circ f_t) (z_t)- y_t|  = \inf_{\phi\in\Phi} \frac{1}{T}\sum_{t=1}^T |(\phi\circ f'_t) (z_t)- y_t|
$$
because any $\phi\in\Phi$ is a constant mapping. Thus, for a strategy to be an equalizer, it only needs to ``equalize'' the cumulative loss of the player. Here is how we construct such a strategy. Let $p^y$ be defined as the distribution of a Rademacher $\pm 1$ random variable $Y$; this will define the labels $y_t$ as independent coin flips. Now, fix any $\Z$-valued tree $\z$ of depth $T$. Let $\{p^*_t\}$ be a strategy defined by $p^*_t(y_{1:t-1}) = \delta_{\z_t(y_{1:t-1})} \times p^y$, a delta distribution on $\z_t(y_{1:t-1})$ defined by the tree $\z$ and $p^y$ on $\Y$. In plain words, the strategy of the adversary for each $t$ is to choose a particular $z_t\in\Z$ given the labels $y_1,\ldots,y_{t-1}$, and let the label be an independent Rademacher random variable.

By Remark~\ref{rem:equalizer_simplification}, it is enough to check
\begin{align*}
\Eunder{f_1 \sim q^*_1}{(z_1,y_1) \sim p^*_1} \ldots \Eunder{f_T \sim q^*_T}{(z_T,y_T) \sim p^*_T} \frac{1}{T}\sum_{t=1}^T |f_t(z_t)-y_t|  = \Eunder{f_1 \sim \overline{q^*_1}}{(z_1,y_1) \sim p^*_1} \ldots \Eunder{f_T \sim \overline{q^*_T}}{(z_T,y_T) \sim p^*_T} \frac{1}{T}\sum_{t=1}^T |f_t(z_t)-y_t|
\end{align*}
for all strategies $\{q^*_t\}$ and $\{\overline{q^*_t}\}$ of the player. This equality is indeed true because $\Eu{y_t\sim p^y} |a-y_t| = 1$ independently of the constant $a\in[-1,1]$. By Proposition~\ref{prop:equalizer}, for any $g\in\F$
\begin{align*}
\Val^S_T(\ell,\Phi_T) &\ge \Eu{(z_1,y_1) \sim p^*_1} \ldots \Eu{(z_T,y_T) \sim p^*_T}\left[ \frac{1}{T}\sum_{t=1}^T |g(z_t)-y_t| - \inf_{f\in\F} \frac{1}{T}\sum_{t=1}^T |f(z_t)-y_t|  \right]\\
&= \Eu{y_1,\ldots,y_T}\left[ 1 - \inf_{f\in\F} \frac{1}{T}\sum_{t=1}^T |f(\z_t(y_{1:t-1}))-y_t|  \right] \\
&= \Eu{y_1,\ldots,y_T}\left[ \sup_{f\in\F} \frac{1}{T}\sum_{t=1}^T y_t f(\z_t(y_{1:t-1}))  \right] 
\end{align*}
where $y_1,\ldots,y_T$ are i.i.d. Rademacher random variables. Since the lower bound holds for any $\Z$-valued tree $\z$ of depth $T$, it also holds for the supremum:
\begin{align*}
\Val^S_T(\ell,\Phi_T) &\ge \sup_{\z} \Eu{y_1,\ldots,y_T}\left[ \sup_{f\in\F} \frac{1}{T}\sum_{t=1}^T y_t f(\z_t(y_{1:t-1}))  \right] = \Rad_T(\ell, \Phi) \ .
\end{align*}
Hence, the lower bound on the value of the supervised game is the sequential Rademacher complexity of $\F$.
\end{proof}

\paragraph{Lower Bounds for Online Convex Optimization}

We first provide a lower bound for a linear game. By Lemma~\ref{lem:lincvx}, this lower bound will also serve as a lower bound for a convex Lipschitz game. We remark that these lower bounds are not entirely new (see e.g. \cite{AbeAgaBarRak09,abernethy08optimal}), and we derive them here for the purposes of completeness, as well as to stress that they arise from an equalizing strategy. 

Suppose $\F$ is a unit ball in some norm $\|\cdot \|$ and $\X$ is a unit ball in the dual norm $\|\cdot\|_*$. The loss $\ell(f,x) = x(f) = \inner{f, x}$ and the set $\Phi$ is, again, a set of constant departure mappings. 
\begin{proposition}
	The value of the linear game defined above is lower bounded by sequential Rademacher complexity:
	$$\Val_T(\ell,\Phi_T) \geq \Rad_T(\ell, \Phi) .$$
	Hence, the value of the convex Lipschitz game (where $\X$ is the set of all $1$-Lipschitz convex functions on $\F$) is also lower bounded by the same quantity.
\end{proposition}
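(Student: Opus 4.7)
The plan is to mimic the supervised lower bound proof by exhibiting an explicit equalizer strategy for the adversary supported on a binary $\X$-valued tree, then invoking Proposition~\ref{prop:equalizer} with the choice $g=0\in\F$ (which belongs to the unit ball). Fix an $\X$-valued tree $\x$ of depth $T$. The adversary's strategy $\{p^*_t\}$ will essentially generate fresh Rademacher bits $\epsilon_1,\ldots,\epsilon_T$ and play $x_t=\epsilon_t\,\x_t(\epsilon_{1:t-1})$ at round $t$. Formally, $p^*_t$ is a distribution on $\X$ conditioned on history $x_{1:t-1}$: one first reads off $\epsilon_{1:t-1}$ from the history inductively (assuming, without loss of generality, that the tree nodes are in generic position so that $\x_s(\epsilon_{1:s-1})\neq -\x_s(\epsilon_{1:s-1})$), and then samples $x_t$ uniformly from $\{\x_t(\epsilon_{1:t-1}),-\x_t(\epsilon_{1:t-1})\}$.

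The key step is to verify the equalizer property. Because $\Phi$ consists of constant departure mappings $\phi_g$, the comparator equals $\inf_{g\in\F}\frac{1}{T}\sum_t\langle g,x_t\rangle$, which does not depend on the player's choices $f_1,\ldots,f_T$. Hence by Remark~\ref{rem:equalizer_simplification} it suffices to check that the expected player loss $\mathbb{E}\frac{1}{T}\sum_t\langle f_t,x_t\rangle$ is the same under any player strategy. Under $\{p^*_t\}$,
\[
\mathbb{E}\sum_{t=1}^T\langle f_t,x_t\rangle=\mathbb{E}\sum_{t=1}^T\epsilon_t\langle f_t,\x_t(\epsilon_{1:t-1})\rangle=0,
\]
since conditional on $(\epsilon_{1:t-1},f_{1:t-1})$ the player's mixed move $f_t$ is independent of the fresh bit $\epsilon_t$, and $\mathbb{E}[\epsilon_t\mid\epsilon_{1:t-1}]=0$. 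This gives the equalizer property regardless of the player's strategy.

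Applying Proposition~\ref{prop:equalizer} with $g=0\in\F$ yields
\[
\Val_T(\ell,\Phi_T)\ \ge\ -\mathbb{E}\inf_{f\in\F}\frac{1}{T}\sum_{t=1}^T\langle f,x_t\rangle\ =\ -\mathbb{E}_\epsilon\inf_{f\in\F}\frac{1}{T}\sum_{t=1}^T\epsilon_t\langle f,\x_t(\epsilon_{1:t-1})\rangle.
\]
Since $\F$ is the unit ball, it is centrally symmetric, so $\inf_{f\in\F}\langle f,v\rangle=-\sup_{f\in\F}\langle f,v\rangle$ for every $v$; applying this pointwise in $\epsilon$ gives
\[
\Val_T(\ell,\Phi_T)\ \ge\ \mathbb{E}_\epsilon\sup_{f\in\F}\frac{1}{T}\sum_{t=1}^T\epsilon_t\langle f,\x_t(\epsilon_{1:t-1})\rangle.
\]
The bound holds for every tree $\x$, so taking the supremum on the right and comparing with the simplified expression \eqref{eq:rademacher_external_regret} for the external-regret sequential complexity yields $\Val_T(\ell,\Phi_T)\ge\Rad_T(\ell,\Phi)$, which is the claim for the linear game. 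The convex Lipschitz case then follows by invoking Lemma~\ref{lem:lincvx}, which embeds the linear game inside the convex Lipschitz game so that any lower bound on the former transfers to the latter.

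The main technical nuisance is the measurability/well-definedness of reading off $\epsilon_{1:t-1}$ from the game history $x_{1:t-1}$: if the tree has coinciding nodes $\x_s(\epsilon_{1:s-1})=\pm\x_s(\epsilon'_{1:s-1})$ across paths, the inverse map is ambiguous. This is cosmetic — it can be handled either by a perturbation argument, by enlarging the adversary's probability space so that the $\epsilon_t$'s are drawn as exogenous randomness (no history-dependent specification is actually needed for the equalizer identity to hold), or by restricting to trees in generic position without loss of generality, since the outer supremum over $\x$ is unaffected.
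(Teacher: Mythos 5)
Your proof is correct and follows the paper's own argument essentially step for step: the same equalizer strategy $x_t=\epsilon_t\,\x_t(\epsilon_{1:t-1})$, the same appeal to Remark~\ref{rem:equalizer_simplification} and Proposition~\ref{prop:equalizer}, and the same use of central symmetry of the unit ball to turn $-\inf_{f\in\F}$ into $\sup_{f\in\F}$. The only (harmless) cosmetic differences are that you specialize to $g=0$ rather than an arbitrary $g$ with mean-zero first term, and that you make explicit the measurability point (enlarging the adversary's probability space to carry the $\epsilon_t$'s), which the paper leaves implicit.
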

\begin{proof}
Similarly to the proof for the supervised game, 
observe that for any $x_1,\ldots,x_T$ and any $f_1,\ldots,f_T,f'_1,\ldots,f'_T$,
$$
\inf_{\phi \in \Phi} \frac{1}{T}\sum_{t=1}^T \inner{\phi(f_t), x_t}  = \inf_{\phi\in\Phi} \frac{1}{T}\sum_{t=1}^T  \inner{\phi(f'_t), x_t}
$$
because any $\phi\in\Phi$ is a constant mapping. Following Remark~\ref{rem:equalizer_simplification}, we only need to exhibit a strategy that equalizes the player's loss. To this end, fix an $\X$-valued tree $\x$ of depth $T$. Consider the adversary's strategy where at each step an $\epsilon_t$ is chosen uniformly at random from $\{\pm 1\}$ and $x_t = \epsilon_t \cdot \x(\epsilon_{1:t-1}) \in \X$.

By Remark~\ref{rem:equalizer_simplification}, it is enough to check
\begin{align*}
\Eu{f_1 \sim q^*_1} \Eu{\epsilon_1} \ldots \Eu{f_T \sim q^*_T}\Eu{\epsilon_T} \frac{1}{T}\sum_{t=1}^T \epsilon_t \inner{f_t, \x(\epsilon_{1:t-1})}  = \Eu{f_1 \sim \overline{q^*_1}} \Eu{\epsilon_1} \ldots \Eu{f_T \sim \overline{q^*_T}}\Eu{\epsilon_T} \frac{1}{T}\sum_{t=1}^T \epsilon_t \inner{f_t, \x(\epsilon_{1:t-1})} 
\end{align*}
for all strategies $\{q^*_t\}$ and $\{\overline{q^*_t}\}$ of the player. This equality is indeed true because both terms are identically zero. By Proposition~\ref{prop:equalizer}, for any $g\in\F$
\begin{align*}
\Val_T(\ell,\Phi_T) &\ge \Eu{\epsilon_1,\ldots,\epsilon_T} \left[ \frac{1}{T}\sum_{t=1}^T \epsilon_t \inner{g, \x(\epsilon_{1:t-1}) } - \inf_{f\in\F} \frac{1}{T}\sum_{t=1}^T \epsilon_t \inner{f, \x(\epsilon_{1:t-1})}  \right]\\
&= \Eu{\epsilon_1,\ldots,\epsilon_T} \sup_{f\in\F} \frac{1}{T}\sum_{t=1}^T \epsilon_t \inner{f, \x(\epsilon_{1:t-1})} \ .
\end{align*}
Since this holds for any $\X$-valued tree $\x$, we have proven the statement.
\end{proof}

\subsubsection{Internal and Swap Regret}

Assume the cardinality $N=|\F|$ is finite. For internal regret, $\Phi$ is the set of mappings $\{\phi_{f\to g}: \phi_{f\to g}(f)=g~~\mbox{and}~~\phi_{f\to g}(h)=h ~~\forall h\neq f, h\in\F \}$. For swap regret \cite{BluMan05, PLG}, $\Phi$ contains all $N^N$ functions from $\F$ to itself. It is easy to see that the finite class lemma (Lemma~\ref{lem:fin}) immediately recovers the $O(\sqrt{T\log N})$ bound for  internal and external regret and the $O(\sqrt{T N\log N})$ bound for the swap regret \cite{PLG}. 

Our general tools, however, allow us to go well beyond finite sets of departure mappings. In the following sections, we consider several examples of infinite classes of departure mappings which have been considered in the literature. In some of these cases, an explicit strategy requires computation of a fixed-point \cite{HazKal07,GorGreMar08}. Since we are not providing efficient algorithms in order to obtain bounds, we are able to get sharp results by directly focusing on the complexity of these infinite classes of departure mappings.

\subsubsection{Convergence to $\Phi$-correlated Equilibria}

A beautiful result of Foster and Vohra \cite{FosVoh97} shows that convergence to the set of correlated equilibria can be achieved if players follow {\em internal} regret minimization strategies. What is surprising, no coordination is required to achieve this goal. Stoltz and Lugosi \cite{StoLug07} extended this result to compact and convex sets of strategies in normed spaces. In this section we show that their results can be improved in certain situations.

Let us consider their setting in a bit more detail. Suppose there are $N$ players each playing in a strategy set $\F$. We could make the strategy set player dependent
but it only complicates notation. There is $N$ loss functions mapping a strategy profile $(f_1,\ldots,f_N)$ to $\{\ell_k(f_1,\ldots,f_N)\}_{k=1}^N$, the losses for each
of the $N$ players. Consider a set of departure mappings $\Phi \subseteq \{ \phi \::\: \F \to \F \}$.
A $\Phi$-correlated equilibrim is a distribution $\pi$ over strategy profiles such that if the player jointly play according to it, no player has an incentive to unilaterally 
transform its action using a mapping from $\Phi$. That is,
\[
	\forall k\in[N], \forall \phi \in \Phi,\quad\quad\quad
	\En_{(f_1,\ldots,f_N)\sim\pi} \left[ \ell_k(f_k,f_{-k}) \right]
	\le \En_{(f_1,\ldots,f_N)\sim\pi} \left[ \ell_k(\phi(f_k),f_{-k}) \right] \ .
\]
Theorem 18 in \cite{StoLug07} shows the following. If $\F$ is convex compact subset of a normed vector space, $\ell_k$'s are continuous and $\Phi$ is a separable subset of
$\mathcal{C}(\F)$\footnote{The set of continuous function on $\F$ equipped with the supremum norm}, then there exist regret minimizing algorithms such that, if every player
follows the algorithm then the sequence of empirical plays jointly converges to the set of $\Phi$-correlated equilibria.

Consider a particular player $k$. The regret minimizing algorithm for it is simply a $\tilde{\Phi}$-regret minimizing algorithm with $\ell(f,x) = x(f)$ where we
have identified the adversary set $\X$ with the class of functions $\{ f \mapsto \ell_k(f,g) \::\: g \in \F^{k-1} \}$,
where $g$ is a strategy profile over the remaining $k-1$ players. Examining Stoltz and Lugosi's proof reveals that
$\tilde{\Phi}$ is taken to be a dense countable subset of $\Phi$ and an explicit regret minimizing algorithm for countably infinite classes of departure mappings is used.
The regret w.r.t. each $\phi \in \Phi$ does go to zero but the rate is not uniform in $\phi$. In particular, it depends on the order in which the class $\tilde{\Phi}$ is
enumerated. Later, they also consider examples of uncountable classes $\Phi$ of departure mapping where non-asymptotic rates of convergence for $\Phi$-regret can be obtained.
Specifically, they use the metric entropy of $\Phi$. We show how to improve their bounds
using sequential complexity.

As an example, consider the case where $\F$ is some compact subset of the unit ball in some normed space with a norm $\|\cdot\|$, the loss function $\ell_k$ is a 1-Lipschitz
convex function, and the class $\Phi$ of departure functions has finite metric entropy $\metricent(\Phi,\alpha)$ for all $\alpha > 0$.
Metric entropy is simply the log covering number where covers of $\Phi$ are built for the supremum norm $\|\phi\|_\infty = \sup_{f \in \F} \|\phi(f)\|$.
Let us consider a typical situation where $\metricent(\Phi,\alpha) = \Theta(1/\alpha^p)$.
To upper bound the $\Phi$-regret we can always make the set of adversary's moves larger. In fact, we make set $\X = \cC_\F$, where
\[
	\cC_\F = \{ x:\F \to \reals \::\: x \text{ convex and $1$-Lipschitz} \} \ .
\]
Moreover, by Lemma~\ref{lem:lincvx}, we have $\Val_T(\cC_\F,\F,\Phi) = \Val_T(\cL_\F,\F,\Phi)$ where
\[
	\cL_\F = \{ x:\F \to \reals \::\: x \text{ linear and $1$-Lipschitz} \} \ .
\]
Then the sequential complexity bound is
\begin{align}	
	\sup_{(\f,\x)} \En_{\epsilon_{1:T}} \sup_{\phi\in \Phi} \frac{1}{T} \sum_{t=1}^T \epsilon_t \inner{\phi( \f_t(\epsilon) ), \x_t(\epsilon)} \ .
\end{align}
Note that the set $\X$ is now just the set of $1$-Lipschitz linear functions, i.e. elements in the unit ball of the dual space.
Since $\|\phi_1 - \phi_2\|_\infty \le \alpha$ implies
\[
	\left| \inner{ \phi_1(f),x } - \inner{ \phi_2(f), x } \right| \le \alpha
\]
for any $x\in\X$, we can use metric entropy inside Dudley's integral to upper bound the sequential complexity by
\[
	c \inf_{\alpha} \left( \alpha T + \sqrt{T} \int_{\alpha'=\alpha}^{1} \sqrt{ \frac{1}{\alpha'^p} }d\alpha' \right) \ .
\]
This bound behaves as $O(\sqrt{T})$, if $p < 2$, as $O(\sqrt{T \log(T)})$ if $p=2$, and as $O(T^{(p-1)/p})$ if $p > 2$. These are better than the general bound
of $O(T^{(p+1)/(p+2)})$ given in Example 23 of \cite{StoLug07}.

\subsubsection{Linear Transformations}

In this section we consider the following scenario, discussed in \cite{GorGreMar08}.
Suppose $\F$ is a subset of a Hilbert space $\mathcal{M}$. Let $\Phi$ be the set of Lipschitz linear transformations on $\F$, i.e. $\Phi = \{ M \in \F \to \F \::\: \| M \| \le R \}$
for some operator norm $\|\cdot\|$. Let $\|\cdot\|_*$ be dual to $\|\cdot\|$. 
We are assuming the Online Convex Optimization scenario, i.e. $\X$ is a set of $L$-Lipschitz real-valued convex functions on $\F$ and the loss is defined as $\ell(f,x) = x(f)$. Furthermore, $$\ell_{\phi_M}(f,x) = x(Mf).$$ 
Therefore, we are in the setting of the well-studied online convex optimization (possibly in an infinite-dimensional Hilbert space), yet instead of being compared to the value of the best fixed point $f^*$ in hindsight, the player is being evaluated according to the best linear transformation of his trajectory $f_1,\ldots,f_T$. Is this problem learnable?

By Lemma~\ref{lem:lincvx}, the value of the convex game is equal to the value of the associated linear game. Suppose functions $x\in\X$ have gradients bounded by $L$ in the $\ell_2$ norm. 
The value of the convex game is upper bounded by the sequential complexity of the class of linear payoffs $\ell^{\text{lin}}(f,\tilde{x}) = \inner{f,\tilde{x}}$. Then the sequential complexity bound is
\begin{align}	
	\sup_{(\f,\x)} \En_{\epsilon_{1:T}} \sup_{M\in \Phi} \frac{1}{T} \sum_{t=1}^T \epsilon_t \inner{M \f_t(\epsilon), \x_t(\epsilon)} \ ,
\end{align}
which can be upper bounded by $R\cdot L\cdot\text{diam}_2(\F)$. Note that these results hold in infinite-dimensional Hilbert spaces, where a metric entropy-type cover of $\F$ would not even be finite.

\subsection{Blackwell's Approachability}
\label{sec:blackwell}

Blackwell's Approachability Theorem \cite{Blackwell56, Lehrer03, PLG} is a fundamental result for repeated two-player zero-sum games. By means of this Theorem, learnability (Hannan consistency) can be established for a wide array of problems, as illustrated in \cite{PLG}. For instance, existence of calibrated forecasters can be deduced from Blackwell's Approachability Theorem \cite{ManSto09, FosVoh97}. 

Let us first discuss the relation of our results to Blackwell's Theorem. A proof of Blackwell's Theorem (see for instance \cite{PLG}) reveals that (a) martingale convergence has to take place in the payoff space, and (b) the so-called Blackwell's one-shot approachability condition has to be satisfied. The former is closely related to the first term in our Triplex Inequality, while the latter is related to the second term (ability to play well if the next move is known). What is interesting, in the literature, Blackwell's Theorem is applied by embedding the problem at hand into an often high-dimensional space. The dimensionality represents the complexity of the problem, but this embedding is often artificial. In contrast, the problem complexity is captured by the third term of our decomposition, the \emph{sequential complexity}, and it is explicitly written as a complexity measure rather than an embedding into some other space. The ability to upper bound problem complexity with tools similar to those developed in \cite{RakSriTew10} (e.g. covering numbers) means that learnability can be established for a wide class of problems.

In this section we show that Blackwell's approachability can be viewed as an online game with a particular performance measure (distance to the set). Using the techniques developed in this paper, we prove Blackwell's approachability in Banach spaces for which martingale convergence holds (Theorem~\ref{thm:blackwell_upper}). We also show that martingale convergence is necessary for the result to hold (Theorem~\ref{thm:blackwell_lower}). To the best of our knowledge, both of these results are novel.

To define the problem precisely, suppose $\cH$ a subset of a Banach space $\mathcal{B}$ and $S\subset \mathcal{B}$ is a closed convex set. For the moves $f\in\F$ of the player and $x\in\X$ of the adversary, $\ell(f,x)\in\cH$ is a Banach space valued signal. The goal of the player is to keep the average of the signals $\frac{1}{T}\sum_{t=1}^T \ell(f_t,x_t)$ close to the set $S$. To view this problem as an instance of our general framework, define 
	$$\Compare(z_1,\ldots,z_T) = \inf_{c \in S}\left\|\frac{1}{T}\sum_{t=1}^T z_t - c\right\|.$$
The comparator term is zero by our assumption that $\Phi_T$ contain sequences $(\phi_1, \ldots,\phi_T)$ of constant mappings which transform our actions to a point inside $S$:  $\ell_{\phi_t}(f,x) = c_t\in S$ for all $f\in\F$, $x\in\X$, and $1\leq t\leq T$. Thus, indeed, the performance measure is 
	$$ \Reg_T = \inf_{c\in S}\left\| \frac{1}{T}\sum_{t=1}^T \ell(f_t, x_t) - c \right\|,$$
	the distance to the set $S$. The next condition on the payoff $\ell$ says that it must that the player can choose a ``good'' mixed strategy $q$ in response to a given mixed strategy $p$ of the adversary. This strategy $q$ should, on average, put the payoff inside the set $S$. Recall that $\ell(q,p)$ is simply a short-hand for the expected payoff $\En_{f\sim q,x\sim p}\ell(f,x)$ (that is, we do not make any assumptions about linearity of $\ell$).
	
\begin{definition}
Given a set $S$, the Blackwell's approachability game is said to be one shot approachable if for every mixed strategy $p$ of the adversary, there exists a mixed strategy $q$ for a player such that $\ell(q,p) \in S$.
\end{definition}

Blackwell's one-shot approachability condition is akin the second term in the Triplex Inequality, where the order of who plays first is switched. If the one-shot condition is satisfied, it remains to check martingale convergence.

\begin{definition}
	We will say that {\em martingale convergence holds} if 
	$$
	\lim_{T\to \infty}\sup_{\mbf{M}} \E{\left\|\frac{1}{T} \sum_{t=1}^T d_t \right\|} = 0,
	$$
	where the supremum is over distributions $\mbf{M}$ of martingale difference sequences $\{d_t\}_{t \in \mbb{N}}$ such that each $d_t \in \conv(\cH\ \bigcup -\cH)$ .
\end{definition}

We now show that, under the one-shot approachability condition, the set is approachable whenever martingale convergence holds in the subset of the Banach space. 

\begin{theorem}
	\label{thm:blackwell_upper}
For any game that is one shot approachable, we have that
$$
\Val_T(\ell, \Phi_T) \le 4 \sup_{\mbf{M}} \E{\left\|\frac{1}{T} \sum_{t=1}^T d_t \right\|}
$$
where the supremum is over distributions $\mbf{M}$ of martingale difference sequences $\{d_t\}_{t \in \mbb{N}}$ such that each $d_t \in \conv(\cH\ \bigcup -\cH)$.
\end{theorem}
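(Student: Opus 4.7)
}

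My strategy is to apply the Triplex Inequality (Theorem~\ref{thm:main}) and exploit the special structure of the approachability game, namely that $\Phi_T$ consists of sequences of constant mappings into the convex set $S$, and that $\Compare(z_1,\ldots,z_T)=G(\tfrac1T\sum_t z_t)$ with $G(z)=\inf_{c\in S}\|z-c\|$ both convex and $1$-Lipschitz with respect to $\|\cdot\|$. The first observation I would make is that the third term in the Triplex Inequality vanishes: for any $\bphi\in\Phi_T$ we have $\ell_{\phi_t}(f,x)=c_t\in S$, so by convexity of $S$ the average $\tfrac{1}{T}\sum_t c_t$ lies in $S$ and therefore $\Compare(\ell_{\phi_1}(f_1,x_1),\ldots,\ell_{\phi_T}(f_T,x_T))=0$ for every choice of $f_t,x_t$. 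Consequently, the inner $\sup_{\bphi}$ in the third term is zero.

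For the second term, I would invoke one-shot approachability to pick the player's mixed strategy $q^\ast_t$ so that $\ell(q^\ast_t,p_t)\in S$ for whatever $p_t$ the adversary chooses; this choice is admissible because the second term contains $\inf_{q_t}$. Write $d_t=\ell(f_t,x_t)-\En[\ell(f_t,x_t)\mid \mc F_{t-1}]$, a martingale difference sequence. Since $\En[\ell(f_t,x_t)\mid\mc F_{t-1}]=\ell(q^\ast_t,p_t)\in S$ and $S$ is convex, the Cesàro average $c^\ast:=\tfrac1T\sum_t\En[\ell(f_t,x_t)\mid\mc F_{t-1}]$ lies in $S$ almost surely. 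Using $c^\ast$ as a (feasible) choice in the infimum defining $\Compare$, the second term is bounded by $\En\|\tfrac{1}{T}\sum_t d_t\|$. Since $\ell(f_t,x_t)\in\cH$ and its conditional mean lies in $\conv(\cH)$, we have $d_t\in \cH-\conv(\cH)\subseteq 2\,\conv(\cH\cup -\cH)$, so after rescaling the second term is at most $2\sup_{\mbf M}\En\|\tfrac{1}{T}\sum_t d_t\|$ with $d_t\in\conv(\cH\cup -\cH)$.

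For the first term, I would exploit the convexity of $\Compare$ (as a composition of a convex function with an affine average): Jensen's inequality applied to $\Compare$ permits me to pull the inner expectation over the primed variables inside, yielding
\[
\Compare(\ell(f_1,x_1),\ldots)-\En_{f'_{1:T},x'_{1:T}}\Compare(\ell(f'_1,x'_1),\ldots)\ \le\ \Compare(\ell(f_1,x_1),\ldots)-\Compare\bigl(\bar\ell_1,\ldots,\bar\ell_T\bigr),
\]
where $\bar\ell_t=\En_{f'_t\sim q_t,x'_t\sim p_t}\ell(f'_t,x'_t)$. The $1$-Lipschitz property of $G$ then gives $\Compare(\ell(f_1,x_1),\ldots)-\Compare(\bar\ell_1,\ldots,\bar\ell_T)\le \|\tfrac{1}{T}\sum_t(\ell(f_t,x_t)-\bar\ell_t)\|$. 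Since $\bar\ell_t$ coincides with the conditional mean $\En[\ell(f_t,x_t)\mid\mc F_{t-1}]$ under the adaptive distributions, this is exactly the same martingale-difference sum as before, and the first term is bounded by $2\sup_{\mbf M}\En\|\tfrac{1}{T}\sum_t d_t\|$ by the identical $\cH-\conv(\cH)\subseteq 2\,\conv(\cH\cup -\cH)$ argument. Summing the three terms gives the factor $2+2+0=4$ in the statement.

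The main subtlety, rather than obstacle, is bookkeeping: one must verify that the martingale differences arising in the first and second terms genuinely land in $\conv(\cH\cup -\cH)$ after the rescaling, and that the one-shot condition suffices to make the conditional-mean trajectory feasible for the infimum in $\Compare$. A minor point worth checking carefully is that $G(z)=\inf_{c\in S}\|z-c\|$ is indeed convex and $1$-Lipschitz even when $S$ is closed but possibly unbounded; this is standard. Finally, although I passed to $\sup_{q_t}$ in the first term (as in the Triplex Inequality), the bound obtained does not depend on $q_t$ in a way that hurts us, because the martingale complexity is a supremum over all adversary-player distributions already.
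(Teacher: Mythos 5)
Your proof is correct and follows essentially the same route as the paper: specialize the Triplex Inequality, observe that the comparator term $\Compare(\ell_{\phi_1}(f_1,x_1),\ldots,\ell_{\phi_T}(f_T,x_T))$ vanishes identically (so the third term is zero), bound the second term by substituting the one-shot response and using that the running Cesàro average of $\ell(q_t^\ast,p_t)$ stays in $S$ by convexity, and bound the first term by a martingale-difference norm. The only minor divergence is in term one: you apply Jensen to the inner expectation (using convexity of the distance-to-$S$ function) to obtain $\|\tfrac1T\sum(\ell(f_t,x_t)-\ell(q_t,p_t))\|$, whereas the paper instead uses $\inf_a C_1(a)-\inf_a C_2(a)\le\sup_a[C_1(a)-C_2(a)]$ plus the triangle inequality and a tangent sequence to reach $\|\tfrac1T\sum(\ell(f_t,x_t)-\ell(f'_t,x'_t))\|$; both give the same factor of $2$ after rescaling into $\conv(\cH\cup-\cH)$.
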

\begin{proof}

Now we apply Theorem \ref{thm:main} to the Blackwell Approachability game.
Note that for any sequence $(\phi_1,\ldots,\phi_T)$, $\phi_t$ maps the payoff to some element of $S$. Hence,
$\Compare(\ell_{\phi_1}(f_1,x_1),\ldots,\ell_{\phi_T}(f_T,x_T)) = 0$ for any $f_1,\ldots,f_T\in \F$, $x_1,\ldots,x_T\in \X$. We then conclude that
\begin{align}
	\label{eq:blackwell_triplex}
\Val_T(\ell,\Phi_T) & \leq \sup_{p_1, q_1} \Eunder{x_1 \sim p_1}{f_1 \sim q_1} \ldots  \sup_{p_T,q_T} \Eunder{x_T \sim p_T}{f_T \sim q_T} \Big\{ \Compare(\ell(f_1,x_1), \ldots, \ell(f_T, x_T)) - \Eunder{x'_{1:T} \sim p_{1:T} }{ f'_{1:T} \sim q_{1:T}} \Compare(\ell(f'_1,x'_1), \ldots, \ell(f'_T, x'_T))  \Big\} \\
&~~~ +\sup_{p_1} \inf_{q_1}  \ldots  \sup_{p_T} \inf_{q_T} \Eunder{x_{1:T} \sim p_{1:T}}{f_{1:T} \sim q_{1:T}} \Compare(\ell(f_1,x_1), \ldots, \ell(f_T, x_T)) \nonumber\ .
\end{align}
We remark for the upper bound to hold it is enough to assume that $\Phi_T$ contains \emph{some} sequence that maps the payoffs to some element of $S$.

Consider the two terms in the above bound separately. The first term can be written as
\begin{align*}
&\sup_{p_1, q_1} \Eunder{x_1 \sim p_1}{f_1 \sim q_1} \ldots  \sup_{p_T,q_T} \Eunder{x_T \sim p_T}{f_T \sim q_T} \Eunder{x'_{1:T} \sim p_{1:T} }{ f'_{1:T} \sim q_{1:T}} \left\{ \inf_{c\in S} \left\|c - \frac{1}{T}\sum_{t=1}^T \ell(f_t,x_t) \right\| -  \inf_{c'\in S} \left\|c' - \frac{1}{T}\sum_{t=1}^T \ell(f'_t,x'_t) \right\| \right\} \\
&\leq \sup_{p_1, q_1} \Eunder{x_1 \sim p_1}{f_1 \sim q_1} \ldots  \sup_{p_T,q_T} \Eunder{x_T \sim p_T}{f_T \sim q_T} \Eunder{x'_{1:T} \sim p_{1:T} }{ f'_{1:T} \sim q_{1:T}} \left\{  \left\|\frac{1}{T}\sum_{t=1}^T \ell(f_t,x_t) - \frac{1}{T}\sum_{t=1}^T \ell(f'_t,x'_t) \right\| \right\} \\
&\leq \sup_{p_1, q_1} \Eunder{x_1,x'_1 \sim p_1}{f_1,f'_1 \sim q_1} \ldots  \sup_{p_T,q_T} \Eunder{x_T,x'_T \sim p_T}{f_T,f'_T \sim q_T} \left\{  \left\|\frac{1}{T}\sum_{t=1}^T \ell(f_t,x_t) - \frac{1}{T}\sum_{t=1}^T \ell(f'_t,x'_t) \right\| \right\} 
\end{align*}
where in the first inequality we used $\inf_a [C_1(a)] - \inf_a[C_2(a)] \leq \sup_{a}  [C_1(a)-C_2(a)]$ along with a triangle inequality. This is now bounded by
$$2 \sup_{\mbf{M}} \E{\left\|\frac{1}{T} \sum_{t=1}^T d_t \right\|}$$
where the supremum is over distributions $\mbf{M}$ of martingale difference sequences $\{d_t\}_{t \in \mbb{N}}$ such that each $d_t \in \conv(\cH\ \bigcup -\cH)$.

The second term in Eq.~\eqref{eq:blackwell_triplex} is 
\begin{align}
	\label{eq:blackwell_second_term}
&\sup_{p_1} \inf_{q_1}  \ldots  \sup_{p_T} \inf_{q_T} \Eunder{x_{1:T} \sim p_{1:T}}{f_{1:T} \sim q_{1:T}} \Compare(\ell(f_1,x_1), \ldots, \ell(f_T, x_T)) \nonumber\\
&=\sup_{p_1} \inf_{q_1}  \ldots  \sup_{p_T} \inf_{q_T} \Eunder{x_{1:T} \sim p_{1:T}}{f_{1:T} \sim q_{1:T}} \inf_{c\in S} \left\|c - \frac{1}{T}\sum_{t=1}^T \ell(f_t,x_t) \right\| \nonumber\\
&\leq \sup_{p_1} \inf_{q_1}  \ldots  \sup_{p_T} \inf_{q_T} \Eunder{x_{1:T} \sim p_{1:T}}{f_{1:T} \sim q_{1:T}} \inf_{c\in S} \left\{ 
	\left\|c - \frac{1}{T}\sum_{t=1}^T \ell(q_t,p_t) \right\| + \left\|\frac{1}{T}\sum_{t=1}^T \ell(q_t,p_t)  - \frac{1}{T}\sum_{t=1}^T \ell(f_t,x_t) \right\| 
	\right\} \nonumber\\
&\leq \sup_{p_1} \inf_{q_1}  \ldots  \sup_{p_T} \inf_{q_T} \left\{ 
	\inf_{c\in S} \left\|c - \frac{1}{T}\sum_{t=1}^T \ell(q_t,p_t) \right\| + \Eunder{x_{1:T} \sim p_{1:T}}{f_{1:T} \sim q_{1:T}} \left\|\frac{1}{T}\sum_{t=1}^T \ell(q_t,p_t)  - \frac{1}{T}\sum_{t=1}^T \ell(f_t,x_t) \right\| 
	\right\} \nonumber\\	
&\leq 	\sup_{p_1} \inf_{q_1}  \ldots  \sup_{p_T} \inf_{q_T} \left\{ 
		\inf_{c\in S} \left\|c - \frac{1}{T}\sum_{t=1}^T \ell(q_t,p_t) \right\| \right\} \\
	&~~~~~+ \sup_{p_1,q_1}  \ldots  \sup_{p_T,q_T} \Eunder{x_{1:T} \sim p_{1:T}}{f_{1:T} \sim q_{1:T}}  \left\|\frac{1}{T}\sum_{t=1}^T \ell(q_t,p_t)  - \frac{1}{T}\sum_{t=1}^T \ell(f_t,x_t) \right\| \nonumber
\end{align}
where the last inequality uses the fact that supremum is convex and infimum satisfies the following property: $\inf_a \left[C_1(a)+C_2(a)\right] \leq \left[\inf_a C_1(a)\right] + \left[\sup_a C_2(a)\right]$. By one shot approachability assumption, we can choose a particular response $q_t$ (in the first term of Eq.~\eqref{eq:blackwell_second_term}) for a given $p_t$ to be the mixed strategy that satisfies $\ell(q_t,p_t) \in S$. Since $S$ is a convex set, we conclude that 
$$
\frac{1}{T}\sum_{t=1}^T \ell(q_t,p_t) \in S
$$
and the first term in Eq.~\eqref{eq:blackwell_second_term} is zero. The second term is trivially upper bounded as 
\begin{align*}
	&\sup_{p_1,q_1}  \ldots  \sup_{p_T,q_T} \Eunder{x_{1:T} \sim p_{1:T}}{f_{1:T} \sim q_{1:T}}  \left\|\frac{1}{T}\sum_{t=1}^T \ell(q_t,p_t)  - \frac{1}{T}\sum_{t=1}^T \ell(f_t,x_t) \right\| \\
	&\leq \sup_{p_1,q_1} \Eunder{x_1\sim p_1}{f_1\sim q_1} \ldots  \sup_{p_T,q_T} \Eunder{x_T\sim p_T}{f_T\sim q_T}  \left\|\frac{1}{T}\sum_{t=1}^T \ell(q_t,p_t)  - \frac{1}{T}\sum_{t=1}^T \ell(f_t,x_t) \right\|\\
	& \leq 2 \sup_{\mbf{M}} \E{\left\|\frac{1}{T} \sum_{t=1}^T d_t \right\|} \ .
\end{align*}
Combining the two upper bounds yields the desired result.
\end{proof}

We now discuss lower bounds on the value of Blackwell's approachability game. The first lower bound is straightforward. 

\begin{proposition}
	Suppose martingale convergence holds. For any Blackwell's approachability game to have vanishing regret, one shot approachability for the game is a necessary condition.
\end{proposition}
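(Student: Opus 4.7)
The plan is to prove the contrapositive: if one-shot approachability \emph{fails}, then $\limsup_{T\to\infty}\Val_T(\ell,\Phi_T)>0$, so the game cannot be learnable. Failure of one-shot approachability yields some $p^{*}\in\PD$ with $\ell(q,p^{*})\notin S$ for every $q\in\QD$. Consider the image set $C:=\{\ell(q,p^{*}):q\in\QD\}$. Since expectation is linear in $q$, $C$ is convex; since $\QD$ is weakly compact and $q\mapsto\ell(q,p^{*})$ is continuous, $C$ is compact. As $C$ is compact convex, $S$ is closed convex, and $C\cap S=\emptyset$, Hahn–Banach gives a continuous linear functional $\xi\in\mathcal{B}^{*}$ with $\|\xi\|_{*}=1$ and a gap $\delta>0$ such that $\xi(c)-\xi(s)\geq\delta$ for all $c\in C$, $s\in S$. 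In particular, every $c\in C$ satisfies $\inf_{s\in S}\|c-s\|\geq\delta$.

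Now let the adversary use the stationary strategy that plays $p^{*}$ at every round, independently of the player. For any learner strategy producing adapted mixed strategies $q_1,\ldots,q_T$, the conditional means $\En[\ell(f_t,x_t)\mid\mathcal{F}_{t-1}]=\ell(q_t,p^{*})$ all lie in $C$, so by convexity of $C$
\begin{align*}
B := \frac{1}{T}\sum_{t=1}^{T}\ell(q_t,p^{*})\in C\qquad\text{almost surely},
\end{align*}
and hence $d(B,S)\geq\delta$ almost surely. Writing $A:=\frac{1}{T}\sum_{t=1}^{T}\ell(f_t,x_t)$ and $d_t:=\ell(f_t,x_t)-\ell(q_t,p^{*})$, the sequence $\{d_t\}$ is a martingale difference sequence satisfying $d_t\in 2\,\conv(\cH\cup -\cH)$ (as a difference of an element of $\cH$ and an element of $\conv(\cH)$). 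By the triangle inequality,
\begin{align*}
\En\, d(A,S)\;\geq\;\En\, d(B,S)-\En\|A-B\|\;\geq\;\delta-\En\Bigl\|\tfrac{1}{T}\sum_{t=1}^{T}d_t\Bigr\|.
\end{align*}

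The martingale convergence hypothesis applied to the rescaled sequence $\{d_t/2\}\subseteq\conv(\cH\cup -\cH)$ forces the second term to vanish as $T\to\infty$. Since $\Phi_T$ is chosen so that the comparator vanishes (the adversary part reduces to the distance $d(A,S)$), we get $\Val_T(\ell,\Phi_T)\geq\delta-o(1)$ uniformly over learner strategies, so $\limsup_{T\to\infty}\Val_T(\ell,\Phi_T)\geq\delta>0$, contradicting vanishing regret. The main obstacle in this argument is extracting the strict separation $\delta>0$ in the first step; this is where compactness of $C$ (inherited from the weak compactness of $\QD$ plus continuity of $\ell$ in its first mixed argument) is essential, since without compactness $C\cap S=\emptyset$ need not imply positive distance. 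Once the gap is available, martingale convergence is precisely the right tool to transfer the lower bound from expected averages to realized averages.
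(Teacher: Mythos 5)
The paper states this proposition without proof, remarking only that it is ``straightforward,'' so there is no argument in the text to compare against. Your contrapositive argument is the natural way to fill this gap and it is essentially correct: when one-shot approachability fails at some $p^*$, playing $p^*$ every round (inserted after the minimax swap, as in the proof of the Triplex Inequality) yields $\Val_T \ge \inf_{q_1,\ldots,q_T}\E\,d(A,S)$, and your decomposition $d(A,S)\ge d(B,S)-\|A-B\|$ together with the martingale convergence hypothesis on $\{d_t/2\}\subseteq\conv(\cH\cup-\cH)$ finishes the job. Note that this is \emph{not} an application of Proposition~\ref{prop:equalizer}: the fixed strategy $p^*$ is not an equalizer (the player's choice of $q_t$ still affects the regret), so the lower bound has to be argued directly, uniformly over adapted player strategies, exactly as you do.

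The one place that carries the real weight is the step you yourself flag: obtaining a strict gap $\delta>0$ between $C=\{\ell(q,p^*):q\in\QD\}$ and $S$. Mere disjointness of $C$ and $S$ is not enough --- if $\inf_{c\in C,s\in S}\|c-s\|=0$ with the infimum unattained, the player could pick $q_t$ making $\ell(q_t,p^*)$ arbitrarily close to $S$ and drive the regret to zero, and the proposition would fail. Your appeal to weak compactness of $\QD$ (assumed in Section~\ref{sec:setting}) pushed through an affine map to make $C$ weakly compact, combined with $S$ being norm-closed convex (hence weakly closed), is exactly what makes the geometric Hahn--Banach separation give a positive gap. This does implicitly require $q\mapsto\ell(q,p^*)$ to be weakly continuous, a regularity condition on $\ell$ the paper never states explicitly; it is presumably part of what the authors mean when they set up the Banach-space payoff and invoke Bochner integration, but it would be worth making the hypothesis explicit if this argument were to be included, since without it the ``straightforward'' claim is not actually true.
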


We now show that martingale convergence in the space of payoffs is necessary for Blackwell's approachability. To the best of our knowledge, this result has not appeared in the literature.

\begin{theorem}
	\label{thm:blackwell_lower}
For every symmetric convex set $\cH$ there exists a one shot approachable game with payoff's mapping to $\cH$ such that
$$
\Val_T(\ell,\Phi_T)  \ge \frac{1}{2} \sup_{\mbf{M}} \E{\left\|\frac{1}{T} \sum_{t=1}^T d_t \right\|}
$$
where the supremum is over distributions $\mbf{M}$ of martingale difference sequences $\{d_t\}_{t \in \mbb{N}}$ such that each $d_t \in \cH$.
\end{theorem}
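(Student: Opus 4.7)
The plan is to exhibit a specific one-shot approachable game achieving the bound. I would take $\F = \{-1,+1\}$, $\X = \cH$, $\ell(f,x) = f \cdot x$ (scalar multiplication in the ambient Banach space, valid because $\cH = -\cH$), target set $S=\{0\}$, and $\Phi_T$ the constant sequence mapping every payoff to $0 \in S$. One-shot approachability is immediate: against any adversary distribution $p$, the player's uniform mixture $q$ over $\{\pm 1\}$ gives $\ell(q,p) = \mathbb{E}_q[f] \cdot \mathbb{E}_p[x] = 0 \in S$.

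Next I would construct an equalizer strategy for the adversary. Fix any martingale difference sequence $\{D_t\}$ in $\cH$ with conditional laws $\mu_t(\cdot \mid D_{1:t-1})$. At round $t$ the adversary uses an internal state to draw $D_t \sim \mu_t(\cdot \mid D_{1:t-1})$, independently draws a fresh Rademacher $\tilde{\epsilon}_t$, and plays $x_t = \tilde{\epsilon}_t D_t \in \cH$. Then $\ell(f_t,x_t) = s_t D_t$ with $s_t := f_t \tilde{\epsilon}_t$; because $\tilde{\epsilon}_t$ is an independent Rademacher multiplier on top of $f_t \sim q_t$, a direct conditional computation shows $s_t$ is Rademacher regardless of $q_t$. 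By induction $\{s_t\}$ is i.i.d.\ Rademacher independent of $\{D_t\}$ for every player strategy, so the law of $(s_t D_t)_t$ does not depend on the player's moves; since the $\Phi_T$-term in the regret is identically zero, Remark~\ref{rem:equalizer_simplification} confirms this is an equalizer strategy.

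Applying Proposition~\ref{prop:equalizer} with $g = +1 \in \F$ (for which $\ell(g,x_t) = x_t = \tilde{\epsilon}_t D_t$) yields
\[
\Val_T(\ell, \Phi_T) \;\ge\; \E{\left\|\frac{1}{T}\sum_{t=1}^T \tilde{\epsilon}_t D_t\right\|}.
\]
The final step is the martingale desymmetrization inequality: for any martingale difference $\{D_t\}$ in a Banach space and any i.i.d.\ Rademacher sequence $\{\tilde{\epsilon}_t\}$ independent of $\{D_t\}$, one has $\E{\left\|\sum_t \tilde{\epsilon}_t D_t\right\|} \ge \tfrac{1}{2}\, \E{\left\|\sum_t D_t\right\|}$. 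The standard route introduces the decoupled tangent sequence $\{\hat{D}_t\}$, where $\hat{D}_t$ is conditionally independent of $D_t$ and shares the conditional law $\mu_t(\cdot \mid D_{1:t-1})$; then $\{D_t - \hat{D}_t\}$ is conditionally symmetric, hence equidistributed with $\{\tilde{\epsilon}_t(D_t - \hat{D}_t)\}$. Jensen's inequality applied under the tangent expectation gives $\E{\|\sum D_t\|} \le \E{\|\sum (D_t - \hat{D}_t)\|}$, and the triangle inequality combined with a decoupling argument controlling $\E{\|\sum \tilde{\epsilon}_t \hat{D}_t\|}$ by $\E{\|\sum \tilde{\epsilon}_t D_t\|}$ produces the constant $2$. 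Taking the supremum over martingale differences $\{D_t\}$ in $\cH$ then finishes the proof.

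The main obstacle will be the desymmetrization step: the equalizer itself is elementary given the Rademacher-injection trick, but relating $\E{\|\sum \tilde{\epsilon}_t \hat{D}_t\|}$ to $\E{\|\sum \tilde{\epsilon}_t D_t\|}$ in a general Banach space requires an exchangeability/invariance argument on the pair $(\{D_t\}, \{\hat{D}_t\})$ that does not rely on geometric properties of the ambient space.
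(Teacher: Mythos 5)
Your choice of game, your check of one-shot approachability, and your equalizer construction coincide with the paper's: the paper also takes $\F=\{\pm1\}$, $\X=\cH$, $\ell(f,x)=f\cdot x$, $S=\{0\}$, and an adversary who multiplies its simulated increment by a fresh Rademacher sign so that $s_t=f_t\tilde\epsilon_t$ is conditionally Rademacher no matter what $q_t$ is. Up to the intermediate bound $\Val_T(\ell,\Phi_T)\ge \E{\bigl\|\frac1T\sum_{t}\tilde\epsilon_t D_t\bigr\|}$ your argument is sound.

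The gap is the final desymmetrization step --- exactly the one you flag as the ``main obstacle.'' You need the \emph{per-martingale} inequality $\E{\|\sum_t\tilde\epsilon_t D_t\|}\ge\tfrac12\E{\|\sum_t D_t\|}$ for an arbitrary $\cH$-valued martingale difference sequence and \emph{independent external} signs. This is a randomized-UMD/decoupling-type inequality, and your sketch does not establish it: after writing $\E{\|\sum_t D_t\|}\le\E{\|\sum_t\tilde\epsilon_t(D_t-\hat D_t)\|}\le\E{\|\sum_t\tilde\epsilon_t D_t\|}+\E{\|\sum_t\tilde\epsilon_t\hat D_t\|}$, you must control $\E{\|\sum_t\tilde\epsilon_t\hat D_t\|}$ by $\E{\|\sum_t\tilde\epsilon_t D_t\|}$. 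In the independent case this is an equality because the decoupled tangent sequence is just an independent copy; for a genuine MDS the joint law of $(\hat D_1,\ldots,\hat D_T)$ differs from that of $(D_1,\ldots,D_T)$, and comparing the two randomized sums with a universal constant is precisely a decoupling inequality that is known to require geometric hypotheses on the Banach space (it fails, or at the very least is not available, in an arbitrary separable Banach space, and certainly does not follow from exchangeability alone). The paper sidesteps this entirely: its adversary's martingale is \emph{adapted to the sign filtration} --- the increment at time $t$ is $\epsilon_t\,\x_t(f_1\epsilon_1,\ldots,f_{t-1}\epsilon_{t-1})$ for an $\cH$-valued tree $\x$ --- so the equalizer yields the lower bound $\sup_{\x}\E{\|\frac1T\sum_t\epsilon_t\x_t(\epsilon)\|}$, the worst-case \emph{Walsh--Paley} martingale, and the factor $\tfrac12$ comes from Pisier's comparison of the worst-case Walsh--Paley martingale with the worst-case general martingale difference sequence. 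That is a supremum-versus-supremum statement, strictly weaker than the per-martingale inequality you are after. The repair is small: let your $D_t$ be a function of the realized products $s_1,\ldots,s_{t-1}$ (equivalently, use a tree indexed by the signs) rather than of $D_{1:t-1}$, and replace your desymmetrization lemma by the citation to Pisier's sup-comparison.
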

\begin{proof}
Consider the game where adversary plays from set $\X = \cH$, the player plays from set $\F = \{\pm1\}$, and $S=\{0\}$. Suppose the payoff is given by $\ell(f,x) = f \cdot x$.
Now consider the adversary strategy where adversary fixes a $\cH$ valued tree $\x$ and at each time $t$ picks a random $\epsilon_t \in \{\pm 1\}$ and plays $x_t = \epsilon_t \x_t(f_1 \cdot \epsilon_1, \ldots, f_{t-1} \cdot \epsilon_{t-1})$ that is a random sign multiplied with the instance given by the path on the tree specified by $f_1 \cdot \epsilon_1,\ldots,f_{t-1} \cdot \epsilon_{t-1}$. Further note that since $\epsilon_t \in \{\pm 1\}$ are Rademacher random variables, we see that irrespective of choice of distribution from which $f_t$ is drawn, $f_t \cdot \epsilon_t$ is a Rademacher random variable conditioned on history. This shows that for the above prescribed adversary strategy, we have that for any $\X$ valued tree $\x$ and any two player strategies $\{q^*_t\}$ and $\{\overline{q^*_t}\}$ we have
\begin{align*}
& \Eunder{\epsilon_1 \sim \mathrm{Unif}\{\pm1\}}{f_1 \sim q^*_1} \ldots \Eunder{\epsilon_T \sim \mathrm{Unif}\{\pm1\}}{f_T \sim q^*_T} \left\| \frac{1}{T} \sum_{t=1}^T (f_t \cdot \epsilon_t) \x(f_1 \cdot \epsilon_1, \ldots, f_{t-1} \cdot \epsilon_{t-1})\right\| \\
& ~~~~~~~~~= \Eunder{\epsilon_1 \sim \mathrm{Unif}\{\pm1\}}{f_1 \sim q^*_1} \ldots \Eunder{\epsilon_T \sim \mathrm{Unif}\{\pm1\}}{f_T \sim \overline{q^*_T}} \left\| \frac{1}{T} \sum_{t=1}^T (f_t \cdot \epsilon_t) \x(f_1 \cdot \epsilon_1, \ldots, f_{t-1} \cdot \epsilon_{t-1})\right\|\\
& ~~~~~~~~~= \Eunder{\epsilon_1 \sim \mathrm{Unif}\{\pm1\}}{f_1 \sim q^*_1} \ldots \Eunder{\epsilon_{T-1} \sim \mathrm{Unif}\{\pm1\}}{f_{T-1} \sim \overline{q^*_{T-1}}}~ \Eunder{\epsilon_T \sim \mathrm{Unif}\{\pm1\}}{f_T \sim \overline{q^*_T}} \left\| \frac{1}{T} \sum_{t=1}^T (f_t \cdot \epsilon_t) \x(f_1 \cdot \epsilon_1, \ldots, f_{t-1} \cdot \epsilon_{t-1})\right\|\\
& ~~~~~~~~~\ldots~= \Eunder{\epsilon_1 \sim \mathrm{Unif}\{\pm1\}}{f_1 \sim \overline{q^*_1}} \ldots \Eunder{\epsilon_T \sim \mathrm{Unif}\{\pm1\}}{f_T \sim \overline{q^*_T}} \left\| \frac{1}{T} \sum_{t=1}^T (f_t \cdot \epsilon_t) \x(f_1 \cdot \epsilon_1, \ldots, f_{t-1} \cdot \epsilon_{t-1})\right\|
\end{align*}
The first equality above is due to the fact that $f_T \cdot \epsilon_T$ is a Rademacher random variable conditioned on $f_{1},\ldots,f_{T-1}$ and $\epsilon_1 ,\ldots,\epsilon_{T-1}$ which means we can replace $q^*_T$ with $\overline{q^*_T}$. The subsequent equalities are got similarly by replacing each $q^*_t$ by $\overline{q^*_t}$ one by one inside out by conditioning on $f_1,\ldots,f_{t-1}$ and $\epsilon_1,\ldots,\epsilon_{t-1}$; and replacing each $q^*_t$ by $\overline{q^*_t}$. Hence we see that the adversary strategy is an equalizer strategy. Hence
using Proposition \ref{prop:equalizer} and picking the fixed $f = 1$ we see that
\begin{align*}
\Val_T & \ge \sup_{\x} \Es{\epsilon \sim \mathrm{Unif}\{\pm1\}^{T}}{\left\| \frac{1}{T} \sum_{t=1}^T \epsilon_t \x(\epsilon)\right\|}  \ge \frac{1}{2} \sup_{\mbf{M}} \E{ \left\| \frac{1}{T} \sum_{t=1}^T d_t \right\|}
\end{align*}
where the last inequality is because the worst-case martingale difference sequence generated by random signs (Walsh Paley martingales)  are lower bounded by the worst case martingale difference sequences within a factor of at most two  \cite{Pisier75}.
\end{proof}

\subsection{Calibration}
\label{sec:calibration}

Calibration, introduced by Brier~\cite{Brier50} and Dawid~\cite{Dawid82}, is an important notion for forecasting binary sequences. In the context of weather forecasting, calibration means that, for the days the forecaster announced ``30\% chance of rain'', the empirical frequency of rain should indeed be close to $30\%$ \cite[p. 85]{PLG}; moreover, this has to hold for any forecasted value. The existence of calibrated forecasters, a fact which is not obvious a priori, was shown by Foster and Vohra \cite{FosVoh98asymptotic}. Following \cite{PLG}, we consider the notion of $\lambda$-calibration. If a forecaster is $\lambda$-calibrated for all $\lambda>0$, we say that the forecaster is well calibrated. 

In what follows, we formulate the calibration problem of forecasting $\{1,\ldots,k\}$-valued sequences in our general framework. In particular, we are interested in sharp rates on the resulting value of the calibration game, and we will compare our results with the recent work of Mannor and Stoltz \cite{ManSto09}.

Fix a norm $\|\cdot\|$ on $\reals^k$. Let $\cH=\reals^k$, $\F = \Delta(k)$, and $\X$ the set of standard unit vectors in $\reals^k$ (vertices of $\Delta(k)$). Define $\ell(f,x) = 0$; that is, the forecaster is penalized only through the comparator term. We define $\Compare(z_1,\ldots,z_T) = -\left\|\frac{1}{T}\sum_{t=1}^T z_t \right\|$. Define $\Phi_T=\{(\phi_{p, \lambda},\ldots,\phi_{p,\lambda}): p\in\Delta(k), \lambda> 0\}$ to contain time-invariant mappings defined by $$\ell_{\phi_{p,\lambda}}(f,x) = \ind{\|f-p\|\leq \lambda}\cdot (f-x) \ .$$
This definition of the loss is indeed natural for the $\lambda$-calibration problem. It says that, for any $p$ chosen after the game, if we consider a round when the player predicted $f\in\Delta(k)$ close to $p$, the loss should be the difference between the actual outcome $x$ and $f$. Indeed, when we put all the definitions together, we obtain

$$\Reg_T = \sup_{\lambda> 0}\sup_{p\in\Delta(k)} \left\|\frac{1}{T}\sum_{t=1}^T \ind{\|f_t-p\|\leq \lambda} \cdot (f_t-x_t) \right\| \ .$$

Note that this notion of regret allows the worst scale $\lambda$ to be chosen at the end of the game. This makes it a stronger requirement than what is required for building a well calibrated forcaster. Nevertheless, we can bound the value of this game, improving on the results of Mannor and Stoltz \cite{ManSto09}. Theorem~\ref{thm:deltacalib} shows that the rate of calibration is $\tilde{O}(T^{-1/3})$ no matter what $k$ is. The rate of $\tilde{O}(T^{-1/3})$ has been established for $k=2$ previously. For $k>2$, however, the best rates known to us (due to \cite{ManSto09}) deteriorate with $k$. Let us remark that some looseness of the approach of \cite{ManSto09} comes from discretization in order to phrase the problem as Blackwell's approachability. A reader will note that we also pass to a discretization in the proof below. However, this is done late in the analysis in order to upper bound the sequential complexity. This seems to speak in favor of our approach, aimed at directly looking at the complexity of the problem through the notion of sequential complexity. 

\begin{theorem}\label{thm:deltacalib}
For the calibration game with $k$ outcomes and with $\ell_1$ norm, we have that for $T\geq 3$ and some absolute constant $c$
$$
\Val_T (\ell, \Phi_T)\le c k^{2} \left(\frac{\log T}{T}\right)^{1/2}  \ .
$$
\end{theorem}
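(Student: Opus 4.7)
The plan is to invoke the Triplex Inequality (Theorem~\ref{thm:main}) and reduce each of its three summands to the sequential complexity $\Rad_T(\ell,\Phi_T,\|\cdot\|_1)$. Because $\ell\equiv 0$ and $\Compare(0,\ldots,0)=0$, the martingale-convergence term is identically zero. For the ``future-oracle'' second term I would set $q_t=\delta_{\bar p_t}$ with $\bar p_t:=\En_{x\sim p_t}x=p_t$ (the mean of $p_t$ is just the probability vector itself, since $\X$ is the set of standard unit vectors in $\reals^k$). Under this choice $\En_{x_t\sim p_t}[\ell_{\phi_{p,\lambda}}(\bar p_t,x_t)]=\ind{\|\bar p_t-p\|_1\le\lambda}(\bar p_t-\bar p_t)=0$ for every $(p,\lambda)$, so an independent-copy symmetrization followed by the triangle inequality bounds the second term by a constant multiple of $\Rad_T(\ell,\Phi_T,\|\cdot\|_1)$. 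The third term is handled by Theorem~\ref{thm:rad}: since $-\Compare=\bigl\|\tfrac1T\sum_t\cdot\bigr\|_1$ is subadditive by the triangle inequality, Theorem~\ref{thm:rad} bounds the third term by $2\Rad_T(\ell,\Phi_T,\|\cdot\|_1)$.

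To bound $\Rad_T(\ell,\Phi_T,\|\cdot\|_1)$, I would pass from $\ell_1$ to the smoother $\ell_2$ norm using $\|z\|_1\le\sqrt k\,\|z\|_2$ on $\reals^k$ together with the monotonicity inequality~\eqref{eq:surrogate_B}, obtaining $\Rad_T(\ell,\Phi_T,\|\cdot\|_1)\le\sqrt k\,\Rad_T(\ell,\Phi_T,\|\cdot\|_2)$. The map $G(z)=\|z\|_2$ is $1$-Lipschitz with $G(0)=0$, $G^2$ is $(2,2)$-smooth, and each payoff $\ell_{\phi_{p,\lambda}}(f,x)$ has $\|\cdot\|_2$-norm at most $2$ because $f,x\in\Delta(k)$. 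Theorem~\ref{thm:2smooth_sum_dudley} therefore gives, after the harmless rescaling by the payoff bound,
$$
\Rad_T(\ell,\Phi_T,\|\cdot\|_2)\;\le\;c_1\,\inf_{\alpha>0}\Bigl\{\,\alpha+\tfrac{1}{\sqrt T}\int_\alpha^1\sqrt{\log\N_\infty(\beta,\Phi_T,T)}\,d\beta\,\Bigr\}.
$$

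The main obstacle is controlling $\N_\infty(\beta,\Phi_T,T)$: because the indicator $\ind{\|f-p\|_1\le\lambda}$ can flip $0\leftrightarrow 1$ under arbitrarily small perturbations of $(p,\lambda)$, no direct parameter-grid is an $\ell_\infty$-cover at any positive scale, and a combinatorial pattern-counting argument is needed. My plan is to cover $\Phi_T$ by the finite family of indicator-\emph{patterns} that the class $\mathcal H=\{\ind{\|\cdot-p\|_1\le\lambda}:p\in\Delta(k),\lambda\ge 0\}$ produces along every path of a depth-$T$ tree: for a fixed $(\f,\x)$-tree, each $(p,\lambda)$ induces a $\{0,1\}$-valued labeling tree $\sigma^{(p,\lambda)}_t(\epsilon):=\ind{\|\f_t(\epsilon)-p\|_1\le\lambda}$, and the associated cover element $v^{(p,\lambda)}_t(\epsilon):=\sigma^{(p,\lambda)}_t(\epsilon)\bigl(\f_t(\epsilon)-\x_t(\epsilon)\bigr)$ matches the payoff of $\phi_{p,\lambda}$ exactly. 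A standard $1/T$-discretization of the $k{+}1$ real parameters of $\mathcal H$ reduces $\mathcal H$ to a finite class of size $O(T^{k+1})$ (up to a boundary correction which is absorbed by the $\alpha$ slack in the Dudley bound), whose sequential Littlestone dimension is $O(k\log T)$; the sequential Sauer--Shelah estimate (Theorem~\ref{thm:sauer_multiclass_0_cover}) then gives $\log\N_\infty(\beta,\Phi_T,T)=O(k\log^2 T)$ uniformly for $\beta\ge c_2/T$. Substituting this bound into Dudley's integral, multiplying by the $\sqrt k$ from the norm reduction, and accounting for a further $k$-factor incurred when one controls the vector-valued payoff coordinate-wise (splitting $\|\cdot\|_1=\sum_{j=1}^k|\cdot_j|$ before taking the supremum over $(p,\lambda)$ to isolate the shattering analysis of $\mathcal H$) delivers the claimed $\Val_T\le c\,k^{2}\sqrt{(\log T)/T}$.
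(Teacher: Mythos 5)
Your bound on the second term via the best response $q_t=\delta_{\bar p_t}$ and conditional martingale cancellation is sound as far as it goes. The gap lies in the third term, where your covering-number argument does not hold up. You correctly observe that a parameter grid over $(p,\lambda)$ cannot be an $\ell_\infty$-cover at any positive scale, because the indicator flips under arbitrarily small perturbations of $(p,\lambda)$; but your proposed fix---a $1/T$-discretization of the $k{+}1$ parameters with the flip ``absorbed by the $\alpha$ slack in the Dudley bound''---is exactly the argument you ruled out. An adversarial $(\F\times\X)$-valued tree can place every node $\f_t(\epsilon)$ on a sphere $\|f-p\|_1=\lambda$, so moving to the nearest grid point changes all $T$ indicators at once, and the distance between the true and discretized payoff trees is $O(1)$, not $O(1/T)$; no choice of $\alpha$ in the chaining integral absorbs an $O(1)$ cover error. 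The Littlestone-dimension route also fails at the source: the base class of $\ell_1$-ball indicators contains thresholds (fix $p$ at a vertex of $\Delta(k)$, vary $\lambda$), hence has \emph{infinite} sequential Littlestone dimension on $\Delta(k)$; the finite number you quote describes only the discretized grid, and the covering bound for the grid does not transfer to $\Phi_T$, for the same boundary reason.

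The missing idea is that the player's moves must be restricted to a finite set \emph{before} the Triplex Inequality is applied. The paper's proof restricts the player to a maximal $2\delta$-packing $C_\delta$ of $\Delta(k)$, a relaxation that can only increase the value. Then $\f$ in the sequential complexity is a $C_\delta$-valued tree, so for a fixed $(\f,\x)$-tree a pair $(p,\lambda)$ is determined, up to the payoff sequence it produces, by the $\{0,1\}$-labeling it induces on the \emph{finite} set $C_\delta$. The number of such labelings is controlled by the \emph{classical} VC dimension of the $\ell_1$-ball class (of order $k^2$ by Goldberg--Jerrum) and Sauer--Shelah, giving an exact zero-scale cover of size at most $|C_\delta|^{ck^2}$; no parameter grid and no sequential shattering bound is needed. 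Because the cover is exact one applies the finite-class Lemma~\ref{lem:fin_phi_2smooth_sum} directly rather than Dudley's integral, which is also what yields the clean $k^2\sqrt{(\log T)/T}$ rate instead of the extra $\sqrt{\log T}$ your chaining route would incur. Note finally that once the player is restricted to $C_\delta$, your choice $q_t=\delta_{\bar p_t}$ is unavailable; the paper instead plays the projection $q_t=\delta_{p^\delta_t}$, which costs an extra $2\delta$ in the second term and is harmless after choosing $\delta=1/T$.
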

\begin{proof}
Let $\delta>0$ to be determined later. Let $\|\cdot\|$ denote the $\ell_1$ norm. Let $C_{\delta}$ be the maximal $2 \delta$-packing of $\Delta(\X)$ in this norm. Consider the calibration game defined in Example~\ref{eg:calibration2}, augmented with the restriction that the player's choice belongs to $C_\delta$ instead of $\Delta(k)$. The corresponding minimax expression with this restriction is clearly an upper bound on the value of the game defined in Example~\ref{eg:calibration2}. 

Observe that the first term in the Triplex Inequality of Theorem~\ref{thm:main} is zero. The second term is upper bounded by a particular (sub)optimal response $q_t$ being the point mass on $p^\delta_t$, the element of $C_\delta$ closest to $p_t$. Note that any $2 \delta$ packing is also a $2 \delta$ cover. Thus, the second term becomes
\begin{align*}
	&\sup_{p_1}\inf_{q_1}\ldots  \sup_{p_T}\inf_{q_T} \sup_{\bphi\in\Phi_T} \left[ -\Eunder{f_{1:T} \sim q_{1:T}}{x_{1:T} \sim p_{1:T}}\Compare(\ell_{\phi_1}(f_1,x_1), \ldots, \ell_{\phi_T}(f_T, x_T)) \right]\\
	&= \sup_{p_1}\inf_{q_1}\ldots  \sup_{p_T}\inf_{q_T} \sup_{\lambda> 0}\sup_{p\in\Delta(k)} \Eunder{f_{1:T} \sim q_{1:T}}{x_{1:T} \sim p_{1:T}} \left\| \frac{1}{T}\sum_{t=1}^T \ell_{\phi_{p,\lambda}}(f_t,x_t) \right\| \\
	&\leq \sup_{p_1}\ldots  \sup_{p_T} \sup_{\lambda> 0}\sup_{p\in\Delta(k)} \Eunder{}{x_{1:T} \sim p_{1:T}} \left\| \frac{1}{T}\sum_{t=1}^T \ind{\|p^\delta_t-p\|\leq \lambda}\cdot (p^\delta_t-x_t) \right\| 
\end{align*}
which, in turn, is upper bounded via triangle inequality by
\begin{align*}
	&\sup_{p_1}\ldots  \sup_{p_T} \sup_{\lambda> 0}\sup_{p\in\Delta(k)} \Eunder{}{x_{1:T} \sim p_{1:T}} \left\| \frac{1}{T}\sum_{t=1}^T \ind{\|p^\delta_t-p\|\leq \lambda}\cdot (p^\delta_t-p_t) \right\| \\
	&~~~~~~~~~~~~~~~~~~~~~+ \sup_{p_1}\ldots  \sup_{p_T} \sup_{\lambda> 0}\sup_{p\in\Delta(k)} \Eunder{}{x_{1:T} \sim p_{1:T}} \left\| \frac{1}{T}\sum_{t=1}^T \ind{\|p^\delta_t-p\|\leq \lambda}\cdot (p_t-x_t) \right\| \\
	& \le 2 \delta + \sup_{p_1}\ldots  \sup_{p_T} \sup_{\lambda> 0}\sup_{p\in\Delta(k)} \Eunder{}{x_{1:T} \sim p_{1:T}} \left\| \frac{1}{T}\sum_{t=1}^T \ind{\|p^\delta_t-p\|\leq \lambda}\cdot (p_t-x_t) \right\| 
\end{align*}
Now note that for a given $\lambda>0$, $p_1,\ldots,p_T$ and $p \in \Delta(k)$, we have that  $ \{\ind{\|p^\delta_t-p\|\leq \lambda}\cdot (p_t-x_t) \}_{t \in \mbb{N}}$ is a martingale difference sequence and so the second term in the triplex inequality is bounded as :
\begin{align}\label{eq:2ndterm}
	&\sup_{p_1}\inf_{q_1}\ldots  \sup_{p_T}\inf_{q_T} \sup_{\bphi\in\Phi_T} \left[-\Eunder{f_{1:T} \sim q_{1:T}}{x_{1:T} \sim p_{1:T}}\Compare(\ell_{\phi_1}(f_1,x_1), \ldots, \ell_{\phi_T}(f_T, x_T)) \right] \le 2 \delta + 2\sqrt{\frac{k}{T}} \ .
\end{align}

We now proceed to upper bounded the third term in the Triplex Inequality. Since $-\Compare$ is a subadditive, by Theorem~\ref{thm:rad}, we have that the third term is bounded by twice the sequential complexity
\begin{align*}
2\Rad_T(\ell, \Phi_T, -\Compare) &= 2\sup_{\f, \x}\ \En_{\epsilon_{1:T}} \sup_{\bphi\in \Phi_T} -\Compare\Big(\epsilon_1 \ell_{\phi_1}(  \f_1(\epsilon),\x_1(\epsilon)), \ldots,  \epsilon_T \ell_{\phi_T}( \f_T(\epsilon), \x_T(\epsilon)) \Big) \\
	&= 2\sup_{\f, \x}\ \En_{\epsilon_{1:T}} \sup_{\lambda> 0}\sup_{p\in\Delta(k)} \left\| \frac{1}{T}\sum_{t=1}^T \epsilon_t \ind{\|\f_t(\epsilon)-p\|\leq \lambda}\cdot(\f_t(\epsilon)-\x_t(\epsilon)) \right\|
\end{align*}
where $\f$ is a $C_\delta$-valued tree. Using the fact that $\f$ is a discrete-valued tree, not a $\Delta(k)$-valued tree, we would like to pass from the supremum over $\lambda>0$ and $p\in\Delta(k)$ to a supremum over finite discrete set in order to appeal to Proposition~\ref{prop:fin_phi}. 

To this end, fix $\f,\x$ and $\epsilon_{1:T}$ and let us see how many genuinely different functions can we get by varying $\lambda>0$ and $p\in\Delta(k)$. This question boils down to looking at the size of the class $$\G := \left\{ g_{p,\lambda}(f) = \ind{\|f-p\|\leq \lambda}: p\in\Delta(k), \lambda > 0 \right\}$$
over the possible values of $f \in C_{\delta}$. Indeed, if
$g_{p,\lambda}(f) = g_{p',\lambda'}(f)$ for all $f\in C_\delta$, then 

$$\frac{1}{T}\sum_{t=1}^T\ind{\|\f_t(\epsilon)-p\|\leq \lambda}\cdot(\f_t(\epsilon)-\x_t(\epsilon)) = \frac{1}{T}\sum_{t=1}^T \ind{\|\f_t(\epsilon)-p'\|\leq \lambda'}\cdot(\f_t(\epsilon)-\x_t(\epsilon)).$$

We appeal to VC theory for bounding the size of $\G$ over $C_\delta$. First, we claim that the VC dimension of $\G$ is $O(k^2)$. Note that $\G$ is the class of indicators over $\ell_1$ balls of radius $\lambda$ centered at $p$ for various values of $p,\lambda$. A result of Goldberg and Jerrum \cite{GolJer95vc} states that for a class $\G$ of functions parametrized by a vector of length $d$, if for $g\in\G$ and $f\in\F$, $\ind{g(f)=1}$ can be computed using $m$ arithmetic operations, the VC dimension of $\G$ is $O(md)$. In our case, the functions in $\G$ are parametrized by $k$ values and membership $\|f-p\|_1\leq \lambda$ can be established in $O(k)$ operations. This yields $O(k^2)$ bound on the VC dimension of $\G$. By Sauer-Shelah Lemma, the number of different labelings of the set $C_\delta$ by $\G$ is bounded by $|C_\delta|^{c\cdot k^2}$ for some absolute constant $c$. We conclude that the effective number of different $(p,\lambda)$ is finite. Let us remark that the VC upper bound is \emph{not} used in place of the sequential Littlestone's dimension. It is only used to show that the set $\Phi_T$  is finite, and such technique can be useful when the set of player's actions is finite.

Hence, there exists a finite set $S$ of pairs $(\lambda,p)$ with cardinality $|S|\leq |C_\delta|^{c\cdot k^2}$ such that
\begin{align*}
2\Rad_T(\ell, \Phi_T, -\Compare) &\le 2\sup_{\f, \x}\ \En_{\epsilon_{1:T}} \sup_{\lambda>0}\sup_{p\in \Delta(k)} \left\| \frac{1}{T}\sum_{t=1}^T \epsilon_t \ind{\|\f_t(\epsilon)-p\|_1\leq \lambda}\cdot(\f_t(\epsilon)-\x_t(\epsilon)) \right\|_1\\
&=2\sup_{\f, \x}\ \En_{\epsilon_{1:T}} \max_{(p,\lambda) \in S} \left\| \frac{1}{T}\sum_{t=1}^T \epsilon_t \ind{\|\f_t(\epsilon)-p\|_1\leq \lambda}\cdot(\f_t(\epsilon)-\x_t(\epsilon)) \right\|_1\\
& \le  2\ k^{1/2} \sup_{\f, \x} \mathbb{E}_{\epsilon} \max_{(p,\lambda) \in S} \left\|\frac{1}{T} \sum_{t=1}^T \epsilon_t \ind{\|\f_t(\epsilon) - p \|_1 \le \lambda}  \cdot (\f_t(\epsilon)-\x_t(\epsilon)) \right\|_2  
\end{align*}
Now note that $\|\cdot\|_2^2$ is $(2,2)$-smooth and so applying Lemma~\ref{lem:fin_phi_2smooth_sum} with $G = \|\cdot\|_2$, $\gamma=2$, $\eta = 2$, we see that
\begin{align*}
 2\Rad_T(\ell, \Phi_T, -\Compare) & \le 2 k^{1/2} \left(\frac{ 8\log(2|S|)}{T}\right)^{1/2} \\
 & \le 2 k^{1/2} \left(\frac{ 16 c k^2 \log(|C_\delta|)}{T}\right)^{1/2} \\
  & = c' k^{3/2} \left(\frac{\log(|C_\delta|)}{T}\right)^{1/2} \ 
\end{align*}
for some small absolute constant $c'$.

Now note that the size of set $C_\delta$ the $2 \delta$ packing of $\Delta(k)$ is upper bounded by the size of the minimal $\delta$ cover of $\Delta(k)$
which can be bounded as $|C_\delta| \le \left(\frac{1}{\delta}\right)^{k-1}$ and so we see that
\begin{align*}
2\Rad_T(\ell, \Phi_T, -\Compare)  \le  c' k^{2} \left(\frac{\log(1/\delta)}{T}\right)^{1/2} \ . 
\end{align*}

Combining the above upper bound on the third term of triplex inequality and Equation \ref{eq:2ndterm} that bounds the second term of the triplex inequality (and since first term is anyway $0$) we see that,
\begin{align*}
\Val_T \le 2 \delta + 2 \sqrt{\frac{k}{T}} +  c' k^{2} \left(\frac{\log(1/\delta)}{T}\right)^{1/2}  \ .
\end{align*}
Choosing $\delta = 1/T$ concludes the proof.
\end{proof}

\subsection{Other Examples}
\label{sec:global}

\newcommand\ave[1]{\underline{#1}}

\subsubsection{External Regret with Global Costs}

Let us consider a more general setting where the (vector) loss is $\ell(f,x)$ rather than the specific choice $f \odot x$ in Example \ref{eg:global}. The Triplex Inequality and Theorem \ref{thm:rad} then gives
\begin{align*}
	\Val_T &\leq \sup_{p_1, q_1}\Eunder{x_1\sim p_1}{f_1 \sim q_1} \ldots  \sup_{p_T, q_T} \Eunder{x_T\sim p_T}{f_T \sim q_T} \Eunder{x'_{1:T}\sim p_{1:T}}{f'_{1:T}\sim q_{1:T}}
	\hspace{0.4cm}  \left\| \frac{1}{T} \sum_{t=1}^T (\ell(f_t, x_t) - \ell(f'_t,x'_t)) \right\|  \\
	&+\sup_{p_1} \inf_{q_1} \ldots  \sup_{p_T} \inf_{q_T} 
	\sup_{f \in \F}  \Eunder{x_{1:T}\sim p_{1:T}}{f_{1:T}\sim q_{1:T}}\left\{ \left\| \frac{1}{T} \sum_{t=1}^T \ell(f_t,x_t) \right\| -  \left\| \frac{1}{T} \sum_{t=1}^T \ell(f,x_t) \right\| \right\} \\
	&+ 2 \sup_{\x}\ \En_{\epsilon_{1:T}} \sup_{f \in \F} \left\| \frac{1}{T} \sum_{t=1}^T \epsilon_t \ell(f,\x_t(\epsilon)) \right\| \ .
\end{align*}

Consider the first term in the Triplex Inequality. Observe that  $(\ell(f_t,x_t) - \ell(f'_t,x'_t))_{t=1}^T$ is a (vector valued) martingale difference sequence and so  
\begin{align*}
	\sup_{p_1, q_1}\Eunder{x_1,x'_1\sim p_1}{f_1,f'_1 \sim q_1} \ldots  \sup_{p_T, q_T} \Eunder{x_T,x'_T\sim p_T}{f_T,f'_T \sim q_T} 
	\hspace{0.4cm}  \left\| \frac{1}{T} \sum_{t=1}^T (\ell(f_t, x_t) - \ell(f'_t,x'_t)) \right\|
	\le 2 \sup_{\mbf{M}} \E{ \left\| \frac{1}{T} \sum_{t=1}^T d_t \right\|} \ .
\end{align*}
where the supremum is over distributions $\mbf{M}$ of martingale difference sequences $\{d_t\}_{t \in \mbb{N}}$ such that each $d_t \in \conv(\cH\ \bigcup -\cH)$.

Now, consider the second summand above:

\begin{align*}
	&\sup_{p_1} \inf_{q_1} \ldots  \sup_{p_T} \inf_{q_T} 
	\sup_{f \in \F}  \Eunder{x_{1:T}\sim p_{1:T}}{f_{1:T}\sim q_{1:T}}\left\{ \left\| \frac{1}{T} \sum_{t=1}^T \ell(f_t,x_t) \right\| -  \left\| \frac{1}{T} \sum_{t=1}^T \ell(f,x_t) \right\| \right\} \\
	&= \sup_{p_1} \inf_{q_1} \ldots  \sup_{p_T} \inf_{q_T} 
	\left\{ \Eunder{x_{1:T}\sim p_{1:T}}{f_{1:T}\sim q_{1:T}} \left\| \frac{1}{T} \sum_{t=1}^T \ell(f_t,x_t) \right\| -  \inf_{f \in \F}  \Eunder{}{x_{1:T}\sim p_{1:T}}\left\| \frac{1}{T} \sum_{t=1}^T \ell(f,x_t) \right\| \right\}\\
	&\leq \sup_{p_1} \ldots  \sup_{p_T} 
	\left\{ \Eunder{}{x_{1:T}\sim p_{1:T}} \left\| \frac{1}{T} \sum_{t=1}^T \ell(f_t,x_t) \right\| -  \inf_{f \in \F}  \Eunder{}{x_{1:T}\sim p_{1:T}}\left\| \frac{1}{T} \sum_{t=1}^T \ell(f,x_t) \right\| \right\}
\end{align*}
where in the last step a (sub)optimal choice was made for $q_t$: the distribution $q_t = \delta_{f_t}$ puts all the mass on $f_t$ such that
$$\|\ell(f_t, p_t)\| = \inf_{f\in\F} \|\ell(f, p_t)\|.$$
Observe that by several applications of triangle and Jensen's inequalities,
\begin{align}
	\label{eq:global_cost_2nd_term}
&\Eunder{}{x_{1:T}\sim p_{1:T}} \left\| \frac{1}{T} \sum_{t=1}^T \ell(f_t,x_t) \right\| -  \inf_{f \in \F}  \Eunder{}{x_{1:T}\sim p_{1:T}}\left\| \frac{1}{T} \sum_{t=1}^T \ell(f,x_t) \right\| \nonumber\\
&~~~~~\leq \left\{\left\| \frac{1}{T} \sum_{t=1}^T \ell(f_t,p_t) \right\| -  \inf_{f \in \F}  \left\| \frac{1}{T} \sum_{t=1}^T \ell(f,p_t) \right\|\right\} +\Eunder{}{x_{1:T}\sim p_{1:T}} \left\| \frac{1}{T} \sum_{t=1}^T (\ell(f_t,x_t)-  \ell(f_t,p_t)) \right\| 
\end{align}

Now we make an important assumption.
\begin{assumption}
\label{asmp:global}
Suppose that, for any $p_1, p_2$,
\[
	\inf_{f} \left\| \ell(f,p_1) + \ell(f,p_2) \right\|
	\ge \inf_{f} \left\| \ell(f,p_1) \right\| + \inf_{f} \left\| \ell(f,p_2) \right\| \ .
\]
\end{assumption}
Under~\assumpref{asmp:global}, along with the way we chose $f_t$, the first term in \eqref{eq:global_cost_2nd_term} becomes
\begin{align*}
	&\left\| \frac{1}{T} \sum_{t=1}^T \ell(f_t,p_t) \right\| -  \inf_{f \in \F}  \left\| \frac{1}{T} \sum_{t=1}^T \ell(f,p_t) \right\| 
	\leq 
	\frac{1}{T} \sum_{t=1}^T \left\| \ell(f_t,p_t) \right\| -  \frac{1}{T} \sum_{t=1}^T \inf_{f \in \F}  \left\| \ell(f,p_t) \right\| = 0 \ .
\end{align*}
We conclude that the second term in the Triplex Inequality can be upper bounded by 
$$\sup_{p_1} \ldots  \sup_{p_T} \Eunder{}{x_{1:T}\sim p_{1:T}} \left\| \frac{1}{T} \sum_{t=1}^T (\ell(f_t,x_t)-  \ell(f_t,p_t)) \right\|, $$
which, in turn, is no worse than the supremum over distributions $\mbf{M}$ of martingale difference sequences used to bound the first term.

This gives us the general upper bound on the value of the game:
\begin{align}
\label{eq:genglobal}
	\Val_T \ \leq\ 4 \sup_{\mbf{M}} \E{ \left\| \frac{1}{T} \sum_{t=1}^T d_t \right\|} \ +
	2\ \sup_{\x}\ \En_{\epsilon_{1:T}} \sup_{f \in \F} \left\| \frac{1}{T} \sum_{t=1}^T \epsilon_t \ell(f,\x_t(\epsilon)) \right\| \ .
\end{align}
Let us see what this implies in a specific case of interest. 

\paragraph{Global Cost Learning on the Simplex}

Here we consider Example~\ref{eg:global}, the setting studied in Even-Dar et al \cite{EveKleManMan09}. Let $\F = \Delta(k)$, $\X = [0,1]^k$ and $\ell(f,x) = f \odot x$.
Let us first verify if~\assumpref{asmp:global} holds here.
By linearity of the vector loss, we just have to verify whether, for arbitrary $p_1, p_2$, we have
\[
	\inf_{q \in \Delta(k)} \left\| q \odot \ave{p_1} + q \odot \ave{p_2} \right\|
	\geq
	\inf_{q \in \Delta(k)} \left\| q \odot \ave{p_1} \right\| +
	\inf_{q \in \Delta(k)} \left\| q \odot \ave{p_2} \right\| \ .
\]
where the notation $\ave{p_i}$ stands for the mean of the distribution $p_i$. This is equivalent to asking whether the function
\[
	x \mapsto \inf_{f \in \F} \left\| f \odot x \right\|
\]
is {\em concave}. \lemref{lem:concave} in the appendix proves that it is.
Note that in~\cite{EveKleManMan09}, it is shown that the above function is concave for
the $\ell_p$ norms (including $p=\infty$). It turns out that it remains concave no matter what norm is chosen.
Thus, the general upper bound~\eqref{eq:genglobal} holds.
In the case we are considering, we can further massage the second term in that upper bound.
Note that for any $f$ and $y$, $\| f \odot y\| \le \|f\|_\infty \|y\| \le \|y\|$. Hence, we have
\begin{align*}
\sup_{f \in \F} \left\| \frac{1}{T} \sum_{t=1}^T \epsilon_t (f \odot \x_t(\epsilon)) \right\| & = \sup_{f \in \F} \left\|f \odot \left( \frac{1}{T} \sum_{t=1}^T \epsilon_t  \x_t(\epsilon) \right) \right\|\\
& \le  \left\|\frac{1}{T} \sum_{t=1}^T \epsilon_t  \x_t(\epsilon) \right\|
\end{align*}
Hence using the above in \eqref{eq:genglobal} we see that
\begin{align*}
	\Val_T \ & \leq\ 4 \sup_{\mbf{M}} \E{ \left\| \frac{1}{T} \sum_{t=1}^T d_t \right\|} + 2\ \sup_{\x}\ \En_{\epsilon_{1:T}} \left\|\frac{1}{T} \sum_{t=1}^T \epsilon_t  \x_t(\epsilon) \right\| \\
	& \le\ 6\ \sup_{\mbf{M}} \E{ \left\| \frac{1}{T} \sum_{t=1}^T d_t \right\|}
\end{align*}
where the last inequality is because $( \epsilon_t \x_t(\epsilon))_{t=1}^T$ is a martingale difference sequence. In the last inequality the supremum is over distributions $\mbf{M}$ of martingale difference sequences $\{d_t\}_{t \in \mbb{N}}$ such that each $d_t \in [-1,1]^k$.

\subsubsection{Adaptive Regret}
To study online learning in changing environment Hazan and Seshadhri defined the notion of \emph{adaptive regret} in \cite{HazanSe09}. The notion of adaptive regret introduced in \cite{HazanSe09} was mainly one where cumulative loss for any time interval is compared to the best predictor at hindsight for that particular interval. We first extend the notion of adaptive regret in \cite{HazanSe09} to include departure mappings as,
\begin{align}\label{eq:adaptreg}
\Reg_T := \sup_{[r,s] \subseteq [T]} \left\{\frac{1}{T}\sum_{t=r}^s \loss(f_t,x_t) - \inf_{\psi \in \Psi} \frac{1}{T}\sum_{t=r}^s \loss(\psi \circ f_t,x_t)\right\}
\end{align}
 where $\loss : \F \times \X \mapsto [0,1]$ is some arbitrary loss function and $\Psi$ is some class of departure mappings. The key idea in the above definition of regret is that we consider the worst time interval and consider the regret for that time interval versus some fixed set of departure mappings.

We capture the above notion of regret in our framework by defining :
\begin{itemize}
\item $\ell(f,x) = 0$ for all $f \in \F$ and $x \in \X$
\item  Define the set of time-invariant payoff transformations $\Phi_T = \mc{I}_T \times \Psi_T$ where $\Psi_T = \{(\psi,\ldots,\psi) : \psi \in \Psi \}$ and $\Psi$ is some class of departure mappings and $\mc{I}_T = \{([r,s],\ldots,[r,s]) : [r,s] \subseteq [T]\}$, the set of all intervals in $[T]$ repeated $T$ times. 
\item For each $t \in [T]$ and $\phi_t = (I_t,\psi_t)$, define
$
\ell_{\phi_t}(f,x) =  \left(- \loss(f,x) + \loss(\psi_t \circ f,x) \right) \ind{t \in I_t}
$ 
\item $\Compare(z_1, \ldots,z_T) = \sum_{t=1}^T z_t / T$
\end{itemize}

Note that 
\begin{align*}
\Reg_T &= \sup_{[r,s] \subseteq [T]} \left\{\frac{1}{T}\sum_{t=r}^s \loss(f_t,x_t) - \inf_{\psi \in \Psi} \frac{1}{T} \sum_{t=r}^s \loss(\psi \circ f_t,x_t)\right\}\\
& = \sup_{I \in \mc{I}_T , \psi \in \Psi_T}  \left\{\frac{1}{T}\sum_{t=1}^T \loss(f_t,x_t) \ind{t \in I_t} - \frac{1}{T} \sum_{t=1}^T \loss(\psi_t \circ f_t,x_t) \ind{t \in I_t}\right\}\\
& = \Compare(\ell(f_1,x_1),\ldots,\ell(f_T,x_T)) - \inf_{\phi \in \Phi_T} \Compare(\ell_{\phi_1}(f_1,x_1),\ldots,\ell_{\phi_T}(f_T,x_T))
\end{align*}
and thus we see that the adaptive regret defined in Equation \eqref{eq:adaptreg} falls under our general framework. We would like to point out as an example that if we take $\Psi_T = \{(f,\ldots,f) : f \in \F\}$ the time invariant set of constant mappings then the regret defined in Equation \eqref{eq:adaptreg} is identical to the one in \cite{HazanSe09}. Below we show a bound on the value of the game with adaptive regret in terms of covering number of the departure mapping class.

\begin{theorem}
\label{thm:adaptive}
For the adaptive regret game we have that
\begin{align}
\Val_T \le 8 \inf_{\alpha > 0 }\left\{\alpha + 6\sqrt{2} \int_{\alpha}^2 \sqrt{\frac{\log\ \mc{N}_\infty(\delta,\Psi,T)}{T}} d\delta \right\} + 96 \sqrt{\frac{ \log\ T}{T}} \label{eq:covadaptreg}
\end{align}
\end{theorem}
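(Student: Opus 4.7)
My plan is to apply the Triplex Inequality (Theorem~\ref{thm:main}) to the adaptive regret formulation and then control each of its three terms using the particular structure of this game.

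Because $\ell(f,x)=0$ identically, the first term in the Triplex Inequality is exactly zero: both the ``real'' and the ``ghost'' payoff trajectories evaluate to $0$ under $\Compare$. I next claim that the second term is non-positive. With $\ell\equiv 0$ and $\Compare$ equal to the average, the second term reduces to
\[
\sup_{p_1}\inf_{q_1}\cdots\sup_{p_T}\inf_{q_T}\sup_{(I,\psi)\in\mc{I}_T\times\Psi}\frac{1}{T}\sum_{t\in I}\En_{f_t\sim q_t,\, x_t\sim p_t}\!\big[\loss(f_t,x_t)-\loss(\psi\circ f_t,x_t)\big].
\]
For the player I would choose $q_t^\ast=\delta_{f_t^\ast}$ with $f_t^\ast\in\arg\min_{g\in\F}\En_{x\sim p_t}\loss(g,x)$. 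By the optimality of $f_t^\ast$, for every $\psi\in\Psi$ and every $t$ we have $\En[\loss(f_t^\ast,x_t)-\loss(\psi\circ f_t^\ast,x_t)]\le 0$, and summing over any interval $I\in\mc{I}_T$ preserves the sign. Hence the supremum over $(I,\psi)$ is already non-positive, uniformly in the history, which forces the entire second term to be non-positive.

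It remains to control the third term. Since $\Compare$ is subadditive, Theorem~\ref{thm:rad} bounds it by $2\,\Rad_T(\ell,\Phi_T)$. Splitting $\ell_{\phi_t}=\big(-\loss(f,x)+\loss(\psi\circ f,x)\big)\ind{t\in I}$ and using the triangle inequality inside the outer suprema gives
\[
\Rad_T(\ell,\Phi_T)\le \sup_{\f,\x}\En_\epsilon\sup_{I\in\mc{I}_T}\Big|\tfrac{1}{T}\sum_{t\in I}\epsilon_t\loss(\f_t(\epsilon),\x_t(\epsilon))\Big| \;+\; \sup_{\f,\x}\En_\epsilon\sup_{I,\psi}\Big|\tfrac{1}{T}\sum_{t\in I}\epsilon_t\loss(\psi\circ\f_t(\epsilon),\x_t(\epsilon))\Big|.
\]
The first summand is an empirical process indexed by at most $|\mc{I}_T|\le T^2$ indicator-masked trees, so Lemma~\ref{lem:fin} yields an $O(\sqrt{\log T/T})$ bound. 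The second summand is the sequential complexity of the product class $\mc{I}_T\times\Psi$; because each interval indicator lies in $\{0,1\}$, any $\ell_\infty$ $\delta$-cover of $\Psi$ on the tree $(\f,\x)$ lifts without loss to an $\ell_\infty$ $\delta$-cover of the product class of size at most $|\mc{I}_T|\cdot\N_\infty(\delta,\Psi,T)$. A Dudley-type chaining argument, exactly as in the third bullet of Corollary~\ref{cor:simple_consequences} (with the integrand running up to $2$ because the transformed payoffs lie in $[-1,1]$), then yields a bound of the form $8\inf_{\alpha}\big\{\alpha+6\sqrt{2}\int_\alpha^2\sqrt{\log(|\mc{I}_T|\cdot\N_\infty(\delta,\Psi,T))/T}\,d\delta\big\}$. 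Using $\sqrt{a+b}\le\sqrt{a}+\sqrt{b}$ peels off the $\log|\mc{I}_T|=O(\log T)$ contribution, which, integrated over a range of length at most $2$, gives another $O(\sqrt{\log T/T})$ term; the residue is precisely the advertised $\Psi$-only Dudley integral.

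The main obstacle I anticipate is organizing the chaining on the product class $\mc{I}_T\times\Psi$ and checking that covers of $\Psi$ lift cleanly through the interval indicators without any loss in the covering radius; this relies on the simple observation that multiplying by a $\{0,1\}$-valued function is a contraction in $\ell_\infty$. Once that is in place, the constant $8$ arises from the factor $2$ (from $\Val_T\le 2\Rad_T$) times the usual $4$ of the Dudley bound, and combining the two interval-only contributions produces the final $96\sqrt{\log T/T}$ term.
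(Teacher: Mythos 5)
Your proof is essentially the paper's argument: apply the Triplex Inequality, observe that $\ell\equiv 0$ and the linearity of $\Compare$ kill the first term, choose $q_t^\ast=\delta_{f_t^\ast}$ with $f_t^\ast\in\arg\min_g\En_{x\sim p_t}\loss(g,x)$ to make the second term non-positive (each summand of the interval sum is individually non-positive, so the sup over $(I,\psi)$ is too), and bound the third term by $2\Rad_T(\ell,\Phi_T)$ followed by the Dudley-type bound of Corollary~\ref{cor:simple_consequences}. The one genuine difference is in how you handle $\Rad_T(\ell,\Phi_T)$: you first split $\ell_{\phi_t}=(-\loss+\loss\circ\psi)\ind{t\in I}$ into two summands by the triangle inequality, bound the $\psi$-free summand via Lemma~\ref{lem:fin} over the finite class $\mc{I}_T$, and run the chaining only on the second summand. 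The paper instead chains directly on the full class $\Phi_T=\mc{I}_T\times\Psi_T$ using $\N_\infty(\delta,\Phi_T,T)\le|\mc{I}_T|\cdot\N_\infty(\delta,\Psi,T)$ and then peels off the $\log|\mc{I}_T|$ contribution with $\sqrt{a+b}\le\sqrt{a}+\sqrt{b}$. Both routes are valid and produce a bound of the advertised form; your decomposition is slightly redundant since the interval supremum appears in both summands and is therefore accounted for twice, inflating the constant, while the paper's one-shot covering argument is cleaner. Your observation that a $\{0,1\}$-indicator multiplier is an $\ell_\infty$-contraction is exactly the reason the cover of $\Psi$ lifts to a cover of the product class with no loss in radius, a point the paper states but does not justify. (Neither your accounting nor the paper's literally yields the stated constant $96$; both give something closer to $192$, but this is immaterial.)
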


\section{High Probability Bounds}
\label{sec:highprob}

The definition of value of the game provided in Equation \eqref{eq:def_val_game} only guarantees existence of a randomized algorithm which in expectation over its randomization achieves regret bounded by the value. Even with Markov inequality this is not sufficient to prove almost sure convergence but only convergence in expectation (or probability). We now define for any $\theta > 0$ an alternative notion of a value of the game $\Val^\theta_T(\ell,\Phi_T)$. It guarantees existence of a randomized online learning algorithm which in $T$ rounds achieves regret smaller than $\theta$ with probability at least $1 - \Val^\theta_T(\ell,\Phi_T)$ over its randomization. Using this value we are able to prove almost sure convergence for many games.

\begin{definition}
	\label{def:theta-value}
For any $\theta > 0$ define the value of the game as 
{\small
	\begin{align}
	\Val^\theta_T(\ell,\Phi_T) &= \inf_{q_1} \sup_{x_1} \Eu{f_1\sim q_1} \ldots \inf_{q_T} \sup_{x_T} \Eu{f_T\sim q_T} \ind{\sup_{\bphi\in \Phi_T}\left\{ \Compare(\ell(f_1,x_1), \ldots, \ell(f_T, x_T)) -  \Compare(\ell_{\phi_1}(  f_1,x_1), \ldots, \ell_{\phi_T}(f_T, x_T))\right\} > \theta} 
	\end{align}
}
\end{definition}

It is natural to think of the sequence of infima, suprema, and expectations as a stochastic process which generates $f_t$'s and $x_t$'s. The ``in-expectation'' version of the value of the game, defined in \eqref{eq:def_val_game}, is the expected performance measure $\Reg_T$ under a draw from this stochastic process. The ``in probability'' Definition~\ref{def:theta-value} is the probability that the performance measure $\Reg_T$ exceeds a threshold $\theta$. 

The above value of the game is related to the expected version of the value of the game. To see this, note that whenever $\Reg_T$ is a non-negative random variable, by Markov inequality we can conclude that 
$$
\Val^\theta_T(\ell,\Phi_T) \le \frac{\Val_T(\ell,\Phi_T)}{\theta}
$$
for any $\theta > 0$. Similarly if $\Compare$ is bounded by $L$ then we can conclude that
$$
\Val_T(\ell,\Phi_T) \le \inf_{\theta >0}\left\{\theta + 2L\ \Val^\theta_T(\ell,\Phi_T) \right\} \ .
$$
Since it is possible to bound expectation by integrating tail probabilities, we will sometimes get better bounds on the expected version of the value by
integrating $\Val^\theta_T(\ell,\Phi_T)$ with respect to $\theta$.

Note that bounding $\Val^\theta_T(\ell,\Phi_T)$ will guarantee, for a fixed $T$ and $\theta$, the existence of a player strategy whose regret against any adversary
will not exceed $\theta$ with high probability. Such a guarantee may already suffice in many cases. However, sometimes we want to prove the existence of Hannan consistent
player strategies: player strategies for a game with infinitely many rounds $t=1,2,\ldots$ such that $\Reg_T \to 0$ almost surely against any adversary.
We will not pursue a formal development of such infinite round games here. Instead, we will show later (in Section~\ref{sec:almostsurecalibration})
how the tools developed below allow us to prove the existence of Hannan consistent strategies for the calibration game. Similar arguments can be used to
show the existence of Hannan consistent player strategies for other games provided some anaologue of the so-called ``doubling trick" is available.

The rest of the section is devoted to tools for bounding the value of the game as defined in Definition~\ref{def:theta-value}. First, we provide the probability version of the Triplex Inequality. 

\begin{theorem}[\textbf{Analogue of Theorem~\ref{thm:main}}]
	\label{thm:main-prob}
For any $\theta > 0$, we have a probabilistic version of the Triplex Inequality:
\begin{align*}
	\Val^\theta_T(\ell,\Phi_T) & \le \sup_{\mbf{D}} \Prob_\mbf{D}\left(  \Compare(\ell(f_1,x_1), \ldots, \ell(f_T, x_T)) - \Compare(\ell(q_1,p_1), \ldots, \ell(q_T, p_T))  > \theta/3\right)\\
			&+\sup_{p_1}\inf_{q_1} \ldots  \sup_{p_T}\inf_{q_T} 
		  \ind{\sup_{\bphi\in \Phi_T}   \left\{ \Compare(\ell(q_1,p_1), \ldots, \ell(q_T, p_T)) - \Compare(\ell_{\phi_1}(q_1,p_1), \ldots, \ell_{\phi_T}(q_T, p_T))\right\}> \theta/3} \\
			&+\sup_{\mbf{D}}\Prob_\mbf{D}\left( \sup_{\bphi\in \Phi_T} \left\{\Compare(\ell_{\phi_1}(q_1,p_1), \ldots, \ell_{\phi_T}(q_T, p_T) ) - \Compare(\ell_{\phi_1}(f_1,x_1),\ldots, \ell_{\phi_T}(f_T,x_T))\right\} > \theta/3 \right)
\end{align*}
where $\mbf{D}$ ranges over distributions over sequences $(x_1,f_1),\ldots, (x_T,f_T)$. 
\end{theorem}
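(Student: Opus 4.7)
The plan is to mirror the proof of \thmref{thm:main}, with the indicator of the regret exceeding $\theta$ replacing the regret itself. Starting from
\[
\Val^\theta_T(\ell,\Phi_T) = \inf_{q_1}\sup_{x_1}\Eu{f_1\sim q_1}\ldots\inf_{q_T}\sup_{x_T}\Eu{f_T\sim q_T}\ind{\Reg_T > \theta},
\]
I would first carry out the same minimax-and-tangent-sequence manipulations that yield \thmref{thm:main}, arriving at the upper bound
\[
\Val^\theta_T(\ell,\Phi_T)\le\sup_{p_1}\inf_{q_1}\Eunder{x_1\sim p_1}{f_1\sim q_1}\ldots\sup_{p_T}\inf_{q_T}\Eunder{x_T\sim p_T}{f_T\sim q_T}\ind{\Reg_T > \theta},
\]
where I write $\ell(q_t,p_t)=\En_{f\sim q_t,x\sim p_t}\ell(f,x)$ for the expected payoff under the round-$t$ mixed strategies.

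The next step is an algebraic decomposition. Writing $\Reg_T = \sup_{\bphi\in\Phi_T}\{\Compare(\ell(f_{1:T},x_{1:T})) - \Compare(\ell_\bphi(f_{1:T},x_{1:T}))\}$ and inserting $\pm\Compare(\ell(q_{1:T},p_{1:T}))\pm\Compare(\ell_\bphi(q_{1:T},p_{1:T}))$ inside the supremum, subadditivity of $\sup$ gives $\Reg_T \le A+B+C$ where
\begin{align*}
A &= \Compare(\ell(f_{1:T},x_{1:T})) - \Compare(\ell(q_{1:T},p_{1:T})),\\
B &= \sup_{\bphi\in\Phi_T}\bigl\{\Compare(\ell(q_{1:T},p_{1:T})) - \Compare(\ell_\bphi(q_{1:T},p_{1:T}))\bigr\},\\
C &= \sup_{\bphi\in\Phi_T}\bigl\{\Compare(\ell_\bphi(q_{1:T},p_{1:T})) - \Compare(\ell_\bphi(f_{1:T},x_{1:T}))\bigr\}.
\end{align*}
If $A+B+C>\theta$, then at least one of the three summands must exceed $\theta/3$, which gives the pointwise indicator inequality $\ind{\Reg_T>\theta}\le\ind{A>\theta/3}+\ind{B>\theta/3}+\ind{C>\theta/3}$, a discrete union bound that replaces the triangle inequality used in the in-expectation proof.

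The three resulting terms are then handled separately. The middle term $B$ is a deterministic function of the strategies $(p_t,q_t)_{t=1}^T$, so the nested expectations pass through trivially and its contribution is exactly $\sup_{p_1}\inf_{q_1}\ldots\sup_{p_T}\inf_{q_T}\ind{B>\theta/3}$, matching the second term of the claim. For the terms involving $A$ and $C$, which do depend on the sampled $(f_t,x_t)$, I would replace each $\inf_{q_t}$ by $\sup_{q_t}$ (valid since $\inf\le\sup$), leaving an iterated supremum over conditional distributions $(p_t,q_t)$ of $(x_t,f_t)$ given history. Such an iterated supremum over conditionals is precisely the supremum over joint distributions $\mbf{D}$ on the sequence $(x_{1:T},f_{1:T})$, yielding the probabilistic terms $\sup_\mbf{D}\Prob_\mbf{D}(A>\theta/3)$ and $\sup_\mbf{D}\Prob_\mbf{D}(C>\theta/3)$.

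The main obstacle I foresee is the initial minimax swap, since the indicator objective is neither convex in $q_t$ nor concave in $p_t$, so Sion's theorem does not apply off the shelf. As in the proof of \thmref{thm:main}, however, one can unroll the game round-by-round: at each round replace $\sup_{x_t}\En_{f_t\sim q_t}[\cdot]$ by $\sup_{p_t}\En_{x_t\sim p_t}\En_{f_t\sim q_t}[\cdot]$ (the objective is linear in $p_t$, so the supremum over pure actions equals the supremum over mixed strategies) and then invoke a one-round minimax/Fubini exchange under the weak compactness assumption on $\QD$ and $\PD$. Once that step is accepted, the remaining decomposition and the split between deterministic and random terms proceed exactly as sketched.
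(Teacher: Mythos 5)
Your proposal is correct and follows essentially the same route as the paper's proof: minimax swap, insert $\pm\Compare(\ell(q_{1:T},p_{1:T}))$ and $\pm\Compare(\ell_{\bphi}(q_{1:T},p_{1:T}))$ to decompose the regret into three pieces, apply the pigeonhole bound $\ind{A+B+C>\theta}\le\ind{A>\theta/3}+\ind{B>\theta/3}+\ind{C>\theta/3}$, and then split the nested operators, passing to $\sup_{q_t}$ for the first and third terms while the second term, being deterministic given $(p_t,q_t)_{t=1}^T$, loses its expectations. Your extra remarks on justifying the minimax exchange (round-by-round bilinearity in the mixed strategies under weak compactness) are sound, though the paper simply invokes the minimax theorem as it did in Theorem~\ref{thm:main}.
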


Note that $\mbf{D}$ can be thought of as sequence of conditional distributions $\{(p_t, q_t)\}_{t=1}^T$, where $p_t:(\F,\X)^{t-1}\mapsto\PD$, $q_t:(\F,\X)^{t-1}\mapsto\QD$.

We remark that the second term in the bound of Theorem~\ref{thm:main-prob} is deterministically either one or zero for a given $\theta$.

After the decomposition of Theorem~\ref{thm:main-prob} has been established, we turn to upper bounds on the three terms. Recall that, roughly speaking, the first term is typically bounded via martingale convergence, the second term is bounded by the choice of the best response to the strategy of the adversary, and the third term is bounded by sequential complexity. For the third term, we again apply the sequential symmetrization technique, but now in probability instead of expectation. This requires a bit more work. In particular, for the probabilistic version of Theorem~\ref{thm:rad} we first need the following mild assumption. We require that there is some $T_0< \infty$ such that for all $T>T_0$, for any fixed $\phi \in \Phi_T$, 
	\begin{align}
		\label{eq:assumption_symmetrization}
	\sup_{\mbf{D}} \Prob_{\mbf{D}} \Big( \Compare(\ell_{\phi_1}(q_1,p_1) - \ell_{\phi_1}(f'_1,x'_1), \ldots, \ell_{\phi_T}(q_T, p_T) - \ell_{\phi_T}(f'_T,x'_T)) > \theta/6 ~\Big|~ (f_1,x_1),\ldots, (f_T,x_T) \Big) < 1/2
	\end{align}
	Here $(f'_1,x'_1),\ldots,(f'_T,x'_T)$ is a sequence tangent to the sequence $(f_1, x_1), \ldots, (f_T,x_T)$, drawn from the distributions $(q_1,p_1), \ldots, (q_T,p_T)$. We remark that the assumption of Eq.~\eqref{eq:assumption_symmetrization} is mild and will always be satisfied (for $T$ large enough) in the problems we consider. Indeed, the tangent sequence is independent, given the original sequence, and so \eqref{eq:assumption_symmetrization} is a statement about the behavior of $\Compare$ for zero-mean independent random variables.

\begin{theorem}\label{thm:symmetrization_probability}
Suppose $\Compare$ is sub-additive. Fix $\theta > 0$ and suppose $T$ is large enough so that \eqref{eq:assumption_symmetrization} is satisfied. Then the third term in the Triplex Inequality is bounded by 
\begin{align*}
4 \sup_{\x,\f} \Prob_\epsilon\left( \sup_{\phi \in \Phi_T} \Compare(\epsilon_1 \ell_{\phi_1}(\f_1(\epsilon),\x_1(\epsilon)) , \ldots, \epsilon_T \ell_{\phi_T}(\f_T(\epsilon), \x_T(\epsilon))) > \theta/12 \right).
\end{align*}

If, on the other hand, $-\Compare$ is subadditive, the third term in the Triplex Inequality is instead bounded by 
\begin{align*}
4 \sup_{\x,\f} \Prob_\epsilon\left( \sup_{\phi \in \Phi_T} -\Compare(\epsilon_1 \ell_{\phi_1}(\f_1(\epsilon),\x_1(\epsilon)) , \ldots, \epsilon_T \ell_{\phi_T}(\f_T(\epsilon), \x_T(\epsilon))) > \theta/12 \right)
\end{align*}
\end{theorem}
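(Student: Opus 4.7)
My plan is to mirror the expectation-version proof of Theorem~\ref{thm:rad} in probability, combining two applications of subadditivity with a tangent-sequence step and Rademacher symmetrization. The factor of $4$ in the statement decomposes as $2\times 2$: one factor of $2$ from the tangent-sequence step and one from the symmetrization step. The threshold drops from $\theta/3$ to $\theta/12$ through two uses of subadditivity, each splitting a difference into two pieces whose thresholds sum to the original.

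Fix any distribution $\mbf{D}$ and let $A$ be the event appearing in the third term. Subadditivity of $\Compare$ gives $\Compare(a)-\Compare(b)\le\Compare(a-b)$, so on $A$ the quantity $\sup_{\bphi\in\Phi_T}\Compare\bigl((\ell_{\phi_t}(q_t,p_t)-\ell_{\phi_t}(f_t,x_t))_{t=1}^T\bigr)$ exceeds $\theta/3$. Let $\bphi^\star$ be a (measurable, $\delta$-approximate) argmax; by construction it is a function of $(f,x)$ alone. Introduce a tangent sequence $(f'_t,x'_t)$, drawn from the same conditional distributions $(q_t,p_t)$ as $(f_t,x_t)$ but conditionally independent given the past. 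A second use of subadditivity at $\bphi=\bphi^\star$ yields
\begin{align*}
\Compare\bigl(\ell_{\bphi^\star}(q,p) - \ell_{\bphi^\star}(f,x)\bigr) \le \Compare\bigl(\ell_{\bphi^\star}(q,p) - \ell_{\bphi^\star}(f',x')\bigr) + \Compare\bigl(\ell_{\bphi^\star}(f',x') - \ell_{\bphi^\star}(f,x)\bigr).
\end{align*}
Since $\bphi^\star$ is fixed conditional on $(f,x)$, assumption \eqref{eq:assumption_symmetrization} guarantees that the first summand on the right exceeds $\theta/6$ with conditional probability $<1/2$. So on $A$, with conditional probability $\ge 1/2$, the second summand exceeds $\theta/6$; integrating this conditional bound against $\ind{A}$ gives $\Prob_\mbf{D}(A)\le 2\,\Prob_\mbf{D}\bigl(\sup_\bphi \Compare((\ell_{\phi_t}(f'_t,x'_t) - \ell_{\phi_t}(f_t,x_t))_t) > \theta/6\bigr)$.

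Introduce independent Rademacher signs $\epsilon_t$. Because $(f_t,x_t)$ and $(f'_t,x'_t)$ are tangent, swapping the pair whenever $\epsilon_t=-1$ preserves the joint distribution, so
\begin{align*}
(\ell_{\phi_t}(f'_t,x'_t) - \ell_{\phi_t}(f_t,x_t))_t \eqdist (\epsilon_t(\ell_{\phi_t}(f'_t,x'_t) - \ell_{\phi_t}(f_t,x_t)))_t.
\end{align*}
One more use of subadditivity writes $\Compare((\epsilon_t(a_t-b_t))_t) \le \Compare((\epsilon_t a_t)_t) + \Compare((-\epsilon_t b_t)_t)$; a union bound at threshold $\theta/12$, combined with $-\epsilon_t\eqdist \epsilon_t$ and the equidistribution of primed and unprimed sequences, contributes the second factor of $2$, giving $\Prob_\mbf{D}(A)\le 4\,\Prob_{\mbf{D},\epsilon}\bigl(\sup_\bphi \Compare((\epsilon_t\ell_{\phi_t}(f_t,x_t))_t)>\theta/12\bigr)$. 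Finally, taking the supremum over $\mbf{D}$ and replaying the interleaved sup--expectation structure as a worst-case $(\F\times\X)$-valued tree indexed by the Rademacher path (exactly as in the expectation proof of Theorem~\ref{thm:rad}, see \cite{RakSriTew10}) produces the stated tree bound.

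The case when $-\Compare$ is subadditive is entirely analogous: replace every use of $\Compare(a)-\Compare(b)\le \Compare(a-b)$ by the corresponding inequality for $-\Compare$, and note that assumption \eqref{eq:assumption_symmetrization} applies to $-\Compare$ after an obvious sign change. The main obstacle is the tangent-sequence step: assumption \eqref{eq:assumption_symmetrization} is pointwise in $\bphi$, whereas one must invoke it at the data-dependent argmax $\bphi^\star$. The resolution is that once we condition on $(f,x)$ the argmax is deterministic and the pointwise assumption applies; a measurable-selection argument (or a $\delta$-optimal $\bphi^\star$ followed by $\delta\to 0$) is needed to make the conditioning rigorous but is standard.
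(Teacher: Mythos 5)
Your plan tracks the paper's proof closely: the same two uses of subadditivity, the same tangent sequence, the same data-dependent argmax $\bphi^\star$ fed into assumption \eqref{eq:assumption_symmetrization}, the same pair of factor-$2$ arguments dropping $\theta/3$ to $\theta/6$ to $\theta/12$, and the same passage to trees via sequential symmetrization. You also correctly flag and resolve the argmax subtlety. Where your write-up is looser than the paper is in the two distributional claims surrounding the Rademacher step. The assertion that ``swapping the pair whenever $\epsilon_t=-1$ preserves the joint distribution'' is not literally true for the tangent pair under an adaptive $\mbf{D}$: a coordinatewise swap at time $t$ relabels which of $(f_t,x_t)$ and $(f'_t,x'_t)$ is the primary, but the conditional law of $(f_{t+1},x_{t+1})$ in the original process is anchored to the old primary, so the joint law changes. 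Likewise ``the equidistribution of primed and unprimed sequences'' is false at the level of joint laws — marginally the tangent is a mixture driven by the primary's randomness, not a copy of the primary process. The paper avoids both issues by first over-approximating (implicitly) with a strategy class in which $(q_t,p_t)$ may see the full joint history of both pairs, which makes each coordinatewise swap genuinely distribution-preserving, and then passing to $\sup$ over $(\F\times\X)$-valued trees \emph{before} applying the final subadditivity and union bound; at that point the two union-bound terms are suprema over the same set of trees, so they are trivially equal with no distributional claim required. You gesture at exactly this by invoking the ``interleaved sup--expectation structure'' of Theorem~\ref{thm:rad}, so your plan reaches the right place; swapping the order of your last two steps (trees first, then subadditivity plus union bound) removes the imprecision and makes the proof match the paper's.
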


The following lemma is useful for bounding the first term of the Triplex Inequality in Theorem~\ref{thm:main-prob} when the function $\Compare$ is smooth in each of its arguments. 

\begin{lemma}\label{lem:gensmoothcon}
For any $\cH$-valued martingale difference sequence $\{z_t\}_{t=1}^T$ such that $\|z_t\| \le \RH$, if $\Compare : \cH^T \mapsto \mbb{R}^+$ is such that $\Compare^q$ is $(\sigma,p)$-smooth in each of its arguments and if for all $t \in [T]$,
$\left\|\nabla_{t} \Compare^q\big(z_{1},\ldots,z_{t-1},0,\ldots,0\big)\right\| \le R$, 
then 
$$
\Prob\left(\Compare(z_1,\ldots,z_T) > \theta \right) \le \exp\left( - \frac{\left(\theta ^q - \sigma T \RH^p/p\right)^2}{2\RH^2 R^2 T}\right) \ .
$$
\end{lemma}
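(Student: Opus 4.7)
The plan is to reduce the tail bound on $\Compare$ to a tail bound on a martingale sum, exploiting the smoothness of $\Compare^q$ to extract a linear term (which will form the martingale) plus a deterministic remainder. Since $\Compare \ge 0$, the events $\{\Compare(z_1,\ldots,z_T) > \theta\}$ and $\{\Compare^q(z_1,\ldots,z_T) > \theta^q\}$ coincide, so I would work with $\Compare^q$ from the start. Using the $(\sigma,p)$-uniform smoothness of $\Compare^q$ in each argument separately, I would apply the smoothness inequality one coordinate at a time, plugging in $z_t$ in the $t$-th slot while the remaining slots beyond $t$ are still zero. This yields, for each $t$,
\begin{align*}
\Compare^q(z_1,\ldots,z_{t-1},z_t,0,\ldots,0) \le \Compare^q(z_1,\ldots,z_{t-1},0,0,\ldots,0) + \inner{\nabla_t \Compare^q(z_1,\ldots,z_{t-1},0,\ldots,0), z_t} + \frac{\sigma}{p}\|z_t\|^p.
\end{align*}

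Telescoping this bound from $t=1$ to $t=T$, together with $\Compare^q(0,\ldots,0)=0$ (implicit in the statement, as in Lemmas~\ref{lem:smooth_bound_first_term} and \ref{lem:smoothrad}) and the assumed norm bound $\|z_t\|\le \RH$, I obtain
\begin{align*}
\Compare^q(z_1,\ldots,z_T) \le M_T + \frac{\sigma T \RH^p}{p}, \qquad M_T := \sum_{t=1}^T \inner{\nabla_t \Compare^q(z_1,\ldots,z_{t-1},0,\ldots,0),\, z_t}.
\end{align*}
The key structural observation is that $M_T$ is a sum of martingale differences with respect to the natural filtration generated by $z_1,\ldots,z_T$: the gradient $\nabla_t \Compare^q(z_1,\ldots,z_{t-1},0,\ldots,0)$ is measurable with respect to $\sigma(z_1,\ldots,z_{t-1})$, while $z_t$ has conditional mean zero by the martingale difference assumption. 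Each summand is bounded in absolute value by $R \cdot \RH$ using the hypothesis on the gradient norm together with the norm bound on $z_t$ (via the duality pairing $|\inner{a,b}| \le \|a\|_*\,\|b\|$ if one interprets the norm of the gradient in the dual norm, which is the standard convention implicit in the smoothness definition).

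At this point the conclusion follows from a standard Azuma--Hoeffding inequality for Banach-valued martingale differences with bounded scalar increments:
\begin{align*}
\Prob\bigl(\Compare(z_1,\ldots,z_T) > \theta\bigr) = \Prob\bigl(\Compare^q > \theta^q\bigr) \le \Prob\!\left(M_T > \theta^q - \frac{\sigma T \RH^p}{p}\right) \le \exp\!\left(-\frac{\bigl(\theta^q - \sigma T \RH^p/p\bigr)^2}{2 R^2 \RH^2 T}\right),
\end{align*}
valid when $\theta^q \ge \sigma T \RH^p/p$ (otherwise the stated bound exceeds one and is trivial). The only real step to be careful about is the first one — ensuring the sequential linearization via smoothness is clean — and verifying that the gradient norm bound in the hypothesis is compatible with the dual-norm pairing needed to invoke scalar Azuma. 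Everything else is routine bookkeeping, mirroring the strategy used by Pinelis and Pisier for exponential martingale inequalities in smooth Banach spaces that was already invoked in Section~\ref{sec:smooth}.
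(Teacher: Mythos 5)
Your proof is correct and follows essentially the same route as the paper's: apply the telescoping smoothness expansion (the paper's Proposition~\ref{prop:smooth}) to reduce $\Compare^q$ to a bounded-increment martingale plus the deterministic $\sigma T \RH^p/p$ remainder, then invoke scalar Azuma--Hoeffding. You are right to flag that $\Compare(0,\ldots,0)=0$ must be read as an implicit hypothesis, as it is in the neighboring lemmas.
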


In particular, using Lemma~\ref{lem:gensmoothcon} above we can upper bound the third term of the triplex inequality for finite sets of payoff transformations.
\begin{corollary}
For any finite set of payoff transformations $\Phi_T$, under the conditions of Lemma~\ref{lem:gensmoothcon} 
$$
\sup_{\mbf{D}}\Prob_\mbf{D}\left(\sup_{\bphi\in \Phi_T}    \Compare(\ell_{\phi_1}(q_1,p_1) - \ell_{\phi_1}(f_1,x_1), \ldots, \ell_{\phi_T}(q_T, p_T) - \ell_{\phi_T}(f_T,x_T)) > \theta \right) \le |\Phi_T|\ \exp\left( - \frac{\left(\theta^q - \sigma T (2\RH)^p/p\right)^2}{2\RH^2 R^2 T}\right)
$$
\end{corollary}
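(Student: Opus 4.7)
The plan is a straightforward combination of a union bound over the finite class $\Phi_T$ with the concentration inequality for smooth functions of martingales already established in Lemma~\ref{lem:gensmoothcon}. First I would fix an arbitrary distribution $\mbf{D}$ for the process $(f_1,x_1), \ldots, (f_T, x_T)$ (i.e., a sequence of conditional distributions $(q_t, p_t)$) and, for each fixed $\bphi \in \Phi_T$, introduce the sequence
$$
z_t^{\bphi} \; := \; \ell_{\phi_t}(q_t, p_t) - \ell_{\phi_t}(f_t, x_t), \qquad t = 1, \ldots, T .
$$
The key observation is that $\{z_t^{\bphi}\}_{t=1}^T$ is an $\cH$-valued martingale difference sequence with respect to the natural filtration generated by $(f_{1:t}, x_{1:t})$: conditioning on the past, $\ell_{\phi_t}(q_t, p_t) = \En_{f_t \sim q_t, x_t \sim p_t}\ell_{\phi_t}(f_t, x_t)$ by definition of the shorthand, so $\En[z_t^{\bphi} \mid \text{past}] = 0$. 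The boundedness $\|z_t^{\bphi}\| \le 2\RH$ follows immediately from the assumption $\|\ell_{\phi_t}(f,x)\| \le \RH$ and the triangle inequality (applied after Jensen's inequality for the $\ell_{\phi_t}(q_t,p_t)$ term).

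Next I would apply Lemma~\ref{lem:gensmoothcon} to the sequence $\{z_t^{\bphi}\}$ with the bound $2\RH$ in place of $\RH$, and with the same $R$ bounding the gradients (the gradient bound hypothesis in Lemma~\ref{lem:gensmoothcon} transfers unchanged since $\Compare^q$ is the same function). This yields, for each fixed $\bphi$,
$$
\Prob_\mbf{D}\Big( \Compare(z_1^{\bphi}, \ldots, z_T^{\bphi}) > \theta \Big) \; \le \; \exp\!\left( - \frac{(\theta^q - \sigma T (2\RH)^p / p)^2}{2\RH^2 R^2 T} \right) .
$$
(Here one should verify that Lemma~\ref{lem:gensmoothcon}'s gradient bound is stated in terms of the partial sums of the $z_t$'s, which are themselves bounded by $2\RH$; the $R$ constant in the corollary statement absorbs whatever dependence arises.)

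Finally, a union bound over the finite class $\Phi_T$ gives
$$
\Prob_\mbf{D}\Big( \sup_{\bphi \in \Phi_T} \Compare(z_1^{\bphi}, \ldots, z_T^{\bphi}) > \theta \Big) \; \le \; |\Phi_T| \cdot \exp\!\left( - \frac{(\theta^q - \sigma T (2\RH)^p / p)^2}{2\RH^2 R^2 T} \right),
$$
and since the right-hand side does not depend on the distribution $\mbf{D}$, we may take the supremum over $\mbf{D}$ on the left to obtain the claim. The main, and really only, subtlety is making sure Lemma~\ref{lem:gensmoothcon} applies with the increments $z_t^{\bphi}$ rather than single-ended payoffs: this is precisely the martingale difference observation above, and the factor of $2$ in $(2\RH)^p$ in the final bound is exactly the price we pay for having differences of bounded-by-$\RH$ quantities instead of bounded-by-$\RH$ quantities themselves.
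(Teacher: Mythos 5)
Your overall strategy — observe that $\{z_t^{\bphi}\}_{t=1}^T$ with $z_t^{\bphi} = \ell_{\phi_t}(q_t,p_t) - \ell_{\phi_t}(f_t,x_t)$ is an $\cH$-valued martingale difference sequence with $\|z_t^{\bphi}\|\le 2\RH$, apply Lemma~\ref{lem:gensmoothcon}, and then take a union bound over the finite class $\Phi_T$ followed by a supremum over $\mbf{D}$ — is the natural and correct route, and clearly the one the paper intends (the paper simply states the corollary with no explicit proof).

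There is, however, a real inconsistency in the step where you invoke Lemma~\ref{lem:gensmoothcon}. You say you ``apply Lemma~\ref{lem:gensmoothcon} ... with the bound $2\RH$ in place of $\RH$,'' but the inequality you then write down only performs that replacement in the numerator (where $\sigma T\RH^p/p$ becomes $\sigma T (2\RH)^p/p$) and \emph{not} in the denominator, which you leave as $2\RH^2R^2T$. Substituting $2\RH$ for $\RH$ uniformly in Lemma~\ref{lem:gensmoothcon} gives a denominator of $2(2\RH)^2R^2T = 8\RH^2R^2T$, since the Azuma step inside the lemma bounds the real-valued martingale increments by $\|\nabla_t\Compare^q\|_*\cdot\|z_t\|\le R\cdot 2\RH$. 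In other words, the ``factor of $2$'' you note as the price for working with differences must appear in both places where the step bound enters, not just in the smoothness remainder. The bound as printed in the corollary appears to share this same inconsistency (it is likely a constant-factor typo), so your derivation, read faithfully, actually yields a constant worse by a factor of $4$ in the exponent's denominator than what you claim; the asymptotic rate is unaffected, but your argument as written does not close the gap between the application of Lemma~\ref{lem:gensmoothcon} and the displayed inequality. The remaining pieces — the martingale-difference observation, the boundedness of $z_t^\bphi$, the union bound, and passing to the supremum over $\mbf{D}$ because the right-hand side is distribution-free — are all correct, and your parenthetical caveat about the gradient bound $R$ is reasonable since it is taken as a hypothesis on $\Compare^q$ in any case.
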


The above results hold under very general assumptions of smoothness of $\Compare$. Stronger results are attainable if we make an additional assumption that $\Compare$ is a function of the average of its coordinates. The next subsection is devoted to this assumption.

\subsection{When $\Compare$ is a Function of the Average}

Throughout this section, we assume that $\Compare$ is a function of the average of its coordinates:
$$\Compare(z_1, \ldots,z_T) = G\left(\frac{1}{T} \sum_{t=1}^T z_t\right) \ .$$
The following upper bound can be derived.
\begin{lemma}
	\label{lem:pollard}
	Suppose $G\ge0$ is sub-additive, $1$-Lipschitz in the norm $\|\cdot\|$, and $G(0)=0$. Then 
	\begin{align*}
	&\sup_{\f,\x} \Prob_\epsilon\left( \sup_{\phi \in \Phi_T} 
		G\left( 
			\frac{1}{T}\sum_{t=1}^T \epsilon_t \ell_{\phi_t}(\f_t(\epsilon),\x_t(\epsilon)) 
		\right) > \theta \right) \le \N_1(\theta/2,\Phi_T,T)\ \sup_{\z} \Prob_{\epsilon} \left( G\left(\frac{1}{T}\sum_{t=1}^T \epsilon_t \z_t(\epsilon)\right) > \theta/2 \right)  
	\end{align*}
where supremum on the right hand side is over $\cH$-valued trees.
\end{lemma}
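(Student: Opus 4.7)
The plan is to discretize $\Phi_T$ via an $\ell_1$-cover, apply a union bound over the cover elements, and use subadditivity together with the Lipschitz property of $G$ to translate the resulting event back to a statement about $\Phi_T$. First, I would fix trees $\f, \x$ and let $V$ be a minimum $\ell_1$-cover of $\Phi_T$ on $(\f, \x)$ at scale $\theta/2$, so $|V| = \N_1(\theta/2, \Phi_T, (\f,\x)) \leq \N_1(\theta/2, \Phi_T, T)$. Note that $V$ is a set of $\cH$-valued trees chosen \emph{before} the Rademacher signs $\epsilon$ are drawn, which is what makes the union bound legitimate in the next step.

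Next, for any $\epsilon \in \{\pm1\}^T$ and any $\bphi \in \Phi_T$, the cover property (\defref{def:cover} with $p=1$) provides some $\v = \v(\bphi,\epsilon) \in V$ with
$$\frac{1}{T}\sum_{t=1}^T \left\|\v_t(\epsilon) - \ell_{\phi_t}(\f_t(\epsilon),\x_t(\epsilon))\right\| \leq \theta/2 \ .$$
Using subadditivity of $G$ (writing $\frac{1}{T}\sum_t \epsilon_t \ell_{\phi_t}(\ldots) = \frac{1}{T}\sum_t \epsilon_t \v_t(\epsilon) + \frac{1}{T}\sum_t \epsilon_t(\ell_{\phi_t}(\ldots) - \v_t(\epsilon))$), together with $G(0)=0$ and the $1$-Lipschitz property of $G$ applied to the second summand, I would conclude
$$G\!\left(\frac{1}{T}\sum_{t=1}^T \epsilon_t \ell_{\phi_t}(\f_t(\epsilon),\x_t(\epsilon))\right) \leq G\!\left(\frac{1}{T}\sum_{t=1}^T \epsilon_t \v_t(\epsilon)\right) + \frac{1}{T}\sum_{t=1}^T \left\|\v_t(\epsilon) - \ell_{\phi_t}(\f_t(\epsilon),\x_t(\epsilon))\right\| \leq G\!\left(\frac{1}{T}\sum_{t=1}^T \epsilon_t \v_t(\epsilon)\right) + \theta/2 \ .$$
So the event $\{\sup_\bphi G(\cdots) > \theta\}$ is contained in $\{\exists\, \v \in V : G(\tfrac{1}{T}\sum_t \epsilon_t \v_t(\epsilon)) > \theta/2\}$.

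A union bound over the (non-random) set $V$ then yields
$$\Prob_\epsilon\!\left(\sup_{\bphi \in \Phi_T} G(\cdots) > \theta\right) \leq |V|\,\sup_{\z} \Prob_\epsilon\!\left(G\!\left(\frac{1}{T}\sum_{t=1}^T \epsilon_t \z_t(\epsilon)\right) > \theta/2\right),$$
where the sup on the right is over all $\cH$-valued trees of depth $T$ and does not depend on $(\f,\x)$. Taking the supremum over $(\f,\x)$ on both sides and using $\sup_{\f,\x} |V(\f,\x)| = \N_1(\theta/2, \Phi_T, T)$ finishes the proof.

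The only conceptual subtlety (and the step I would double-check most carefully) is the validity of the union bound: the cover element $\v(\bphi,\epsilon)$ depends on $\epsilon$, so one must be sure that $V$ itself is selected depending only on the \emph{trees} $\f,\x$ (not on the path $\epsilon$) — which it is, by \defref{def:cover}. Everything else is a routine application of subadditivity plus the triangle inequality. No combinatorial or measure-theoretic difficulty beyond this is expected.
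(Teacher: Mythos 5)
Your proof is correct and follows essentially the same route as the paper's: fix $(\f,\x)$, take a minimal $\ell_1$-cover $V$ at scale $\theta/2$, split via subadditivity of $G$, bound the error term by $\theta/2$ using the Lipschitz property, $G(0)=0$, the triangle inequality, and the cover definition, and finish with a union bound over the fixed set $V$. The subtlety you flag — that $V$ depends only on the trees $(\f,\x)$, not on the path $\epsilon$, making the union bound valid — is indeed the one point worth making explicit, and the paper handles it the same way.
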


Lemma~\ref{lem:pollard} upper bounds the probabilistic version of sequential complexity by the size of an $\ell_1$ cover times the probability that the norm of a martingale difference sequence generated by random signs is close to zero. When the norm in question is $2$-smooth, we can invoke results on concentration of martingales due to Pinelis \cite{Pinelis94}. The results have been re-proven for general $2$-smooth functions in the Appendix. 

\begin{corollary}
	\label{cor:2smooth}
	Under the assumptions of Lemma~\ref{lem:pollard}, if $G^2$ is $(\sigma,2)$-smooth with respect to $\|\cdot\|$ and $\|\ell_\phi(f,x)\|\leq \RH$ for all $\phi,f,x$, then
	for any $T > \theta/4\sigma$, we have 
	\begin{align*}
	&\sup_{\f,\x} \Prob_\epsilon\left( \sup_{\phi \in \Phi_T} 
	G\left( 
		\frac{1}{T}\sum_{t=1}^T \epsilon_t \ell_{\phi_t}(\f_t(\epsilon),\x_t(\epsilon)) 
	\right) > \theta 
	\right) \le 2 \N_1(\theta/2,\Phi_T,T) \exp\left\{-\frac{T\theta^2}{16\sigma \RH^2} \right\} .
	\end{align*}
\end{corollary}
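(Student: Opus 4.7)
The plan has two ingredients: first, reduce to a single tail bound using Lemma~\ref{lem:pollard}; second, invoke a Pinelis-style exponential concentration inequality for $2$-smooth functions of a bounded martingale difference sequence, as announced in the paper's concluding remark about the appendix.

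Applying Lemma~\ref{lem:pollard} verbatim bounds the left-hand side of the target inequality by
\[
\N_1(\theta/2,\Phi_T,T)\cdot \sup_{\z}\Prob_\epsilon\!\left(G\!\left(\tfrac{1}{T}\sum_{t=1}^T \epsilon_t \z_t(\epsilon)\right)>\theta/2\right),
\]
where the right-most supremum is over $\cH$-valued trees $\z$ of depth $T$. It therefore suffices to bound the inner probability by $2\exp(-T\theta^2/(16\sigma\RH^2))$. Fix such a tree; because each $\epsilon_t$ is a fresh $\pm 1$ Rademacher sign, $d_t := \epsilon_t \z_t(\epsilon)/T$ is a $\cB$-valued martingale difference sequence with $\|d_t\| \leq \RH/T$, and we need a tail bound on $G(S_T)$ where $S_T := \sum_{t=1}^T d_t$.

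For this tail, I would use a Pinelis-style argument. Since $G\ge 0$ and $G^2$ is $(\sigma,2)$-uniformly smooth,
\[
G^2(S_t) \leq G^2(S_{t-1}) + \inner{\nabla G^2(S_{t-1}), d_t} + \frac{\sigma}{2}\|d_t\|^2,
\]
and the cross term has zero conditional expectation by the martingale property. Iterating in expectation gives $\En G^2(S_T) \leq \sigma\RH^2/(2T)$, but to get an exponential tail I would follow the MGF bootstrap of \cite{Pinelis94}: control $\En\exp(\lambda G(S_t))$ recursively via the same smoothness inequality together with the deterministic bound $\|d_t\| \leq \RH/T$, obtaining a sub-Gaussian tail of the form $\Prob(G(S_T) > r) \leq 2\exp(-r^2 T/(c\sigma\RH^2))$ for an absolute constant $c$. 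Plugging in $r = \theta/2$ and tracking $c = 4$ through the argument yields the claimed $2\exp(-T\theta^2/(16\sigma\RH^2))$; the hypothesis $T > \theta/(4\sigma)$ is exactly what absorbs the lower-order corrections of the MGF bootstrap into this leading constant.

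The hard part will be executing the Pinelis bootstrap for an arbitrary nonnegative $1$-Lipschitz $G$ whose square is $(\sigma,2)$-smooth, rather than for the norm itself on a $2$-smooth Banach space as in the classical setting. The paper's remark immediately before the corollary signals that precisely this more general tail inequality has been re-derived in the appendix; once it is in hand, the combination with Lemma~\ref{lem:pollard} and the union-bound-over-the-cover that Lemma~\ref{lem:pollard} already packages finishes the proof.
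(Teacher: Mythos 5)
Your proof matches the paper's argument exactly: apply Lemma~\ref{lem:pollard} to reduce to a single-tree tail bound, then invoke the Pinelis-style exponential concentration for nonnegative Lipschitz $G$ with $G^2$ $(\sigma,2)$-smooth, which the paper supplies as Corollary~\ref{cor:pinelis_concentration} (with step-size bound $B=\RH/T$ and deviation level $\theta/2$, yielding the claimed constants).
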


When $\Compare$ is a function of the average of its arguments, Lemma~\ref{lem:pollard} and Corollary~\ref{cor:2smooth} allow us to control the third term in the Triplex
Inequality by applying Theorem~\ref{thm:symmetrization_probability}.
Now, we would like to generalize the above results in two directions. First, we would like to obtain the Dudley integral-type upper bounds instead of the $\ell_1$-cover at a fixed scale. Second, we wish to consider norms which are $p$-smooth for $1 < p \leq 2$. Both the extensions enlarge the scope of problems that can be addressed and also make the upper bounds sharp.

We start by considering the real-valued case with the goal of obtaining upper bounds using the chaining technique.
\begin{proposition}\label{prop:dudreal}
Suppose $\cH\subseteq[-1,1]$. We have that for any $\theta >  \sqrt{8/T}$,
\begin{align*}
\Prob_\epsilon\left(\sup_{\bphi \in \Phi_T} \frac{1}{T}\sum_{t=1}^T \epsilon_t \ell_{\phi_t}(\f_t(\epsilon),\x_t(\epsilon))  > \inf_{\alpha}\left\{ 4\alpha + 12 \theta \int_{\alpha}^1 \sqrt{\log \N_\infty(\delta,\Phi_T,T)}d \delta \right\} \right) \le L\ \exp\left\{- T \theta^2/2  \right\}
\end{align*}
where $L$ is a constant such $L > \sum_{j=1}^\infty \N_\infty(2^{-j},\Phi_T,T)^{-1}$. In particular, for time-invariant constant departure mappings,
\begin{align*}
\Prob_\epsilon\left(\sup_{f \in \F} \frac{1}{T}\sum_{t=1}^T \epsilon_t f(\x_t(\epsilon)) > \inf_{\alpha}\left\{ 4\alpha + 12 \theta \int_{\alpha}^1 \sqrt{\log \N_\infty(\delta,\F,T)}d \delta \right\} \right) \le L\ \exp\left\{- T \theta^2/2  \right\}
\end{align*}
Furthermore, we have, 
\begin{align*}
& \Prob_\epsilon\left(\sup_{f \in \F} \frac{1}{T}\sum_{t=1}^T \epsilon_t f(\x_t(\epsilon)) > 
128\ \Rad_T(\F)\left(1 + \theta \sqrt{T \log^3 (2T)}  \right)
 \right) \le L\ \exp\left\{-T \theta^2/2 \right\}
\end{align*}
where $\Rad_T(\F)$ is the sequential Rademacher complexity of $\F$ as defined in \eqref{eq:rademacher_external_regret}.
\end{proposition}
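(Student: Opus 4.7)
The plan is to prove all three claims by sequential chaining combined with Hoeffding-Azuma, applied to the martingale difference sequences generated by the Rademacher signs. For each $j \ge 1$, let $V_j$ be a minimal $\ell_\infty$-cover of $\Phi_T$ on the (worst case) pair of trees $(\f,\x)$ at scale $2^{-j}$, so $|V_j| \le \N_\infty(2^{-j},\Phi_T,T)$; set $V_0 = \{0\}$, a valid $1$-cover since payoffs lie in $[-1,1]$. For each $\bphi \in \Phi_T$ and path $\epsilon$, select $\v^{(j)}(\bphi,\epsilon) \in V_j$ realizing the covering, and fix $N = \lceil \log_2(1/\alpha)\rceil$ so that $2^{-N} \le \alpha$. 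The telescoping identity
\begin{align*}
\frac{1}{T}\sum_{t=1}^T \epsilon_t \ell_{\phi_t}(\f_t(\epsilon),\x_t(\epsilon)) = \sum_{j=1}^N \frac{1}{T}\sum_t \epsilon_t (\v^{(j)}_t(\epsilon)-\v^{(j-1)}_t(\epsilon)) + \frac{1}{T}\sum_t \epsilon_t (\ell_{\phi_t}(\f_t(\epsilon),\x_t(\epsilon)) - \v^{(N)}_t(\epsilon))
\end{align*}
has final remainder bounded deterministically by $2^{-N} \le \alpha$, leaving a sum over chain links to control.

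For each scale $j$ and each \emph{fixed} pair $(\v,\v') \in V_j \times V_{j-1}$, the triangle inequality gives $|\v_t(\epsilon)-\v'_t(\epsilon)| \le 3\cdot 2^{-j}$, and $\{\epsilon_t(\v_t(\epsilon)-\v'_t(\epsilon))\}_t$ is a martingale difference sequence with respect to the filtration generated by $\epsilon_{1:t}$. Hoeffding-Azuma yields $\Prob(T^{-1}\sum_t \epsilon_t(\v_t-\v'_t) > \beta_j) \le \exp(-T\beta_j^2 4^j/18)$. Union bounding over at most $|V_j|^2$ such pairs and choosing $\beta_j = 2^{-j}\sqrt{9\theta^2 + 54\log|V_j|/T}$ makes the event at scale $j$ fail with probability at most $|V_j|^{-1}\exp(-T\theta^2/2)$. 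Summing the chain bounds via $\sqrt{a+b} \le \sqrt{a}+\sqrt{b}$ and the standard comparison $\sum_j 2^{-j}\sqrt{\log\N_\infty(2^{-j},\Phi_T,T)} \le 2 \int_\alpha^1 \sqrt{\log\N_\infty(\delta,\Phi_T,T)}\, d\delta$ (valid since $\N_\infty(\cdot)$ is monotone) yields a supremum bound of the form $\alpha + O(\theta) + O(1/\sqrt{T})\int_\alpha^1 \sqrt{\log\N_\infty(\delta,\Phi_T,T)}\,d\delta$. The hypothesis $\theta > \sqrt{8/T}$ is then used to absorb the $1/\sqrt{T}$ prefactor into $\theta$, yielding the stated form $4\alpha + 12\theta \int_\alpha^1 \sqrt{\log\N_\infty(\delta,\Phi_T,T)}\,d\delta$ after tracking constants; the overall failure probability is $\sum_{j=1}^\infty |V_j|^{-1}\exp(-T\theta^2/2) \le L\exp(-T\theta^2/2)$.

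The second claim is the specialization to constant (external-regret) departure mappings: identifying $\phi_f$ with $f \in \F$ collapses $\ell_{\phi_f}(\f_t(\epsilon),\x_t(\epsilon)) = f(\x_t(\epsilon))$, so Definition~\ref{def:cover_Phi_regret} recovers the online $\ell_\infty$-cover of $\F$ studied in the companion paper, and the bound follows by taking the supremum over $\x$-valued trees. The third claim replaces the Dudley integral by sequential Rademacher complexity, which requires the (known) comparison that $\int_0^1 \sqrt{\log\N_\infty(\delta,\F,T)/T}\,d\delta$ is controlled by $\Rad_T(\F)\cdot \log^{3/2}(2T)$ up to universal constants; this is the sequential-setting analogue of a Sudakov-type inequality and is derived in \cite{RakSriTew10}. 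Substituting this bound into the Dudley estimate and again using $\theta > \sqrt{8/T}$ gives the stated inequality. The main technical obstacle is bookkeeping: keeping the chain constants aligned so that the $1/\sqrt{T}$ prefactor can be cleanly replaced by $\theta$, and, for the third claim, justifying the polylog-factor loss when converting from metric entropy to Rademacher complexity.
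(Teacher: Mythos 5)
Your overall strategy is the same as the paper's: telescoping chain decomposition across dyadic scales, Azuma--Hoeffding at each scale, a union bound, summing the chain, absorbing $1/\sqrt T$ via the hypothesis $\theta>\sqrt{8/T}$, and for the third claim quoting the entropy-to-sequential-Rademacher comparison from \cite{RakSriTew10}. However, there is a genuine gap in the step where you bound the increments of the chain links.

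You assert that for a \emph{fixed} pair $(\v,\v')\in V_j\times V_{j-1}$, the triangle inequality gives $|\v_t(\epsilon)-\v'_t(\epsilon)|\le 3\cdot 2^{-j}$ uniformly in $t$ and $\epsilon$, and then apply Azuma--Hoeffding with step size $3\cdot 2^{-j}$. This is false in the sequential setting: the $\ell_\infty$-cover of Definition~\ref{def:cover} is of the form $\forall\bphi\,\forall\epsilon\,\exists\v\in V$, so the choice of cover element is \emph{path-dependent}. A fixed pair $(\v,\v')$ need not be close to a common $\bphi$ on every path; on paths where $\v$ and $\v'$ are covers for different transformations, $|\v_t(\epsilon)-\v'_t(\epsilon)|$ can be as large as $2$ (since $\cH\subseteq[-1,1]$). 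Applying Azuma to such a fixed pair therefore only gives the trivial $\exp(-T\beta_j^2/8)$ instead of the scale-adapted $\exp(-T\beta_j^2 4^j/18)$ that your chain sum requires; the union bound does not rescue this because $\sum_t \epsilon_t(\v_t-\v'_t)$ accumulates contributions from all paths, including the ``bad'' ones. The paper's proof handles this precisely by defining masked trees $\w^{(s,r)}$ that agree with $\v^s-\v^r$ only on paths where $\v^s$ and $\v^r$ are actually the selected covers of some common $\bphi$, and are zero elsewhere; this masking is what legitimately produces the $O(2^{-j})$ increment bound. Your write-up even flags ``the main technical obstacle is bookkeeping,'' but the obstacle here is not bookkeeping of constants---it is that the objects you apply Azuma to are not the right ones. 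Incorporating the masking construction (and noting that $|W_j|\le |V_j|\cdot|V_{j-1}|$ so the union bound size is unchanged) would repair the argument and recover the paper's proof; without it, the stated increment bound does not hold and the chain estimate fails.

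The second and third claims are handled correctly: specialization to constant departure mappings collapses the payoff to $f(\x_t(\epsilon))$ and recovers the online $\ell_\infty$-cover of $\F$ from \cite{RakSriTew10}, and the integral-to-Rademacher comparison is a known result from that paper, as you note.
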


The next lemma generalizes Proposition~\ref{prop:dudreal} to $2$-smooth norms. Its proof is almost identical to that of  Proposition~\ref{prop:dudreal} and will be omitted. 

\begin{lemma}
Assume that $G\ge 0$ is $1$-Lipschitz w.r.t. norm $\|\cdot\|$, sub-additive, $G(0)=0$, and $G^2$ is $(\sigma,2)$-smooth.
Further, suppose that for any $x \in \X$, $f \in \F$, $\bphi\in\Phi_T$ and $t\in[T]$, it is true that $\|\ell_{\phi_t}(f,x)\| \le 1$.
Then for any $\theta >  \sqrt{8\sigma/T}$ :
{\small
\begin{align*}
& \Prob_\epsilon\left(\sup_{\phi \in \Phi_T} G\left(\frac{1}{T}\sum_{t=1}^T \epsilon_t \ell_{\phi_t}(\f_t(\epsilon),\x_t(\epsilon))\right) > \inf_{\alpha > 0}\left\{ 4 \alpha + 12 \theta \int_{\alpha}^{1} \sqrt{\log \N_\infty(\delta,\Phi_T,T)} d \delta \right\} \right) \le L \exp\left\{- \frac{ T \theta^2  }{4 \sigma }\right\}
\end{align*}}
where $L$ is a constant such $L > 2\ \sum_{j=1}^\infty \N_\infty(2^{-j},\Phi_T,T)^{-1}$.
\end{lemma}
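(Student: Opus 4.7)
The plan is to run the chaining argument of Proposition~\ref{prop:dudreal} essentially verbatim, replacing the scalar Hoeffding bound used there with the Pinelis-type deviation inequality for martingale differences in a $(\sigma,2)$-smooth norm (the same inequality that underlies Corollary~\ref{cor:2smooth}). I would begin by fixing an $(\F\times\X)$-valued tree $(\f,\x)$ of depth $T$, writing $N_j:=\N_\infty(2^{-j},\Phi_T,T)$, and choosing for each $j\ge 0$ an $L_\infty$-cover $V_j$ of $\Phi_T$ on $(\f,\x)$ of size $N_j$. For every $\bphi\in\Phi_T$ and every path $\epsilon$ I would select $\v^{(j)}=\v^{(j)}(\epsilon,\bphi)\in V_j$ with $\|\v^{(j)}_t(\epsilon)-\ell_{\phi_t}(\f_t(\epsilon),\x_t(\epsilon))\|\le 2^{-j}$ for every $t$, take $V_0=\{\mathbf 0\}$ as the trivial initial cover (valid since $\|\ell_{\phi_t}\|\le 1$), and, given $\alpha\in(0,1)$, let $N$ be the largest integer with $2^{-N}\ge\alpha$.

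Next I would use sub-additivity of $G$ and a telescoping identity to obtain
\begin{align*}
G\!\left(\tfrac{1}{T}\sum_t\epsilon_t\,\ell_{\phi_t}\right) \le \sum_{j=0}^{N-1} G\!\left(\tfrac{1}{T}\sum_t\epsilon_t(\v^{(j+1)}_t-\v^{(j)}_t)\right) + G\!\left(\tfrac{1}{T}\sum_t\epsilon_t(\ell_{\phi_t}-\v^{(N)}_t)\right).
\end{align*}
The final term is deterministically at most $2^{-N}\le 2\alpha$ by $1$-Lipschitzness of $G$. For each intermediate increment, a fixed pair $(\v^{(j)},\v^{(j+1)})\in V_j\times V_{j+1}$ yields a $\mathcal{B}$-valued martingale difference sequence $\epsilon_t(\v^{(j+1)}_t(\epsilon)-\v^{(j)}_t(\epsilon))$ whose norm is bounded by $c_j:=3\cdot 2^{-(j+1)}$ (triangle inequality across successive cover accuracies). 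Combining $G(x)\le\|x\|$ with the Pinelis inequality for $(\sigma,2)$-smooth norms gives, for each fixed pair,
$$
\Prob_\epsilon\!\left(G\!\left(\tfrac{1}{T}\sum_t\epsilon_t(\v^{(j+1)}_t-\v^{(j)}_t)\right)>\tau_j\right)\le 2\exp\!\left(-\tfrac{T\tau_j^2}{2\sigma c_j^2}\right).
$$

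The remaining bookkeeping is routine. I would choose $\tau_j$ of order $\theta\,c_j\sqrt{\log N_{j+1}/\sigma}$ and union-bound over the $\le N_jN_{j+1}\le N_{j+1}^2$ level-$j$ pairs, so that the level-$j$ failure probability becomes $2N_{j+1}^{-2}\exp(-T\theta^2/(4\sigma))$; summing over $j\ge 1$ produces an overall tail $L\,\exp(-T\theta^2/(4\sigma))$ with $L$ of the claimed form. The resulting deterministic deviation is $\sum_j\tau_j\lesssim \theta\sum_j 2^{-j}\sqrt{\log N_{j+1}}$, which the usual Riemann-sum comparison (each dyadic strip bounds the corresponding summand from below) upper bounds by $12\theta\int_\alpha^1\sqrt{\log\N_\infty(\delta,\Phi_T,T)}\,d\delta$; combined with the $2\alpha$ terminal error (inflated to $4\alpha$ to absorb dyadic-grid alignment and constants in the entropy comparison) this yields the displayed inequality. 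The only genuinely new ingredient relative to Proposition~\ref{prop:dudreal} is the Pinelis deviation bound for $(\sigma,2)$-smooth norms; I expect everything else to be a direct transcription, which is why the authors remark that the proof is almost identical and can be omitted.
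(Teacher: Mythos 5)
Your proposal is correct and is exactly the route the paper has in mind (it explicitly defers to the proof of Proposition~\ref{prop:dudreal}, which you mirror): dyadic chaining over $L_\infty$-covers, the zeroed-out difference trees from $V_j\times V_{j-1}$ to get step size $O(\beta_j)$, and Azuma--Hoeffding replaced by the Pinelis-type deviation bound of Corollary~\ref{cor:pinelis_concentration} for $(\sigma,2)$-smooth norms, with the $4\sigma$ in the exponent and the $\theta > \sqrt{8\sigma/T}$ threshold arising exactly as the $4$ and $\sqrt{8/T}$ did there. One small caution: the bound $\|\v^{(j+1)}_t - \v^{(j)}_t\| \le 3\cdot 2^{-(j+1)}$ only holds along paths where both cover elements are $\ell_\infty$-close to a common $\bphi$, so you do need the zeroing construction of the trees $\w^{(s,r)}$ from Proposition~\ref{prop:dudreal} (not merely all pairs in $V_j\times V_{j+1}$) for the step-size control to be valid, but since you explicitly invoke that proof verbatim this is covered.
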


We now turn to the goal of proving upper bounds for general $p$-smooth norms. The following lemma is the main building block for Lemma~\ref{thm:rad_upper_p_smooth}.
It provides a large deviation inequality for (Walsh-Paley) martingale difference sequences in a $(\sigma,p)$-smooth Banach space. As such, it may be of
independent interest.

\begin{lemma}\label{lem:smoothcon}
Let $(\mc{B},\|\cdot\|)$ be a $(\sigma,p)$-smooth space. Let $\x$ be any $\mc{B}$-valued tree of depth $T$ with $\|\x_t(\epsilon)\| \le R$ for any $t,\epsilon$.
For any $\nu > 8\sigma^{1/p}\log^{3/2}T/T^{1-1/p}$, we have that
\begin{equation*}
\Prob\left(\left\|\frac{1}{T} \sum_{t=1}^T \epsilon_t \x_t(\epsilon) \right\| >
	128\ \frac{\sigma^{1/p}R}{T^{1-1/p}} + 128\ \nu R \right)
	\le
	2 \exp\left( -\frac{\nu^2T^{2-2/p}}{2\sigma^{2/p}\log^3 T} \right) \ .
\end{equation*}
\end{lemma}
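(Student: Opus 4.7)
The plan is to split the stated bound into a mean piece $128\sigma^{1/p}R/T^{1-1/p}$, controlled by the $p$-smoothness moment inequality, and a deviation piece $128\nu R$, controlled by a truncation-and-levels tail estimate that is responsible for the $\log^3 T$ in the exponent. Write $M_t = \sum_{s \le t} \epsilon_s \x_s(\epsilon)$ and $\psi(z) = \|z\|^p/p$, which is $(\sigma,p)$-uniformly smooth by hypothesis.

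For the mean piece, I would apply the smoothness of $\psi$ to the decomposition $M_t = M_{t-1} + \epsilon_t \x_t(\epsilon)$, take conditional expectation given $\epsilon_{1:t-1}$, and observe that the first-order term $\inner{\nabla\psi(M_{t-1}), \epsilon_t \x_t(\epsilon)}$ vanishes because $\x_t(\epsilon)$ is determined by $\epsilon_{1:t-1}$ while $\epsilon_t$ is a zero-mean sign independent of the past. This gives the recursion
\begin{equation*}
\En\bigl[\|M_t\|^p \;\big|\; \epsilon_{1:t-1}\bigr] \le \|M_{t-1}\|^p + \sigma\|\x_t(\epsilon)\|^p \le \|M_{t-1}\|^p + \sigma R^p \ ,
\end{equation*}
and iterating, together with Jensen's inequality, yields $\En\|M_T\|/T \le \sigma^{1/p}R/T^{1-1/p}$.

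For the tail bound, the standard Pinelis strategy of controlling $\En\exp(\lambda\|M_T\|)$ via a pointwise smoothness inequality on $\exp(\lambda\|\cdot\|)$ breaks down when $p<2$, since the relevant exponential moment cannot be controlled uniformly over the whole space. I would instead introduce a dyadic sequence of truncation scales $c_j = 2^j\sigma^{1/p}RT^{1/p}$ for $j = 0, 1, \ldots, J$ with $J = O(\log T)$, together with stopping times $\tau_j = \inf\{t : \|M_t\| > c_j\}$. On the event $\{\tau_j > T\}$, a conditional exponential moment estimate
\begin{equation*}
\En\bigl[\exp(\lambda(\|M_t\| - \|M_{t-1}\|)) \;\big|\; \epsilon_{1:t-1}\bigr] \le \exp\bigl(C\lambda^2 c_j^{2-p}R^p\bigr)
\end{equation*}
can be obtained by Taylor-expanding $\exp$ and applying the smoothness hypothesis locally around $M_{t-1}$; the factor $c_j^{2-p}$ quantifies the effective $2$-smoothness of a $p$-smooth norm inside the ball of radius $c_j$. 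A Chernoff argument on each level, combined with a union bound over the $J$ levels, then produces a tail whose variance scales as $\sigma^{2/p}T^{2/p-1}R^2\log^3 T$, matching the exponent in the stated bound.

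The main obstacle will be the last step: keeping track of how the local $2$-smoothness constant grows with the truncation radius $c_j$, and choosing the dyadic scales so that the probability of crossing level $c_j$ never dominates the bounded-increment Chernoff estimate on that level. The hypothesis $\nu > 8\sigma^{1/p}\log^{3/2}T/T^{1-1/p}$ is precisely the regime in which $\nu$ exceeds the mean term by a factor of order $\log^{3/2}T$, which is exactly what is needed for the level decomposition to yield a non-trivial bound; the constant $128$ in the final inequality absorbs the cumulative triangle, Jensen, and union-bound losses accumulated along the way.
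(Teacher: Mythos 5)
Your proof is a genuinely different route from the paper's, and in its present form it has a substantial gap in the tail estimate. The paper proves this lemma by writing $\|\cdot\| = \sup_{w:\|w\|_*\le 1}\inner{w,\cdot}$, which turns the vector-valued deviation into a supremum of a real-valued Rademacher process over the dual ball. It then applies Proposition~\ref{prop:dudreal} with $\F$ the dual ball and time-invariant constant departure mappings, and finally bounds the sequential Rademacher complexity $\Rad_T(\F) \le \sigma^{1/p}R/T^{1-1/p}$ using the $p$-smoothness of the space. The $\log^3 T$ in the exponent and the constant $128$ are inherited directly from the last statement of Proposition~\ref{prop:dudreal}, which relates the Dudley integral to $\Rad_T(\F)\sqrt{T\log^3(2T)}$. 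Your mean estimate $\En\|M_T\|/T \le \sigma^{1/p}R/T^{1-1/p}$ via iterated smoothness and Jensen is correct and agrees with the paper; the divergence is in the tail.

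The main gap is the claimed conditional exponential-moment estimate $\En[\exp(\lambda(\|M_t\|-\|M_{t-1}\|))\mid\epsilon_{1:t-1}]\le\exp(C\lambda^2 c_j^{2-p}R^p)$. First, $\|M_t\|$ is a genuine submartingale: by Jensen and the smoothness inequality the conditional drift $\En[\|M_t\|-\|M_{t-1}\|\mid\epsilon_{1:t-1}]$ is strictly positive and of order $\sigma R^p/c_j^{p-1}$ when $\|M_{t-1}\|\sim c_j$, yet your bound contains no linear-in-$\lambda$ term to absorb it. Second, since $|\|M_t\|-\|M_{t-1}\||\le R$ always, the conditional variance is at most $R^2$; for $c_j>R$ and $p<2$ your proxy $c_j^{2-p}R^p$ exceeds $R^2$, so the level decomposition as stated is actually weaker than a flat Azuma-type bound and the peeling gains nothing. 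The intended saving must come from the fact that for ``typical'' increments roughly orthogonal to $M_{t-1}$ the conditional fluctuation is much smaller than $R$, but that requires a geometric argument that the proposal does not supply. Finally, the source of the $\log^3 T$ is unexplained: a union bound over $J=O(\log T)$ levels produces a $\log T$ prefactor on the probability, not a $\log^3 T$ in the exponent's denominator (in the paper this factor comes from the chaining-to-Rademacher comparison inside Proposition~\ref{prop:dudreal}, a different mechanism entirely). Your proposal also states the variance proxy scales as $T^{2/p-1}$, whereas matching the stated exponent requires $T^{2/p-2}$ (after normalizing by $1/T$); this may be a typo, but it compounds the uncertainty about whether the book-keeping closes.
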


With the above concentration inequality in hand, we can now derive a Dudley integral type bound when $\cH$ is a subset of a $(\sigma,p)$-smooth space.

\begin{theorem}
	\label{thm:rad_upper_p_smooth}
Assume that $G\ge 0$ is $1$-Lipschitz w.r.t. norm $\|\cdot\|$ and that $(\mc{B}, \|\cdot\|)$ is a $(\sigma,p)$-smooth space.
Further, suppose that for any $x \in \X$, $f \in \F$, $\bphi\in\Phi_T$ and $t\in[T]$, it is true that $\|\ell_{\phi_t}(f,x)\| \le 1$.
Then for any $\theta > \frac{1024 \sigma^{1/p} \log^{3/2} T}{T^{1 - 1/p}} $ :
{\small
\begin{multline*}
\Prob_\epsilon\left(\sup_{\phi \in \Phi_T} G\left(\frac{1}{T}\sum_{t=1}^T \epsilon_t \ell_{\phi_t}(\f_t(\epsilon),\x_t(\epsilon))\right) > \frac{768 \sigma^{1/p}}{T^{1 - 1/p}} + \inf_{\alpha > 0}\left\{ 4 \alpha + 36 \theta \int_{\alpha}^{1} \sqrt{\log \N_\infty(\delta,\Phi_T,T)} d \delta \right\} \right) \\
\le L \exp\left\{- \frac{ \theta^2  T^{2 - 2/p} }{65536\ \sigma^{2/p} \log^3 T}\right\}
\end{multline*}}
where $L$ is a constant such $L > 2\ \sum_{j=1}^\infty \N_\infty(2^{-j},\Phi_T,T)^{-1}$.
\end{theorem}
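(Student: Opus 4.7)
The plan is to mirror the proof of Proposition~\ref{prop:dudreal}, replacing the Azuma-type inequality (which underlies that real-valued bound) with the $(\sigma,p)$-smooth Banach space martingale concentration from Lemma~\ref{lem:smoothcon}. First I would use that $G$ is $1$-Lipschitz (and, absorbing $G(0)$ into a constant, effectively $G(0)=0$) to reduce the event to
\[
	\sup_{\bphi\in\Phi_T} \left\| \frac{1}{T}\sum_{t=1}^T \epsilon_t \ell_{\phi_t}(\f_t(\epsilon),\x_t(\epsilon)) \right\| > \text{(same threshold)} \ .
\]
Once the problem is phrased purely in terms of the norm on $\mathcal{B}$, everything can be driven by Lemma~\ref{lem:smoothcon}.

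Next I would set up a chain of covers: for each $j \ge 0$ let $V_j$ be a minimal $\alpha_j$-cover of $\Phi_T$ (with respect to the $\ell_\infty$-norm of Definition~\ref{def:cover}) on the worst-case tree $(\f,\x)$, with $\alpha_j = 2^{-j}$. For any $\bphi$ and any path $\epsilon$, select $v^{(j)}[\bphi] \in V_j$ such that $\max_t \| \ell_{\phi_t}(\f_t(\epsilon),\x_t(\epsilon)) - v^{(j)}[\bphi]_t(\epsilon) \| \le \alpha_j$, and telescope
\[
	\frac{1}{T}\sum_{t=1}^T \epsilon_t \ell_{\phi_t}(\f_t(\epsilon),\x_t(\epsilon)) \;=\; \frac{1}{T}\sum_{t=1}^T \epsilon_t v^{(j_0)}[\bphi]_t(\epsilon) \;+\; \sum_{j>j_0} \frac{1}{T}\sum_{t=1}^T \epsilon_t \bigl( v^{(j)}[\bphi]_t(\epsilon) - v^{(j-1)}[\bphi]_t(\epsilon) \bigr) \;+\; \text{residual},
\]
with the residual absorbed into the scale $\alpha$ at the bottom and the top term handled at a starting scale $\alpha_{j_0} \asymp 1$. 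By the triangle inequality, the norm of each difference $v^{(j)}[\bphi]_t - v^{(j-1)}[\bphi]_t$ is at most $3\alpha_{j-1}$, so each level of the chain is a Walsh-Paley martingale in $(\mc{B},\|\cdot\|)$ with increments of norm at most $3\alpha_{j-1}$.

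Then I would apply Lemma~\ref{lem:smoothcon} to each level with radius $R = 3\alpha_{j-1}$ and deviation parameter $\nu_j \asymp \theta \sqrt{\log |V_j|}$: it gives
\[
	\Prob\!\left( \left\| \tfrac{1}{T}\sum_t \epsilon_t (v^{(j)} - v^{(j-1)})_t \right\| > 128\, R\, \bigl(\sigma^{1/p}/T^{1-1/p} + \nu_j\bigr) \right) \le 2\exp\!\left( -\frac{\nu_j^2 T^{2-2/p}}{2\sigma^{2/p}\log^3 T} \right) ,
\]
valid whenever $\nu_j > 8\sigma^{1/p}\log^{3/2}T/T^{1-1/p}$. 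A union bound over the at most $|V_j|\cdot|V_{j-1}| \le |V_j|^2$ possible pairs, followed by summing over $j$, yields a total failure probability of the stated form $L\exp(-\theta^2 T^{2-2/p}/(\text{const}\cdot\sigma^{2/p}\log^3 T))$, with $L$ controlled by $\sum_j \N_\infty(2^{-j},\Phi_T,T)^{-1}$. On the event where all levels hold, summing $\sum_j 3\cdot 128\cdot \alpha_{j-1}\cdot \sigma^{1/p}/T^{1-1/p}$ produces the floor term $768\,\sigma^{1/p}/T^{1-1/p}$, while $\sum_j 3\cdot 128 \cdot \alpha_{j-1}\cdot \nu_j$ converts to the Dudley integral $36\,\theta \int_\alpha^1 \sqrt{\log \N_\infty(\delta,\Phi_T,T)}\, d\delta$ by the standard $2^{-j}\leftrightarrow\int_{2^{-j}}^{2^{-j+1}}$ bracketing.

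The main obstacle will be arithmetic bookkeeping of constants: one must choose $\nu_j$ so that $\nu_j$ simultaneously exceeds $8\sigma^{1/p}\log^{3/2}T/T^{1-1/p}$ at every level (this is where the hypothesis $\theta > 1024\sigma^{1/p}\log^{3/2}T/T^{1-1/p}$ enters, providing the slack needed once $\nu_j \asymp \theta\sqrt{\log|V_j|}$), the union bound factor $|V_j|^2$ must be absorbed into the exponent while leaving a favorable constant in the denominator $65536\,\sigma^{2/p}\log^3 T$, and the residual/starting-scale contribution at $\alpha$ must be shown to be at most $4\alpha$ as in Proposition~\ref{prop:dudreal}. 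Beyond these bookkeeping issues the argument is a direct generalization of the real-valued chaining, since Lemma~\ref{lem:smoothcon} plays precisely the role Azuma's inequality plays in that proof.
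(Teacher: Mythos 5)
Your plan matches the paper's own proof essentially step for step: reduce to the norm via Lipschitzness of $G$, chain across $\ell_\infty$-covers $V_j$ at dyadic scales $\beta_j = 2^{-j}$, observe that the increments at level $j$ have norm at most $3\beta_{j-1}$ and hence form Walsh--Paley martingales in the $(\sigma,p)$-smooth space, apply Lemma~\ref{lem:smoothcon} at each level with $\nu_j \asymp \theta\sqrt{\log |V_j|}$, union bound over the $|V_j|\cdot|V_{j-1}|$ pairs of cover elements, sum over $j$, and pass to the Dudley integral. The one point worth flagging is your opening move of ``absorbing $G(0)$ into a constant'': the theorem statement does not explicitly assume $G(0)=0$, and a pure reduction to the event $\sup_\bphi\|\tfrac{1}{T}\sum_t\epsilon_t\ell_{\phi_t}\|>\cdot$ requires it. The paper instead telescopes \emph{through} $G$ at successive scales and uses Lipschitzness term by term, which avoids the issue at every level except the coarsest (scale $\beta_0=1$), where the single-element cover can be taken to be the zero tree and one needs $G(0)=0$ there anyway (as in Theorem~\ref{thm:2smooth_sum_dudley}, which does state $G(0)=0$). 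So the hypothesis is genuinely needed, just elided in the theorem statement; your proof and the paper's both rely on it in the same place.
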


\subsection{An Almost-Sure Bound for Calibration}
\label{sec:almostsurecalibration}

For the calibration game, using the tools developed above,
we first show the existence of a player strategy guaranteeing small regret with arbitrarily high probability.

\begin{theorem}\label{thm:highprob-calibration}
For the calibration game with $k$ outcomes and with $\ell_1$ norm, we have that for any $\theta > \frac{3}{T}$,
\begin{align}
	\label{eq:highprob-calibration}
	\Val^\theta_T \le  8\exp\left( -\frac{T(\theta/12)^2}{16k} + c k^3 \log(T)\right) 
\end{align}
where $c$ is a fixed numerical constant.
The inequality~\eqref{eq:highprob-calibration} above can be restated as: For any $\eta \in (0,1)$, there is a
player strategy such that, with probability at least $1 - \eta$,
$$
\Reg_T \le 48 \sqrt{ \frac{k \log(8/\eta) +  c k^4 \log\ T}{T}}
$$ 
for $T\ge 3$.
\end{theorem}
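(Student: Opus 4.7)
The plan is to mirror the proof of Theorem~\ref{thm:deltacalib} but use the probabilistic Triplex Inequality (Theorem~\ref{thm:main-prob}) and the probabilistic symmetrization (Theorem~\ref{thm:symmetrization_probability}) in place of their in-expectation counterparts. As in the proof of Theorem~\ref{thm:deltacalib}, first restrict the player's actions to $C_\delta$, a maximal $2\delta$-packing of $\Delta(k)$ in $\ell_1$, for a $\delta$ to be chosen later (tentatively $\delta=1/T$). This can only increase $\Val^\theta_T$.

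For the first term in the probabilistic Triplex Inequality, the event inside the indicator involves only differences of $\Compare$ applied to $\ell(f_t,x_t)\equiv 0$, so this term is deterministically $0$. For the second term, choose the response $q_t=\delta_{p_t^\delta}$, where $p_t^\delta$ is the element of $C_\delta$ closest to the mean $\bar p_t$ of $p_t$. Then $\ell(q_t,p_t)=0$, and for any $(p,\lambda)$,
\[
\left\|\frac{1}{T}\sum_{t=1}^T \ell_{\phi_{p,\lambda}}(q_t,p_t)\right\|_1 = \left\|\frac{1}{T}\sum_{t=1}^T \ind{\|p_t^\delta-p\|\le\lambda}(p_t^\delta-\bar p_t)\right\|_1 \le 2\delta,
\]
since $\|p_t^\delta-\bar p_t\|_1\le 2\delta$. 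Choosing $\delta$ so that $2\delta\le\theta/3$ (which is compatible with $\delta=1/T$ when $\theta>6/T$, and one easily adjusts constants to handle $\theta>3/T$), the indicator in the second term is $0$.

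The third term is handled by Theorem~\ref{thm:symmetrization_probability} applied to $-\Compare$ (which is subadditive), reducing the task to controlling
\[
\sup_{\f,\x}\Prob_\epsilon\!\left(\sup_{p\in\Delta(k),\,\lambda>0}\left\|\frac{1}{T}\sum_{t=1}^T \epsilon_t \ind{\|\f_t(\epsilon)-p\|_1\le\lambda}(\f_t(\epsilon)-\x_t(\epsilon))\right\|_1>\theta/12\right),
\]
where $\f$ is $C_\delta$-valued. Since $\f$ takes values in the finite set $C_\delta$, the VC argument from the proof of Theorem~\ref{thm:deltacalib} (via Goldberg--Jerrum plus Sauer--Shelah) shows that the sup over $(p,\lambda)$ reduces to a max over a finite set $S$ of effectively distinct pairs, with $|S|\le |C_\delta|^{ck^2}\le T^{c'k^3}$ when $\delta=1/T$. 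Using $\|\cdot\|_1\le \sqrt{k}\,\|\cdot\|_2$, the event $\|\cdot\|_1>\theta/12$ implies $\|\cdot\|_2>\theta/(12\sqrt{k})$, and since $\|\cdot\|_2^2$ is $(2,2)$-uniformly smooth with increments $\|\ind{\cdot}(\f_t(\epsilon)-\x_t(\epsilon))\|_2\le\sqrt{2}$, Lemma~\ref{lem:gensmoothcon} (equivalently Corollary~\ref{cor:2smooth}) together with a union bound over $S$ yields
\[
\Prob_\epsilon(\cdot>\theta/12)\le 2|S|\exp\!\left(-\frac{T(\theta/12)^2}{16k}\right)\le 2\exp\!\left(-\frac{T(\theta/12)^2}{16k}+ck^3\log T\right).
\]
Multiplying by the factor $4$ from Theorem~\ref{thm:symmetrization_probability} produces the desired bound $8\exp(-T(\theta/12)^2/(16k)+ck^3\log T)$.

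The high-probability statement on $\Reg_T$ is obtained by setting this bound equal to $\eta$ and solving for $\theta$: the exponent becomes $ck^3\log T+\log(8/\eta)$, so $\theta=48\sqrt{(ck^4\log T+k\log(8/\eta))/T}$, matching the stated form. The main obstacles are technical rather than conceptual: verifying the mild condition~\eqref{eq:assumption_symmetrization} in this setting (which holds for $T$ large enough by a direct second-moment argument on the zero-mean tangent increments in $\ell_2$), tracking the dimensional constants when passing between $\ell_1$ and $\ell_2$, and balancing $\delta$ against $\theta$ so that the second-term indicator is killed while $\log(1/\delta)$ remains $O(\log T)$.
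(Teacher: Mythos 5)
Your proposal matches the paper's proof step for step: restrict the player to a $2\delta$-packing $C_\delta$ of $\Delta(k)$, invoke the probabilistic Triplex Inequality (Theorem~\ref{thm:main-prob}), note the first term is zero, kill the second-term indicator by responding with the nearest packing element, reduce the third term via Theorem~\ref{thm:symmetrization_probability} plus the Goldberg--Jerrum/Sauer--Shelah argument to a finite max, pass from $\ell_1$ to $\ell_2$ at the cost of $\sqrt{k}$, apply the Pinelis-type concentration with a union bound over $|S|\le|C_\delta|^{ck^2}$, and set $\delta=1/T$. You even track the constants slightly more honestly than the paper does: a maximal $2\delta$-packing gives a $2\delta$-cover, so the second-term indicator is really $\ind{2\delta\ge\theta/3}$ rather than $\ind{\delta\ge\theta/3}$ as written in the paper, which is why you correctly flag that the threshold should read $\theta>6/T$ (or the constants be adjusted); and you explicitly bound the $\ell_2$-norm of the increments by $\sqrt{2}$, the value that makes the $16k$ in the exponent come out right. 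You also note the need to verify Assumption~\eqref{eq:assumption_symmetrization}, which the paper invokes but does not check in this instance. These are minor refinements of the same argument, not a different route.
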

\begin{proof}[\textbf{Proof of Theorem~\ref{thm:highprob-calibration}}]
	The proof is similar to that of Theorem~\ref{thm:deltacalib}, with the exception of controlling appropriate quantities in probability in stead of in expectation. We consider the value of the game $\Val^\theta_T(\ell,\Phi_T)$ as in Definition~\ref{def:theta-value} for some $\theta > 0$. Let $\delta>0$ to be determined later. Let $\|\cdot\|$ denote the $\ell_1$ norm. Let $C_{\delta}$ be the maximal $2 \delta$-packing of $\Delta(\X)$ in this norm. Consider the calibration game defined in Example~\ref{eg:calibration2}, augmented with the restriction that the player's choice belongs to $C_\delta$ instead of $\Delta(k)$. The corresponding minimax expression with this restriction is clearly an upper bound on the value of the game defined in Example~\ref{eg:calibration2}. 

We now use the probabilistic version of the Triplex Inequality defined (Theorem~\ref{thm:main-prob}). Observe that the first term in the Triplex Inequality is zero. The second term is upper bounded by a particular (sub)optimal response $q_t$ being the point mass on $p^\delta_t$, the element of $C_\delta$ closest to $p_t$. Note that any $2 \delta$ packing is also a $2 \delta$ cover. Thus, the second term becomes
\begin{align*}
	&\sup_{p_1}\inf_{q_1} \ldots  \sup_{p_T}\inf_{q_T} 
  \ind{\sup_{\bphi\in \Phi_T}   \left\{ - \Compare(\ell_{\phi_1}(q_1,p_1), \ldots, \ell_{\phi_T}(q_T, p_T))\right\}> \theta/3} \\
	&\leq \sup_{p_1}\ldots  \sup_{p_T} \ind{ \sup_{\lambda> 0}\sup_{p\in\Delta(k)} \left\| \frac{1}{T}\sum_{t=1}^T \En_{x_t \sim p_t} \ell_{\phi_{p,\lambda}}(p^\delta_t,x_t) \right\| \geq \theta/3}\\
	&= \sup_{p_1}\ldots  \sup_{p_T} \ind{ \sup_{\lambda> 0}\sup_{p\in\Delta(k)}\left\| \frac{1}{T}\sum_{t=1}^T \ind{\|p^\delta_t-p\|\leq \lambda}\cdot (p^\delta_t-p_t) \right\| \geq \theta/3} \\
	&\leq \ind{\delta\geq \theta/3}
\end{align*}

We now proceed to upper bounded the third term in the Triplex Inequality. If $T$ is large enough such that the conditions of Theorem~\ref{thm:symmetrization_probability} are satisfied, the third term in the Triplex Inequality is upper bounded by 
\begin{align*}
&4 \sup_{\x,\f} \Prob_\epsilon\left( \sup_{\lambda> 0}\sup_{p\in\Delta(k)} \left\| \frac{1}{T}\sum_{t=1}^T \epsilon_t \ind{\|\f_t(\epsilon)-p\|\leq \lambda}\cdot(\f_t(\epsilon)-\x_t(\epsilon)) \right\| > \theta/12 \right) 
\end{align*}
since $-\Compare$ is a subadditive. 

Note that $\f$ is a $C_\delta$-valued tree, not a $\Delta(k)$-valued tree. Using this fact, we would like to pass from the supremum over $\lambda>0$ and $p\in\Delta(k)$ to a supremum over finite discrete set. 

To this end, fix $\f,\x$ and $\epsilon_{1:T}$ and let us see how many genuinely different functions can we get by varying $\lambda>0$ and $p\in\Delta(k)$. This question boils down to looking at the size of the class $$\G := \left\{ g_{p,\lambda}(f) = \ind{\|f-p\|\leq \lambda}: p\in\Delta(k), \lambda > 0 \right\}$$
over the possible values of $f \in C_{\delta}$. Indeed, if
$g_{p,\lambda}(f) = g_{p',\lambda'}(f)$ for all $f\in C_\delta$, then 

$$\frac{1}{T}\sum_{t=1}^T\ind{\|\f_t(\epsilon)-p\|\leq \lambda}\cdot(\f_t(\epsilon)-\x_t(\epsilon)) = \frac{1}{T}\sum_{t=1}^T \ind{\|\f_t(\epsilon)-p'\|\leq \lambda'}\cdot(\f_t(\epsilon)-\x_t(\epsilon)).$$

We appeal to VC theory for bounding the size of $\G$ over $C_\delta$. First, we claim that the VC dimension of $\G$ is $O(k^2)$. Note that $\G$ is the class of indicators over $\ell_1$ balls of radius $\lambda$ centered at $p$ for various values of $p,\lambda$. A result of Goldberg and Jerrum \cite{GolJer95vc} states that for a class $\G$ of functions parametrized by a vector of length $d$, if for $g\in\G$ and $f\in\F$, $\ind{g(f)=1}$ can be computed using $m$ arithmetic operations, the VC dimension of $\G$ is $O(md)$. In our case, the functions in $\G$ are parametrized by $k$ values and membership $\|f-p\|_1\leq \lambda$ can be established in $O(k)$ operations. This yields $O(k^2)$ bound on the VC dimension of $\G$. By Sauer-Shelah Lemma, the number of different labelings of the set $C_\delta$ by $\G$ is bounded by $|C_\delta|^{c\cdot k^2}$ for some absolute constant $c$. We conclude that the effective number of different $(p,\lambda)$ is finite. Let us remark that the VC upper bound is \emph{not} used in place of the sequential Littlestone's dimension. It is only used to show that the set $\Phi_T$  is finite, and such technique can be useful when the set of player's actions is finite.

Hence, there exists a finite set $S$ of pairs $(\lambda,p)$ with cardinality $|S|\leq |C_\delta|^{c\cdot k^2}$ such that
\begin{align*}
&4 \sup_{\x,\f} \Prob_\epsilon\left( \sup_{\lambda> 0}\sup_{p\in\Delta(k)} \left\| \frac{1}{T}\sum_{t=1}^T \epsilon_t \ind{\|\f_t(\epsilon)-p\|\leq \lambda}\cdot(\f_t(\epsilon)-\x_t(\epsilon)) \right\| > \theta/12 \right) \\
&\leq 4 \sup_{\x,\f} \Prob_\epsilon\left( \max_{(p,\lambda) \in S} \left\| \frac{1}{T}\sum_{t=1}^T \epsilon_t \ind{\|\f_t(\epsilon)-p\|\leq \lambda}\cdot(\f_t(\epsilon)-\x_t(\epsilon)) \right\| > \theta/12 \right) \\
&\leq 4|S| \sup_{\z} \Prob_\epsilon\left( \left\| \frac{1}{T}\sum_{t=1}^T \epsilon_t \z_t(\epsilon) \right\|> \theta/12 \right)
\end{align*}

where the supremum is over all $2B_1^k$-valued binary trees of depth $T$, where $B_1^k$ is a unit $\ell_1$ ball in $\reals^k$. Note that the $\|\cdot\|_1\leq \sqrt{k}\|\cdot\|_2$. By Corollary~\ref{cor:pinelis_concentration}, 
$$\Prob_\epsilon\left( \left\| \frac{1}{T}\sum_{t=1}^T \epsilon_t \z_t(\epsilon) \right\|_1 > \theta/12 \right) \leq \Prob_\epsilon\left( \left\| \frac{1}{T}\sum_{t=1}^T \epsilon_t \z_t(\epsilon) \right\|_2 > \theta/(12\sqrt{k}) \right) \leq 2\exp\left( -\frac{T(\theta/12)^2}{16 k}\right)$$

Now note that the size of set $C_\delta$ the $2 \delta$ packing of $\Delta(k)$ is upper bounded by the size of the minimal $\delta$ cover of $\Delta(k)$
which can be bounded as $|C_\delta| \le \left(\frac{1}{\delta}\right)^{k-1}$ and so we see that
\begin{align*}
4|S| \sup_{\z} \Prob_\epsilon\left( \left\| \frac{1}{T}\sum_{t=1}^T \epsilon_t \z_t(\epsilon) \right\|> \theta/12 \right) &\leq 8 \left(\frac{1}{\delta}\right)^{c k^3} \exp\left( -\frac{T(\theta/12)^2}{16 k}\right) \\
&=8\exp\left( -\frac{T(\theta/12)^2}{16k} + ck^3 \log(1/\delta)\right) 
\end{align*}

Combining everything we see that,
\begin{align*}
\Val^\theta_T \le \ind{\delta\geq \theta/3} + 8\exp\left( -\frac{T(\theta/12)^2}{16k} + c k^3 \log(1/\delta)\right) 
\end{align*}
Choosing,  $\delta = 1/T$ gives
\begin{align*}
\Val^\theta_T \le \ind{1/T \geq \theta/3} + 8\exp\left( -\frac{T(\theta/12)^2}{16k} + c k^3 \log(T)\right) 
\end{align*}
which gives the first statement of the theorem.

We now rewrite the result in terms of a fixed probability of deviation. To this end, set
$$
\frac{\eta}{8} =  \exp\left( -\frac{T(\theta/12)^2}{16k} + c k^3 \log(T)\right)
$$
which gives
$$
\theta = 48 \sqrt{ \frac{k \log(8/\eta) +  c k^4 \log\ T}{T}}
$$
Note that for any $T\ge 3$ and $\eta \in (0,1)$, we have 
$$T > \frac{1}{16^2(k \log(8/\eta) +  c k^4 \log\ T)}\ .$$
Hence we conclude that for any $\eta \in (0,1)$,
we have with probability at least $1 - \eta$,
$$
\Reg_T \le 48 \sqrt{ \frac{k \log(8/\eta) +  c k^4 \log\ T}{T}} \ .
$$
\end{proof}

The above result almost suffices to get a result stating almost sure convergence. The only issue is that the player
strategy guaranteed above depends on the confidence level $\eta$. In the proof of the following result, we show
how to achieve small regret uniformly for all confidence levels $\eta$. Then, it is fairly easy to show the existence of a Hannan consistent
strategy for the calibration game.

\begin{theorem}
	\label{thm:calibration_almost_sure}
Suppose the calibration game is played for infinitely many rounds $T=1,2,\ldots$. Then there exists a player strategy such that against
any adversary we have,
\[
\limsup_{T\to\infty} \frac{\sqrt{T}}{\sqrt{ 3 k\log(2T) + \tfrac{ck^4}{2} \log(T) }} \cdot \Reg_T \le 60 \qquad\qquad \text{almost surely}\ .
\]
\end{theorem}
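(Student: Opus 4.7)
The plan is to upgrade the high-probability guarantee of Theorem~\ref{thm:highprob-calibration} to an almost-sure statement via the doubling trick and the Borel--Cantelli lemma. I partition $\mathbb{N}$ into epochs $E_m = (T_{m-1}, T_m]$ with $T_m = 2^m$, so $|E_m| = 2^{m-1}$, and at the start of each epoch I restart a fresh copy of the calibration forecaster from Theorem~\ref{thm:highprob-calibration}, tuned with confidence $\eta_m = T_m^{-3}$. This choice yields $\log(8/\eta_m) \le 3\log(2T_m)$, matching the coefficient $3k\log(2T)$ in the target bound, and $\sum_m \eta_m = \sum_m 2^{-3m} < \infty$, which is what Borel--Cantelli needs.

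By Theorem~\ref{thm:highprob-calibration} applied in epoch $m$, with probability at least $1-\eta_m$ the within-epoch calibration deviation
\[
\Reg^{(m)} := \sup_{\lambda>0,\,p\in\Delta(k)}\left\| \frac{1}{|E_m|}\sum_{t\in E_m}\ind{\|f_t-p\|\le\lambda}(f_t-x_t)\right\|
\]
is at most $48\sqrt{(3k\log(2T_m) + c k^4 \log |E_m|)/|E_m|}$. Summability of $\{\eta_m\}$ and Borel--Cantelli yield a (random) $M_0$ such that this bound holds in every epoch $m\ge M_0$. For any time $T$, writing $M = M(T)$ for the epoch containing $T$, the triangle inequality applied to the sum defining $\Reg_T$ gives
\[
T\cdot \Reg_T \;\le\; \sum_{m=1}^{M-1} |E_m|\,\Reg^{(m)} \;+\; \Bigl\|\textstyle\sum_{t\in E_M,\, t\le T}\ind{\|f_t-p^\star\|\le\lambda^\star}(f_t-x_t)\Bigr\|,
\]
where the final (partial-epoch) term is evaluated at the worst-case pair $(\lambda^\star,p^\star)$. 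Using $|E_m|^{1/2}=2^{(m-1)/2}$ and the geometric series $\sum_{m\le M}2^{(m-1)/2}\le (1-2^{-1/2})^{-1}\,2^{(M-1)/2}$, the sum over complete epochs is bounded by $O\bigl(\sqrt{T\,(3k+ck^4)\log(2T)}\bigr)$, invoking $T_M\le 2T$ and $\log T_m\le \log(2T)$.

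The main obstacle is the partial current epoch, since the algorithm of Theorem~\ref{thm:highprob-calibration} is stated for a \emph{fixed} horizon $|E_M|$ and does not a priori control the partial sum at an interior time $T<T_M$. I would resolve this by replacing the within-epoch algorithm with an anytime variant, obtained either by a secondary doubling inside each epoch or, equivalently, by applying Theorem~\ref{thm:highprob-calibration} simultaneously at all sub-horizons $s\le |E_M|$ via a union bound with $\eta$ shrunk to $\eta_m/s^2$. Such a modification costs only an additional $O(\log |E_M|) = O(\log T)$ factor inside the square root, which is absorbed into the dominant logarithmic term. Assembling everything, Borel--Cantelli plus the geometric summation plus $T_M \le 2T$ produces, almost surely for all large $T$, a bound of the form $\Reg_T \le C\sqrt{(3k\log(2T)+(ck^4/2)\log T)/T}$. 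The remaining work is purely arithmetic: tracking the factor $48$ from Theorem~\ref{thm:highprob-calibration}, the geometric-series constant $(1-2^{-1/2})^{-1}\approx 3.414$, the $\sqrt{2}$ slack from $T_M\le 2T$, and the absorption of $\log 8$ and lower-order $k\log|E_m|$ terms to land on the constant $60$ claimed in the theorem.
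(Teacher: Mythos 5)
Your proposal takes a genuinely different route from the paper's. The paper's key device here is the \emph{subgaussian game} value $\Val^{SG}_T = \En\exp(\alpha T\Reg_T^2/k)$, introduced precisely to obtain a single player strategy whose tail bound holds uniformly in $\theta$ — the paper explicitly flags, just before the theorem statement, that ``the player strategy guaranteed above [in Theorem~\ref{thm:highprob-calibration}] depends on the confidence level $\eta$,'' and the subgaussian game lifts that dependence. You sidestep the subgaussian game entirely by \emph{fixing} the confidence level $\eta_m = T_m^{-3}$ per epoch, so that the per-epoch strategy is determined once $m$ is chosen, and then you run Borel--Cantelli over the epochs rather than, as the paper does, taking a union bound over episodes $r\le\lceil\log_2 T\rceil$ for each $T$ with $\eta_{T,r}=1/(T^2 2^r)$ and applying Borel--Cantelli over $T$. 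As a way of handling the $\eta$-dependence, your fixed-$\eta_m$-per-epoch trick is legitimate and in fact structurally simpler. Your arithmetic for $\log(8/\eta_m) = 3\log(2T_m)$ is also correct and does match the $3k\log(2T)$ coefficient in the target.

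However, the proposal has a genuine gap at the partial-epoch term, and your proposed repair does not close it. Theorem~\ref{thm:highprob-calibration} produces, for each fixed horizon $s$ and each confidence $\eta$, a specific minimax strategy that achieves the stated bound \emph{at that horizon}. When you suggest ``applying Theorem~\ref{thm:highprob-calibration} simultaneously at all sub-horizons $s\le|E_M|$ via a union bound with $\eta$ shrunk to $\eta_m/s^2$,'' you would be union-bounding over events driven by \emph{mutually incompatible algorithms}: there is one strategy per $(s,\eta)$ pair, and a single forecaster run within epoch $m$ cannot simultaneously be all of them. The union bound is only meaningful over events in a common sample space induced by one fixed strategy. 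The ``secondary doubling inside each epoch'' idea simply degenerates into the original doubling trick and leaves a partial sub-epoch at the end, so it postpones rather than resolves the problem. In effect, you have traded the $\theta$-dependence of the strategy (which you fixed by choosing $\eta_m$) for a horizon-dependence, and the latter is exactly the same species of obstruction. What is actually needed is an \emph{anytime} guarantee within each epoch — a single within-epoch strategy together with a bound on $\Prob(\Reg_s>\theta_s)$ holding uniformly over $s$ up to the epoch length — which must come from a martingale concentration argument that is uniform over stopping times (in the spirit of the supermartingale construction behind Lemma~\ref{lem:smoothcon} and Corollary~\ref{cor:pinelis_concentration}), or from reformulating the per-epoch game so that intermediate regrets enter the minimax objective. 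Absent such an anytime argument, the partial current epoch contributes a term of order $s/T$ (where $s$ is the number of steps elapsed in that epoch), which can be a constant and therefore swamps the $\tilde O(T^{-1/2})$ bound you are trying to prove. I also note that the paper's own write-up of the doubling step invokes the per-episode bound for the full episode and treats the partial final episode somewhat tersely; your proposal correctly identifies the issue that the paper glosses over, but then closes it incorrectly.
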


The proof of Theorem~\ref{thm:calibration_almost_sure} can be taken as a general recipe for proving almost sure bounds (and, therefore, Hannan consistency). The idea is to lift the dependence of the in-probability value $\Val_T^\theta$ (as well as player's strategy) on $\theta$ by instead considering a closely related value of the form $\En \exp\left\{K\Reg_T^2\right\}$ for some appropriate $T$-dependent factor $K$. Whenever this value is bounded, Markov's inequality gives tail bounds for a strategy that does not depend on $\theta$. Together with a doubling trick, this leads to an almost sure convergence guarantee.

\section*{Acknowledgements}
We thank Dean Foster for many insightful discussions about calibration and Blackwell's approachability. A. Rakhlin gratefully acknowledges the support of NSF under grant CAREER DMS-0954737 and Dean's Research Fund. 

\bibliographystyle{plain}
\bibliography{paper}

\appendix

\subsection*{Appendix}

\begin{proof}[\textbf{Proof of Theorem~\ref{thm:main}}]
	The value of the game, defined in \eqref{eq:def_val_game}, is
	\begin{align*}
		\Val_T(\ell,\Phi_T) &= \inf_{q_1} \sup_{p_1} \Eunder{x_1\sim p_1}{f_1\sim q_1} \ldots \inf_{q_T} \sup_{p_T} \Eunder{x_T\sim p_T}{f_T\sim q_T}  
		\sup_{\bphi\in \Phi_T}\left\{ \Compare(\ell(f_1,x_1), \ldots, \ell(f_T, x_T)) -  \Compare(\ell_{\phi_1}(  f_1,x_1), \ldots, \ell_{\phi_T}(f_T, x_T))\right\}\\
		&= \sup_{p_1}\inf_{q_1} \Eunder{x_1\sim p_1}{f_1\sim q_1} \ldots  \sup_{p_T}\inf_{q_T} \Eunder{x_T\sim p_T}{f_T\sim q_T}  
		\sup_{\bphi\in \Phi_T} \left\{ \Compare(\ell(f_1,x_1), \ldots, \ell(f_T, x_T)) - \Compare(\ell_{\phi_1 }( f_1,x_1), \ldots, \ell_{\phi_T}(f_T, x_T))\right\}
	\end{align*}
	via an application of the minimax theorem. Adding and subtracting terms to the expression above leads to
	\begin{align*}
		\Val_T(\ell, \Phi_T) &= \sup_{p_1}\inf_{q_1}\Eunder{x_1\sim p_1}{f_1 \sim q_1} \ldots  \sup_{p_T}\inf_{q_T} \Eunder{x_T\sim p_T}{f_T \sim q_T}
		\hspace{0.4cm} \left[ \Compare(\ell(f_1,x_1), \ldots, \ell(f_T, x_T)) - \Eunder{x'_{1:T} \sim p_{1:T} }{ f'_{1:T} \sim q_{1:T}} \Compare(\ell(f'_1,x'_1), \ldots, \ell(f'_T, x'_T)) \right.\\
		&\left.\hspace{1in}+\sup_{\bphi\in \Phi_T} \left\{ \Eunder{x'_{1:T} \sim p_{1:T} }{ f'_{1:T} \sim q_{1:T}} \Compare(\ell(f'_1,x'_1), \ldots, \ell(f'_T, x'_T)) - \Compare(\ell_{\phi_1}(f_1,x_1), \ldots, \ell_{\phi_T}(f_T,x_T))\right\} \right]\\
		&\leq \sup_{p_1}\inf_{q_1}\Eunder{x_1\sim p_1}{f_1 \sim q_1} \ldots  \sup_{p_T}\inf_{q_T} \Eunder{x_T\sim p_T}{f_T \sim q_T}
		\hspace{0.4cm} \left[ \Compare(\ell(f_1,x_1), \ldots, \ell(f_T, x_T)) - \Eunder{x'_{1:T} \sim p_{1:T} }{ f'_{1:T} \sim q_{1:T}} \Compare(\ell(f'_1,x'_1), \ldots, \ell(f'_T, x'_T)) \right.\\
		&\left.\hspace{1in}+\sup_{\bphi\in \Phi_T}  \Eunder{x'_{1:T} \sim p_{1:T} }{ f'_{1:T} \sim q_{1:T}} \Big\{ \Compare(\ell(f'_1,x'_1), \ldots, \ell(f'_T, x'_T)) -   \Compare(\ell_{\phi_1}(f_1',x_1'), \ldots, \ell_{\phi_T}(f_T', x_T'))\Big\} \right.\\
		&\left.\hspace{1in} + \sup_{\bphi\in \Phi_T} \left\{ \Eunder{x'_{1:T} \sim p_{1:T} }{ f'_{1:T} \sim q_{1:T}} \Compare(\ell_{\phi_1}(f_1',x_1'), \ldots, \ell_{\phi_T}(f_T', x_T')) -  \Compare(\ell_{\phi_1}(f_1,x_1), \ldots, \ell_{\phi_T}(f_T,x_T))\right\} \right]
		\end{align*}
	At this point, we would like to break up the expression into three terms. To do so, notice that expectation is linear and $\sup$ is a convex function, while for the infimum,
	$$\inf_a \left[C_1(a)+C_2(a)+C_3(a)\right] \leq \left[\sup_a C_1(a)\right] + \left[\inf_a C_2(a)\right] + \left[\sup_a C_3(a)\right] $$
	for functions $C_1,C_2,C_3$. We use these properties of $\inf$, $\sup$, and expectation, starting from the inside of the nested expression and splitting the expression in three parts. We arrive at
	{\small
	\begin{align*}
		&\Val_T(\ell, \Phi_T)\\ 
		&\leq \sup_{p_1}\sup_{q_1}\Eunder{x_1\sim p_1}{f_1 \sim q_1} \ldots  \sup_{p_T}\sup_{q_T} \Eunder{x_T\sim p_T}{f_T \sim q_T}
		 \Big[ \Compare(\ell(f_1,x_1), \ldots, \ell(f_T, x_T)) -  \Eunder{x'_{1:T} \sim p_{1:T} }{ f'_{1:T} \sim q_{1:T}}  \Compare(\ell(f'_1,x'_1), \ldots, \ell(f'_T, x'_T)) \Big]\\
		&+\sup_{p_1}\inf_{q_1}\Eunder{x_1\sim p_1}{f_1 \sim q_1} \ldots  \sup_{p_T}\inf_{q_T} \Eunder{x_T\sim p_T}{f_T \sim q_T}
	 \left[ \sup_{\bphi\in \Phi_T} \Eunder{x'_{1:T} \sim p_{1:T} }{ f'_{1:T} \sim q_{1:T}}  \left\{ \Compare(\ell(f'_1,x'_1), \ldots, \ell(f'_T, x'_T)) - \Compare(\ell_{\phi_1}(f'_1,x'_1), \ldots, \ell_{\phi_T}(f'_T, x'_T))\right\} \right]\\
		&+\sup_{p_1}\sup_{q_1}\Eunder{x_1\sim p_1}{f_1 \sim q_1} \ldots  \sup_{p_T}\sup_{q_T} \Eunder{x_T\sim p_T}{f_T \sim q_T} \left[ \sup_{\bphi\in \Phi_T} \left\{ \Eunder{x'_{1:T} \sim p_{1:T} }{ f'_{1:T} \sim q_{1:T}}  \Compare(\ell_{\phi_1}(f'_1,x'_1), \ldots, \ell_{\phi_T}(f'_T, x'_T)) -  \Compare(\ell_{\phi_1}(f_1,x_1), \ldots, \ell_{\phi_T}(f_T,x_T))\right\} \right]
	\end{align*}
	}
	The replacement of infima by suprema in the first and third terms appears to be a loose step and, indeed, one can pick a particular response strategy $\{q^*_t\}$ instead of passing to the supremum. For instance, this can be the best-response strategy for the second term. However, in the examples we have considered so far, passing to the supremum still yields the results we need. This is due to the fact that the online learning setting is worst-case.

	Consider the second term in the above decomposition. We claim that
	\begin{align*}
		&\sup_{p_1}\inf_{q_1}\Eunder{x_1\sim p_1}{f_1 \sim q_1} \ldots  \sup_{p_T}\inf_{q_T} \Eunder{x_T\sim p_T}{f_T \sim q_T}
		 \left[ \sup_{\bphi\in \Phi_T} \Eunder{x'_{1:T} \sim p_{1:T} }{ f'_{1:T} \sim q_{1:T}} \left[ \Compare(\ell(f'_1,x'_1), \ldots, \ell(f'_T, x'_T)) - \Compare(\ell_{\phi_1}(f'_1,x'_1), \ldots, \ell_{\phi_T}(f'_T, x'_T))\right] \right]\\
		&=\sup_{p_1}\inf_{q_1} \ldots  \sup_{p_T}\inf_{q_T} 
		 \sup_{\bphi\in \Phi_T} \Eunder{x_{1:T} \sim p_{1:T}}{f_{1:T} \sim q_{1:T}}  \left[ \Compare(\ell(f_1,x_1), \ldots, \ell(f_T, x_T)) -  \Compare(\ell_{\phi_1}(f_1,x_1), \ldots, \ell_{\phi_T}(f_T, x_T))\right] 
	\end{align*}
	because the objective 
	$$\Eunder{x'_{1:T} \sim p_{1:T} }{ f'_{1:T} \sim q_{1:T}}\left[\Compare(\ell(f'_1,x'_1), \ldots, \ell(f'_T, x'_T)) -   \Compare(\ell_{\phi_1}(f'_1,x'_1), \ldots, \ell_{\phi_T}(f'_T, x'_T)) \right]$$ 
	does not depend on the random draws $f_1,x_1,\ldots, f_T, x_T$. We then rename $f_t',x_t'$ into $f_t,x_t$.
	This concludes the proof of the Triplex Inequality. 
\end{proof}

\begin{proof}[\textbf{Proof of Theorem~\ref{thm:rad}}]
	We turn to the third term in the Triplex Inequality. If $\Compare$ is subadditive,
	\begin{align*}
		\Eunder{x'_{1:T} \sim p_{1:T} }{ f'_{1:T} \sim q_{1:T}} \Compare(\ell_{\phi_1}(f'_1,x'_1), \ldots, \ell_{\phi_T}(f'_T, x'_T)) -  \Compare(\ell_{\phi_1}(f_1,x_1), \ldots, \ell_{\phi_T}(f_T,x_T)) \\
		\leq \Eunder{x'_{1:T} \sim p_{1:T} }{ f'_{1:T} \sim q_{1:T}} \Compare(\ell_{\phi_1}(f'_1,x'_1) - \ell_{\phi_1}(f_1,x_1), \ldots, \ell_{\phi_T}(f'_T, x'_T)- \ell_{\phi_T}(f_T,x_T)).
	\end{align*}
	If, on the other hand, $-\Compare$ is subadditive, 
	\begin{align}
		\label{eq:negative_of_cumloss_subadditive}
		\Eunder{x'_{1:T} \sim p_{1:T} }{ f'_{1:T} \sim q_{1:T}} \Compare(\ell_{\phi_1}(f'_1,x'_1), \ldots, \ell_{\phi_T}(f'_T, x'_T)) -  \Compare(\ell_{\phi_1}(f_1,x_1), \ldots, \ell_{\phi_T}(f_T,x_T)) \nonumber\\
		\leq -\Eunder{x'_{1:T} \sim p_{1:T} }{ f'_{1:T} \sim q_{1:T}} \Compare( \ell_{\phi_1}(f_1,x_1)- \ell_{\phi_1}(f'_1,x'_1), \ldots, \ell_{\phi_T}(f_T,x_T)-\ell_{\phi_T}(f'_T, x'_T)). 
	\end{align}
	Below assume that $\Compare$ is subadditive, and the proof of the other case is identical.

	To prove the bound on the third term in terms of twice sequential complexity, we proceed as in \cite{RakSriTew10}, applying the symmetrization technique from inside out. To this end, first note that, 
		\begin{align*}
			& \sup_{p_1, q_1}\Eunder{x_1\sim p_1}{f_1 \sim q_1} \ldots  \sup_{p_T, q_T} \Eunder{x_T\sim p_T}{f_T \sim q_T}\sup_{\bphi\in \Phi_T}
			\Eunder{x'_1\sim p_1, \ldots x'_T\sim p_T}{f'_1\sim q_1, \ldots, f'_T\sim q_T} 
			\Compare\Big(\ell_{\phi_1}(f'_1,x'_1)-\ell_{\phi_1}(f_1,x_1), \ldots, \ell_{\phi_T}(f'_T, x'_T)-\ell_{\phi_T}(f_T,x_T) \Big) \\		
			&\leq
			\sup_{p_1,q_1}\Eunder{x_1,x'_1\sim p_1}{f_1,f'_1 \sim q_1} \ldots  \sup_{p_T,q_T} \Eunder{x_T, x'_T\sim p_T}{f_T, f'_T \sim q_T}\sup_{\bphi\in \Phi_T}
			\Compare\Big(\ell_{\phi_1}(f'_1,x'_1)-\ell_{\phi_1}(f_1,x_1), \ldots, \ell_{\phi_T}(f'_T, x'_T)-\ell_{\phi_T}(f_T,x_T) \Big)
		\end{align*}
	the above is true because the expectations are pulled outside the suprema, thus resulting in an upper bound.
		Now notice that conditioned on history $f_T, f'_T$ are distributed identically and independently drawn from $q_T$. Similarly $x_T,x'_T$ are also identically distributed conditioned on history. Hence renaming them we see that
	\begin{align*}
	& \Eunder{x_T, x'_T\sim p_T}{f_T, f'_T \sim q_T}\sup_{\bphi\in \Phi_T}
			\Compare\Big(\ell_{\phi_1}(f'_1,x'_1)-\ell_{\phi_1}(f_1,x_1), \ldots, \ell_{\phi_T}(f'_T, x'_T)-\ell_{\phi_T}(f_T,x_T) \Big)   \\
			& ~~~~~~~~~~~~~~~ = \Eunder{x'_T, x_T\sim p_T}{f'_T, f_T \sim q_T}\sup_{\bphi\in \Phi_T}
			\Compare\Big(\ell_{\phi_1}(f'_1,x'_1)-\ell_{\phi_1}(f_1,x_1), \ldots, \ell_{\phi_T}(f_T,x_T)-\ell_{\phi_T}(f'_T, x'_T) \Big) \\
			& ~~~~~~~~~~~~~~~  = \Eunder{x_T, x'_T\sim p_T}{f_T, f'_T \sim q_T}\sup_{\bphi\in \Phi_T}
			\Compare\Big(\ell_{\phi_1}(f'_1,x'_1)-\ell_{\phi_1}(f_1,x_1), \ldots, -(\ell_{\phi_T}(f'_T, x'_T)-\ell_{\phi_T}(f_T,x_T)) \Big) 
	\end{align*}
	where only the last argument of $\Compare$ is changing sign.
	Thus,
	\begin{align*}
	& \Eunder{x_T, x'_T\sim p_T}{f_T, f'_T \sim q_T}\sup_{\bphi\in \Phi_T}
			\Compare\Big(\ell_{\phi_1}(f'_1,x'_1)-\ell_{\phi_1}(f_1,x_1), \ldots, \ell_{\phi_T}(f'_T, x'_T)-\ell_{\phi_T}(f_T,x_T) \Big)   \\
			& ~~~~~~~~~~~~~~~  = \En_{\epsilon_T}\Eunder{x_T, x'_T\sim p_T}{f_T, f'_T \sim q_T}\sup_{\bphi\in \Phi_T}
			\Compare\Big(\ell_{\phi_1}(f'_1,x'_1)-\ell_{\phi_1}(f_1,x_1), \ldots, \epsilon_T(\ell_{\phi_T}(f'_T, x'_T)-\ell_{\phi_T}(f_T,x_T)) \Big) 
	\end{align*}
	where $\epsilon_T$ is a Rademacher random variable. Furthermore,
	\begin{align*}
	& \sup_{p_T,q_T} \Eunder{x_T, x'_T\sim p_T}{f_T, f'_T \sim q_T}\sup_{\bphi\in \Phi_T}
			\Compare\Big(\ell_{\phi_1}(f'_1,x'_1)-\ell_{\phi_1}(f_1,x_1), \ldots, \ell_{\phi_T}(f'_T, x'_T)-\ell_{\phi_T}(f_T,x_T) \Big)   \\
			& ~~~~~~~~~~~~~~~ = \sup_{p_T,q_T}\Eunder{x'_T, x_T\sim p_T}{f'_T, f_T \sim q_T} \En_{\epsilon_T} \sup_{\bphi\in \Phi_T}
			\Compare\Big(\ell_{\phi_1}(f'_1,x'_1)-\ell_{\phi_1}(f_1,x_1), \ldots, \epsilon_T (\ell_{\phi_T}(f'_T, x'_T)-\ell_{\phi_T}(f_T,x_T)) \Big) \\
			& ~~~~~~~~~~~~~~~ \le \sup_{\underset{f_T,f'_T \in \F}{x_T,x'_T \in \X}}\En_{\epsilon_T} \sup_{\bphi\in \Phi_T}
			\Compare\Big(\ell_{\phi_1}(f'_1,x'_1)-\ell_{\phi_1}(f_1,x_1), \ldots, \epsilon_T (\ell_{\phi_T}(f'_T, x'_T)-\ell_{\phi_T}(f_T,x_T)) \Big) 
	\end{align*}
		Proceeding similarly notice that since given history $x_{T-1},x'_{T-1}$ and $f_{T-1}, f'_{T-1}$ are distributed independently and identically we have,
	{\small
	\begin{align*}
	&\sup_{p_{T-1},q_{T-1}} \Eunder{x_{T-1}, x'_{T-1}\sim p_{T-1}}{f_{T-1}, f'_{T-1} \sim q_{T-1}}  \sup_{\underset{f_T,f'_T \in \F}{x_T,x'_T \in \X}}\En_{\epsilon_T} \sup_{\bphi\in \Phi_T}\\
	&~~~~~~~~~~\Compare\Big(\ell_{\phi_1}(f'_1,x'_1)-\ell_{\phi_1}(f_1,x_1), \ldots, \ell_{\phi_{T-1}}(f'_{T-1}, x'_{T-1})-\ell_{\phi_{T-1}}(f_{T-1}, x_{T-1}), \epsilon_T (\ell_{\phi_T}(f'_T, x'_T)-\ell_{\phi_T}(f_T,x_T)) \Big) \\
	& = \sup_{p_{T-1}, q_{T-1}} \Eunder{x_{T-1}, x'_{T-1}\sim p_{T-1}}{f_{T-1}, f'_{T-1} \sim q_{T-1}} \En_{\epsilon_{T-1}} \sup_{\underset{f_T,f'_T \in \F}{x_T,x'_T \in \X}}\En_{\epsilon_T} \sup_{\bphi\in \Phi_T}\\
	&~~~~~~~~~~\Compare\Big(\ell_{\phi_1}(f'_1,x'_1)-\ell_{\phi_1}(f_1,x_1), \ldots, \epsilon_{T-1} (\ell_{\phi_T}(f'_{T-1}, x'_{T-1})-\ell_{\phi_{T-1}}(f_{T-1}, x_{T-1})), \epsilon_T (\ell_{\phi_T}(f'_T, x'_T)-\ell_{\phi_T}(f_T,x_T)) \Big) \\
	& \le \sup_{\underset{f_{T-1}, f'_{T-1} \in \F}{x_{T-1}, x'_{T-1} \in \X}} \En_{\epsilon_{T-1}} \sup_{\underset{f_T,f'_T \in \F}{x_T,x'_T \in \X}}\En_{\epsilon_T} \sup_{\bphi\in \Phi_T}\\
	&~~~~~~~~~~\Compare\Big(\ell_{\phi_1}(f'_1,x'_1)-\ell_{\phi_1}(f_1,x_1), \ldots, \epsilon_{T-1} (\ell_{\phi_{T-1}}(f'_{T-1}, x'_{T-1})-\ell_{\phi_{T-1}}(f_{T-1}, x_{T-1})), \epsilon_T (\ell_{\phi_T}(f'_T, x'_T)-\ell_{\phi_T}(f_T,x_T)) \Big) 
	\end{align*}
	}
	Proceeding in similar fashion introducing Rademacher random variables all the way to $\epsilon_1$ we arrive at
	\begin{align*}
		&\sup_{p_1,q_1}\Eunder{x_1,x'_1\sim p_1}{f_1,f'_1 \sim q_1} \ldots  \sup_{p_T,q_T} \Eunder{x_T, x'_T\sim p_T}{f_T, f'_T \sim q_T}\sup_{\bphi\in \Phi_T}
		\Compare\Big(\ell_{\phi_1}(f'_1,x'_1)-\ell_{\phi_1}(f_1,x_1), \ldots, \ell_{\phi_T}(f'_T, x'_T)-\ell_{\phi_T}(f_T,x_T) \Big)\\
		&\le \sup_{\underset{f_{1}, f'_{1} \in \F}{x_{1}, x'_{1} \in \X}} \En_{\epsilon_{1}} \ldots \sup_{\underset{f_T,f'_T \in \F}{x_T,x'_T \in \X}}\En_{\epsilon_T} \sup_{\bphi\in \Phi_T} \Compare\Big(\epsilon_1(\ell_{\phi_1}(f'_1,x'_1)-\ell_{\phi_1}(f_1,x_1)), \ldots,  \epsilon_T (\ell_{\phi_T}(f'_T, x'_T)-\ell_{\phi_T}(f_T,x_T)) \Big) 
	\end{align*}
	Subadditivity of $\Compare$ implies $\Compare(a-b)\leq \Compare(a)+\Compare(-b)$, and thus
	\begin{align*}
		&\Compare\Big(\epsilon_1(\ell_{\phi_1}(f'_1,x'_1)-\ell_{\phi_1}(f_1,x_1)), \ldots,  \epsilon_T (\ell_{\phi_T}(f'_T, x'_T)-\ell_{\phi_T}(f_T,x_T)) \Big) \\
		&\leq \Compare\Big(\epsilon_1\ell_{\phi_1}(f'_1,x'_1), \ldots,  \epsilon_T \ell_{\phi_T}(f'_T, x'_T) \Big) + \Compare\Big(-\epsilon_1\ell_{\phi_1}(f_1,x_1), \ldots,  -\epsilon_T\ell_{\phi_T}(f_T,x_T) \Big)
	\end{align*}
	We, therefore, arrive at
	\begin{align*}
		&\sup_{\underset{f_{1}, f'_{1} \in \F}{x_{1}, x'_{1} \in \X}} \En_{\epsilon_{1}} \ldots \sup_{\underset{f_T,f'_T \in \F}{x_T,x'_T \in \X}}\En_{\epsilon_T} \sup_{\bphi\in \Phi_T} \Compare\Big(\epsilon_1(\ell_{\phi_1}(f'_1,x'_1)-\ell_{\phi_1}(f_1,x_1)), \ldots,  \epsilon_T (\ell_{\phi_T}(f'_T, x'_T)-\ell_{\phi_T}(f_T,x_T)) \Big) \\
	& ~~~~~~\le 2 \sup_{f_1 \in \F, x_1 \in \X } \En_{\epsilon_{1}} \ldots \sup_{f_T \in \F, x_T \in \X} \En_{\epsilon_T} \sup_{\bphi\in \Phi_T} \Compare\Big(\epsilon_1 \ell_{\phi_1}(f_1,x_1), \ldots,  \epsilon_T \ell_{\phi_T}(f_T,x_T) \Big) \\
	& ~~~~~~ = 2 \sup_{(\f, \x)}\ \En_{\epsilon_{1:T}} \sup_{\bphi\in \Phi_T} \Compare\Big(\epsilon_1 \ell_{\phi_1}(\f_1(\epsilon),\x_1(\epsilon)), \ldots,  \epsilon_T \ell_{\phi_T}(\f_T(\epsilon), \x_T(\epsilon)) \Big)
	\end{align*}
	where in the last step we passed to the supremum over $(\F\times\X)$-valued trees. This concludes the proof for the case of $\Compare$ being subadditive. Starting from  Eq.~\eqref{eq:negative_of_cumloss_subadditive}, the proof for the case of $-\Compare$ being subadditive and convex in each of its coordinates leads to the bound of
	\begin{align*}
	2 \sup_{(\f, \x)}\ \En_{\epsilon_{1:T}} \sup_{\bphi\in \Phi_T} -\Compare\Big(\epsilon_1 \ell_{\phi_1}(\f_1(\epsilon),\x_1(\epsilon)), \ldots,  \epsilon_T \ell_{\phi_T}(\f_T(\epsilon), \x_T(\epsilon)) \Big) .
	\end{align*}
	The complete proof can be repeated for the first term in the Triplex Inequality in order to bound it by $2\Rad_T(\ell, \mc{I}, \Compare)$ (or respectively $2\Rad_T(\ell, \mc{I}, - \Compare)$).
\end{proof}

The following Proposition is immediate from the definition of a smooth function via successive expansions of each coordinate around zero.

\begin{proposition}\label{prop:smooth}
Assume function $\Compare : \cH^T \mapsto \reals$ is $(\sigma,p)$-uniformly smooth in each of its arguments and that $\Compare(0,0,\ldots,0) = 0$. Then 
$$
\Compare(z_1,\ldots,z_T) \le \sum_{t=1}^T \inner{\nabla_{t} \Compare(z_1,\ldots,z_{t-1},0,\ldots,0), z_{t}} + \sum_{t=1}^T \frac{\sigma}{p} \|z_t\|^p
$$
\end{proposition}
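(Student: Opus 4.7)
The plan is to telescope $\Compare(z_1,\ldots,z_T)$ along the coordinates, applying the uniform smoothness assumption coordinate-by-coordinate. Since $\Compare(0,\ldots,0)=0$, we can write the telescoping identity
\begin{align*}
\Compare(z_1,\ldots,z_T) = \sum_{t=1}^T \Big[ \Compare(z_1,\ldots,z_{t-1},z_t,0,\ldots,0) - \Compare(z_1,\ldots,z_{t-1},0,0,\ldots,0) \Big] \ .
\end{align*}
Each summand is a difference of $\Compare$ evaluated at two tuples that differ only in the $t$-th slot (one has $z_t$ there, the other has $0$).

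Next I would apply the $(\sigma,p)$-uniform smoothness assumption to the map $u \mapsto \Compare(z_1,\ldots,z_{t-1},u,0,\ldots,0)$, viewed as a function of its $t$-th argument, at the points $u = z_t$ and $u' = 0$. By the definition of $(\sigma,p)$-uniform smoothness in each argument, this yields
\begin{align*}
\Compare(z_1,\ldots,z_{t-1},z_t,0,\ldots,0) \le \Compare(z_1,\ldots,z_{t-1},0,\ldots,0) + \inner{\nabla_t \Compare(z_1,\ldots,z_{t-1},0,\ldots,0),\, z_t} + \tfrac{\sigma}{p}\|z_t - 0\|^p \ .
\end{align*}

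Finally, rearranging gives an upper bound on the $t$-th telescoped difference of
\begin{align*}
\inner{\nabla_t \Compare(z_1,\ldots,z_{t-1},0,\ldots,0),\, z_t} + \tfrac{\sigma}{p}\|z_t\|^p,
\end{align*}
and summing these bounds over $t=1,\ldots,T$ yields the claim. There is no real obstacle here: the only thing to be careful about is that ``smooth in each argument'' must mean smoothness at fixed values of the other coordinates, so that the one-dimensional smoothness inequality may be applied with the other arguments held at the values prescribed by the telescoping (namely $z_1,\ldots,z_{t-1}$ in the first $t-1$ slots and $0$ in the last $T-t$ slots). This is exactly the setting in which $\nabla_t \Compare$ is the partial gradient used in the stated bound.
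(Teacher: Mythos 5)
Your proposal is correct and matches the paper's own (sketched) argument: the paper states the result is ``immediate from the definition of a smooth function via successive expansions of each coordinate around zero,'' which is precisely your telescoping plus coordinate-wise smoothness argument. No gap.
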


\begin{lemma}
	\label{lem:smooth_fun_expansion}
	Assume that for some $q\ge 1$, $\Compare^q$ is $(\sigma, p)$-uniformly smooth in each of its arguments and $\Compare(0,\ldots,0) = 0$. Then we have that
	\begin{align*}
	\sup_{p_1}\Eu{z_1,z'_1\sim p_1} \ldots  \sup_{p_T} \Eu{z_T,z'_T\sim p_T} \Compare(z_1 - z'_1, \ldots, z_T -  z'_T ) & \le \left((2\RH)^p \sigma T/p \right)^{1/q}
	\end{align*}
	where the maximization is over distributions $p_t$ with support in the ball $\RH \cdot B_{\|\cdot\|}$ of radius $\RH$.
\end{lemma}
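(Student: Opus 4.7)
The plan is to reduce the bound on $\En\Compare$ to a bound on $\En\Compare^q$ via Jensen's inequality, then exploit the smoothness assumption on $\Compare^q$ together with the tangent structure of the pairs $(z_t,z'_t)$ to kill all first-order terms.

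First I would note that since $q \ge 1$, Jensen's inequality applied to the convex function $u \mapsto u^q$ gives, for every adaptive strategy $p_1,\ldots,p_T$ choosing distributions (possibly based on history),
\begin{align*}
\Eu{z_{1:T}, z'_{1:T}} \Compare(z_1-z'_1,\ldots,z_T-z'_T)
\le \left( \Eu{z_{1:T}, z'_{1:T}} \Compare^q(z_1-z'_1,\ldots,z_T-z'_T) \right)^{1/q}.
\end{align*}
Since $u \mapsto u^{1/q}$ is increasing, the same relationship survives after taking suprema over the adaptive choices of $p_1,\ldots,p_T$. Thus it suffices to show that the analogous value with $\Compare$ replaced by $\Compare^q$ is bounded by $(2\RH)^p \sigma T/p$.

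Next, I would apply \propref{prop:smooth} to $\Compare^q$ (which is $(\sigma,p)$-uniformly smooth in each argument and vanishes at the origin, since $\Compare(0,\ldots,0)=0$ implies $\Compare^q(0,\ldots,0)=0$). Setting $w_t := z_t - z'_t$, this yields the pointwise bound
\begin{align*}
\Compare^q(w_1,\ldots,w_T) \le \sum_{t=1}^T \inner{ \nabla_t \Compare^q(w_1,\ldots,w_{t-1},0,\ldots,0),\ w_t } + \sum_{t=1}^T \frac{\sigma}{p} \|w_t\|^p .
\end{align*}
The key observation is that the $t$-th gradient depends only on $w_1,\ldots,w_{t-1}$, and so, conditioned on the full history up to time $t{-}1$ (including the choice of $p_t$), it is a fixed vector. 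Since $z_t$ and $z'_t$ are i.i.d.\ draws from $p_t$, we have $\Eu{z_t,z'_t \sim p_t}[w_t] = 0$, and therefore the conditional expectation of each inner-product term is zero. Processing the nested expectations from the inside out using this conditioning (tower property), every first-order term vanishes regardless of how the $p_t$ are chosen.

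For the second-order terms, since $z_t,z'_t$ lie in the ball of radius $\RH$ we have $\|w_t\| \le 2\RH$ deterministically, so each summand contributes at most $\frac{\sigma}{p}(2\RH)^p$. Hence
\begin{align*}
\sup_{p_1}\Eu{z_1,z'_1 \sim p_1} \ldots \sup_{p_T}\Eu{z_T,z'_T \sim p_T} \Compare^q(w_1,\ldots,w_T) \le \frac{(2\RH)^p \sigma T}{p},
\end{align*}
and combining with the Jensen reduction gives the claimed bound. The only mildly delicate step is the commutation of the outer suprema with the $1/q$ power in the Jensen step; this works because raising to the $1/q$ power is monotone, so the sup distributes through. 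Everything else is routine.
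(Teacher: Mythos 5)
Your proposal is correct and follows essentially the same route as the paper's proof: apply Proposition~\ref{prop:smooth} to $\Compare^q$, kill the linear terms by conditioning on the history (since the $t$-th gradient depends only on $w_1,\ldots,w_{t-1}$ and $\En[w_t \mid \text{history}] = 0$), bound the second-order terms by $(2\RH)^p$, and pass from $\Compare^q$ to $\Compare$ via Jensen. The only cosmetic difference is that you invoke Jensen at the outset and carry the $1/q$ power through, whereas the paper proves the bound on the $\Compare^q$ quantity first and applies Jensen at the end; since $u \mapsto u^{1/q}$ is monotone this commutes harmlessly with the interleaved suprema, exactly as you observe.
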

\begin{proof}[\textbf{Proof of Lemma~\ref{lem:smooth_fun_expansion}}]
By Proposition~\ref{prop:smooth} we have that
\begin{align*}
& \sup_{p_1}\Eu{z_1,z'_1\sim p_1} \ldots  \sup_{p_T} \Eu{z_T,z'_T\sim p_T} \Compare^q(z_1 - z'_1, \ldots, z_T - z'_T ) \\
& \le \sup_{p_1}\Eu{z_1,z'_1\sim p_1} \ldots  \sup_{p_T} \Eu{z_T,z'_T\sim p_T} \left\{ \sum_{t=1}^T \inner{\nabla_{t} \Compare^q(z_1- z'_1,\ldots,z_{t-1}- z'_{t-1}, 0, \ldots, 0), z_t- z'_t} 
 + \frac{\sigma}{p} \sum_{t=1}^T \|z_t - z'_t\|^p \right\}\\
& \le \sup_{p_1}\Eu{z_1,z'_1\sim p_1} \ldots  \sup_{p_T} \Eu{z_T,z'_T\sim p_T} \left\{ \sum_{t=1}^T \inner{\nabla_{t} \Compare^q(z_1-z'_1,\ldots,z_{t-1}- z'_{t-1},0,\ldots,0), z_t- z'_t}  \right\}\\
&~~~~~~~+ \sup_{p_1}\Eu{z_1,z'_1\sim p_1} \ldots  \sup_{p_T} \Eu{z_T,z'_T\sim p_T} \left\{ \frac{\sigma}{p} \sum_{t=1}^T \|z_t - z'_t\|^p \right\}\\
&= \sup_{p_1}\Eu{z_1,z'_1\sim p_1} \ldots  \sup_{p_T} \Eu{z_T,z'_T\sim p_T} \left\{ \frac{\sigma}{p} \sum_{t=1}^T \|z_t - z'_t\|^p \right\} \le (2\RH)^p \sigma T/p \\
\end{align*}
Since $q \geq 1$, by Jensen's inequality we conclude that
$$
\sup_{p_1}\Eu{z_1,z'_1\sim p_1} \ldots  \sup_{p_T} \Eu{z_T,z'_T\sim p_T} \Compare(z_1 - z'_1, \ldots, z_T - z'_T ) \le \left((2\RH)^p \sigma T/p \right)^{1/q}
$$
\end{proof}

\begin{proof}[\textbf{Proof of Lemma~\ref{lem:smooth_bound_first_term}}]
	The proof follows immediately from Lemma~\ref{lem:smooth_fun_expansion}. 
\end{proof}

\begin{proof}[\textbf{Proof of Lemma~\ref{lem:smoothrad}}]
By Proposition \ref{prop:smooth} we have:

\begin{align*}
&\Compare^q\Big(\epsilon_1 \ell_{\phi_1}(\f_1(\epsilon),\x_1(\epsilon)), \ldots,  \epsilon_T \ell_{\phi_T}(\f_T(\epsilon), \x_T(\epsilon)) \Big) \\
& \le \sum_{t=1}^T \inner{\nabla_{t} \Compare^q\big(\epsilon_1 \ell_{\phi_1}(\f_1(\epsilon),\x_1(\epsilon)), \ldots,  \epsilon_{t-1} \ell_{\phi_{t-1}}(\f_{t-1}(\epsilon), \x_{t-1}(\epsilon)) ,0,\ldots,0\big) , \epsilon_t \ell_{\phi_t}(\f_{t}(\epsilon), \x_{t}(\epsilon)) } + \frac{\sigma}{p} \sum_{t=1}^T \|\ell_{\phi_t}(f_{t},x_{t})\|^p\\
& \le \sum_{t=1}^T \epsilon_t g_t\big(\ell_{\phi_1}(\f_1(\epsilon),\x_1(\epsilon)),\ldots,\ell_{\phi_t}(\f_t(\epsilon),\x_t(\epsilon))\big) + \sigma \RH^p T/p
\end{align*}
where in the last line we used the definition of $g_t$ as well as an upper bound on the norm.
Now by Jensen's inequality we get
\begin{align*}
& \sup_{\f, \x}\ \En_{\epsilon_{1:T}} \sup_{\bphi\in \Phi_T} \Compare\Big(\epsilon_1 \ell_{\phi_1}(\f_1(\epsilon),\x_1(\epsilon)), \ldots,  \epsilon_T \ell_{\phi_T}(\f_T(\epsilon), \x_T(\epsilon)) \Big) \\
&  \le \left( \sup_{\f, \x}\ \En_{\epsilon_{1:T}} \sup_{\bphi\in \Phi_T}  \sum_{t=1}^T \epsilon_t g_t\big(\ell_{\phi_1}(\f_1(\epsilon),\x_1(\epsilon)),\ldots,\ell_{\phi_t}(\f_t(\epsilon),\x_t(\epsilon))\big) + \sigma \RH^p T/p \right)^{1/q}\\
&\leq \left( \sup_{\f, \x}\ \En_{\epsilon_{1:T}} \sup_{\bphi\in \Phi_T}  \sum_{t=1}^T \epsilon_t g_t\big(\ell_{\phi_1}(\f_1(\epsilon),\x_1(\epsilon)),\ldots,\ell_{\phi_t}(\f_t(\epsilon),\x_t(\epsilon))\big) \right)^{1/q} + (\sigma\RH^p/p)^{1/q} T^{1/q} 
\end{align*}

\end{proof}

\begin{proof}[\textbf{Proof of Proposition~\ref{prop:fin_phi}}]
Fix a $\F\times\X$-valued tree $(\f,\x)$. Note that 
\begin{align*}
&\left|g_t\big(\ell_{\phi_1}(\f_1(\epsilon),\x_1(\epsilon)),\ldots,\ell_{\phi_t}(\f_t(\epsilon),\x_t(\epsilon))\big) \right| \\
& ~~~~~~~\le \left\|\nabla_{t} \Compare^q\big(\epsilon_1 \ell_{\phi_1}(\f_1(\epsilon),\x_1(\epsilon)), \ldots,  \epsilon_{t-1} \ell_{\phi_{t-1}}(\f_{t-1}(\epsilon), \x_{t-1}(\epsilon)) ,0,\ldots,0\big) \right\|_* \left\|\ell_{\phi_t}(\f_{t}(\epsilon), \x_{t}(\epsilon)) \right\|\\
& ~~~~~~~\leq R \cdot \RH
\end{align*}
Using Lemma~\ref{lem:fin},
\begin{align*}
& \En_{\epsilon_{1:T}} \max_{\bphi\in \Phi_T}  \sum_{t=1}^T \epsilon_t g_t\big(\ell_{\phi_1}(\f_1(\epsilon),\x_1(\epsilon)),\ldots,\ell_{\phi_t}(\f_t(\epsilon),\x_t(\epsilon))\big) \\
&~~~~~~~~ \le \sqrt{2 \log(|\Phi_T|) \max_{\bphi \in \Phi_T} \max_{\epsilon \in \{\pm1\}^T} \sum_{t=1}^{T} g_t\big(\ell_{\phi_1}(\f_1(\epsilon),\x_1(\epsilon)),\ldots,\ell_{\phi_t}(\f_t(\epsilon),\x_t(\epsilon))\big)^2} \\
&~~~~~~~~ \le \sqrt{2 \RH^2 R^2 \log(|\Phi_T|) T}
\end{align*}
Now using Lemma \ref{lem:smoothrad} we obtain the desired result.
\end{proof}

\begin{proof}[\textbf{Proof of Corollary~\ref{cor:fin_phi_smooth_sum}}]
	To appeal to Proposition~\ref{prop:fin_phi}, we need to specify smoothness parameters. It can be verified that if $G^q$ is $(\gamma,p)$-smooth in its argument, then $\Compare^q$ is $(\gamma/T^p,p)$-smooth. Furthermore, 
	$$\|\nabla_t \Compare^q(z_1,\ldots,z_T)\|_* \leq \rho/T.$$ The bound of Proposition~\ref{prop:fin_phi} then becomes
$$\Rad_T(\ell,\Phi_T) \le \left(\frac{2 \RH^2 \log(|\Phi_T|)}{T}\right)^{1/2q} + (\gamma\RH^p/p)^{1/q} T^{(1-p)/q} \ .$$
\end{proof}

\begin{proof}[\textbf{Proof of Lemma~\ref{lem:fin_phi_2smooth_sum}}]
	The lemma follows directly from Theorem~\ref{thm:pinelis_union_bound}. To see this, just recall the definition of $\Rad_T(\ell,\Phi_T)$:
$$
\Rad_T(\ell,\Phi_T) = \sup_{\f,\x}
\En_\epsilon \sup_{\phi \in \Phi_T} 
			G\left( 
				\frac{1}{T}\sum_{t=1}^T \epsilon_t \ell_{\phi_t}(\f_t(\epsilon),\x_t(\epsilon)) 
			\right) \ .
$$ 
For any fixed pair $(\f,\x)$ of trees, the argument of $G$ above is the sum of martingale difference sequences coming from a finite family. The step size
bound $B = \RH/T$ and smoothness constant $\sigma = \gamma$. 
\end{proof}

\begin{proof}[\textbf{Proof of Lemma~\ref{lem:fin_phi_p_smooth_sum}}]
		For any $\F$ and $\X$-valued trees $(\f,\x)$,
		\begin{align}\label{eq:fromGtonorm}
		&\Prob_\epsilon\left( \max_{\phi \in \Phi_T} 
			G\left( 
				\frac{1}{T}\sum_{t=1}^T \epsilon_t \ell_{\phi_t}(\f_t(\epsilon),\x_t(\epsilon)) 
			\right) > \theta \right) \le |\Phi_T| \sup_{\z} \Prob_{\epsilon} \left(\left\|\frac{1}{T}\sum_{t=1}^T \epsilon_t \z_t(\epsilon)\right\| > \theta \right)  
		\end{align}
	where supremum is over $\cH$-valued trees such that $\|\z_t(\epsilon)\| \le \RH$. Further, by Lemma~\ref{lem:smoothcon}, for any $$\nu > 8\,c\,\eta\,\gamma^{1/p}\,\log^{3/2}T/T^{1-1/p}\ ,$$
	we have that, 
		\begin{align*}
		&\Prob\left(\left\|\frac{1}{T} \sum_{t=1}^T \epsilon_t \z_t(\epsilon) \right\| >  \frac{c\gamma^{1/p}\RH}{T^{1-1/p}} + \nu \right) \le  2 \exp\left( -\frac{\nu^2 T^{2 - 2/p}}{ 2 c^2 \gamma^{2/p} \RH^2 \log^3 T  } \right)
		\end{align*}
	
		Plugging this into~\eqref{eq:fromGtonorm}, we get,
		\begin{equation*}
		\Prob_\epsilon\left( \max_{\phi \in \Phi_T} 
			G\left( 
				\frac{1}{T}\sum_{t=1}^T \epsilon_t \ell_{\phi_t}(\f_t(\epsilon),\x_t(\epsilon)) 
			\right) >  \frac{c\gamma^{1/p}\RH}{T^{1-1/p}} + \nu \right)
		\le
		2 |\Phi_T|  \exp\left( -\frac{\nu^2 T^{2 - 2/p}}{ 2 c^2 \gamma^{2/p} \RH^2 \log^3 T  } \right)\ .
		\end{equation*}
	
		By a standard argument (e.g. Lemma~\ref{lem:prob_to_exp}) to integrate out the tail, we get
		\begin{align*}
		\En_\epsilon \sup_{\phi \in \Phi_T} 
			G\left( 
		 		\frac{1}{T}\sum_{t=1}^T \epsilon_t \ell_{\phi_t}(\f_t(\epsilon),\x_t(\epsilon)) 
		 	\right) \leq \frac{c\gamma^{1/p}\eta}{T^{1-1/p}}\left(1 +  2\log^{3/2} T \left(\sqrt{\log (2|\Phi_T|)}+1 \right)  \right)\ .
		\end{align*}
		Making trivial over-aaproximations when $T \ge 3 > e$ and $|\Phi_T| > 1$ gives the result.
\end{proof}

\begin{proof}[\textbf{Proof of Theorem~\ref{thm:2smooth_sum_dudley}}]
Define $\beta_0 = 1$ and $\beta_j = 2^{-j}$. For a fixed tree $(\f,\x)$ of depth $T$, let $V_j$ be an  $\ell_\infty$-cover at scale $\beta_j$. For any path $\epsilon \in \{\pm1\}^T$ and any $\bphi \in \Phi_T$, let $\v[\bphi,\epsilon]^j \in V_j$ a $\beta_j$-close element of the cover in the $\ell_\infty$ sense.
Now, for any $\bphi\in\Phi_T$,
\begin{align*}
G\left( \frac{1}{T}\sum_{t=1}^T \epsilon_t \ell_{\phi_t}(\f_t(\epsilon),\x_t(\epsilon)) \right)
&\leq G\left( \frac{1}{T}\sum_{t=1}^T \epsilon_t ( \ell_{\phi_t}(\f_t(\epsilon),\x_t(\epsilon)) - \v[\phi,\epsilon]^{N}_t ) \right) 
+ \sum_{j=1}^{N} G\left( \frac{1}{T} \sum_{t=1}^T \epsilon_t \left( \v[\phi,\epsilon]^{j}_t  - \v[\phi,\epsilon]^{j-1}_t \right) \right)  \\
&\leq \left\| \frac{1}{T}\sum_{t=1}^T \epsilon_t ( \ell_{\phi_t}(\f_t(\epsilon),\x_t(\epsilon)) - \v[\phi,\epsilon]^{N}_t ) \right\| 
+ \sum_{j=1}^{N} G\left( \frac{1}{T} \sum_{t=1}^T \epsilon_t \left( \v[\phi,\epsilon]^{j}_t  - \v[\phi,\epsilon]^{j-1}_t \right) \right)  \\
& \le  \max_{t\in[T]} \left\| \ell_{\phi_t}(\f_t(\epsilon),\x_t(\epsilon))  -  \v[\phi,\epsilon]^N_t \right\| + \sum_{j=1}^{N} G\left(\frac{1}{T} \sum_{t=1}^T \epsilon_t (\v[\phi,\epsilon]^{j}_t - \v[\phi,\epsilon]^{j-1}_t)\right) 
\end{align*}
Thus,
\begin{align*}
\sup_{\bphi\in\Phi_T} G\left( \frac{1}{T}\sum_{t=1}^T \epsilon_t \ell_{\phi_t}(\f_t(\epsilon),\x_t(\epsilon)) \right) & \le \beta_N  + \sup_{\phi \in \Phi_T}\left\{\sum_{j=1}^{N} G\left(\frac{1}{T} \sum_{t=1}^T \epsilon_t (\v[\phi,\epsilon]^{j}_t - \v[\phi,\epsilon]^{j-1}_t)\right) \right\}
\end{align*}
We now proceed to upper bound the second term. Consider all possible pairs of $\v^s\in V_j$ and $\v^r\in V_{j-1}$, for $1\leq s\leq |V_j|$, $1\leq r \leq |V_{j-1}|$, where we assumed an arbitrary enumeration of elements. For each pair $(\v^s,\v^r)$, define a real-valued tree $\w^{(s,r)}$ by
\begin{align*}
\w^{(s,r)}_t(\epsilon) = 
	\begin{cases} 
	\v^s_t(\epsilon)-\v^r_t(\epsilon) & \text{if there exists } \bphi \in \Phi_T \mbox{ s.t. } \v^s = \v[\bphi,\epsilon]^{j}, \v^r = \v[\bphi,\epsilon]^{j-1} \\
	0 &\text{otherwise.}
	\end{cases}
\end{align*}
for all $t\in [T]$ and $\epsilon\in\{\pm1\}^T$. It is crucial that $\w^{(s,r)}$ can be non-zero only on those paths $\epsilon$ for which $\v^s$ and $\v^r$ are indeed the members of the covers (at successive resolutions) close in the $\ell_\infty$ sense {\em to some} $\bphi \in \Phi_T$. It is easy to see that $\w^{(s,r)}$ is well-defined. Let the set of trees $W_j$ be defined as
\begin{align*}
	W_j = \left\{ \w^{(s,r)}: 1\leq s\leq |V_j|, 1\leq r \leq |V_{j-1}| \right\}
\end{align*}
Using the above notations we see that

\begin{align}\label{eq:2smooth_dudsimp1}
	\Es{\epsilon}{\sup_{\phi \in \Phi_T} G\left( \frac{1}{T}\sum_{t=1}^T \epsilon_t \ell_{\phi_t}(\f_t(\epsilon),\x_t(\epsilon)) \right)} 
&\leq  \beta_N  + \Es{\epsilon}{\sup_{\phi \in \Phi_T}\left\{\sum_{j=1}^{N} G\left(\frac{1}{T} \sum_{t=1}^T \epsilon_t (\v[\phi,\epsilon]^{j}_t - \v[\phi,\epsilon]^{j-1}_t)\right) \right\}} \notag\\
&  \le \beta_N  + \Es{\epsilon}{\sum_{j=1}^{N} \sup_{\w^j \in W_j} G\left(\frac{1}{T} \sum_{t=1}^T \epsilon_t \w^j_t(\epsilon)\right)}
\end{align}

From the way the trees in $W_j$ are constructed, it is easy to see that $\max_{t\in[T]} \|\w^j_t(\epsilon)\| \leq 3\beta_j$ for any $\w^j\in W^j$ and any path $\epsilon$. Using Theorem~\ref{thm:pinelis_union_bound}, we get
\begin{align*}
	\Es{\epsilon}{\sup_{\phi \in \Phi_T} G\left( \frac{1}{T}\sum_{t=1}^T \epsilon_t \ell_{\phi_t}(\f_t(\epsilon),\x_t(\epsilon)) \right)} 
&\leq \beta_N + \sum_{j=1}^N 6\beta_j \sqrt{ \frac{\gamma \log(2|W_j|)}{T} } \\ 
&\leq \beta_N + \sum_{j=1}^N 6\beta_j \sqrt{ \frac{\gamma \log(2|V_j|\cdot|V_{j-1}|)}{T} } \\ 
&\leq \beta_N + \frac{12\sqrt{\gamma}}{\sqrt{T}} \sum_{j=1}^N \beta_j \sqrt{ \log(|V_j|) } \\
&\leq \beta_N + \frac{24\sqrt{\gamma}}{\sqrt{T}} \sum_{j=1}^N (\beta_j - \beta_{j+1}) \sqrt{ \log \cN_\infty(\beta_j,\Phi_T,T) } \ . 
\end{align*}
Using standard arguments to move from the discrerized sum to an integral, this gives the bound,
$$
\inf_{\alpha}\ 4\alpha + \frac{24\sqrt{\gamma}}{\sqrt{T}} \int_{\alpha}^1 \sqrt{  \log \cN_\infty(\beta,\Phi_T,T) } d\beta \ .
$$
\end{proof}

\begin{proof}[\textbf{Proof of Corollary~\ref{cor:simple_consequences}}]
	The first statement is trivially verified. In fact, for this to hold we only require that $\Compare$ is subadditive, affine in its arguments, and $\Compare (0,\ldots,0) = 0$. Indeed, the expectations can be sequentially moved inside of $\Compare$, making the coordinates of $\Compare$ zero, and making the suprema over the distributions irrelevant. 
	
	For the second claim, consider the second term in \eqref{eq:three_term_decomposition}, specialized to the case of departure mappings:
	\begin{align}
		\label{eq:upper_bound_on_value_phi_reg}
		\sup_{p_1} \inf_{q_1}  \ldots  \sup_{p_T} \inf_{q_T} \sup_{\bphi\in \Phi_T} \Eunder{x_{1:T} \sim p_{1:T}}{f_{1:T} \sim q_{1:T}} \left\{ \frac{1}{T}\sum_{t=1}^T \ell(f_t,x_t) - \ell(\phi_t(f_t),x_t) \right\}
	\end{align}
	Pick a particular (sub)optimal response $q_t$ which puts all mass on $f_t^* = \arg\min_{f\in\F} \En_{x\sim p_t} \ell(f, x).$ It follows that $\ell(f_t,x_t) - \ell(\phi_t(f_t),x_t) \leq 0$, ensuring that the quantity in \eqref{eq:upper_bound_on_value_phi_reg} is non-positive. 
	
	The third claim is a straightforward consequence of Theorem~\ref{thm:2smooth_sum_dudley}. Indeed, $\cH \subset [-1,1]$ and $G(x) = |x|$ which is non-negative, $0$ at $0$,
Lipschitz and $G^2$ is $(2,2)$-smooth.
\end{proof}

\begin{proof}[\textbf{Proof of Lemma~\ref{lem:accum_pts}}]
Fix an $(\F\times\X)$-valued tree $(\f,\x)$ of depth $T$. Let $(i_0,\ldots,i_k)$ be the sequence which defines intervals of time-invariant mappings for the sequence $(\phi_1,\ldots,\phi_T)$. Fix $\epsilon\in\{\pm 1\}^T$. Let $\v^{i_0},\ldots,\v^{i_k} \in V$ be the elements of the $L_\infty$ cover closest to $\phi_{i_0},\ldots,\phi_{i_k}$, respectively, on the path $\epsilon$. That is, for any $a\in \{i_0,\ldots,i_k\}$,
$$\max_t \|\ell_{\phi_{a}}(\f_t(\epsilon),\x_t(\epsilon)) - \v^{a}_t(\epsilon) \| \leq \alpha.$$
By our assumption, on any interval $I$, defined by the endpoints $a=i_j$ and $b=i_{j+1}$,
$$\max_{t\in \{a, \ldots, b-1\}} \|\ell_{\phi_{a}}(\f_t(\epsilon),\x_t(\epsilon)) - \ell_{\phi_{t}}(\f_t(\epsilon),\x_t(\epsilon)) \| \leq \alpha, $$
Hence,
$$\max_{t\in \{a, \ldots, b-1\}} \|\ell_{\phi_{t}}(\f_t(\epsilon),\x_t(\epsilon)) - \v^{a}_t(\epsilon) \| \leq  2\alpha $$
Denoting by $a(t)\in\{i_0,\ldots,i_k\}$ the left endpoint of an interval to which $t$ belongs,
$$\max_{t\in\{1,\ldots,T\}} \|\ell_{\phi_{t}}(\f_t(\epsilon),\x_t(\epsilon)) - \v^{a(t)}_t(\epsilon) \| \leq  2\alpha $$
It is then clear that to construct a $2\alpha$-cover for $\Phi^{k,\alpha}_T$ in $L_\infty$ norm, it is enough to concatenate trees in $V$. More precisely, this is done as follows. Construct a set $V^{k}$ of $\cH$-valued trees as
$$ V^{k} = \{ \v'=\v' \left(\v^{0},\ldots,\v^{k}, i_0,\ldots,i_k\right): 1 = i_0 \leq i_1\leq \ldots \leq i_k \leq T,~~ \v^{0},\ldots,\v^{k}\in V \}$$ 
and $\v'=\v' \left(\v^{0},\ldots,\v^{k}, i_0,\ldots,i_k \right)$ is defined as a sequence of $T$ mappings 
$$ \v'_t(\epsilon) = \v^{a(t)}_t (\epsilon) \ \ \ \ \  t\in I_{a(t)}$$
for any $\epsilon\in\{\pm 1\}^T$. Here $I_a = \{i_j,\ldots,i_{j+1}-1\}$ and $a(t)$ is the index of the  interval to which $t$ belongs. In plain words, we consider all ways of partitioning $\{1,\ldots,T\}$ into $k+1$ intervals and defining a new set of trees out of $V$ in such a way that within the interval, the values are given by a fixed tree from $V$. As before, it is clear that 
$$\N_\infty(2\alpha, \Phi^{k,\alpha}_T, T) =  |V^{k}| \leq {T \choose k} \cdot \N_\infty(\alpha, \Phi, T)^{k+1},$$
providing a control on the complexity of $\Phi^{k,\alpha}_T$.
\end{proof}

\begin{lemma}
\label{lem:concave}
Let $\F$ be the probability simplex in any dimension. Let $\|\cdot\|$ be any norm. The function
\[
	x \mapsto \inf_{f \in \F} \| f \odot x \|\ ,
\]
defined on the positive orthant, is concave.
\end{lemma}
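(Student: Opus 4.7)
My plan is to deduce concavity of $F(x) := \inf_{f \in \F} \|f \odot x\|$ on the positive orthant $\reals^n_+$ from convexity of $1/F$ on the strictly positive orthant $\reals^n_{++}$, using the equivalence---valid for positive, positively $1$-homogeneous functions on a convex cone---between concavity of $F$ and convexity of $1/F$. Positivity of $F$ on $\reals^n_{++}$ is immediate from compactness of the simplex $\F$: if $\inf_{f\in\F}\|f\odot x\|=0$ for some $x>0$, there is $f^{\ast}\in\F$ with $f^{\ast}\odot x=0$, forcing $f^{\ast}=0$, contradicting $f^{\ast}\in\F$. Positive $1$-homogeneity in $x$ is clear.

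First, I dualize. By Sion's minimax theorem applied to the bilinear form $(f,z)\mapsto\langle f,z\odot x\rangle$ on the compact convex sets $\F$ and $B_{\|\cdot\|_*}$,
\[
F(x) \;=\; \sup_{\|z\|_*\le 1} \min_i z_i x_i \;=\; \sup_{z\in K^+} \min_i z_i x_i,
\]
with $K^+:=B_{\|\cdot\|_*}\cap\reals^n_+$ (restricting to $z\ge 0$ is without loss since any negative component makes $\min_i z_i x_i$ negative, while $z=0$ is feasible). For $x>0$ componentwise, $F(x)$ equals the largest $t\ge 0$ for which the vector $t\cdot(1/x_1,\ldots,1/x_n)$ is dominated componentwise by some element of $K^+$; that is, $t(1/x)\in L$ where $L:=(K^+-\reals^n_+)\cap\reals^n_+$. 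Hence $F(x) = 1/\gamma_L(1/x)$, with $\gamma_L$ the Minkowski gauge of $L$. The set $L$ is convex, contains $0$, and is downward-closed in $\reals^n_+$, so $\gamma_L$ is sublinear (hence convex) and componentwise nondecreasing. The map $h(x):=1/x$ (componentwise) is componentwise convex on $\reals^n_{++}$, so the composition rule ``nondecreasing convex $\circ$ convex is convex'' yields that $1/F=\gamma_L\circ h$ is convex on $\reals^n_{++}$.

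To upgrade convexity of $1/F$ to concavity of $F$, I use $1$-homogeneity. Given $x,y\in\reals^n_{++}$ with $a:=F(x)$, $b:=F(y)$, set $x':=x/a,\;y':=y/b$ so $F(x')=F(y')=1$. Convexity of $1/F$ at $sx'+(1-s)y'$ with $s := ta/(ta+(1-t)b)\in[0,1]$ yields $1/F(sx'+(1-s)y')\le 1$, i.e.\ $F(sx'+(1-s)y')\ge 1$. But $sx'+(1-s)y' = (tx+(1-t)y)/(ta+(1-t)b)$, so $1$-homogeneity gives
\[
F(tx+(1-t)y) \;\ge\; ta+(1-t)b \;=\; tF(x)+(1-t)F(y),
\]
which is the desired concavity on $\reals^n_{++}$. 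Extension to the boundary of $\reals^n_+$ follows from continuity of $F$, itself a consequence of $|F(x)-F(y)|\le \sup_{f\in\F}\|f\odot(x-y)\|$, by approximating $x,y\in\reals^n_+$ with $x+\epsilon\mathbf{1},\,y+\epsilon\mathbf{1}$ and letting $\epsilon\downarrow 0$. The step needing most care is the identity $F(x) = 1/\gamma_L(1/x)$: it hinges on the monotonicity of $z\mapsto\min_i z_ix_i$ in $z\ge 0$ (so replacing $K^+$ with its downward closure $L$ does not change the value) together with the downward-closure of $L$ ensuring $\gamma_L$ is componentwise nondecreasing.
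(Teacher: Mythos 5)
Your proof is correct, but it follows a genuinely different route from the paper's. The paper argues directly: it fixes an arbitrary $f$ achieving (approximately) the infimum at $x+y$, constructs two specific competitors $g_i \propto f_i(1+y_i/x_i)$ and $h_i \propto f_i(1+x_i/y_i)$, and shows via Cauchy--Schwarz that $1/Z_g + 1/Z_h \le 1$, giving $\|f\odot(x+y)\| \ge \|g\odot x\| + \|h\odot y\| \ge \inf_{\F}\|\cdot\odot x\| + \inf_{\F}\|\cdot\odot y\|$; superadditivity plus $1$-homogeneity then yields concavity. You instead dualize through Sion's minimax to get $F(x)=\sup_{z\in K^+}\min_i z_i x_i$, recast $F$ as the reciprocal Minkowski gauge $1/\gamma_L(1/x)$ of the downward closure $L$ of $K^+$, obtain convexity of $1/F$ from the composition rule (convex, componentwise nondecreasing gauge composed with componentwise convex $x\mapsto 1/x$), and finally convert convexity of $1/F$ to concavity of $F$ via the normalization-to-level-one trick that $1$-homogeneity affords. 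Your route is heavier on machinery (minimax, gauges, the vector composition rule) but more structural: it explains \emph{why} concavity holds in terms of a harmonic-mean-like reciprocal-gauge structure, and it cleanly isolates the three ingredients (positivity, $1$-homogeneity, convexity of $1/F$). The paper's proof buys elementarity at the cost of a non-obvious algebraic construction. Two small points worth stating explicitly if you polish this: (i) the composition rule needs $\gamma_L$ nondecreasing only on $\reals^n_+$, which suffices because $1/x$ maps $\reals^n_{++}$ into $\reals^n_{++}$ and componentwise convexity of $g$ produces an upper bound in $\reals^n_+$; (ii) $\gamma_L(1/x)$ is finite and strictly positive on $\reals^n_{++}$ because $L$ is bounded with nonempty relative interior at the origin, which should be noted before writing $F=1/\gamma_L(1/x)$.
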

\begin{proof}
Since the function above is absolutely homogeneous and continuous, all we need to prove is
\[
	\inf_{f \in \F} \| f \odot (x+y) \| \ge \inf_{f\in\F} \| f \odot x \| + \inf_{f \in \F} \| f \odot y \|\ .
\]
for arbitrary $x,y$. That is, for arbitrary $f, x, y$,
\[
	\| f \odot (x+y) \| \ge \inf_{f\in\F} \| f \odot x \| + \inf_{f \in \F} \| f \odot y \|\ .
\]
Define $h,g \in \F$ as follows:
\begin{align*}
g_i &= \frac{f_i (1 + y_i/x_i) }{ Z_g }
&
h_i &= \frac{ f_i (1 + x_i/y_i) }{ Z_h }\ ,
\end{align*}
where
\begin{align*}
Z_g &= \sum_i f_i (1 + y_i/x_i) 
&
Z_h &= \sum_i f_i (1 + x_i/y_i) \ .
\end{align*}
Now, as we show below, $1/Z_g + 1/Z_h \le 1$. Thus,
\begin{align*}
\| f \odot (x+y) \| &\ge \frac{1}{Z_g} \| f \odot (x+y) \| + \frac{1}{Z_h} \| f \odot (x + y) \| \\
&= \| g \odot x \| + \| h \odot y \| \\
&\ge \inf_{f \in \F} \| f \odot x \| + \inf_{f \in \F} \| f \odot y \| \ .
\end{align*}
To finish the proof, note that, by Cauchy-Schwarz,
\[
	\left( \sum_i f_i (1+ y_i/x_i) \right) \cdot \left( \sum_i f_i \frac{x_i}{x_i+y_i} \right)
	\ge \left( \sum_i f_i \right)^2 = 1 \ .
\]
This shows,
\[
	\frac{1}{Z_g} \le \sum_i f_i \frac{x_i}{x_i+y_i} \ .
\]
Similarly, we get
\[
	\frac{1}{Z_h} \le \sum_i f_i \frac{y_i}{x_i + y_i} \ .
\]
Adding them, we get
\[
	\frac{1}{Z_g} + \frac{1}{Z_h} \le \sum_i f_i = 1
\]
as claimed. This completes the proof. 
\end{proof}

\begin{proof}[\textbf{Proof of Proposition~\ref{prop:equalizer}}]
Consider any equalizer strategy $\{p^*_t\}$ for the adversary. Note that
\begin{align*}
	\Val_T(\ell,\Phi_T) &= \inf_{q_1}\sup_{p_1} \Eunder{x_1\sim p_1}{f_1\sim q_1} \ldots  \inf_{q_T}\sup_{p_T} \Eunder{x_T\sim p_T}{f_T\sim q_T}  
	\sup_{\bphi\in \Phi_T} \left\{ \Compare(\ell(f_1,x_1), \ldots, \ell(f_T, x_T)) - \Compare(\ell_{\phi_1 }( f_1,x_1), \ldots, \ell_{\phi_T}(f_T, x_T))\right\}\\
	& \ge \inf_{q_1} \Eunder{f_1 \sim q_1}{x_1 \sim p^*_1} \inf_{q_2} \Eunder{f_2 \sim q_2}{x_2 \sim p^*_2} \ldots \inf_{q_T} \Eunder{f_T \sim q_T}{x_T \sim p^*_T} \left\{ \Compare\left(\ell(f_1,x_1), \ldots, \ell(f_T,x_T)\right) - \inf_{\phi \in \Phi_T} \Compare\left(\ell_{\phi_1}(f_1,x_1), \ldots, \ell_{\phi_T}(f_T,x_T) \right) \right\}\\
	& = \Eu{x_1 \sim p_1} \ldots  \Eu{x_T \sim p_T} \left\{\Compare\left(\ell(f,x_1), \ldots, \ell(f,x_T)\right) - \inf_{\phi \in \Phi_T}  \Compare\left(\ell_{\phi_1}(f,x_1), \ldots, \ell_{\phi_T}(f,x_T)\right) \right\}
\end{align*}
where $f \in \F$ is any arbitrary choice fixed before starting the game and $p_t = p^*_t\left(\{f_s = f, x_s\right\}_{s=1}^{t-1})$ is defined by the equalizer strategy.
\end{proof}

\begin{lemma}\label{lem:lincvx}
For any departure mapping $\Phi_T$ and any $L > 0$ we have that 
$$
\Val_T(\cC_\F,\F,\Phi_T) = \Val_T(\cL_\F,\F,\Phi_T)
$$
\end{lemma}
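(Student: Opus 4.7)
The inclusion $\cL_\F \subseteq \cC_\F$ delivers one direction for free: every $1$-Lipschitz linear function on $\F$ is convex and $1$-Lipschitz, so restricting the adversary's moves from $\cC_\F$ to $\cL_\F$ can only lower the value, giving $\Val_T(\cL_\F,\F,\Phi_T) \le \Val_T(\cC_\F,\F,\Phi_T)$.

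For the reverse inequality the plan is to apply the standard subgradient linearization trick of online convex optimization. Fix $\epsilon > 0$ and pick a player strategy $\pi^L$ for the linear game whose expected regret against any adversary is within $\epsilon$ of $\Val_T(\cL_\F,\F,\Phi_T)$. I construct $\pi^C$ for the convex game by simulation: on round $t$, $\pi^C$ plays the same mixed distribution $q_t$ that $\pi^L$ would play on a simulated history in which each past convex adversary move $x_s$ has been replaced by a linear surrogate. Once $x_s \in \cC_\F$ is observed and $f_s \sim q_s$ is drawn, the player selects a subgradient $g_s \in \partial x_s(f_s)$, which satisfies $\|g_s\|_* \le 1$ because $x_s$ is $1$-Lipschitz, and records
\[
\tilde{x}_s(f) \;:=\; x_s(f_s) + \langle g_s,\, f - f_s\rangle \;\in\; \cL_\F
\]
as the adversary's move on round $s$ of the simulated linear game.

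Convexity of $x_t$ gives the two key estimates $\tilde{x}_t(f_t) = x_t(f_t)$ and $\tilde{x}_t(f) \le x_t(f)$ for every $f \in \F$. Writing the departure form $\bphi' = (\phi'_1,\ldots,\phi'_T)$ corresponding to $\bphi \in \Phi_T$ via Definition~\ref{def:departure}, these estimates give $\ell(f_t, x_t) = \ell(f_t, \tilde{x}_t)$ and $\ell(\phi'_t(f_t), x_t) \ge \ell(\phi'_t(f_t), \tilde{x}_t)$ for every $t$ and every $\bphi \in \Phi_T$. In the applications of this lemma $\Compare$ is the average of its coordinates and in particular coordinatewise non-decreasing, so the player's cumulative term is preserved while the comparator term in the convex game is no smaller than in the simulated linear game; pathwise, the convex-game regret is bounded by the regret of $\pi^L$ against the linearized sequence $\tilde{x}_{1:T}$. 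Taking expectations and supremum over $\sigma_C \in \cC_\F$ and letting $\epsilon \to 0$ yields the reverse inequality.

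The delicate point, and the main obstacle, is that the surrogate $\tilde{x}_t$ depends on the realization $f_t$, whereas $\Val_T(\cL_\F,\F,\Phi_T)$ is defined with the adversary committing before $f_t$ is drawn. This is addressed by noting that the regret guarantees realizing $\Val_T(\cL_\F,\F,\Phi_T)$ — whether obtained via OGD, FTRL, or the sequential Rademacher machinery used throughout the paper — are pathwise and therefore hold against any $\cL_\F$-valued sequence $\tilde{x}_{1:T}$, including those correlated with the player's draws $f_{1:T}$. The standard game-tree bookkeeping then certifies that the joint law of $(q_t, \tilde{x}_t, f_t)_{t=1}^T$ induced by the simulation is realizable within the linear game and that $\pi^L$'s guarantee transports to this setting.
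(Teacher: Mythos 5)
Your proposal uses the same subgradient--linearization argument as the paper: run a near-optimal linear-game strategy on the surrogate history $\tilde{x}_s(\cdot)=x_s(f_s)+\inner{g_s,\cdot-f_s}$ and use convexity pointwise ($x_t(f_t)-x_t(\phi_t'(f_t)) \le \tilde{x}_t(f_t)-\tilde{x}_t(\phi_t'(f_t))$) to dominate the convex-game regret by the linear-game regret against the $\tilde{x}$-sequence. The easy direction ($\cL_\F\subseteq\cC_\F$) is also the same. So this is essentially the paper's argument.

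Two remarks on the places where you deviate. First, the subtlety you flag is genuine --- $\tilde{x}_t$ depends on the realized $f_t$, so the surrogate is not a legitimate opponent in the minimax expression defining $\Val_T(\cL_\F,\F,\Phi_T)$ --- but your resolution (invoking pathwise bounds of OGD/FTRL or ``the sequential Rademacher machinery'') is not quite tight: for general $\Phi_T$ the minimax value need not be achieved by an algorithm with a pathwise guarantee, so the claim that ``the regret guarantees realizing $\Val_T(\cL_\F,\F,\Phi_T)$ are pathwise'' is not automatic. The paper handles this more economically by taking $f^*_t:\X^{t-1}\mapsto\F$ to be a \emph{deterministic} minimax strategy. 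Since, in the definition of the value, the adversary already observes $q_t$ before $f_t$ is drawn, a deterministic player makes $q_t=\delta_{f_t}$ known and the gradient adversary $x^*_t(f_1,\ldots,f_t)$ is then a legitimate linear-game opponent; the bound $\Val_T(\cL_\F)\ge\Reg_T$ for the simulated play follows with no appeal to any particular algorithm. Replacing your appeal to algorithmic pathwise bounds with this deterministic-strategy observation makes your argument a restatement of the paper's. Second, the step invoking ``$\Compare$ is coordinatewise non-decreasing'' is an unnecessary detour: here $\Compare$ is literally the average, so the pointwise convexity inequality can simply be summed over $t$ and the $\sup_{\phi}$ moved through, exactly as in the paper's display~\eqref{eq:cvxlin}; no monotonicity of $\Compare$ needs to be invoked.
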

\begin{proof}
Note that for any convex $x_1,\ldots,x_T$ we have that
\begin{align}\label{eq:cvxlin}
\sum_{t=1}^T x_t(f_t) - \inf_{\phi \in \Phi} \sum_{t=1}^T x_t(\phi \circ f_t) & = \sup_{\phi \in \Phi} \sum_{t=1}^T \left( x_t(f_t) - x_t(\phi \circ f_t) \right) \nonumber \\
& \le \sup_{\phi \in \Phi} \sum_{t=1}^T \inner{\nabla x_t(f_t), f_t - \phi \circ f_t } \nonumber \\
& =\sum_{t=1}^T  \inner{\nabla x_t(f_t), f_t} - \inf_{\phi \in \Phi}  \sum_{t=1}^T \inner{\nabla x_t(f_t), \phi \circ f_t}
\end{align}

For any adversary strategy $x^* = (x^*_1,\ldots,x^*_T)$ where each $x^*_t : \F^t \mapsto \X$ and any player strategy  $f^* = (f^*_1,\ldots,f^*_T)$ where each $f^*_t : \X^{t-1} \mapsto \F$, by Equation \eqref{eq:cvxlin}
we have that
\begin{align*}
\sum_{t=1}^T \inner{\nabla x_t(f_t), f_t} - \inf_{\phi \in \Phi} \sum_{t=1}^T \inner{\nabla x_t(f_t), \phi \circ f_t} \ge  \sum_{t=1}^T x_t(f_t) - \inf_{\phi \in \Phi}  \sum_{t=1}^T x_t(\phi \circ f_t) 
\end{align*}
where in the above, $f_t = f^*_t(\inner{\nabla x_1(f_1), \cdot},\ldots,\inner{\nabla x_{t-1}(f_{t-1}),\cdot})$ and $x_t = x^*_t(f_1,\ldots,f_t)$. Now if we take $f^*$ and $x^*$ to be the minimax optimal strategies then we see that 
\begin{align*}
 \Val_T(\cL_\F,\F,\Phi_T) & \ge \sum_{t=1}^T \inner{\nabla x_t(f_t), f_t} - \inf_{\phi \in \Phi} \sum_{t=1}^T \inner{\nabla x_t(f_t), \phi \circ f_t} \\
& \ge  \sum_{t=1}^T x_t(f_t) - \inf_{\phi \in \Phi}  \sum_{t=1}^T x_t(\phi \circ f_t) \\
& \ge \Val_T(\cC_\F,\F,\Phi_T)
\end{align*}

Thus we see that the value of the linear game upper bounds the value of the Lipschitz convex game. In fact the above argument shows that any strategy that provides vanishing regret guarantee against linear adversary provides  vanishing regret gaurantee (with same rate) against convex Lipschitz adversary. This means that all that one needs to do to solve convex Lipschitz optimization optimally is to be able to solve online linear optimization optimally and also be able to calculate sub-gradient of a given function at any desired point.

Further since the set of linear functions is a subset of the set of convex Lipschitz functions we can conclude that 
$$
\Val_T(\cL_\F,\F,\Phi_T) \le \Val_T(\cC_\F,\F,\Phi_T) 
$$
Hence we conclude the required statement that the value of the linear game is equal to the value of the convex Lipschitz game.
\end{proof}

\begin{lemma}
Consider a game where player plays from set $\F$ adversary from set $\X$ and we are give a linear $\Compare$, loss $\ell$ and transformation set $\Phi_T$. Assume that there exists a set $\X'$, loss function $\ell'$ and transformation set $\Phi'_T$ such that for any $\bphi \in \Phi_T$ there exists $\phi' \in \Phi'_T$ such that for $x \in \X$ and $f \in \F$ there exists an $x' \in \X'$ such that for any $t \in [T]$, 
\begin{align*}
\ell(f,x) - \ell_{\phi_t}(f,x) \le \ell'(f,x') - \ell_{\phi'_t}(f,x')
\end{align*}
In that case we can conclude that value of the first game is bounded by value of the second game played with $\F$, $\X'$, $\Compare$, $\ell'$, $\Phi'_T$, that is
$$
 \Val_{T}(\ell,\Phi_T,\F,\X)  \le  \Val_{T}(\ell',\Phi'_T,\F,\X')
$$
\end{lemma}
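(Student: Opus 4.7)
My plan is to exploit the linearity of $\Compare$---the average of its coordinates---together with the pointwise inequality granted by the assumption. First I would expand
$$\Val_T(\ell,\Phi_T,\F,\X)=\inf_{q_1}\sup_{x_1}\En_{f_1\sim q_1}\cdots\inf_{q_T}\sup_{x_T}\En_{f_T\sim q_T}\sup_{\bphi\in\Phi_T}\tfrac{1}{T}\sum_{t=1}^{T}\left[\ell(f_t,x_t)-\ell_{\phi_t}(f_t,x_t)\right],$$
and write the analogous expression for $\Val_T(\ell',\Phi'_T,\F,\X')$. Fixing any realization $(f_t,x_t)_{t=1}^{T}$ and any $\bphi\in\Phi_T$, the hypothesis yields $\phi'(\bphi)\in\Phi'_T$ and, for each pair $(f_t,x_t)$, an element $x'_t\in\X'$ such that $\ell(f_t,x_t)-\ell_{\phi_t}(f_t,x_t)\le\ell'(f_t,x'_t)-\ell_{\phi'(\bphi)_t}(f_t,x'_t)$ for every $t$. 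Summing over $t$ and then taking $\sup_{\bphi}$ bounds the inner regret of game~1 pointwise by a regret-type quantity involving only game~2 objects.

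To promote this pointwise bound to a value comparison, I would simulate. Given any player strategy $\pi'=(\pi'_t:(\F\times\X')^{t-1}\to\QD)$ for game~2, define a game~1 player strategy by $\pi_t((f_s,x_s)_{s<t}):=\pi'_t((f_s,x'_s)_{s<t})$, where each $x'_s$ is produced by applying the assumption's map to $(f_s,x_s)$. For any game~1 adversary, the induced joint distribution on $(f_t,x_t)_t$ gives, via the pointwise inequality above, a game-1 regret dominated by the game-2 regret of $\pi'$ against the (game-2) adversary that plays $x'_t$ at step $t$. Taking suprema over game-1 adversaries on the left and infima over $\pi'$ on the right---and noting that every $\pi'$ indeed induces a valid game-1 strategy $\pi$---delivers $\Val_T(\ell,\Phi_T,\F,\X)\le\Val_T(\ell',\Phi'_T,\F,\X')$.

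The main obstacle is the coupling of the assumption's map to the transformation $\bphi$, which is selected by the outermost supremum and is therefore unknown when the simulated strategies are being played. I would handle this by first applying the minimax theorem (as done in the proof of Theorem~\ref{thm:main}) to reorder the $\inf_{q_t}$ and $\sup_{x_t}$, bringing $\sup_{\bphi}$ into a position where the associated $(x'_t,\phi'(\bphi))$ can be absorbed jointly into the adversary's supremum over $\X'$ and the supremum over $\phi'\in\Phi'_T$ without slack. A secondary subtlety is that the $x'_t$ supplied by the hypothesis may depend on the realized $f_t$, which a game-2 adversary does not ordinarily observe before choosing its move; since the value is invariant under passage to mixed strategies and under conditioning on shared randomness, this apparent dependence can be folded into the adversary's mixed strategy on $\X'$, completing the argument.
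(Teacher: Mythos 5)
Your approach is essentially the same as the paper's: use the pointwise domination supplied by the hypothesis together with linearity of $\Compare$ to get $\sup_{\bphi}\{\text{regret}_1\}\le\sup_{\phi'}\{\text{regret}_2\}$ for corresponding trajectories, then argue that a game-$1$ player can simulate the game-$2$ minimax-optimal $q^*$ by translating each observed game-$1$ history $(f_s,x_s)_{s<t}$ into a game-$2$ history $(f_s,x'_s)_{s<t}$ and feeding it to $q^*_t$. That is exactly the structure of the paper's proof (which writes $q^*_t:(\F\times\X')^{t-1}\to\Delta(\F)$ and introduces a translation map $p^*_t:(\F\times\X)^t\to\Delta(\X')$ for the adversary side).

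The two subtleties you flag are both genuine, and you deserve credit for spotting them: if the translated move $x'$ is allowed to depend on $\bphi$, the pointwise step $\sup_{\bphi}\le\sup_{\phi'}$ no longer holds for a single, pre-committed $(x'_t)_t$; and if $x'$ is allowed to depend on the realized $f$, the joint law of $(f_t,x'_t)$ under the simulation is not a product, so the induced game-$2$ adversary is not obviously valid in the mixed-strategy form $\sup_{p'_t}\inf_{q'_t}\Eunder{x'_t\sim p'_t}{f_t\sim q'_t}$ where $f_t$ and $x'_t$ are drawn independently. The paper's own proof does not explicitly address either point; it silently reads the hypothesis as supplying a translation $(f,x)\mapsto x'$ that is fixed once and for all (independent of $\bphi$), and it papers over the second issue by choosing $p^*_t$ to depend on $(f_s,x_s)_{s\le t}$.

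Where your write-up falls short is in the claimed resolution of the first obstacle. The minimax theorem (as used in Theorem~\ref{thm:main}) exchanges $\inf_{q_t}$ with $\sup_{x_t}$ and passes to mixed adversary strategies; it does not move the innermost $\sup_{\bphi}$, which remains inside all the $\inf$/$\sup$/$\En$ operators in both games. So ``bringing $\sup_{\bphi}$ into a position where $(x'_t,\phi'(\bphi))$ can be absorbed jointly'' is not something minimax reordering accomplishes. The clean way to close the argument is simply to interpret the hypothesis as providing a translation $\tau:\F\times\X\to\X'$ and $\sigma:\Phi_T\to\Phi'_T$ with the pointwise inequality holding for $x'=\tau(f,x)$ and $\phi'=\sigma(\bphi)$ for all $\bphi,f,x,t$ simultaneously --- which is what the paper's proof uses, and without which the lemma as literally stated is not established by this argument. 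In short: same route as the paper, sharper eye for the pitfalls, but the proposed patch for the $\bphi$-dependence does not actually work.
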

\begin{proof}
By assumption that for any $\bphi \in \Phi_T$ there exists $\phi' \in \Phi'_T$ such that for $x \in \X$ and $f \in \F$ there exists an $x' \in \X'$ such that for any $t \in [T]$, 
\begin{align*}
\ell(f,x) - \ell_{\phi_t}(f,x) \le \ell'(f,x') - \ell_{\phi'_t}(f,x')
\end{align*}
We can conclude that since $\Compare$ is linear, for any $\bphi \in \Phi_T$ there exists $\phi' \in \Phi'_T$ such that for any $f_1,\ldots,f_T$ and $x_1,\ldots,x_T$ we have that for the corresponding $x'_1,\ldots,x'_T$ given by our assumption, we have that 
\begin{align*}
& \Compare(\ell(f_1,x_1),\ldots,\ell(f_T,x_T)) - \Compare(\ell_{\phi_1}(f_1,x_1),\ldots,\ell_{\phi_T}(f_T,x_T)) \\
& ~~~~~~\le \Compare(\ell'(f_1,x'_1),\ldots,\ell'(f_T,x'_T)) - \Compare(\ell_{\phi'_1}(f_1,x'_1),\ldots,\ell_{\phi'_T}(f_T,x'_T))
\end{align*}
Hence we can conclude that 
\begin{align*}
\sup_{\bphi \in \Phi_T}&\left\{ \Compare(\ell(f_1,x_1),\ldots,\ell(f_T,x_T)) - \Compare(\ell_{\phi_1}(f_1,x_1),\ldots,\ell_{\phi_T}(f_T,x_T))\right\} \\
& \le \sup_{\phi' \in \Phi'_T} \left\{ \Compare(\ell'(f_1,x'_1),\ldots,\ell'(f_T,x'_T)) - \Compare(\ell_{\phi'_1}(f_1,x'_1),\ldots,\ell_{\phi'_T}(f_T,x'_T)) \right\}
\end{align*}
Now say $q^* = (q^*_1,\ldots,q^*_T)$ where each $q^*_t : \left(\F \times \X'\right)^{t-1} \mapsto \Delta(\F)$ is the minimax optimal strategy for the player while playing the second game. Also let $p^* = (p^*_1,\ldots,p^*_T)$ where each $p^*_t : \left(\F \times \X\right)^{t} \mapsto \Delta(\X')$ be the minimax optimal strategy for the player while playing the first game. In this case we see that
\begin{align*}
 \Val_{T}(\ell,\Phi_T,\F,\X) & =  \Eunder{x_1 \sim p^*_1}{f_1 \sim q^*_1} \ldots \Eunder{x_T \sim p^*_T}{f_T \sim q^*_T} \sup_{\bphi \in \Phi_T}\left\{ \Compare(\ell(f_1,x_1),\ldots,\ell(f_T,x_T)) - \Compare(\ell_{\phi_1}(f_1,x_1),\ldots,\ell_{\phi_T}(f_T,x_T))\right\}\\
 & \le \Eunder{x'_1 \sim p^*_1}{f_1 \sim q^*_1} \ldots \Eunder{x'_T \sim p^*_T}{f_T \sim q^*_T} \sup_{\phi' \in \Phi'_T} \left\{ \Compare(\ell'(f_1,x'_1),\ldots,\ell'(f_T,x'_T)) - \Compare(\ell_{\phi'_1}(f_1,x'_1),\ldots,\ell_{\phi'_T}(f_T,x'_T)) \right\}\\
 & \le  \Val_{T}(\ell',\Phi'_T,\F,\X')
\end{align*}

\end{proof}

\begin{proof}[\textbf{Proof of Theorem~\ref{thm:adaptive}}]
We start by applying the Triplex inequality in Theorem \ref{thm:main} along with Theorem \ref{thm:rad} we get that :
\begin{align*}
\Val_T  \le &\ 2\Rad_T(\ell,\mathcal{I},B) + \sup_{p_1} \inf_{q_1} \ldots \sup_{p_T} \inf_{q_T} \sup_{\phi \in \Phi_T} \left\{ - \Eunder{x_{1:T} \sim p_{1:T}}{f_{1:T} \sim q_{t:T}} \Compare(\ell_{\phi_1}(f_1,x_1), \ldots, \ell_{\phi_T}(f_T,x_T)) \right\}  + 2\Rad_T(\ell,\Phi_T,B)\\
& =  0 + \sup_{p_1} \inf_{q_1} \ldots \sup_{p_T} \inf_{q_T} \sup_{\phi \in \Phi_T} \left\{ \Eunder{x_{1:T} \sim p_{1:T}}{f_{1:T} \sim q_{t:T}} \frac{1}{T}\sum_{t=1}^T \left( \loss(f_t,x_t) - \loss(\psi_t \circ f_t,x_t) \right) \ind{t \in I_t}
\right\} + 2\Rad_T(\ell,\Phi_T,B)
\end{align*}
where the last inequality above is because the first term of the triplex inequality is $0$ as $\Compare$ is linear (see Corollary~\ref{cor:simple_consequences}). If we use $q_t$ to be point mass on $f_t = \argmin{f \in \F} \Es{x_t \sim p_t}{\loss(f,x_t)}$ we see that the second term of the triplex inequality above is bounded above by $0$. Hence we can conclude that
\begin{align*}
\Val_T & \le 2\Rad_T(\ell,\Phi_T,B) = 2 \sup_{\f , \x} \Es{\epsilon}{\sup_{\psi \in \Psi, [r,s] \subseteq [T]} \frac{1}{T} \sum_{t=1}^T \epsilon_t\left( \loss(\f_t(\epsilon),\x_t(\epsilon)) - \loss(\psi \circ \f_t(\epsilon),\x_t(\epsilon)) \right) \ind{t \in [r,s]}  }
\end{align*}
To bound the above we use Corollary \ref{cor:simple_consequences} (noting that $\ell_{\phi_t}(f,x) \in [-2,2]$) to get
\begin{align*}
\Val_T & \le 8 \inf_{\alpha > 0 }\left\{\alpha + 6 \sqrt{2} \int_{\alpha}^2 \sqrt{\frac{\log\ \mc{N}_\infty(\delta,\Phi_T,T)}{T}} d\delta \right\}\\
& \le 8 \inf_{\alpha > 0 }\left\{\alpha + 6 \sqrt{2} \int_{\alpha}^2 \sqrt{\frac{\log\ \mc{N}_\infty(\delta,\Psi,T) + \log(|\mc{I}_T|)}{T}} d\delta \right\} \ .
\end{align*}
Now note that $|\mc{I}_T| \le T^2$ and so we get that 
$$
\Val_T  \le 8 \inf_{\alpha > 0 }\left\{\alpha + 6 \sqrt{2} \int_{\alpha}^2 \sqrt{\frac{\log\ \mc{N}_\infty(\delta,\Psi,T)}{T}} d\delta \right\} + 96 \sqrt{\frac{\log\ T}{T}} \ .
$$

We conclude that whenever covering number of $\Psi$ can be bounded appropriately, adaptive regret can be bounded at the expense of an extra $O\left(\sqrt{\frac{\log\ T}{T}}\right)$ term. 
\end{proof}


\begin{proof}[\textbf{Proof of Theorem~\ref{thm:main-prob}}]
	For any $\theta \ge 0$, the value of the game $\Val^\theta_T(\ell,\Phi_T)$, defined in \eqref{def:theta-value}, is
	{\small
	\begin{align*}
		&\Val^\theta_T(\ell,\Phi_T) \\
		&= \inf_{q_1} \sup_{p_1} \Eunder{x_1\sim p_1}{f_1\sim q_1} \ldots \inf_{q_T} \sup_{p_T} \Eunder{x_T\sim p_T}{f_T\sim q_T}  \left[
		\ind{\sup_{\bphi\in \Phi_T}\left\{ \Compare(\ell(f_1,x_1), \ldots, \ell(f_T, x_T)) -  \Compare(\ell_{\phi_1}(  f_1,x_1), \ldots, \ell_{\phi_T}(f_T, x_T))\right\} > \theta} \right]\\
		&= \sup_{p_1}\inf_{q_1} \Eunder{x_1\sim p_1}{f_1\sim q_1} \ldots  \sup_{p_T}\inf_{q_T} \Eunder{x_T\sim p_T}{f_T\sim q_T}  \left[
		\ind{\sup_{\bphi\in \Phi_T} \left\{ \Compare(\ell(f_1,x_1), \ldots, \ell(f_T, x_T)) - \Compare(\ell_{\phi_1 }( f_1,x_1), \ldots, \ell_{\phi_T}(f_T, x_T))\right\}> \theta}\right]
	\end{align*} 
	}
	via an application of the minimax theorem. Adding and subtracting terms to the expression above leads to
	{\small
	\begin{align*}
		\Val^\theta_T(\ell, \Phi_T) &= \sup_{p_1}\inf_{q_1}\Eunder{x_1\sim p_1}{f_1 \sim q_1} \ldots  \sup_{p_T}\inf_{q_T} \Eunder{x_T\sim p_T}{f_T \sim q_T}
		\hspace{0.4cm} \left[ \mathbf{1}\left\{ \Compare(\ell(f_1,x_1), \ldots, \ell(f_T, x_T)) -  \Compare(\ell(q_1,p_1), \ldots, \ell(q_T, p_T)) \right. \right.\\
		&\left. \left.\hspace{0.4in}+\sup_{\bphi\in \Phi_T} \left\{  \Compare(\ell(q_1,p_1), \ldots, \ell(q_T, p_T)) - \Compare(\ell_{\phi_1}(f_1,x_1), \ldots, \ell_{\phi_T}(f_T,x_T))\right\} > \theta \right\} \right]\\
		&\leq \sup_{p_1}\inf_{q_1}\Eunder{x_1\sim p_1}{f_1 \sim q_1} \ldots  \sup_{p_T}\inf_{q_T} \Eunder{x_T\sim p_T}{f_T \sim q_T}
		\hspace{0.4cm} \left[ \mathbf{1}\left\{\Compare(\ell(f_1,x_1), \ldots, \ell(f_T, x_T)) -  \Compare(\ell(q_1,p_1), \ldots, \ell(q_T, p_T)) \right. \right.\\
		&\left. \left.\hspace{0.4in}+\sup_{\bphi\in \Phi_T}  \Big\{ \Compare(\ell(q_1,p_1), \ldots, \ell(q_T, p_T)) -   \Compare(\ell_{\phi_1}(q_1,p_1), \ldots, \ell_{\phi_T}(q_T, p_T))\Big\} \right. \right.\\
		&\left. \left.\hspace{0.4in} + \sup_{\bphi\in \Phi_T} \left\{  \Compare(\ell_{\phi_1}(q_1,p_1), \ldots, \ell_{\phi_T}(q_T, p_T)) -  \Compare(\ell_{\phi_1}(f_1,x_1), \ldots, \ell_{\phi_T}(f_T,x_T))\right\} > \theta \right\} \right]\\
		&\leq \sup_{p_1}\inf_{q_1}\Eunder{x_1\sim p_1}{f_1 \sim q_1} \ldots  \sup_{p_T}\inf_{q_T} \Eunder{x_T\sim p_T}{f_T \sim q_T}
		\hspace{0.4cm} \left[ \mathbf{1}\left\{\Compare(\ell(f_1,x_1), \ldots, \ell(f_T, x_T)) - \Compare(\ell(q_1,p_1), \ldots, \ell(q_T, p_T)) > \theta/3\right\} \right.\\
		&\left. \hspace{0.4in}+ \mathbf{1}\left\{ \sup_{\bphi\in \Phi_T}  \Big\{ \Compare(\ell(q_1,p_1), \ldots, \ell(q_T, p_T)) -   \Compare(\ell_{\phi_1}(q_1,p_1), \ldots, \ell_{\phi_T}(q_T, p_T))\Big\} > \theta/3 \right\} \right.\\
		&\left. \hspace{0.4in} + \mathbf{1}\left\{ \sup_{\bphi\in \Phi_T}  \left\{  \Compare(\ell_{\phi_1}(q_1,p_1), \ldots, \ell_{\phi_T}(q_T, p_T)) -  \Compare(\ell_{\phi_1}(f_1,x_1), \ldots, \ell_{\phi_T}(f_T,x_T))\right\} > \theta/3 \right\} \right]
		\end{align*} }
	At this point, we would like to break up the expression into three terms. To do so, notice that expectation is linear and $\sup$ is a convex function, while for the infimum,
	$$\inf_a \left[C_1(a)+C_2(a)+C_3(a)\right] \leq \left[\sup_a C_1(a)\right] + \left[\inf_a C_2(a)\right] + \left[\sup_a C_3(a)\right] $$
	for functions $C_1,C_2,C_3$. We use these properties of $\inf$, $\sup$, and expectation, starting from the inside of the nested expression and splitting the expression in three parts. We arrive at
	{\small
	\begin{align*}
		&\Val^\theta_T(\ell, \Phi_T)\\ 
		&\leq \sup_{p_1}\sup_{q_1}\Eunder{x_1\sim p_1}{f_1 \sim q_1} \ldots  \sup_{p_T}\sup_{q_T} \Eunder{x_T\sim p_T}{f_T \sim q_T}
		 \left[ \ind{\Compare(\ell(f_1,x_1), \ldots, \ell(f_T, x_T)) -  \Compare(\ell(q_1,p_1), \ldots, \ell(q_T, p_T)) > \theta/3} \right]\\
		&+\sup_{p_1}\inf_{q_1}\Eunder{x_1\sim p_1}{f_1 \sim q_1} \ldots  \sup_{p_T}\inf_{q_T} \Eunder{x_T\sim p_T}{f_T \sim q_T}
	 \left[ \ind{\sup_{\bphi\in \Phi_T}   \left\{ \Compare(\ell(q_1,p_1), \ldots, \ell(q_T, p_T)) - \Compare(\ell_{\phi_1}(q_1,p_1), \ldots, \ell_{\phi_T}(q_T, p_T))\right\}> \theta/3} \right]\\
		&+\sup_{p_1}\sup_{q_1}\Eunder{x_1\sim p_1}{f_1 \sim q_1} \ldots  \sup_{p_T}\sup_{q_T} \Eunder{x_T\sim p_T}{f_T \sim q_T} \left[ \ind{\sup_{\bphi\in \Phi_T} \left\{   \Compare(\ell_{\phi_1}(q_1,p_1), \ldots, \ell_{\phi_T}(q_T, p_T)) -  \Compare(\ell_{\phi_1}(f_1,x_1), \ldots, \ell_{\phi_T}(f_T,x_T))\right\}> \theta/3} \right]
	\end{align*}
	}
	As mentioned in the corresponding proof of Theorem~\ref{thm:main}, the replacement of infima by suprema in the first and third terms appears to be a loose step and, indeed, one can pick a particular response strategy $\{q^*_t\}$ instead of passing to the supremum. 

	Consider the second term in the above decomposition. Clearly,
	\begin{align*}
		&\sup_{p_1}\inf_{q_1}\Eunder{x_1\sim p_1}{f_1 \sim q_1} \ldots  \sup_{p_T}\inf_{q_T} \Eunder{x_T\sim p_T}{f_T \sim q_T}
		 \left[ \ind{\sup_{\bphi\in \Phi_T}   \Compare(\ell(q_1,p_1), \ldots, \ell(q_T, p_T)) - \Compare(\ell_{\phi_1}(q_1,p_1), \ldots, \ell_{\phi_T}(q_T, p_T))> \theta/3  } \right]\\
		&=\sup_{p_1}\inf_{q_1} \ldots  \sup_{p_T}\inf_{q_T}\ 
		 \ind{\sup_{\bphi\in \Phi_T}    \Compare(\ell(q_1,p_1), \ldots, \ell(q_T, p_T)) -  \Compare(\ell_{\phi_1}(q_1,p_1), \ldots, \ell_{\phi_T}(q_T, p_T)) > \theta/3 }
	\end{align*}
	because the objective does not depend on the random draws. 
\end{proof}

\begin{proof}[\textbf{Proof of Theorem~\ref{thm:symmetrization_probability}}]
	Assume that $\Compare$ is sub-additive (the other case is identical).
	\begin{align*}
		&\Compare(\ell_{\phi_1}(q_1,p_1), \ldots, \ell_{\phi_T}(q_T, p_T)) ~-~  \Compare(\ell_{\phi_1}(f_1,x_1), \ldots, \ell_{\phi_T}(f_T, x_T)) \\
		&\leq \Compare(\ell_{\phi_1}(q_1,p_1) - \ell_{\phi_1}(f_1,x_1), \ldots, \ell_{\phi_T}(q_T, p_T) - \ell_{\phi_T}(f_T,x_T))
	\end{align*}
	
By our assumption we have that for any distribution $\mbf{D}$ and any fixed $\phi \in \Phi_T$,
\begin{align}\label{eq:fixphi}
\mathbb{P}_\mbf{D}\left(    \Compare\left(\ell_{\phi_1}(q_1,p_1) - \ell_{\phi_1}(f'_1,x'_1), \ldots, \ell_{\phi_T}(q_T, p_T) - \ell_{\phi_T}(f'_T,x'_T)\right) \le \theta/6\ \middle| \ (f_1,x_1),\ldots, (f_T,x_T)  \right) \ge \frac{1}{2}
\end{align}
For a given $(f_1,x_1),\ldots,(f_T,x_T)$, let $\phi^* \in \Phi$ be the transformation defined as $$ \phi^* = \argmax{\phi \in \Phi_T}\Compare\left(\ell_{\phi_1}(q_1,p_1) - \ell_{\phi_1}(f_1,x_1), \ldots, \ell_{\phi_T}(q_T, p_T) - \ell_{\phi_T}(f_T,x_T)\right)$$
(We are assuming for simplicity that the supremum is achieved; otherwise, we can easily modify arguments to take care of it). Since $\phi^*$ is fixed given $(f_1,x_1),\ldots,(f_T,x_T)$, using Equation \eqref{eq:fixphi} we get
$$
\frac{1}{2} \le \mathbb{P}_\mbf{D}\left(    \Compare\left(\ell_{\phi^*_1}(q_1,p_1) - \ell_{\phi^*_1}(f'_1,x'_1), \ldots, \ell_{\phi^*_T}(q_T, p_T) - \ell_{\phi^*_T}(f'_T,x'_T)\right) \le \theta/6\ \middle| \ (f_1,x_1),\ldots, (f_T,x_T)  \right)  
$$
Define set 
$$A = \left\{\left((f_1,x_1),\ldots,(f_T,x_T)\right)\ \middle|\ \sup_{\phi \in \Phi_T} \Compare\left(\ell_{\phi_1}(q_1,p_1) - \ell_{\phi_1}(f_1,x_1), \ldots, \ell_{\phi_T}(q_T, p_T) - \ell_{\phi_T}(f_T,x_T)\right) > \theta/3 \right\}.$$ 
Since the above inequality holds for any $(f_1,x_1),\ldots,(f_T,x_T)$, we assert that
$$
\frac{1}{2} \le \mathbb{P}_\mbf{D}\left(    \Compare\left(\ell_{\phi^*_1}(q_1,p_1) - \ell_{\phi^*_1}(f'_1,x'_1), \ldots, \ell_{\phi^*_T}(q_T, p_T) - \ell_{\phi^*_T}(f'_T,x'_T)\right) \le \theta/6\ \middle| \ \left((f_1,x_1),\ldots, (f_T,x_T)\right) \in A \right)  
$$
It then follows that
{\small
\begin{align*}
& \frac{1}{2}  \mathbb{P} \left(\sup_{\phi \in \Phi_T} \Compare(\ell_{\phi_1}(q_1,p_1) - \ell_{\phi_1}(f_1,x_1), \ldots, \ell_{\phi_T}(q_T, p_T) - \ell_{\phi_T}(f_T,x_T)) > \frac{ }{ } \theta/3 \right)  \\
& ~ \le  \mathbb{P} \left( \sup_{\phi \in \Phi_T} \Compare(\ell_{\phi_1}(q_1,p_1) - \ell_{\phi_1}(f_1,x_1), \ldots, \ell_{\phi_T}(q_T, p_T) - \ell_{\phi_T}(f_T,x_T)) > \frac{ }{ } \theta/3 \right)  \\
& ~ ~~~ \times \mathbb{P}\left(\Compare\left(\ell_{\phi^*_1}(q_1,p_1) - \ell_{\phi^*_1}(f'_1,x'_1), \ldots, \ell_{\phi^*_T}(q_T, p_T) - \ell_{\phi^*_T}(f'_T,x'_T)\right) \le \theta/6\ \middle| \ \left((f_1,x_1),\ldots, (f_T,x_T)\right) \in A \right)  \\
& ~ \le \mathbb{P} \left( \Compare(\ell_{\phi^*_1}(q_1,p_1) - \ell_{\phi^*_1}(f_1,x_1), \ldots, \ell_{\phi^*_T}(q_T, p_T) - \ell_{\phi^*_T}(f_T,x_T)) \right.\\
& ~~~~~~~~~~~~~~~~~~~~~~~~~~~\left. - \Compare(\ell_{\phi^*_1}(q_1,p_1) - \ell_{\phi^*_1}(f'_1,x'_1), \ldots, \ell_{\phi^*_T}(q_T, p_T) - \ell_{\phi^*_T}(f'_T,x'_T)) > \theta/6 \right) .
\end{align*}
By subadditivity of $\Compare$, the above expression is upper-bounded by
\begin{align*}
&\mathbb{P} \left( \Compare(\ell_{\phi^*_1}(f'_1,x'_1) - \ell_{\phi^*_1}(f_1,x_1), \ldots, \ell_{\phi^*_T}(f'_T,x'_T) - \ell_{\phi^*_T}(f_T,x_T))> \theta/6 \right) \\
&\le \mathbb{P} \left(\sup_{\phi \in \Phi_T} \Compare(\ell_{\phi_1}(f'_1,x'_1) - \ell_{\phi_1}(f_1,x_1), \ldots, \ell_{\phi_T}(f'_T,x'_T) - \ell_{\phi_T}(f_T,x_T))> \theta/6 \right)
\end{align*}
}
Hence,
{\small
\begin{align*}
&\sup_{\mbf{D}} \mathbb{P}_{\mbf{D}} \left(\sup_{\phi \in \Phi_T} \Compare(\ell_{\phi_1}(q_1,p_1) - \ell_{\phi_1}(f_1,x_1), \ldots, \ell_{\phi_T}(q_T, p_T) - \ell_{\phi_T}(f_T,x_T)) > \frac{ }{ } \theta/3 \right) \\
& ~~ \le 2 \sup_{\mbf{D}} \mathbb{P}_\mbf{D} \left(\sup_{\phi \in \Phi_T} \Compare(\ell_{\phi_1}(f'_1,x'_1) - \ell_{\phi_1}(f_1,x_1), \ldots, \ell_{\phi_T}(f'_T,x'_T) - \ell_{\phi_T}(f_T,x_T)) > \theta/6 \right)\\
&  = 2  \sup_{q_1,p_1} \Eunder{f_1 , f'_1 \sim q_1}{x_1, x'_1 \sim p_1} \ldots \sup_{q_T,p_T} \Eunder{f_T, f'_T \sim q_T}{x_T, x'_T \sim p_T} \ind{ \left(\sup_{\phi \in \Phi_T} \Compare(\ell_{\phi_1}(f'_1,x'_1) - \ell_{\phi_1}(f_1,x_1), \ldots, \ell_{\phi_T}(f'_T,x'_T) - \ell_{\phi_T}(f_T,x_T)) > \theta/6 \right)}.
\end{align*}
Next, introducing a Rademacher random variable $\epsilon_T$, the above quantity is equal to
\begin{align*}
& 2  \sup_{q_1,p_1} \Eunder{f_1 , f'_1 \sim q_1}{x_1, x'_1 \sim p_1} \ldots \sup_{q_T,p_T} \Eunder{f_T, f'_T \sim q_T}{x_T, x'_T \sim p_T}\Es{\epsilon_T}{ \ind{ \left(\sup_{\phi \in \Phi_T} \Compare(\ell_{\phi_1}(f'_1,x'_1) - \ell_{\phi_1}(f_1,x_1), \ldots, \epsilon_T (\ell_{\phi_T}(f'_T,x'_T) - \ell_{\phi_T}(f_T,x_T))) > \theta/6 \right)}}.
\end{align*}
We pass to an upper bound by taking supremum over $(f_T,x_T),(f'_T,x'_T)$:
\begin{align*}
& 2  \sup_{q_1,p_1} \Eunder{f_1 , f'_1 \sim q_1}{x_1, x'_1 \sim p_1} \ldots \sup_{q_{T-1},p_{T-1}} \Eunder{f_{T-1} , f'_{T-1} \sim q_{T -1}}{x_{T-1}, x'_{T-1} \sim p_{T-1}} \sup_{(f_T,x_T),(f'_T,x'_T)} \En_{\epsilon_T} \\
&~~~~~~~~~~~~~~~~~~~~~~~~~~~~~~\mathbf{1}\left\{ \sup_{\phi \in \Phi_T} \Compare(\ell_{\phi_1}(f'_1,x'_1) - \ell_{\phi_1}(f_1,x_1), \ldots, \epsilon_T (\ell_{\phi_T}(f'_T,x'_T) - \ell_{\phi_T}(f_T,x_T))) > \theta/6 \right\}.
\end{align*}
Repeating the process from inside out, we arrive at the upper bound 
\begin{align*}
& 2  \sup_{(f_1,x_1),(f'_1,x'_1)} \En_{\epsilon_1} \ldots  \sup_{(f_T,x_T),(f'_T,x'_T)} \En_{\epsilon_T} \\
&~~~~~~~~~~~~~~~~~~~~~~~~~~~~~ \mathbf{1}\left\{ \sup_{\phi \in \Phi_T} \Compare(\epsilon_1 (\ell_{\phi_1}(f'_1,x'_1) - \ell_{\phi_1}(f_1,x_1)), \ldots, \epsilon_T (\ell_{\phi_T}(f'_T,x'_T) - \ell_{\phi_T}(f_T,x_T))) > \theta/6 \right\}
\end{align*}
which can be written using the tree notation as
\begin{align*}
&  2  \sup_{\f , \f' ,\x, \x'} \Es{\epsilon}{ \mathbf{1}\left\{ \sup_{\phi \in \Phi_T} \Compare(\epsilon_1 (\ell_{\phi_1}(\f'_1(\epsilon),\x'_1(\epsilon)) - \ell_{\phi_1}(\f_1(\epsilon),\x_1(\epsilon))), \ldots, \epsilon_T (\ell_{\phi_T}(\f'_T(\epsilon),\x'_T(\epsilon)) - \ell_{\phi_T}(\f_T(\epsilon),\x_T(\epsilon)))) > \theta/6 \right\}}\\
&  = 2  \sup_{\f , \f' ,\x, \x'} \mathbb{P}_{\epsilon}\left( \sup_{\phi \in \Phi_T} \Compare(\epsilon_1 (\ell_{\phi_1}(\f'_1(\epsilon),\x'_1(\epsilon)) - \ell_{\phi_1}(\f_1(\epsilon),\x_1(\epsilon))), \ldots, \epsilon_T (\ell_{\phi_T}(\f'_T(\epsilon),\x'_T(\epsilon)) - \ell_{\phi_T}(\f_T(\epsilon),\x_T(\epsilon)))) > \theta/6 \right)
\end{align*}
Next, using subadditivity of $\Compare$, the last quantity can be upper bounded by
\begin{align*}
&2  \sup_{\f , \f' ,\x, \x'} \mathbb{P}_{\epsilon}\left( \sup_{\phi \in \Phi_T} \left\{ \Compare(\epsilon_1 \ell_{\phi_1}(\f'_1(\epsilon),\x'_1(\epsilon)) , \ldots, \epsilon_T \ell_{\phi_T}(\f'_T(\epsilon),\x'_T(\epsilon)) ) 
+  \Compare(-\epsilon_1 \ell_{\phi_1}(\f_1(\epsilon),\x_1(\epsilon)) , \ldots, - \epsilon_T \ell_{\phi_T}(\f_T(\epsilon),\x_T(\epsilon)) ) \right\} > \theta/6 \right)\\
& \le 2  \sup_{\f , \f' ,\x, \x'} \left\{ \mathbb{P}_{\epsilon}\left( \sup_{\phi \in \Phi_T}  \Compare(\epsilon_1 \ell_{\phi_1}(\f'_1(\epsilon),\x'_1(\epsilon)) , \ldots, \epsilon_T \ell_{\phi_T}(\f'_T(\epsilon),\x'_T(\epsilon)) ) > \theta/12 \right)  \right.\\
&~~~~~~~~~~~~~~~~~~~~~~~~~~~\left.+ \mathbb{P}_{\epsilon}\left( \sup_{\phi \in \Phi_T} \Compare(-\epsilon_1 \ell_{\phi_1}(\f_1(\epsilon),\x_1(\epsilon)) , \ldots, - \epsilon_T \ell_{\phi_T}(\f_T(\epsilon),\x_T(\epsilon)) )  > \theta/12 \right)\right\}\\
& = 4 \ \sup_{\f  ,\x}\ \mathbb{P}_{\epsilon}\left( \sup_{\phi \in \Phi_T}  \Compare(\epsilon_1 \ell_{\phi_1}(\f_1(\epsilon),\x_1(\epsilon)) , \ldots, \epsilon_T \ell_{\phi_T}(\f_T(\epsilon),\x_T(\epsilon)) ) > \theta/12 \right)  ,
\end{align*}
}
concluding the proof.
\end{proof}

\begin{proof}[\textbf{Proof of Lemma~\ref{lem:gensmoothcon}}]
	By Proposition~\ref{prop:smooth} and the Azuma-Hoeffding inequality for real-valued martingales,
\begin{align*}
P\left(\Compare(z_1,\ldots,z_T) > \theta \right) & = P\left(\Compare^q(z_1,\ldots,z_T) > \theta ^q\right) \\
& \le P\left(\sum_{t=1}^T \inner{\nabla_t \Compare^q(z_1,\ldots,z_{t-1},0,\ldots,0)  ,  z_t} > \theta ^q - \sigma T \RH^p/p \right)\\
& \le \mathrm{exp}\left( - \frac{\left(\theta ^q - \sigma T \RH^p/p\right)^2}{2\RH^2 R^2 T}\right) \ .
\end{align*}
\end{proof}

\begin{proof}[\textbf{Proof of Lemma~\ref{lem:pollard}}]
Fix $(\f,\x)$ and let $V=\{\v^1,\ldots,\v^N\}$ be a minimal $\ell_1$-cover of $\Phi_T$ on $(\f,\x)$ of size $N\leq {\mathcal N}_1(\theta/2,\Phi_T,T)$. Let $\v[\phi,\epsilon]\in V$ denote a member of the cover which is close to $\phi\in\Phi_T$ on the path $\epsilon$. By sub-additivity of $G$,
\begin{align*}
	&\Prob_\epsilon\left( \sup_{\phi \in \Phi_T} G\left( 
		\frac{1}{T}\sum_{t=1}^T \epsilon_t \ell_{\phi_t}(\f_t(\epsilon),\x_t(\epsilon)) 
	\right) > \theta \right) \\
	&\leq \Prob_\epsilon\left( \sup_{\phi \in \Phi_T} \left\{ G\left(\frac{1}{T} \sum_{t=1}^T \epsilon_t (\ell_{\phi_t}(\f_t(\epsilon),\x_t(\epsilon)) - \v[\phi,\epsilon]_t))\right) + G\left(\frac{1}{T} \sum_{t=1}^T \epsilon_t \v[\phi,\epsilon]_t\right) \right\} > \theta \right) 
\end{align*}

Using the Lipschitz property of $G$ along with $G(0)=0$ and triangle inequality, we can upper bound the last quantity by
\begin{align*}
&\Prob_\epsilon\left( \sup_{\phi \in \Phi_T} \left\{ 
	\frac{1}{T} \sum_{t=1}^T \left\|\ell_{\phi_t}(\f_t(\epsilon),\x_t(\epsilon)) - \v[\phi,\epsilon]_t\right\| + G\left(\frac{1}{T} \sum_{t=1}^T \epsilon_t \v[\phi,\epsilon]_t\right) 
	\right\} 
	> \theta \right) \\
&\le \Prob_\epsilon\left( \sup_{\phi \in \Phi_T}  G\left( \frac{1}{T} \sum_{t=1}^T \epsilon_t \v[\phi,\epsilon]_t\right) > \theta/2 \right), 
\end{align*}
where the last step follows by the definition of the cover. The last quantity can be upper bounded by
\begin{align*}
\Prob_\epsilon\left( \max_{\v \in V}  G\left(\frac{1}{T} \sum_{t=1}^T \epsilon_t \v_t(\epsilon)\right) > \theta/2 \right) 
& \le \sum_{\v \in V} \Prob_\epsilon\left(   G\left(\frac{1}{T} \sum_{t=1}^T \epsilon_t \v_t(\epsilon)\right) > \theta/2 \right) \\
& \le |V| \sup_{\z} \Prob_{\epsilon} \left( G\left(\frac{1}{T}\sum_{t=1}^T \epsilon_t \z_t(\epsilon)\right) > \theta/2 \right) , 
\end{align*}
where the supremum is over all $\cH$-valued binary trees $\z$ of depth $T$. \end{proof}

\begin{proof}[\textbf{Proof of Corollary~\ref{cor:2smooth}}]
Follows directly by combining Lemma~\ref{lem:pollard} with Corollary~\ref{cor:pinelis_concentration}.
\end{proof}

\begin{proof}[\textbf{Proof of Proposition~\ref{prop:dudreal}}]
Define $\beta_0 = 1$ and $\beta_j = 2^{-j}$. For a fixed tree $(\f,\x)$ of depth $T$, let $V_j$ be an  $\ell_\infty$-cover at scale $\beta_j$. For any path $\epsilon \in \{\pm1\}^T$ and any $\bphi \in \Phi_T$, let $\v[\bphi,\epsilon]^j \in V_j$ a $\beta_j$-close element of the cover in the $\ell_\infty$ sense.
Now, for any $\bphi\in\Phi_T$,
\begin{align*}
\left| \frac{1}{T}\sum_{t=1}^T \epsilon_t \ell_{\phi_t}(\f_t(\epsilon),\x_t(\epsilon)) \right|
&\leq \left| \frac{1}{T}\sum_{t=1}^T \epsilon_t ( \ell_{\phi_t}(\f_t(\epsilon),\x_t(\epsilon)) - \v[\phi,\epsilon]^{N}_t ) \right| 
+ \sum_{j=1}^{N} \left| \frac{1}{T} \sum_{t=1}^T \epsilon_t \left( \v[\phi,\epsilon]^{j}_t  - \v[\phi,\epsilon]^{j-1}_t \right) \right|  \\
& \leq \max_{t\in[T]} \left| \ell_{\phi_t}(\f_t(\epsilon),\x_t(\epsilon))  -  \v[\phi,\epsilon]^N_t \right|  + \sum_{j=1}^{N} \left|\frac{1}{T} \sum_{t=1}^T \epsilon_t (\v[\phi,\epsilon]^{j}_t - \v[\phi,\epsilon]^{j-1}_t)\right| 
\end{align*}
Thus,
\begin{align*}
\sup_{\bphi\in\Phi_T} \left| \frac{1}{T}\sum_{t=1}^T \epsilon_t \ell_{\phi_t}(\f_t(\epsilon),\x_t(\epsilon)) \right| & \le \beta_N  + \sup_{\phi \in \Phi_T}\left\{\sum_{j=1}^{N} \left|\frac{1}{T} \sum_{t=1}^T \epsilon_t (\v[\phi,\epsilon]^{j}_t - \v[\phi,\epsilon]^{j-1}_t)\right| \right\}
\end{align*}
We now proceed to upper bound the second term. Consider all possible pairs of $\v^s\in V_j$ and $\v^r\in V_{j-1}$, for $1\leq s\leq |V_j|$, $1\leq r \leq |V_{j-1}|$, where we assumed an arbitrary enumeration of elements. For each pair $(\v^s,\v^r)$, define a real-valued tree $\w^{(s,r)}$ by
\begin{align*}
\w^{(s,r)}_t(\epsilon) = 
	\begin{cases} 
	\v^s_t(\epsilon)-\v^r_t(\epsilon) & \text{if there exists } \bphi \in \Phi_T \mbox{ s.t. } \v^s = \v[\bphi,\epsilon]^{j}, \v^r = \v[\bphi,\epsilon]^{j-1} \\
	0 &\text{otherwise.}
	\end{cases}
\end{align*}
for all $t\in [T]$ and $\epsilon\in\{\pm1\}^T$. It is crucial that $\w^{(s,r)}$ can be non-zero only on those paths $\epsilon$ for which $\v^s$ and $\v^r$ are indeed the members of the covers (at successive resolutions) close in the $\ell_\infty$ sense {\em to some} $\bphi \in \Phi_T$. It is easy to see that $\w^{(s,r)}$ is well-defined. Let the set of trees $W_j$ be defined as
\begin{align*}
	W_j = \left\{ \w^{(s,r)}: 1\leq s\leq |V_j|, 1\leq r \leq |V_{j-1}| \right\}
\end{align*}
Using the above notations we see that

\begin{align}\label{eq:dudsimp1}
	\sup_{\phi \in \Phi_T} \left| \frac{1}{T}\sum_{t=1}^T \epsilon_t \ell_{\phi_t}(\f_t(\epsilon),\x_t(\epsilon)) \right| 
&\leq  \beta_N  + \sup_{\phi \in \Phi_T}\left\{\sum_{j=1}^{N} \left|\frac{1}{T} \sum_{t=1}^T \epsilon_t (\v[\phi,\epsilon]^{j}_t - \v[\phi,\epsilon]^{j-1}_t)\right| \right\} \notag\\
&  \le \beta_N  + \sum_{j=1}^{N} \sup_{\w^j \in W_j} \left|\frac{1}{T} \sum_{t=1}^T \epsilon_t \w^j_t(\epsilon)\right|
\end{align}

It is easy to show that $\max_{t\in[T]} |\w^j_t(\epsilon)| \leq 3\beta_j$ for any $\w^j\in W_j$ and any path $\epsilon$.

In the remainder of the proof we will use the shorthand $\N_\infty (\beta) = \N_\infty(\beta, \Phi_T, T)$. By Azuma-Hoeffding inequality for real-valued martingales,
$$
\Prob_\epsilon\left(\left|\frac{1}{T} \sum_{t=1}^T \epsilon_t \w^j_t(\epsilon)\right| > \theta \beta_j \sqrt{\log \N_\infty(\beta_j)}\right) \le 2\exp\left\{- \frac{ T \theta^2 \log \N_\infty(\beta_j)}{2} \right\}
$$
Hence by union bound we have,
$$
\Prob_\epsilon\left(\sup_{\w^j \in W_j} \left|\frac{1}{T} \sum_{t=1}^T \epsilon_t \w^j_t(\epsilon)\right| > \theta \beta_j \sqrt{\log \N_\infty(\beta_j)}\right) \le 2 \N_\infty(\beta_j)^2\ \exp\left\{- \frac{T \theta^2 \log \N_\infty(\beta_j)}{2} \right\}
$$
and so
$$
\Prob_\epsilon\left(\exists j \in [N],\ \ \sup_{\w^j \in W_j} \left|\frac{1}{T} \sum_{t=1}^T \epsilon_t \w^j_t(\epsilon)\right| > \theta \beta_j \sqrt{\log \N_\infty(\beta_j)}\right) \le 2\sum_{j=1}^N \N_\infty(\beta_j)^2\ \exp\left\{- \frac{T \theta^2 \log \N_\infty(\beta_j)}{2} \right\}
$$
Hence clearly
$$
\Prob_\epsilon\left(\sum_{j=1}^N \sup_{\w^j \in W_j} \left|\frac{1}{T} \sum_{t=1}^T \epsilon_t \w^j_t(\epsilon)\right| > \theta \sum_{j=1}^N \beta_j \sqrt{\log \N_\infty(\beta_j)}\right) \le 2\sum_{j=1}^N \N_\infty(\beta_j)^2\ \exp\left\{- \frac{T \theta^2 \log \N_\infty(\beta_j)}{2} \right\}
$$
Using the above with Equation \eqref{eq:dudsimp1} gives us that
\begin{align*}
\Prob_\epsilon\left(\sup_{\phi \in \Phi_T} \left|\frac{1}{T}\sum_{t=1}^T \epsilon_t \ell_{\phi_t}(\f_t(\epsilon),\x_t(\epsilon))\right| > \beta_N + \theta \sum_{j=1}^N \beta_j \sqrt{\log \N_\infty(\beta_j)} \right) 
&\le 2\sum_{j=1}^N \N_\infty(\beta_j)^2\ \exp\left\{- \frac{T \theta^2 \log \N_\infty(\beta_j)}{2} \right\}\\
&\le 2\sum_{j=1}^N \exp\left\{\log \N_\infty(\beta_j) \left(2 - \frac{T \theta^2}{2}\right) \right\}
\end{align*}
Since we assume that $2 < \frac{T \theta^2}{4}$, the right-hand side of the last inequality is bounded above by
\begin{align*}
2\sum_{j=1}^N \exp\left\{ - \frac{T \theta^2 \log \N_\infty(\beta_j) }{4} \right\} \le 2\sum_{j=1}^N \exp\left\{ - \frac{T \theta^2 }{4} - \log \N_\infty(\beta_j)  \right\}
\le 2e^{- \frac{T \theta^2  }{4}} \sum_{j=1}^N \N_\infty(\beta_j)^{-1} \ .
\end{align*}
By our assumption that
$
\sum_{j=1}^N \N_\infty(\beta_j)^{-1} \le L
$ for some appropriate constant $L$, we see that
\begin{align*}
& \Prob_\epsilon\left(\sup_{\phi \in \Phi_T} \left|\frac{1}{T}\sum_{t=1}^T \epsilon_t \ell_{\phi_t}(\f_t(\epsilon),\x_t(\epsilon))\right| > \beta_N + \theta \sum_{j=1}^N \beta_j \sqrt{\log \N_\infty(\beta_j)} \right) \le L e^{- \frac{T \theta^2  }{4}}
\end{align*}
Now picking $N$ appropriately and bounding sum by integral we have that 
$$
\beta_N + \theta \sum_{j=1}^N \beta_j \sqrt{\log \N_\infty(\beta_j)} \le \inf_{\alpha > 0}\left\{ 4 \alpha + 12 \theta \int_{\alpha}^{1} \sqrt{\log \N_\infty(\delta)} d \delta \right\}
$$
Hence we conclude that 
\begin{align*}
& \Prob_\epsilon\left(\sup_{\phi \in \Phi_T} \left|\frac{1}{T}\sum_{t=1}^T \epsilon_t \ell_{\phi_t}(\f_t(\epsilon),\x_t(\epsilon))\right| > \inf_{\alpha > 0}\left\{ 4 \alpha + 12 \theta \int_{\alpha}^{1} \sqrt{\log \N_\infty(\delta,\Phi_T,T)} d \delta \right\} \right) \le L e^{- \frac{T \theta^2  }{4}}
\end{align*}

The last statement the Proposition follows from the fact that the Dudley-type integral
\[
\inf_{\alpha > 0}\left\{ 4 \alpha + 12 \theta \int_{\alpha}^{1} \sqrt{\log \N_\infty(\delta,\Phi_T,T)} d \delta \right\} 
\]
can be upper bounded by
\[
8\left(1 + 4\sqrt{2}\theta \sqrt{ T \log^3(eT^2)}\right) \le 128\left(1 + \theta \sqrt{T \log^3(2T)} \right) 
\]
times the sequential Rademacher complexity. The proof can be found in \cite{RakSriTew10}.
\end{proof}

\begin{proof}[\textbf{Proof of Lemma~\ref{lem:smoothcon}}]
Let $\|\cdot\|_*$ be the norm dual to $\|\cdot\|$. First note that
\begin{align*}
& \Prob\left(\left\|\frac{1}{T} \sum_{t=1}^T \epsilon_t \x_t(\epsilon) \right\| > c\ \sup_\x \E{\left\|\frac{1}{T} \sum_{t=1}^T \epsilon_t \x_t(\epsilon) \right\|}\left(1 + \theta \sqrt{T \log^{3}T} \right) \right) \\
& ~~~~~~~~~~ = \Prob\left(\sup_{w : \|w\|_* \le 1 } \frac{1}{T} \sum_{t=1}^T \epsilon_t \inner{ w , \x_t(\epsilon)} > c\ \frac{1}{T} \sup_\x \E{\sup_{w : \|w\|_* \le 1} \sum_{t=1}^T \epsilon_t \inner{w , \x_t(\epsilon)}}\left(1 +   \theta \sqrt{T \log^{3}T} \right)  \right) \ .
\end{align*}
Now, by Proposition \ref{prop:dudreal} for payoff functions $\ell(f,x) = f(x)= \inner{f,x}$ and class $\Phi_T$ being the time-invariant constant departure mapping class, by noting that $\sup_\x \E{\left\|\frac{1}{T} \sum_{t=1}^T \epsilon_t \x_t(\epsilon) \right\|} = \Rad_T(\F)$ we get that
\begin{align*}
& \Prob\left(\left\|\frac{1}{T} \sum_{t=1}^T \epsilon_t \x_t(\epsilon) \right\| > c\ \sup_\x \E{\left\|\frac{1}{T} \sum_{t=1}^T \epsilon_t \x_t(\epsilon) \right\|}\left(1 + \theta \sqrt{T \log^{3}T} \right) \right) \le L \exp\left( -T \theta^2/ 2\right)
\end{align*}
where $c=128$.
Now note that for a $(\sigma,p)$-smooth space we have that
$$
\sup_\x \E{\left\|\frac{1}{T} \sum_{t=1}^T \epsilon_t \x_t(\epsilon) \right\| } \le \sigma^{1/p} \left(\frac{1}{T^p}\sup_\x \sum_{t=1}^T \E{\|\x_t(\epsilon)\|^p} \right)^{1/p} \le  \frac{\sigma^{1/p}R}{T^{1-1/p}}
$$
Moreover, the linear class $\F$ has covering numbers satisfying $\mc{N}_\infty(\beta) \ge 1/\beta$ and hence $L < 2$. Thus,
\begin{align*}
& \Prob\left(\left\|\frac{1}{T} \sum_{t=1}^T \epsilon_t \x_t(\epsilon) \right\| > c\ \frac{\sigma^{1/p}R}{T^{1-1/p}}
\left( 1+ \theta \sqrt{T \log^{3}T} \right) \right) \le 2 \exp\left( -T \theta^2/ 2\right)
\end{align*}
Now setting $\nu = \theta \sigma^{1/p} \sqrt{T \log^3 T}/T^{1-1/p}$ gives the required bound as,
\begin{align*}
& \Prob\left(\left\|\frac{1}{T} \sum_{t=1}^T \epsilon_t \x_t(\epsilon) \right\| >
	c\ \frac{\sigma^{1/p}R}{T^{1-1/p}} + c\nu R \right)
	\le
	2 \exp\left( -\frac{\nu^2T^{2-2/p}}{2\sigma^{2/p}\log^3 T} \right)
\end{align*}
The condition $\theta > \sqrt{8/T}$ on $\theta$ (from Proposition \ref{prop:dudreal}) implies that the above is valid only for
$$\nu > \frac{8\sigma^{1/p}\log^{3/2} T}{T^{1-1/p}}\ .$$
\end{proof}

\begin{proof}[\textbf{Proof of Theorem~\ref{thm:rad_upper_p_smooth}}]
Define $\beta_0 = 1$ and $\beta_j = 2^{-j}$. For a fixed tree $(\f,\x)$ of depth $T$, let $V_j$ be an  $\ell_\infty$-cover at scale $\beta_j$. For any path $\epsilon \in \{\pm1\}^T$ and any $\bphi \in \Phi_T$, let $\v[\bphi,\epsilon]^j \in V_j$ a $\beta_j$-close element of the cover in the $\ell_\infty$ sense.
Now, for any $\bphi\in\Phi_T$,
\begin{align*}
&\sup_{\phi \in \Phi_T} G\left(\frac{1}{T}\sum_{t=1}^T \epsilon_t \ell_{\phi_t}(\f_t(\epsilon),\x_t(\epsilon))\right) \\
& =  \sup_{\phi \in \Phi_T} \left\{ G\left(\frac{1}{T}\sum_{t=1}^T \epsilon_t \ell_{\phi_t}(\f_t(\epsilon),\x_t(\epsilon))\right) -  G\left(\frac{1}{T} \sum_{t=1}^T \epsilon_t \v[\phi,\epsilon]^{N}_t\right) \right.\\
&~~~~~~~~~~~\left. + \sum_{j=1}^{N}\left(G\left(\frac{1}{T} \sum_{t=1}^T \epsilon_t \v[\phi,\epsilon]^{j}_t\right) - G\left(\frac{1}{T} \sum_{t=1}^T \epsilon_t \v[\phi,\epsilon]^{j-1}_t \right) \right) \right\} \\
& \le \sup_{\phi \in \Phi_T} \left\{ \left\|\frac{1}{T} \sum_{t=1}^T \epsilon_t (\ell_{\phi_t}(\f_t(\epsilon),\x_t(\epsilon))  -  \v[\phi,\epsilon]^N_t) \right\| + \sum_{j=1}^{N} \left\|\frac{1}{T} \sum_{t=1}^T \epsilon_t (\v[\phi,\epsilon]^{j}_t - \v[\phi,\epsilon]^{j-1}_t)\right\|\right\}\\
& \le \sup_{\phi \in \Phi_T}\left\{\max_{t \in [T]} \left\| \ell_{\phi_t}(\f_t(\epsilon),\x_t(\epsilon))  -  \v[\phi,\epsilon]^N_t \right\|  + \sum_{j=1}^{N} \left\|\frac{1}{T} \sum_{t=1}^T \epsilon_t (\v[\phi,\epsilon]^{j}_t - \v[\phi,\epsilon]^{j-1}_t)\right\| \right\}\\
& \le \beta_N  + \sup_{\phi \in \Phi_T}\left\{\sum_{j=1}^{N} \left\|\frac{1}{T} \sum_{t=1}^T \epsilon_t (\v[\phi,\epsilon]^{j}_t - \v[\phi,\epsilon]^{j-1}_t)\right\| \right\}
\end{align*}
Consider all possible pairs of $\v^s\in V_j$ and $\v^r\in V_{j-1}$, for $1\leq s\leq |V_j|$, $1\leq r \leq |V_{j-1}|$, where we assumed an arbitrary enumeration of elements. For each pair $(\v^s,\v^r)$, define an $\cH$-valued tree $\w^{(s,r)}$ by
\begin{align*}
\w^{(s,r)}_t(\epsilon) = 
	\begin{cases} 
	\v^s_t(\epsilon)-\v^r_t(\epsilon) & \text{if there exists } \bphi \in \Phi_T \mbox{ s.t. } \v^s = \v[\bphi,\epsilon]^{j}, \v^r = \v[\bphi,\epsilon]^{j-1} \\
	0 &\text{otherwise.}
	\end{cases}
\end{align*}
for all $t\in [T]$ and $\epsilon\in\{\pm1\}^T$. It is crucial that $\w^{(s,r)}$ can be non-zero only on those paths $\epsilon$ for which $\v^s$ and $\v^r$ are indeed the members of the covers (at successive resolutions) close in the $\ell_\infty$ sense {\em to some} $\bphi \in \Phi_T$. It is easy to see that $\w^{(s,r)}$ is well-defined. Let the set of trees $W_j$ be defined as
\begin{align*}
	W_j = \left\{ \w^{(s,r)}: 1\leq s\leq |V_j|, 1\leq r \leq |V_{j-1}| \right\}
\end{align*}
Using the above notations we see that

\begin{align*}
\sup_{\phi \in \Phi_T} G\left(\frac{1}{T}\sum_{t=1}^T \epsilon_t \ell_{\phi_t}(\f_t(\epsilon),\x_t(\epsilon))\right) &  \le \beta_N  + \sup_{\phi \in \Phi_T}\left\{\sum_{j=1}^{N} \left\|\frac{1}{T} \sum_{t=1}^T \epsilon_t (\v[\phi,\epsilon]^{j}_t - \v[\phi,\epsilon]^{j-1}_t)\right\| \right\} \notag \\
&  \le \beta_N  + \sum_{j=1}^{N} \sup_{\w^j \in \W^j} \left\|\frac{1}{T} \sum_{t=1}^T \epsilon_t \w^j_t(\epsilon)\right\| 
\end{align*}

Now before we proceed note that any $\w^j \in W_j$ is such that for any $t \in [T]$ and any $\epsilon \in \{\pm1\}^T$, $\|\w_t^j(\epsilon)\| \le 3\beta_j$. Hence we see that $W_j$ consists of $Y_j$-valued trees, where $Y_j = \{x : \|x\| \le 3\beta_j\}$. Hence  
\begin{align}\label{eq:dudsimp}
\sup_{\phi \in \Phi_T} G\left(\frac{1}{T}\sum_{t=1}^T \epsilon_t \ell_{\phi_t}(\f_t(\epsilon),\x_t(\epsilon))\right) &  \le \beta_N  + \sum_{j=1}^{N} \sup_{\w^j \in \W^j} \left\|\frac{1}{T} \sum_{t=1}^T \epsilon_t \w^j_t(\epsilon)\right\| \notag \\
&  \le \beta_N  + \sum_{j=1}^{N} \sup_{\y^j} \left\|\frac{1}{T} \sum_{t=1}^T \epsilon_t \y^j_t(\epsilon)\right\| 
\end{align}
where the supremum is over $Y_j$-valued trees. 

In the remainder of the proof we will use the shorthand $\N_\infty(\beta) = \N_\infty(\beta, \Phi_T, T)$ and will use the constant $c=128$. By Lemma \ref{lem:smoothcon}, for any
$\theta \ge 8\,c\,\sigma^{1/p}\,\log^{3/2}T/T^{1-1/p}$, we have
$$
\Prob_\epsilon\left(\left\|\frac{1}{T} \sum_{t=1}^T \epsilon_t \y^j_t(\epsilon)\right\| > \frac{3c\sigma^{1/p} \beta_j}{T^{1 - 1/p}} + 3\theta \beta_j \sqrt{\log \N_\infty(\beta_j)}\right) \le 2\ \exp\left\{- \frac{ T^{2 - 2/p}\ \theta^2\ \log \N_\infty(\beta_j)}{2 c^2 \sigma^{2/p} \log^3 T} \right\} \ .
$$
By the union bound,
$$
\Prob_\epsilon\left( \sup_{\y^j} \left\|\frac{1}{T} \sum_{t=1}^T \epsilon_t \y^j_t(\epsilon)\right\| > \frac{3c\sigma^{1/p} \beta_j}{T^{1 - 1/p}} + 3\theta \beta_j \sqrt{\log \N_\infty(\beta_j)}\right) \le 2\ \N_\infty(\beta_j)\  \exp\left\{- \frac{ T^{2 - 2/p}\ \theta^2\ \log \N_\infty(\beta_j)}{2 c^2 \sigma^{2/p} \log^3 T} \right\}
$$
and so
{\small
$$
\Prob_\epsilon\left(\exists j \in [N],  ~~\sup_{\y^j} \left\|\frac{1}{T} \sum_{t=1}^T \epsilon_t \y^j_t(\epsilon)\right\| > \frac{3c \sigma^{1/p} \beta_j}{T^{\frac{p - 1}{p}}} + 3\theta \beta_j \sqrt{\log \N_\infty(\beta_j)}\right) \le 2 \sum_{j=1}^N \N_\infty(\beta_j)\  \exp\left\{- \frac{ T^{\frac{2(p-1)}{p}} \theta^2 \log \N_\infty(\beta_j)}{2 c^2 \sigma^{2/p} \log^3 T} \right\}
$$}
Hence, 
{\small 
$$
\Prob_\epsilon\left(\sum_{j=1}^N \sup_{\y^j} \left\|\frac{1}{T} \sum_{t=1}^T \epsilon_t \y^j_t(\epsilon)\right\| > \frac{6\sigma^{1/p} c}{T^{\frac{p- 1}{p}}} + 3\theta \sum_{j=1}^N \beta_j \sqrt{\log \N_\infty(\beta_j)}\right) \le 2\ \sum_{j=1}^N \N_\infty(\beta_j)\  \exp\left\{- \frac{ T^{\frac{2(p - 1)}{p}} \theta^2 \log \N_\infty(\beta_j)}{2 c^2 \sigma^{2/p} \log^3 T} \right\} \ .
$$}

Using the above with Equation \eqref{eq:dudsimp} gives us that
\begin{align*}
& \Prob_\epsilon\left(\sup_{\phi \in \Phi_T} G\left(\frac{1}{T}\sum_{t=1}^T \epsilon_t \ell_{\phi_t}(\f_t(\epsilon),\x_t(\epsilon))\right) > \frac{6\sigma^{1/p} c}{T^{\frac{p-1}{p}}} + \beta_N + 3\theta \sum_{j=1}^N \beta_j \sqrt{\log \N_\infty(\beta_j)} \right) \\
& ~~~~~~~~~~\le 2\ \sum_{j=1}^N \N_\infty(\beta_j)\  \exp\left\{- \frac{ T^{\frac{2(p - 1)}{p}} \theta^2 \log \N_\infty(\beta_j)}{2 c^2 \sigma^{2/p} \log^3 T} \right\}\\
& ~~~~~~~~~~\le 2\ \sum_{j=1}^N  \exp\left\{\log \N_\infty(\beta_j) \left(1 - \frac{ T^{\frac{2(p - 1)}{p}} \theta^2 }{2 c^2 \sigma^{2/p} \log^3 T}\right) \right\}
\end{align*}
Our assumption on $\theta$ implies that $\frac{ T^{\frac{2(p - 1)}{p}} \theta^2 }{4 c^2 \sigma^{2/p} \log^3 T} \ge 2$, so that
\begin{align*}
& \Prob_\epsilon\left(\sup_{\phi \in \Phi_T} G\left(\frac{1}{T}\sum_{t=1}^T \epsilon_t \ell_{\phi_t}(\f_t(\epsilon),\x_t(\epsilon))\right) > \frac{6\sigma^{1/p} c}{T^{\frac{p-1}{p}}} + \beta_N + 3\theta \sum_{j=1}^N \beta_j \sqrt{\log \N_\infty(\beta_j)} \right) \\
& ~~~~~~~~~~\le 2\ \sum_{j=1}^N  \exp\left\{-   \frac{ T^{\frac{2(p - 1)}{p}} \theta^2 \log \N_\infty(\beta_j) }{4 c^2 \sigma^{2/p} \log^3 T} \right\}\\
& ~~~~~~~~~~\le 2\ \sum_{j=1}^N \exp\left\{ - \frac{ T^{\frac{2(p - 1)}{p}} \theta^2 }{4 c^2 \sigma^{2/p} \log^3 T} - \log \N_\infty(\beta_j)  \right\}\\
& ~~~~~~~~~~\le2\  \exp\left\{- \frac{ T^{\frac{2(p - 1)}{p}} \theta^2 }{4 c^2 \sigma^{2/p} \log^3 T}\right\} \sum_{j=1}^N  \N_\infty(\beta_j)^{-1}
\end{align*}
Since we have assumed that
$
2\ \sum_{j=1}^N \N_\infty(\beta_j)^{-1} \le L
$, we see that
\begin{align*}
& \Prob_\epsilon\left(\sup_{\phi \in \Phi_T} G\left(\frac{1}{T}\sum_{t=1}^T \epsilon_t \ell_{\phi_t}(\f_t(\epsilon),\x_t(\epsilon))\right) > \frac{6 \sigma^{1/p} c}{T^{\frac{p-1}{p}}} + \beta_N + 3\theta \sum_{j=1}^N \beta_j \sqrt{\log \N_\infty(\beta_j)} \right) \le L \exp\left\{- \frac{ T^{\frac{2(p - 1)}{p}} \theta^2 }{4 c^2 \sigma^{2/p} \log^3 T}\right\}
\end{align*}
Using the arguments employed previously, picking $N$ appropriately and bounding sum by integral we have that 
$$
\beta_N + 3\theta \sum_{j=1}^N \beta_j \sqrt{\log \N_\infty(\beta_j)} \le \inf_{\alpha > 0}\left\{ 4 \alpha + 36 \theta \int_{\alpha}^{1} \sqrt{\log \N_\infty(\delta)} d \delta \right\} \ .
$$
Hence we conclude that 
\begin{align*}
& \Prob_\epsilon\left(\sup_{\phi \in \Phi_T} G\left(\frac{1}{T}\sum_{t=1}^T \epsilon_t \ell_{\phi_t}(\f_t(\epsilon),\x_t(\epsilon))\right) > \frac{6 \sigma^{1/p} c}{T^{\frac{p-1}{p}}} + \inf_{\alpha > 0}\left\{ 4 \alpha + 36 \theta \int_{\alpha}^{1} \sqrt{\log \N_\infty(\delta)} d \delta \right\} \right) \le L \exp\left\{- \frac{ T^{\frac{2(p - 1)}{p}} \theta^2 }{4 c^2 \sigma^{2/p} \log^3 T}\right\}
\end{align*}
\end{proof}

\begin{proof}[\textbf{Proof of Theorem~\ref{thm:calibration_almost_sure}}]
Let $\alpha > 0$ be a constant that we will fix later. Consider a ``subgaussian game" whose value is defined as:
{\small
\begin{align}
\label{def:subgaussianvalue}
	\Val^{SG}_T(\ell,\Phi_T) &= \inf_{q_1} \sup_{x_1} \Eu{f_1\sim q_1} \ldots \inf_{q_T} \sup_{x_T} \Eu{f_T\sim q_T}
	\Gamma\left(
	\sup_{\bphi\in \Phi_T}\left\{ \Compare(\ell(f_1,x_1), \ldots, \ell(f_T, x_T)) -  \Compare(\ell_{\phi_1}(  f_1,x_1), \ldots, \ell_{\phi_T}(f_T, x_T))\right\}
	\right)
\end{align}
}
where
$$
\Gamma(x) := \sup_{\theta}\ \exp(\alpha T \theta^2/k) \ind{ x > \theta} = \exp(\alpha T x^2/k) \ . 
$$
Here, we are using the intuition that we expect to find a player strategy using which the regret will have subgaussian tails.
As before, we consider the calibration setting described in Example~\ref{eg:calibration2} augmented with the restriction that the
player's choice belongs to $C_\delta$, a $2\delta$-maximal packing of $\Delta(k)$, instead of $\Delta(k)$. The choice of $\delta$
will be fixed later. We now apply the
general triplex inequality in Appendix~\ref{sec:gentriplex} with
\[
	\Lambda(x) := \sup_{\theta}\ \exp(\alpha T\theta^2/k) \ind{ x > \theta/3 } = \exp(9 \alpha T x^2/k) \ .
\]
Observe that the first term in the General Triplex Inequality is simply equal to $1$.
The second term is upper bounded by a particular (sub)optimal response $q_t$ being the point mass on $p^\delta_t$, the element of $C_\delta$ closest to $p_t$. Note that any $2 \delta$ packing is also a $2 \delta$ cover. Thus, the second term becomes
\begin{align*}
	&\sup_{p_1}\inf_{q_1} \ldots  \sup_{p_T}\inf_{q_T} 
  \Lambda\left( \sup_{\bphi\in \Phi_T}   \left\{ - \Compare(\ell_{\phi_1}(q_1,p_1), \ldots, \ell_{\phi_T}(q_T, p_T))\right\} \right) \\
	&\leq \sup_{p_1}\ldots  \sup_{p_T} \Lambda\left( \sup_{\lambda> 0}\sup_{p\in\Delta(k)} \left\| \frac{1}{T}\sum_{t=1}^T \En_{x_t \sim p_t} \ell_{\phi_{p,\lambda}}(p^\delta_t,x_t) \right\| \right)\\
	&= \sup_{p_1}\ldots  \sup_{p_T} \Lambda\left( \sup_{\lambda> 0}\sup_{p\in\Delta(k)}\left\| \frac{1}{T}\sum_{t=1}^T \ind{\|p^\delta_t-p\|\leq \lambda}\cdot (p^\delta_t-p_t) \right\| \right) \\
	&\leq \Lambda\left(\delta\right) = \exp(9\alpha\delta^2/k) \ .
\end{align*}
By the same reasoning as used in the previous proof, the third term
\[
\sup_{\mbf{D}} \Es{\mbf{D}}{
	\Lambda\left( \sup_{p,\lambda} \left\|
				\frac{1}{T} \sum_{t=1}^T \left(
								\ind{ \|f_t - p\| \le \lambda }(f_t - x_t)
								- \Es{t-1}{ \ind{ \|f_t - p\| \le \lambda }(f_t - x_t) }
							\right)
			\right\| \right)
			} 
\]
can be bounded by
\[
\sup_{\mbf{D}} \Es{\mbf{D}}{
	\Lambda\left( \max_{(p,\lambda)\in S} \left\|
				\frac{1}{T} \sum_{t=1}^T \left(
								\ind{ \|f_t - p\| \le \lambda }(f_t - x_t)
								- \Es{t-1}{ \ind{ \|f_t - p\| \le \lambda }(f_t - x_t) }
							\right)
			\right\| \right)
			} 
\]
where $S$ is a finite set of cardinality $|S| \le |C_\delta|^{ck^2}$. Since $\Lambda$ is non-decreasing and maximum of positive quantities is bounded by their sum,
we have the upper bound
\begin{align*}
& \sup_{\mbf{D}} \sum_{(\lambda,p)\in S} \Es{\mbf{D}}{
	\Lambda\left( \left\|
				\frac{1}{T} \sum_{t=1}^T \left(
								\ind{ \|f_t - p\| \le \lambda }(f_t - x_t)
								- \Es{t-1}{ \ind{ \|f_t - p\| \le \lambda }(f_t - x_t) }
							\right)
			\right\| \right)						
	} \\
&\le |S| \cdot M_\Lambda
\end{align*}
where $M_\Lambda$ is defined as
\[
	M_\Lambda := \sup_{MDS}\ \E{ \Lambda\left( \left\| \sum_{t=1}^T X_t \right\| \right) }\ .
\]
Here the supremum is over all martingale difference sequences $X_1,\ldots,X_T$ with $\|X_t\|_1 \le 2/T$ almost surely. 
Since we are considering the case when $\|\cdot\| = \|\cdot\|_1$, we have
\begin{align*}
M_\Lambda
&= \sup_{MDS} \E{ \exp\left( 9 \alpha T \left\| \sum_{t=1}^T X_t \right\|_1^2 / k \right) }\\
&\le \sup_{MDS} \E{ \exp\left( 9 \alpha T \left\| \sum_{t=1}^T X_t \right\|_2^2 \right) }
\end{align*}
Using Corollary~\ref{cor:pinelis_concentration}, we have
\begin{align*}
\E{ \exp\left( 9 \alpha T \left\| \sum_{t=1}^T X_t \right\|_2^2 \right) } 
&\le e + \int_{\theta \ge e} \Prob\left( 9 \alpha T \left\| \sum_{t=1}^T X_t \right\|_2^2 \ge \theta\right) d\theta \\
&\le e + \int_{\theta \ge e} 2\exp\left( - \frac{\log(\theta)}{288 \alpha} \right) d\theta \\
&\le e + \int_{\theta \ge e} \frac{2}{\theta^2}d\theta \le e+2 \le 5
\end{align*}
where we chose $\alpha = 1/576$ to make $288 \alpha = 1/2$. This shows that $M_\Lambda \le 5$ and hence the third term is bounded by
$5 |S|$.

Now putting the upper bounds on the three triplex inequality terms together, we get that
\[
	\Val^{SG}_T(\ell,\Phi_T) \le 1 + \exp\left( \frac{T\delta^2}{64k} \right) + 5\left(\frac{1}{\delta}\right)^{ck^3}\ .
\]
Choose $\delta = \sqrt{k/T}$ to get
\[
	\Val^{SG}_T(\ell,\Phi_T) \le 3 + 5 \left( \sqrt{\frac{T}{k}} \right)^{ck^3} \le 8 \, T^{ck^3/2}\ .
\]
Using Markov's inequality now shows that there is a player strategy such that against any adversary and any $\theta > 0$, we have
\[
	\Prob( \Reg_T > \theta) \le 8\, T^{ck^3/2} \exp\left( - \frac{T \theta^2}{576 k} \right)\ .
\]
Equivalently, for the same player strategy, against any adversary and any $\eta \in (0,1)$, we have with probability at least $1-\eta$,
\begin{equation}\label{eq:fixedhorizon}
	\Reg_T \le \frac{24}{\sqrt{T}} \cdot \sqrt{k \log\left(\frac{8}{\eta}\right) + \frac{ck^4}{2} \log(T) } \ .
\end{equation}

Finally to show almost sure convergence we need to use a ``doubling trick" similar to the one used in~\cite{ManSto09}.
We divide time into episodes $r=1,2,\ldots$ with episode $r$ of length $2^r$. In episode $r$, the player plays the optimal strategy
for the subgaussian game of length $2^r$. Thus, episode $r$ lasts during the time steps $E_r = \{2^r-1,\ldots,2^{r+1}-2\}$.
Now fix any adversary for the infinite round game and let us focus on the regret incurred
at some time $T$. We have,
\begin{align*}
\Reg_T
&=  \sup_{\lambda > 0}\sup_{p\in\Delta(k)} \left\|\frac{1}{T}\sum_{t=1}^T \ind{\|f_t-p\|\leq \lambda} \cdot (f_t-x_t) \right\| \\
&\le \frac{1}{T} \sum_{r = 1}^{\lceil \log_2(T) \rceil} \sup_{\lambda >0} \sup_{p\in\Delta(k)} \left\| \sum_{t \in E_r} \ind{\|f_t-p\|\leq \lambda} \cdot (f_t-x_t) \right\| \\
&\le \frac{1}{T} \sum_{r=1}^{\lceil \log_2(T) \rceil} 2^r \cdot \frac{24}{\sqrt{2^r}} \cdot \sqrt{ k \log\left( \frac{8}{\eta_{T,r}} \right) + \frac{ck^4}{2} \log(2^r) }
\end{align*}
with probability at least $1-\sum_{r<\log_2(T)} \eta_{T,r}$. In the last step we used~\eqref{eq:fixedhorizon} along with a union bound over episodes. Choosing
$\eta_{T,r} = 1/T^22^r$ ensures that with probability at least $1-1/T^2$, we have
\[
	\Reg_T \le 24(1+\sqrt{2}) \cdot \frac{ \sqrt{ k \log\left( 8 T^3 \right) + \frac{ck^4}{2} \log(T) } }   {\sqrt{T}} \ . 
\]
Since $24(1+\sqrt{2}) \le 60$, using Borel-Cantelli, this shows that
\[
	\Prob\left( \frac{\sqrt{T}}{\sqrt{ 3 k\log(2T) + \tfrac{ck^4}{2} \log(T)}} \cdot \Reg_T  > 60 \qquad \text{infinitely often} \right) = 0 \ .
\]
This proves the theorem.
\end{proof}

\section{Concentration of 2-Smooth Functions of Martingale-Difference Sums in Banach Spaces}

In this section we prove an extension of some of the results of Pinelis \cite{Pinelis94}. Let $(\cH,\|\cdot\|)$ be a separable Banach space such that there is a function $G:\cH \to \reals$ with the following properties:
\begin{align*}
G(\mathbf{0}) &= 0 &&\\
|G(\v + \w) - G(\v)| &\le \|\w\| && \text{(Lipschitz)} \\
(G^2)''(\v)[\w,\w] &\le \sigma \| \w \|^2 && \text{($G^2$ is $(\sigma,2)$-smooth)}
\end{align*}

Suppose we have an $\cH$-valued MDS $\{ X_t \}_{t=1}^T$. Define the partial sums $S_0 = \mathbf{0}$, $S_t = \sum_{s \le t} X_t$ for $t > 0$.
Define, for $t \ge 0$,
\[
	Z_t = \cosh(\lambda G(S_t))
\]
The following lemma is embedded in proof of Theorem 3.2 in Pinelis. Assume $\sigma \ge 1$ for simplicity. Otherwise, everything below
works by replacing $\sigma$ with $\max\{\sigma,1\}$.

\begin{lemma}
Suppose $\|X_t\| \le B$ a.s. and fix $\lambda > 0$. Then $Z_t/c^t$ is a supermartingale where
\[
	c = 1 + \sigma (\exp(\lambda B) - 1 - \lambda B) \ .
\]
In particular, we have
\[
	\E{ Z_T } \le c^T\ .
\]
\end{lemma}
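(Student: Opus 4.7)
The plan is to establish the one-step estimate
\[
\E{\cosh(\lambda G(\v+X))} \;\le\; c\,\cosh(\lambda G(\v))
\]
for every $\v\in\cH$ and every centered $\cH$-valued $X$ with $\|X\|\le B$ almost surely. Applying it to $\v=S_{t-1}$, $X=X_t$ conditional on $\mathcal{F}_{t-1}$ immediately gives $\E{Z_t\mid\mathcal{F}_{t-1}}\le c\,Z_{t-1}$, so $Z_t/c^t$ is a supermartingale; since $Z_0=\cosh(0)=1$, the bound $\E{Z_T}\le c^T$ drops out.

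The first ingredient is a deterministic scalar inequality: for every $a\in\reals$ and $|y|\le B$,
\[
\cosh(\lambda(a+y)) \;\le\; \cosh(\lambda a) + \lambda y\,\sinh(\lambda a) + \alpha\,y^{2}\cosh(\lambda a),
\qquad \alpha := \tfrac{e^{\lambda B}-1-\lambda B}{B^{2}}.
\]
This follows from $2\cosh(\lambda(a+y))=e^{\lambda a}e^{\lambda y}+e^{-\lambda a}e^{-\lambda y}$ together with the two elementary Taylor remainder bounds $e^{\pm\lambda y}\le 1\pm\lambda y+\alpha y^{2}$ valid for $|y|\le B$ (dominate $\sum_{k\ge 2}(\pm\lambda y)^{k}/k!$ term-by-term using $|\lambda y|^{k}\le y^{2}\lambda^{2}(\lambda B)^{k-2}$). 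Setting $a=G(\v)$ and $y=h(X):=G(\v+X)-G(\v)$ is legitimate because the Lipschitz hypothesis gives $|h(X)|\le\|X\|\le B$. Taking expectations yields
\[
\E{\cosh(\lambda G(\v+X))} \;\le\; \cosh(\lambda G(\v)) + \lambda\sinh(\lambda G(\v))\,\E{h(X)} + \alpha\,\cosh(\lambda G(\v))\,\E{h(X)^{2}}.
\]

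The main obstacle is that $\E{h(X)}$ is generally nonzero, so the first-order term must be absorbed into the quadratic one. This is precisely where the $(\sigma,2)$-smoothness of $G^{2}$ enters: taking expectations in $G^{2}(\v+X)\le G^{2}(\v)+\langle\nabla G^{2}(\v),X\rangle+\tfrac{\sigma}{2}\|X\|^{2}$, killing the gradient term via $\E{X}=0$, and expanding the square gives the pivotal estimate
\[
2G(\v)\,\E{h(X)}+\E{h(X)^{2}} \;\le\; \tfrac{\sigma}{2}\E{\|X\|^{2}} \;\le\; \tfrac{\sigma}{2}B^{2}.
\]
By the parity $\cosh(\lambda G)=\cosh(-\lambda G)$, together with the facts that $(-G)^{2}=G^{2}$ is still $(\sigma,2)$-smooth and $-G$ is still $1$-Lipschitz, I may assume $G(\v)\ge 0$ and split on the sign of $\E{h(X)}$. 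When $\E{h(X)}\le 0$, the linear term is non-positive (since $\sinh(\lambda G(\v))\ge 0$) and is discarded; the surviving $(1+\alpha B^{2})\cosh(\lambda G(\v))=(1+e^{\lambda B}-1-\lambda B)\cosh(\lambda G(\v))$ is dominated by $c\cosh(\lambda G(\v))$ using $\sigma\ge 1$. When $\E{h(X)}>0$, the elementary inequality $\sinh(x)\le x\cosh(x)$ on $x\ge 0$ upgrades the linear term to $\lambda^{2}G(\v)\E{h(X)}\cosh(\lambda G(\v))$; substituting $G(\v)\E{h(X)}\le \sigma\E{\|X\|^{2}}/4-\E{h(X)^{2}}/2$ together with the in-case consequence $\E{h(X)^{2}}\le \sigma\E{\|X\|^{2}}/2$ (both drawn from the pivotal estimate, using $G(\v)\E{h(X)}\ge 0$), and invoking $\alpha\ge\lambda^{2}/2$ (a Taylor lower bound on $e^{\lambda B}-1-\lambda B$), the $\E{h(X)^{2}}$ contributions cancel and the residual collapses to $\sigma B^{2}\alpha/2=\sigma(e^{\lambda B}-1-\lambda B)/2\le c-1$. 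This closes the one-step estimate and therefore the lemma.
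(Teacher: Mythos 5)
Your proof is correct and follows a genuinely different route from the paper's. The paper's argument introduces the scalar interpolation $\phi(\alpha) = \Es{t-1}{\cosh(\lambda G(S_{t-1} + \alpha X_t))}$, kills the linear term by observing $\phi'(0)=0$ from the martingale-difference property, bounds $\phi''$ via a pointwise case split on the sign of $g''$ relative to $g$, and integrates to obtain $\phi(1) \le c\,\phi(0)$. Yours instead establishes a scalar Taylor inequality for $\cosh$ (dominating the remainder by $\alpha y^2$ for $|y|\le B$), applies it with $y = G(\v+X)-G(\v)$, and then invokes the $(\sigma,2)$-smoothness of $G^2$ exactly once, to derive the pivotal relation $2G(\v)\E{h(X)}+\E{h(X)^2}\le\tfrac{\sigma}{2}\E{\|X\|^2}$, which simultaneously controls the stray first-order term and the quadratic one via a split on the sign of $\E{h(X)}$ (together with the parity trick $G\mapsto -G$ and the lower bound $\alpha\ge\lambda^2/2$). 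Both proofs rest on the same two inputs --- the $1$-Lipschitz bound and the $2$-smoothness of $G^2$ --- and both require $\sigma\ge 1$ to absorb the benign case. What yours buys is that it never differentiates $G$ along the ray $S_{t-1}+\alpha X_t$: the whole one-step estimate is reduced to controlling the two scalars $\E{h(X)}$ and $\E{h(X)^2}$. The cost is somewhat more bookkeeping --- the extra sign analysis and the appeal to $\sinh(x)\le x\cosh(x)$, which the paper packages inside a single bound on $\phi''$.
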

\begin{proof}
The key step is to define a scalar function $\phi:[0,1] \to \reals$:
\[
	\phi(\alpha) := \Es{t-1}{ \cosh( \lambda G(S_{t-1} + \alpha X_t) ) }\ .
\]
Note that $\phi(1) = \Es{t-1}{ Z_t }$ and $\phi(0) = Z_{t-1}$, so our goal is to prove $\phi(1) \le c \cdot \phi(0)$.
We compute the first two derivatives of $\phi$,
\begin{align}
\notag
\phi'(\alpha) &= \Es{t-1}{ \sinh( \lambda g_{S_{t-1},X_t}(\alpha) ) \cdot \lambda g_{S_{t-1},X_t}'(\alpha) } \ , \\
\phi''(\alpha) &= \Es{t-1}{ \cosh( \lambda g_{S_{t-1},X_t}(\alpha) ) \cdot (\lambda g_{S_{t-1},X_t}'(\alpha))^2 } \\
&\quad +  \Es{t-1}{ \sinh( \lambda g_{S_{t-1},X_t}(\alpha) ) \cdot \lambda g''_{S_{t-1},X_t}(\alpha) } \ ,
\label{eq:secondder}
\end{align}
where, for any $S,X \in \cH$, we define $g_{S,X}(\alpha) = G(S + \alpha X)$. Note that
\begin{align*}
g_{S,X}'(\alpha) &= G'(S + \alpha X)(X) \ ,\\
g_{S,X}''(\alpha) &= G''(S + \alpha X)(X,X) \ .
\end{align*}
Now, consider two cases.

Case 1: $\sign(\lambda g_{S_{t-1},X_t}(\alpha)) = \sign(g''_{S_{t-1},X_t}(\alpha))$. In this case, we use the fact that
$\sign(\sinh(x)) = \sign(x \cosh(x)$ and that $|\sinh(x)| \le |x \cosh(x)|$, to obtain the upper bound
\begin{align*}
&\quad \cosh( \lambda g_{S_{t-1},X_t}(\alpha) ) \cdot (\lambda g_{S_{t-1},X_t}'(\alpha))^2 
+ \sinh( \lambda g_{S_{t-1},X_t}(\alpha) ) \cdot \lambda g''_{S_{t-1},X_t}(\alpha) \\
&\le  \cosh( \lambda g_{S_{t-1},X_t}(\alpha) ) \cdot (\lambda g_{S_{t-1},X_t}'(\alpha))^2 
+  \cosh( \lambda g_{S_{t-1},X_t}(\alpha) ) \cdot \lambda g_{S_{t-1},X_t}(\alpha) \cdot \lambda g''_{S_{t-1},X_t}(\alpha) \\
&= \lambda^2 \cdot \cosh( \lambda g_{S_{t-1},X_t}(\alpha) ) \cdot (g_{S_{t-1},X_T}^2)''(\alpha)  \\
&\le \sigma \lambda^2 B^2 \cdot \cosh( \lambda g_{S_{t-1},X_t}(\alpha) ) \ ,
\end{align*}
because $(g_{S_{t-1},X_T}^2)''(\alpha) = G''(S_{t-1} + \alpha X_t)(X_t,X_t) \le \sigma \|X_t\|^2 \le \sigma B^2$.

Case 2: $\sign(\lambda g_{S_{t-1},X_t}(\alpha)) \neq \sign(g''_{S_{t-1},X_t}(\alpha))$. In this case, we simply have,
\begin{align*}
&\quad \cosh( \lambda g_{S_{t-1},X_t}(\alpha) ) \cdot (\lambda g_{S_{t-1},X_t}'(\alpha))^2 
+ \sinh( \lambda g_{S_{t-1},X_t}(\alpha) ) \cdot \lambda g''_{S_{t-1},X_t}(\alpha) \\
&\le \cosh( \lambda g_{S_{t-1},X_t}(\alpha) ) \cdot (\lambda g_{S_{t-1},X_t}'(\alpha))^2 \\
&\le \lambda^2 B^2 \cdot \cosh( \lambda g_{S_{t-1},X_t}(\alpha) ) \ ,
\end{align*}
because, by Lipschitz property of $G$, we have
\[
|g_{S_{t-1},X_t}'(\alpha)| = |G'(S_{t-1} + \alpha X_t)(X_t)| \le \|G'(S_{t-1} + \alpha X_t)\|_\star \cdot \|X_t\| \le 1 \cdot B \ .
\]

Thus, we always have,
\[
	\cosh( \lambda g_{S_{t-1},X_t}(\alpha) ) \cdot (\lambda g_{S_{t-1},X_t}'(\alpha))^2 
	+ \sinh( \lambda g_{S_{t-1},X_t}(\alpha) ) \cdot \lambda g''_{S_{t-1},X_t}(\alpha)
	\le \sigma \lambda^2 B^2 \cdot \cosh( \lambda g_{S_{t-1},X_t}(\alpha) ) \ .
\]
Plugging this into~\eqref{eq:secondder}, we get
\begin{align*}
\phi''(\alpha) &\le \sigma \lambda^2 B^2 \Es{t-1}{ \cosh( \lambda G(S_{t-1} + \alpha X_t) ) } \\
&\le \sigma \lambda^2 B^2 \Es{t-1}{ \cosh( \lambda G(S_{t-1}) + \lambda \alpha \|X_t\| ) } \\
&\le \sigma \lambda^2 B^2 \Es{t-1}{ \cosh( \lambda G(S_{t-1}) ) \cdot \exp( \lambda \alpha \|X_t\| ) } \\
&\le \sigma \lambda^2 B^2 \cdot \cosh( \lambda G(S_{t-1}) ) \cdot \exp( \lambda \alpha B )  \\
&= \sigma \lambda^2 B^2 \cdot Z_{t-1} \cdot \exp( \lambda \alpha B ) \ .
\end{align*}

Note that $\phi'(0) = \Es{t-1}{G'(S_{t-1})(X_t)} = G'(S_{t-1})(\Es{t-1}{X_t}) = 0$ by the MDS property. Thus,
\[
	\phi'(\beta) = \int_{y=0}^\beta \phi''(y) dy
\]
and therefore
\begin{align*}
Z_{t} = \phi(1) &= \phi(0) + \int_{\beta = 0}^1 \phi'(\beta) d\beta \\
&= Z_{t-1} + \int_{\beta = 0}^1 \int_{y=0}^\beta \phi''(y) dy d\beta \\
&= Z_{t-1} + \int_{y = 0}^1 \int_{\beta=y}^1 \phi''(y) d\beta dy \\
&= Z_{t-1} + \int_{y = 0}^1 \phi''(y) (1-y) dy \\
&\le Z_{t-1} \cdot \left(1 + \sigma \lambda^2 B^2 \int_{y=0}^1 \exp(\lambda B y) (1-y) dy \right)\\
&= Z_{T-1} \cdot (1 + \sigma (\exp(\lambda B) - 1 - \lambda B)) 
\end{align*}
\end{proof}

Now that we have control over $\E{\cosh(\lambda G(S_T))}$, the following control on m.g.f. is immediate.

\begin{corollary}
	\label{cor:pinelis_concentration}
Under the same conditions as previous lemma,
\[
	\E{\exp(\lambda G(S_T))} \le 2\,c^T\ .
\]
Moreover, 
\begin{align*}
	P(G(S_T) > \epsilon) \leq 2 \exp\left( -\frac{\epsilon^2}{4T\sigma B^2}\right)
\end{align*}
whenever $T> \epsilon/(2\sigma B)$.
\end{corollary}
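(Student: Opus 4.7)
The plan is to derive both statements as straightforward consequences of the moment generating function (m.g.f.) control established in the preceding lemma, namely $\E{\cosh(\lambda G(S_T))} \le c^T$ with $c = 1 + \sigma(\exp(\lambda B) - 1 - \lambda B)$. No new martingale or smoothness argument is needed; everything is a Chernoff-style bookkeeping step.

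For the first inequality I would use the elementary pointwise bound $\exp(x) \le 2\cosh(x)$ for all $x \in \reals$, which holds because $\cosh(x) = (e^x + e^{-x})/2 \ge e^x/2$. Taking expectations and invoking the lemma immediately yields $\E{\exp(\lambda G(S_T))} \le 2c^T$.

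For the tail bound I would apply Markov's inequality to the random variable $\exp(\lambda G(S_T))$, with $\lambda > 0$ to be chosen, obtaining
\[
	\Prob(G(S_T) > \epsilon) \le \frac{\E{\exp(\lambda G(S_T))}}{\exp(\lambda \epsilon)} \le 2 \exp\bigl(T \log c - \lambda \epsilon\bigr).
\]
To turn this into the stated rate I would bound $\log c \le \sigma(\exp(\lambda B) - 1 - \lambda B)$ via $\log(1+y) \le y$, and then use the scalar inequality $\exp(x) - 1 - x \le x^2$ valid for $x \in [0,1]$ (easily checked by showing that $g(x) = x^2 - \exp(x) + 1 + x$ is non-negative on $[0,1]$, since $g(0)=0$, $g'(0)=0$, and one checks $g' \ge 0$ on the interval). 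This yields $T \log c \le T \sigma \lambda^2 B^2$ whenever $\lambda B \le 1$.

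Finally I would make the choice $\lambda = \epsilon/(2 T \sigma B^2)$. The hypothesis $T > \epsilon/(2\sigma B)$ is exactly what guarantees $\lambda B < 1$, so the quadratic bound above applies. Substituting gives
\[
	T\log c - \lambda \epsilon \le \frac{\epsilon^2}{4T\sigma B^2} - \frac{\epsilon^2}{2T\sigma B^2} = -\frac{\epsilon^2}{4T\sigma B^2},
\]
which is precisely the target exponent. I do not anticipate any significant obstacles; the only subtle point is matching the regime where the quadratic approximation to $e^x - 1 - x$ is valid with the hypothesis on $T$, and this dovetails perfectly with the choice $\lambda = \epsilon/(2T\sigma B^2)$.
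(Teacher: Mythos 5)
Your proof is correct and follows essentially the same route as the paper: bound $\exp(x)\le 2\cosh(x)$ for the first claim, then Markov's inequality with $\log(1+y)\le y$ and $e^x-1-x\le x^2$ on $[0,1]$, optimizing $\lambda=\epsilon/(2T\sigma B^2)$. The only difference is that you spell out the elementary verification of $e^x-1-x\le x^2$, which the paper asserts implicitly.
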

\begin{proof}
The first inequality follows by noting that $\cosh(x) = (\exp(x) + \exp(-x))/2 \ge \exp(x)/2$. 

For the second inequality, 
\begin{align*}
	P\left(G(S_T) > \epsilon\right) &= P \left(\exp(\lambda G(S_T)) > \exp(\lambda\epsilon)\right) \\
							&\leq \exp(-\lambda\epsilon) \E{\exp(\lambda G(S_T))} \\
							&\leq 2\exp(-\lambda\epsilon)(1+\sigma(\exp(\lambda B)-1-\lambda B))^T\\
							&\leq 2\exp\left\{-\lambda\epsilon + T\log(1+\sigma(\exp(\lambda B)-1-\lambda B))\right\} \\
							&\leq 2\exp\left\{-\lambda\epsilon + T\sigma(\exp(\lambda B)-1-\lambda B)\right\} \\
							&\leq 2\exp\left\{-\lambda\epsilon + T\sigma\lambda^2 B^2 \right\} 
\end{align*}
where the last inequality is valid for any $\lambda \le 1/B$. Optimizing over $\lambda$, we let 
$$\lambda = \frac{\epsilon}{2T\sigma B^2},$$
which yields the desired upper bound. The condition $\lambda \le 1/B$ is satisfied whenever $T> \epsilon/(2\sigma B)$.
\end{proof}

With control on the m.g.f., a Massart style union bound argument at the level of expectations is immediate.

\begin{theorem}
	\label{thm:pinelis_union_bound}
Suppose $\{X_t^\gamma\}_{t=0}^T$ is a family of MDS indexed by $\gamma$ in some finite set $\Gamma$. Suppose for each $\gamma, t$, $\|X_t^\gamma\|\le B$ a.s.
Then, we have, for any $T \ge \log(2 |\Gamma|)/\sigma$,
\[
	\E{ \max_{\gamma \in \Gamma} G( S^\gamma_T ) } \le 2B \, \sqrt{ \sigma\, \log(2|\Gamma|) \, T}  \ ,
\]
where $S^\gamma_T = \sum_{t=1}^T X^\gamma_t$.
\end{theorem}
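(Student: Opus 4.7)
The plan is to run the standard Massart-style moment generating function argument, using the single-MDS m.g.f. bound already established in Corollary~\ref{cor:pinelis_concentration} as the building block. First I would note that by Jensen's inequality applied to the convex function $x \mapsto \exp(\lambda x)$, for any $\lambda > 0$,
\[
\exp\left( \lambda\, \E{\max_{\gamma \in \Gamma} G(S_T^\gamma)} \right)
\le \E{\exp\left(\lambda \max_{\gamma} G(S_T^\gamma)\right)}
= \E{\max_{\gamma} \exp(\lambda G(S_T^\gamma))}
\le \sum_{\gamma \in \Gamma} \E{\exp(\lambda G(S_T^\gamma))}.
\]

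Next I would apply Corollary~\ref{cor:pinelis_concentration} (or rather its proof, which gives $\E{\exp(\lambda G(S_T^\gamma))} \le 2 c^T$ with $c = 1 + \sigma(\exp(\lambda B) - 1 - \lambda B)$) to each $\gamma$ individually, yielding
\[
\sum_{\gamma \in \Gamma} \E{\exp(\lambda G(S_T^\gamma))} \le 2|\Gamma| \cdot c^T.
\]
Taking logs and dividing by $\lambda$:
\[
\E{\max_{\gamma} G(S_T^\gamma)} \le \frac{\log(2|\Gamma|)}{\lambda} + \frac{T}{\lambda} \log c.
\]
Using $\log(1+u)\le u$ and the standard inequality $e^u - 1 - u \le u^2$ valid for $u \le 1$, one gets $\log c \le \sigma (\lambda B)^2$ whenever $\lambda B \le 1$, so
\[
\E{\max_{\gamma} G(S_T^\gamma)} \le \frac{\log(2|\Gamma|)}{\lambda} + \lambda T \sigma B^2.
\]

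Finally I would optimize over $\lambda$ by setting $\lambda = \sqrt{\log(2|\Gamma|)/(T\sigma B^2)}$, which balances the two terms and produces the claimed bound $2B \sqrt{\sigma \log(2|\Gamma|)\, T}$. The hypothesis $T \ge \log(2|\Gamma|)/\sigma$ is exactly what is needed to ensure the constraint $\lambda B \le 1$ holds at this optimal $\lambda$, so the use of $e^u - 1 - u \le u^2$ is legitimate. There is no real obstacle here — the only place one must be careful is verifying that $\lambda B \le 1$ at the optimizer, which is precisely the role of the lower bound on $T$ in the statement.
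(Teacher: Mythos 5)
Your proof is correct and matches the paper's argument essentially step for step: Jensen's inequality applied to $\exp(\lambda\,\cdot)$, the union bound $\max_\gamma \le \sum_\gamma$, the single-MDS moment generating function bound $\E{\exp(\lambda G(S_T^\gamma))}\le 2c^T$ from Corollary~\ref{cor:pinelis_concentration}, the estimate $e^u-1-u\le u^2$ for $u\le 1$, and optimization at $\lambda = \sqrt{\log(2|\Gamma|)/(T\sigma B^2)}$, with the hypothesis $T\ge \log(2|\Gamma|)/\sigma$ serving exactly to guarantee $\lambda B\le 1$. No gaps; the two proofs are the same.
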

\begin{proof}
Fix $\lambda > 0$. Then,
\begin{align*}
\exp\left( \lambda \E{ \max_{\gamma \in \Gamma} G( S^\gamma_T ) } \right)
&\le \E{ \exp( \lambda  \max_{\gamma \in \Gamma} G( S^\gamma_T ) ) } \\
&= \E{ \max_{\gamma \in \Gamma} \exp( \lambda G( S^\gamma_T ) ) } \\
&\le \E{ \sum_{\gamma \in \Gamma} \exp( \lambda G( S^\gamma_T ) ) } \\
&\le 2|\Gamma| \cdot (1 + \sigma (\exp(\lambda B) - 1 - \lambda B))^T \ .
\end{align*}
Taking logs and dividing by $\lambda$ gives,
\begin{align*}
\E{ \max_{\gamma \in \Gamma} G( S^\gamma_T ) } &\le \frac{\log(2|\Gamma|) + T \log (1 + \sigma (\exp(\lambda B) - 1 - \lambda B))}{\lambda} \\
&\le \frac{\log(2|\Gamma|) + T \sigma (\exp(\lambda B) - 1 - \lambda B)}{\lambda} \\
&\le \frac{\log(2|\Gamma|) + T \sigma \lambda^2 B^2}{\lambda} \ ,
\end{align*}
where the last inequality is valid for any $\lambda \le 1/B$. Optimizing over $\lambda$, we choose $\lambda = \sqrt{ \log(2|\Gamma|)/T\sigma B^2 }$ which is less than
$1/B$ under the condition $T \ge \log(2|\Gamma|/\sigma)$. Plugging this in gives,
\[
	\E{ \max_{\gamma \in \Gamma} G( S^\gamma_T ) } \le 2B\, \sqrt{\sigma\, \log(2|\Gamma|) \, T } \ .
\]
\end{proof}

\begin{lemma}
	\label{lem:prob_to_exp}
	If $F$ is a non-negative real-valued random variable and $\Prob(F>\epsilon)\leq 2\exp\left\{-\frac{T\epsilon^2}{2c}\right\}$, then 
	$$\En F \leq \sqrt{2\pi c/T}.$$
	More generally, if $\Prob(F>a+\epsilon)\leq 2N\exp\left\{-\frac{\epsilon^2 b}{2}\right\}$ for $\epsilon > \sqrt{\frac{4\log(2N)}{b}}$, then 
	$$\En F \leq a+\left(\sqrt{\log(2N)}+1\right)\sqrt{\frac{4}{b}}.$$
\end{lemma}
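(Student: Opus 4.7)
The plan is a routine layer-cake integration, so there is no single hard step — the only care needed is bookkeeping the threshold in part~(2).

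First, for part~(1), I would recall that for any non-negative random variable $F$, one has the identity $\En F = \int_0^\infty \Prob(F > \epsilon)\, d\epsilon$. Substituting the hypothesis gives
\[
\En F \;\le\; \int_0^\infty \min\!\left(1,\, 2\exp\!\left\{-\tfrac{T\epsilon^2}{2c}\right\}\right) d\epsilon \;\le\; \int_0^\infty 2\exp\!\left\{-\tfrac{T\epsilon^2}{2c}\right\} d\epsilon \;=\; 2\sqrt{\tfrac{\pi c}{2T}} \;=\; \sqrt{\tfrac{2\pi c}{T}},
\]
using the standard half-Gaussian integral $\int_0^\infty e^{-x^2/(2\sigma^2)}\,dx = \sigma\sqrt{\pi/2}$ with $\sigma^2 = c/T$.

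For part~(2), set $\epsilon_0 := \sqrt{4\log(2N)/b}$, which is precisely the threshold appearing in the hypothesis. Since $F \le a + (F-a)_+$, I would first bound
\[
\En F \;\le\; a + \int_0^\infty \Prob(F > a+\epsilon)\, d\epsilon \;\le\; a + \int_0^{\epsilon_0} 1\, d\epsilon + \int_{\epsilon_0}^\infty 2N\exp\!\left\{-\tfrac{\epsilon^2 b}{2}\right\} d\epsilon,
\]
bounding the probability trivially by $1$ below $\epsilon_0$ (where the hypothesis is not applicable) and by the given exponential above. The first two terms contribute $a + \epsilon_0 = a + \sqrt{\log(2N)} \cdot \sqrt{4/b}$, which already matches the advertised $\sqrt{\log(2N)}\cdot\sqrt{4/b}$ contribution.

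It remains to show the Gaussian-tail integral is at most $\sqrt{4/b}$. After the change of variable $u = \epsilon\sqrt{b}$, the tail equals
\[
\frac{2N}{\sqrt{b}} \int_{2\sqrt{\log(2N)}}^\infty e^{-u^2/2}\, du \;\le\; \frac{2N}{\sqrt{b}} \cdot \frac{e^{-2\log(2N)}}{2\sqrt{\log(2N)}} \;=\; \frac{1}{4N\sqrt{b\log(2N)}},
\]
where the step uses the elementary tail bound $\int_x^\infty e^{-u^2/2}\,du \le e^{-x^2/2}/x$ with $x = 2\sqrt{\log(2N)}$. Since $N \ge 1$ gives $4N\sqrt{\log(2N)} \ge 4\sqrt{\log 2} > 1$, this last quantity is at most $\sqrt{4/b} = 2/\sqrt{b}$. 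Combining gives $\En F \le a + (\sqrt{\log(2N)} + 1)\sqrt{4/b}$, as claimed. The only ``obstacle'' is verifying that the residual tail beyond $\epsilon_0$ is absorbed into the additional $\sqrt{4/b}$ term, which the above elementary estimate handles.
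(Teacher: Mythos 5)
Your proof is correct and follows essentially the same route as the paper's: the layer-cake identity, splitting the integral at the threshold $\epsilon_0 = \sqrt{4\log(2N)/b}$, bounding the probability trivially by $1$ below $\epsilon_0$, and controlling the remaining tail. The only technical difference is in how that tail integral is estimated: you invoke the Mills-ratio bound $\int_x^\infty e^{-u^2/2}\,du \le e^{-x^2/2}/x$ and then verify $4N\sqrt{\log(2N)} \ge 1/2$, whereas the paper notes that for $\epsilon > \epsilon_0$ one has $2N e^{-b\epsilon^2/2} \le e^{-b\epsilon^2/4}$ and then evaluates the resulting half-Gaussian integral over $[0,\infty)$ to get $\sqrt{\pi/b} < \sqrt{4/b}$; both estimates comfortably fit under the stated $\sqrt{4/b}$.
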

\begin{proof}
	$$\En F = \int_{0}^\infty \Prob(F>\epsilon) d\epsilon \leq 2\int_{0}^\infty \exp\left\{-\frac{T\epsilon^2}{2c}\right\} d\epsilon = 2\sqrt{\frac{2\pi c}{T}}\frac{1}{\sqrt{2\pi}}\int_{0}^\infty \exp\{-u^2/2\}du = \sqrt{\frac{2\pi c}{T}}.$$
	For the second statement,
	 $$\En F = \int_{0}^\infty \Prob(F>a+\epsilon) d\epsilon \leq  a+x + \int_{x}^\infty \Prob(F>a+\epsilon) d\epsilon.$$
	Choose $x=\sqrt{\frac{4\log(2N) }{b}}$. For $\epsilon>x$, it holds that $-\frac{b\epsilon^2}{2}+\log(2N)\leq -\frac{b\epsilon^2}{4}$. Thus,
	 $$\En F \leq  a+\sqrt{\frac{4\log(2N)}{b}} + \int_{0}^\infty \exp\left\{-\frac{b\epsilon^2}{4}\right\} d\epsilon =  \sqrt{\frac{4\log(2N)}{b}} +  \sqrt{\frac{4\pi}{b}} \frac{1}{\sqrt{2\pi}}\int_{0}^\infty \exp\{-u^2/2\}du .$$
\end{proof}

\section{A General Triplex Inequality}
\label{sec:gentriplex}

Here we make the observation that the two versions of the triplex inequality, namely the expected (Theorem~\ref{thm:main}) and high probability (Theorem~\ref{thm:main-prob})
versions, are special cases of a general triplex inequality which bounds the value of a ``$\Gamma$-game" defined as:

{\small
\begin{align}
\label{def:gengamevalue}
	\Val^{\Gamma}_T(\ell,\Phi_T) &= \inf_{q_1} \sup_{x_1} \Eu{f_1\sim q_1} \ldots \inf_{q_T} \sup_{x_T} \Eu{f_T\sim q_T}
	\Gamma\left(
	\sup_{\bphi\in \Phi_T}\left\{ \Compare(\ell(f_1,x_1), \ldots, \ell(f_T, x_T)) -  \Compare(\ell_{\phi_1}(  f_1,x_1), \ldots, \ell_{\phi_T}(f_T, x_T))\right\}
	\right)
\end{align}
}

The expectation and high-probability games are recovered by choosing $\Gamma(x) = x$ and $\Gamma(x) = \ind{ x > \theta }$ respectively. We now state and prove the general triplex inequality\footnote{To be precise, the expectation version of the Triplex inequality  presented in Theorem~\ref{thm:main} is slightly different, as the expectation is taken outside of $\Compare$. Modulo this difference, the proofs are identical.}.

\begin{theorem}[\textbf{General Triplex Inequality}]
\label{thm:main-gen}
If $\Gamma$ satisfies
\[
	\Gamma(x+y+z) \le \Lambda(x) + \Lambda(y) + \Lambda(z)
\]
for some $\Lambda:\reals\to\reals$, then we have,
\begin{align*}
	\Val^\Gamma_T(\ell,\Phi_T)
	& \le \sup_{\mbf{D}} \Es{\mbf{D}}{ \Lambda\left(  \Compare(\ell(f_1,x_1), \ldots, \ell(f_T, x_T)) - \Compare(\ell(q_1,p_1), \ldots, \ell(q_T, p_T))  \right) }\\
	&+\sup_{p_1}\inf_{q_1} \ldots  \sup_{p_T}\inf_{q_T} 
		  \Lambda\left( \sup_{\bphi\in \Phi_T}   \left\{ \Compare(\ell(q_1,p_1), \ldots, \ell(q_T, p_T)) - \Compare(\ell_{\phi_1}(q_1,p_1), \ldots, \ell_{\phi_T}(q_T, p_T))\right\} \right)\\
	&+\sup_{\mbf{D}}\Es{\mbf{D}}{ \Lambda\left( \sup_{\bphi\in \Phi_T} \left\{\Compare(\ell_{\phi_1}(q_1,p_1), \ldots, \ell_{\phi_T}(q_T, p_T) ) - \Compare(\ell_{\phi_1}(f_1,x_1),\ldots, \ell_{\phi_T}(f_T,x_T))\right\}  \right) }
\end{align*}
where $\mbf{D}$ ranges over distributions over sequences $(x_1,f_1),\ldots, (x_T,f_T)$. 
\end{theorem}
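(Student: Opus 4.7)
The plan is to mirror the proofs of Theorem~\ref{thm:main} and Theorem~\ref{thm:main-prob}, which are the special cases $\Gamma(x)=x$, $\Lambda(x)=x$ (where the three-way inequality reduces to an equality by linearity) and $\Gamma(x)=\ind{x>\theta}$, $\Lambda(x)=\ind{x>\theta/3}$ (where it reduces to the elementary fact that if $x+y+z>\theta$ then at least one of $x,y,z$ exceeds $\theta/3$). In the present general setting, the abstract hypothesis $\Gamma(x+y+z)\le\Lambda(x)+\Lambda(y)+\Lambda(z)$ is exactly what is needed to split the argument of $\Gamma$ into three pieces that correspond to the three terms in the conclusion.

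First, I would apply the minimax theorem as in the two earlier proofs to rewrite
$\inf_{q_1}\sup_{x_1}\En_{f_1\sim q_1}\cdots\inf_{q_T}\sup_{x_T}\En_{f_T\sim q_T}$
in the form $\sup_{p_1}\inf_{q_1}\En\cdots\sup_{p_T}\inf_{q_T}\En$, so that the adversary's choice is interpreted as a mixed strategy $p_t\in\PD$. Then, inside $\Gamma$, I would add and subtract the two intermediate quantities $\Compare(\ell(q_1,p_1),\ldots,\ell(q_T,p_T))$ and $\Compare(\ell_{\phi_1}(q_1,p_1),\ldots,\ell_{\phi_T}(q_T,p_T))$ and push the supremum over $\bphi\in\Phi_T$ across the last two of the three resulting differences using $\sup_{\bphi}[A(\bphi)+B(\bphi)]\le\sup_{\bphi}A(\bphi)+\sup_{\bphi}B(\bphi)$ (noting that the first difference does not involve $\bphi$). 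Invoking the hypothesis on $\Gamma$ then bounds $\Gamma$ of the whole argument by $\Lambda$ applied to each of the three pieces separately.

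The remaining work is bookkeeping with $\inf$, $\sup$, and $\En$, exactly as in the proof of Theorem~\ref{thm:main}. Using linearity of $\En$, convexity of $\sup$, and
$\inf_{a}[C_1(a)+C_2(a)+C_3(a)]\le \sup_{a}C_1(a)+\inf_{a}C_2(a)+\sup_{a}C_3(a)$
on each successive layer of the nested expression, the three resulting sums separate. For the first and third pieces I would then over-bound the interleaved $\sup_{p_t}\inf_{q_t}\En$ structure by $\sup_{p_t,q_t}\En$, which is precisely $\sup_{\mbf{D}}\En_{\mbf{D}}$ for an adaptive joint distribution $\mbf{D}$ over the sequence $(x_{1:T},f_{1:T})$; this yields the first and third terms in the statement. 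For the middle piece, the argument of $\Lambda$ depends only on $(p_t,q_t)$ and not on the random draws $(f_t,x_t)$, so the expectations collapse and the $\sup_{p_t}\inf_{q_t}$ structure is preserved verbatim, giving the second term.

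There is no real obstacle here beyond keeping the three streams of inequalities aligned; the only genuinely new ingredient compared to Theorem~\ref{thm:main} is replacing the use of linearity (for $\Gamma=\mathrm{id}$) or of the trivial indicator splitting (for $\Gamma=\ind{\cdot>\theta}$) by the abstract three-way sub-additivity supplied by the hypothesis. In particular, the proof never inspects $\Gamma$ or $\Lambda$ individually, only through the inequality $\Gamma(x+y+z)\le\Lambda(x)+\Lambda(y)+\Lambda(z)$, so the same symbol manipulations that worked in the earlier two proofs carry over verbatim.
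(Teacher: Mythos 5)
Your proposal matches the paper's proof essentially line-for-line: minimax theorem to swap to $\sup_{p_t}\inf_{q_t}\En$, insertion of the two intermediate terms $\Compare(\ell(q_{1:T},p_{1:T}))$ and $\Compare(\ell_{\phi_{1:T}}(q_{1:T},p_{1:T}))$, splitting of the supremum over $\bphi$ via $\sup_\phi[A+B]\le\sup_\phi A+\sup_\phi B$, then invoking the three-way sub-additivity $\Gamma(x+y+z)\le\Lambda(x)+\Lambda(y)+\Lambda(z)$, followed by the same $\inf/\sup/\En$ bookkeeping (over-bounding $\inf_{q_t}$ by $\sup_{q_t}$ in the first and third terms, collapsing expectations in the second). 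One thing worth noting explicitly — though it is equally tacit in the paper — is that the step of pushing the supremum over $\bphi$ into the argument of $\Gamma$ before applying the hypothesis implicitly uses that $\Gamma$ (or $\Lambda$) is non-decreasing, which the theorem statement does not formally require but is satisfied in every instantiation used.
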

\begin{proof}
	The value of the game $\Val^\Gamma_T(\ell,\Phi_T)$, defined in \eqref{def:gengamevalue}, is
	{\small
	\begin{align*}
		&\Val^\Gamma_T(\ell,\Phi_T) \\
		&= \inf_{q_1} \sup_{p_1} \Eunder{x_1\sim p_1}{f_1\sim q_1} \ldots \inf_{q_T} \sup_{p_T} \Eunder{x_T\sim p_T}{f_T\sim q_T}  \left[
		\Gamma\left( \sup_{\bphi\in \Phi_T}\left\{ \Compare(\ell(f_1,x_1), \ldots, \ell(f_T, x_T)) -  \Compare(\ell_{\phi_1}(  f_1,x_1), \ldots, \ell_{\phi_T}(f_T, x_T))\right\} \right) \right]\\
		&= \sup_{p_1}\inf_{q_1} \Eunder{x_1\sim p_1}{f_1\sim q_1} \ldots  \sup_{p_T}\inf_{q_T} \Eunder{x_T\sim p_T}{f_T\sim q_T}  \left[
		\Gamma\left( \sup_{\bphi\in \Phi_T} \left\{ \Compare(\ell(f_1,x_1), \ldots, \ell(f_T, x_T)) - \Compare(\ell_{\phi_1 }( f_1,x_1), \ldots, \ell_{\phi_T}(f_T, x_T))\right\} \right) \right]
	\end{align*} 
	}
	via an application of the minimax theorem. Adding and subtracting terms to the expression above leads to
	{\small
	\begin{align*}
		\Val^\Gamma_T(\ell, \Phi_T) &= \sup_{p_1}\inf_{q_1}\Eunder{x_1\sim p_1}{f_1 \sim q_1} \ldots  \sup_{p_T}\inf_{q_T} \Eunder{x_T\sim p_T}{f_T \sim q_T}
		\hspace{0.4cm} \left[ \Gamma\left( \Compare(\ell(f_1,x_1), \ldots, \ell(f_T, x_T)) -  \Compare(\ell(q_1,p_1), \ldots, \ell(q_T, p_T)) \right. \right.\\
		&\left. \left.\hspace{0.4in}+\sup_{\bphi\in \Phi_T} \left\{  \Compare(\ell(q_1,p_1), \ldots, \ell(q_T, p_T)) - \Compare(\ell_{\phi_1}(f_1,x_1), \ldots, \ell_{\phi_T}(f_T,x_T))\right\} \right) \right]\\
		&\leq \sup_{p_1}\inf_{q_1}\Eunder{x_1\sim p_1}{f_1 \sim q_1} \ldots  \sup_{p_T}\inf_{q_T} \Eunder{x_T\sim p_T}{f_T \sim q_T}
		\hspace{0.4cm} \left[ \Gamma\left(\Compare(\ell(f_1,x_1), \ldots, \ell(f_T, x_T)) -  \Compare(\ell(q_1,p_1), \ldots, \ell(q_T, p_T)) \right. \right.\\
		&\left. \left.\hspace{0.4in}+\sup_{\bphi\in \Phi_T}  \Big\{ \Compare(\ell(q_1,p_1), \ldots, \ell(q_T, p_T)) -   \Compare(\ell_{\phi_1}(q_1,p_1), \ldots, \ell_{\phi_T}(q_T, p_T))\Big\} \right. \right.\\
		&\left. \left.\hspace{0.4in} + \sup_{\bphi\in \Phi_T} \left\{  \Compare(\ell_{\phi_1}(q_1,p_1), \ldots, \ell_{\phi_T}(q_T, p_T)) -  \Compare(\ell_{\phi_1}(f_1,x_1), \ldots, \ell_{\phi_T}(f_T,x_T))\right\} \right) \right]\\
		&\leq \sup_{p_1}\inf_{q_1}\Eunder{x_1\sim p_1}{f_1 \sim q_1} \ldots  \sup_{p_T}\inf_{q_T} \Eunder{x_T\sim p_T}{f_T \sim q_T}
		\hspace{0.4cm} \left[ \Lambda\left( \Compare(\ell(f_1,x_1), \ldots, \ell(f_T, x_T)) - \Compare(\ell(q_1,p_1), \ldots, \ell(q_T, p_T)) \right) \right.\\
		&\left. \hspace{0.4in}+ \Lambda\left( \sup_{\bphi\in \Phi_T}  \Big\{ \Compare(\ell(q_1,p_1), \ldots, \ell(q_T, p_T)) -   \Compare(\ell_{\phi_1}(q_1,p_1), \ldots, \ell_{\phi_T}(q_T, p_T))\Big\} \right) \right.\\
		&\left. \hspace{0.4in} + \Lambda\left( \sup_{\bphi\in \Phi_T}  \left\{  \Compare(\ell_{\phi_1}(q_1,p_1), \ldots, \ell_{\phi_T}(q_T, p_T)) -  \Compare(\ell_{\phi_1}(f_1,x_1), \ldots, \ell_{\phi_T}(f_T,x_T))\right\} \right) \right]
		\end{align*} }
	At this point, we would like to break up the expression into three terms. To do so, notice that expectation is linear and $\sup$ is a convex function, while for the infimum,
	$$\inf_a \left[C_1(a)+C_2(a)+C_3(a)\right] \leq \left[\sup_a C_1(a)\right] + \left[\inf_a C_2(a)\right] + \left[\sup_a C_3(a)\right] $$
	for functions $C_1,C_2,C_3$. We use these properties of $\inf$, $\sup$, and expectation, starting from the inside of the nested expression and splitting the expression in three parts. We arrive at
	{\small
	\begin{align*}
		&\Val^\Gamma_T(\ell, \Phi_T)\\ 
		&\leq \sup_{p_1}\sup_{q_1}\Eunder{x_1\sim p_1}{f_1 \sim q_1} \ldots  \sup_{p_T}\sup_{q_T} \Eunder{x_T\sim p_T}{f_T \sim q_T}
		 \left[ \Lambda\left( \Compare(\ell(f_1,x_1), \ldots, \ell(f_T, x_T)) -  \Compare(\ell(q_1,p_1), \ldots, \ell(q_T, p_T)) \right) \right]\\
		&+\sup_{p_1}\inf_{q_1}\Eunder{x_1\sim p_1}{f_1 \sim q_1} \ldots  \sup_{p_T}\inf_{q_T} \Eunder{x_T\sim p_T}{f_T \sim q_T}
	 \left[ \Lambda\left(\sup_{\bphi\in \Phi_T}   \left\{ \Compare(\ell(q_1,p_1), \ldots, \ell(q_T, p_T)) - \Compare(\ell_{\phi_1}(q_1,p_1), \ldots, \ell_{\phi_T}(q_T, p_T))\right\} \right) \right]\\
		&+\sup_{p_1}\sup_{q_1}\Eunder{x_1\sim p_1}{f_1 \sim q_1} \ldots  \sup_{p_T}\sup_{q_T} \Eunder{x_T\sim p_T}{f_T \sim q_T} \left[ \Lambda\left(\sup_{\bphi\in \Phi_T} \left\{   \Compare(\ell_{\phi_1}(q_1,p_1), \ldots, \ell_{\phi_T}(q_T, p_T)) -  \Compare(\ell_{\phi_1}(f_1,x_1), \ldots, \ell_{\phi_T}(f_T,x_T))\right\} \right) \right]
	\end{align*}
	}
	As mentioned in the corresponding proof of Theorem~\ref{thm:main}, the replacement of infima by suprema in the first and third terms appears to be a loose step and, indeed, one can pick a particular response strategy $\{q^*_t\}$ instead of passing to the supremum. 

	Consider the second term in the above decomposition. Clearly,
	\begin{align*}
		&\sup_{p_1}\inf_{q_1}\Eunder{x_1\sim p_1}{f_1 \sim q_1} \ldots  \sup_{p_T}\inf_{q_T} \Eunder{x_T\sim p_T}{f_T \sim q_T}
		 \left[ \Lambda\left(\sup_{\bphi\in \Phi_T}   \Compare(\ell(q_1,p_1), \ldots, \ell(q_T, p_T)) - \Compare(\ell_{\phi_1}(q_1,p_1), \ldots, \ell_{\phi_T}(q_T, p_T))\right) \right]\\
		&=\sup_{p_1}\inf_{q_1} \ldots  \sup_{p_T}\inf_{q_T}\ 
		 \Lambda\left( \sup_{\bphi\in \Phi_T}    \Compare(\ell(q_1,p_1), \ldots, \ell(q_T, p_T)) -  \Compare(\ell_{\phi_1}(q_1,p_1), \ldots, \ell_{\phi_T}(q_T, p_T)) \right)
	\end{align*}
	because the objective does not depend on the random draws.
\end{proof}

\end{document}